\renewcommand\footnotetextcopyrightpermission[1]{} 
\newtheorem{theorem}{Theorem}
\newcommand*\emptycirc[1][1ex]{\tikz\draw[thick] (0,0) circle (#1);} 
\newcommand*\halfcirc[1][1ex]{%
  \begin{tikzpicture}
  \draw[fill] (0,0)-- (90:#1) arc (90:270:#1) -- cycle ;
  \draw[thick] (0,0) circle (#1);
  \end{tikzpicture}}
\newcommand*\fullcirc[1][1ex]{\tikz\fill (0,0) circle (#1);} 
\begin{document}

\title{Towards Lifecycle Unlearning Commitment Management: Measuring Sample-level Approximate Unlearning Completeness}

\author{Cheng-Long Wang$^{1}$, Qi Li$^{2}$, Zihang Xiang$^{1}$, Yinzhi Cao$^3$, Di Wang$^{1}$}
\def \authors{author one, author two, author three, author four}
\affiliation{%
\institution{$^1$King Abdullah University of Science and  Technology
}
\country{}
}
\affiliation{%
  \institution{$^2$National University of Singapore}
  \country{}
}
\affiliation{%
\institution{$^3$Johns Hopkins University}
\country{}
}

\begin{abstract}
The growing concerns surrounding data privacy and security have underscored the critical need for 'machine unlearning', aimed at fully removing data lineage from machine learning models. MLaaS (Machine Learning as a Service) providers view this as their ultimate safeguard for regulatory compliance, eliminating the need for fully retraining large models. 
It is within this context that 'approximate machine unlearning' emerges as a relevant concept. 
By adopting a more flexible definition of unlearning and adjusting the model distribution to simulate training without the targeted data, approximate machine unlearning provides a less resource-demanding alternative to the more laborious exact unlearning methods. Yet, the unlearning completeness of target samples—even when the approximate algorithms are executed faithfully without external threats—remains largely unexamined, raising questions about those approximate algorithms' ability to fulfill their commitment of unlearning during the lifecycle, specifically when it comes to sensitive, polluted, or copyrighted data.

In this paper, we introduce the task of Lifecycle Unlearning Commitment Management (LUCM) for approximate unlearning and outline its primary challenges. We propose an efficient metric designed to assess the sample-level unlearning completeness. Our empirical results demonstrate its superiority over membership inference techniques in two key areas: the strong correlation of its measurements with unlearning completeness across various unlearning tasks, and its computational efficiency, making it suitable for real-time applications. Additionally, we show that this metric is able to serve as a tool for monitoring unlearning anomalies throughout the unlearning lifecycle, including both under-unlearning and over-unlearning.

We apply this metric to evaluate the unlearning commitments of current approximate algorithms. Our analysis, conducted across multiple unlearning benchmarks, reveals that these algorithms inconsistently fulfill their unlearning commitments due to two main issues: 1) unlearning new data can significantly affect the unlearning utility of previously requested data, and 2) approximate algorithms fail to ensure equitable unlearning utility across different groups. These insights emphasize the crucial importance of LUCM throughout the unlearning lifecycle. We will soon open-source our newly developed benchmark.

\end{abstract}

\keywords{Machine Unlearning, Privacy, Machine Learning Security}


\maketitle

\section{Introduction}

Leveraging large models, Machine Learning as a Service (MLaaS) is rapidly expanding into business, workflows and personal applications, raising concerns over the risks associated with handling sensitive, polluted, or copyrighted data~\cite{CarliniIJLTZ23,BrownLMST22}. A notable example is Google's €250 million fine by France for using publisher content without authorization in its Bard model~\cite{wsj2024google}. Machine unlearning~\cite{CaoY15}, aimed at removing targeted data lineage from the model without full retraining, is now viewed by MLaaS providers as a critical solution for data compliance. This approach helps avoid the hefty expenses associated with complete model retraining. As an option, exact machine unlearning~\cite{CaoY15,BourtouleCCJTZL21} often modify the training pipeline to minimize the crossing of data lineage, thereby reducing the computational costs associated with retraining. Typically, this process involves retraining only a submodel or checkpoint that corresponds to the related subset. This strategy reduces retraining costs while maintaining model utility. Yet, the popularity of pretraining and foundational models often forbids the pre-implementation of exact unlearning. This gap has spurred the development of approximate machine unlearning algorithms~\cite{IzzoSCZ21,GolatkarARPS21,NeurIPS21Adaptive,MaLLLMR23,cvprMehtaPSR22,abs-2308-07707,abs-2108-11577}, a resource-efficient post-hoc solution that adjusts model parameters to mimic the model trained without the targeted data. 

While benefiting from the fast adaptability of approximate unlearning, a question arises: How much of the unlearning commitment is fulfilled with the approximate solution? \textit{Here, we use 'unlearning commitment' to refer to the ability of approximate unlearning algorithms to thoroughly remove all targeted data lineage from the model}. The unlearning commitment is a guarantee that once the process is executed faithfully, the unlearning system can remain consistent with the retrained system, ensuring no remnants of the unlearned data will affect future outcomes or decisions made by the system. 

\textbf{Lifecycle unlearning commitment management.} Consider further the run-time scenario, it is essential to identify and manage the risks associated with incomplete unlearning commitments, specifically providing accurate measurements on how completely the targeted data has been unlearned. We define the '\textit{unlearning lifecycle}' as the period that starts with the submission of an unlearning request, includes the faithful execution of the unlearning algorithm, and ends after the use of the unlearning outcome in the next training or inference processes.
The Lifecycle Unlearning Commitment Management (LUCM) enables the system to detect and provide early warnings for anomalies in unlearning completeness throughout the unlearning lifecycle, including both \textit{under-unlearning} and \textit{over-unlearning}. We denote under-unlearning as the scenario where approximate unlearning, despite being faithfully executed and achieving its predefined optimization goal on the requested batch, inadvertently leaves the related data lineage embedded in the model. This occurs in unlearned samples and can lead to significant privacy risks. We describe over-unlearning as the process where the unlearning action unintentionally removes more information than intended from the retained training data, undermining model robustness in unpredictable ways, even without external threats.

To reach the objectives of LUCM, we identify three practical challenges within the run-time environment: (i) providing accurate, sample-level measurements of unlearning completeness for each sample regardless of unlearning batch size variations; (ii) producing robust measurement results for continuous unlearning requests or across different unlearning groups; and (iii) ensuring that the computational costs of these measurements remain manageable and cost-effective. Beyond addressing the three challenges in LUCM, which typically involve using a fixed, pre-designed unlearning algorithm, it's also important to develop a general unlearning commitment metric applicable to most approximate unlearning algorithms, as long as they are applicable within the parameter space. We believe these metrics will meet not only operational demands but also advance analytical research in approximate unlearning.

Although many studies on approximate unlearning claim acceptable unlearning results by comparing unlearned models with exactly retrained models during algorithm design, no method convincingly assesses unlearning performance in run-time scenarios. This limitation arises because using exact retraining as a ground truth in such scenarios would contradict the primary goal of approximate unlearning—to avoid full retraining. In this paper, we first delineate the process of managing unlearning commitments during run-time and propose a robust strategy to effectively address the primary challenges encountered in achieving this goal.

\noindent {\bf Our work.} In this paper, we set out to develop a general method for measuring sample-level approximate unlearning completeness, tailored for the unlearning lifecycle. We showcase the utility of our proposed unlearning metric across various unlearning tasks, highlighting the relationship of its sample-level measurement score with unlearning completeness, and its computational efficiency during the unlearning lifecycle. We show that the proposed method can serve as a monitoring metric to detect anomalies in unlearning completeness during the unlearning lifecycle, including cases of under-unlearning and over-unlearning. Next, we apply it to benchmark the unlearning commitments of current approximate unlearning algorithms. Our investigation reveals that these algorithms fail to consistently fulfill their unlearning commitments during the unlearning lifecycle. Specifically, two risks are associated with these approximate unlearning algorithms (not present in exact unlearning): 1) the unlearning utility of previously unlearned data can continue to be disturbed by subsequent unlearning of new data; 2) almost all algorithms exhibit an unlearning equity issue, particularly when the unlearning difficulty varies across different groups. Both risks highlight the importance of monitoring unlearning completeness during the unlearning lifecycle, allowing for appropriate adjustments to the unlearning efforts so that the unlearning commitments made by these algorithms at the sample level are guaranteed. 

To summarize, this work contributes to the literature with:

1. Introducing the lifecycle unlearning commitment management task and identifying misalignments between its challenges and existing unlearning metrics, followed by the design of a general, resource-efficient solution to measure sample-level approximate unlearning completeness during the unlearning lifecycle.

2. Demonstrating the utility of our metric across different unlearning tasks by comparing its measurements with the ground truth of exact retraining results. Applying the designed metric for sample-level anomaly detection in approximate unlearning completeness during the unlearning lifecycle, addressing cases of both under-unlearning and over-unlearning.

3. Applying the designed metric to benchmark the unlearning commitments of current approximate unlearning algorithms across diverse unlearning tasks. Our evaluation uncovers a significant gap, showing that these algorithms cannot maintain their unlearning commitments during the unlearning lifecycle.

\section{Preliminaries}

Before delving into the measurement of unlearning completeness within the LUCM, in this section, we first provide an overview of unlearning algorithms and review the current metrics used to evaluate unlearning effectiveness. For related work on dataset auditing and Proof-of-(Un)Learning, please refer to Appendix~\ref{sec:appendix_related}.

\subsection{Machine Unlearning}

Machine unlearning refers to the process of removing specific training data lineage from a machine learning model without fully retraining. This is essential for models trained on sensitive or restricted data that requires periodic or on-demand deletion. Initially driven by the 'right to be forgotten,' the scope of this concept has expanded to not only meet privacy requirements but also address issues like model alignments and security vulnerabilities. Recent studies have broadened the definition of unlearning, categorizing the methods into two types: exact unlearning, which provides a definitive removal of data, and approximate unlearning, which offers greater flexibility.

\noindent {\bf \textit{i) Exact unlearning.}} The most rigorous approach to machine unlearning is exact unlearning. This method requires retraining the model from scratch using a dataset that excludes the targeted data. To reduce the retraining computation, Cao et al.~\cite{CaoY15}  proposed converting the learning algorithm into a summation form, allowing the unlearning process to simply update the model based on the updated summations. Bourtoule et al.~\cite{BourtouleCCJTZL21} partitioned the entire dataset into disjoint shards, training submodels separately on each shard, and then aggregating the predictions from these submodels. This approach ensures that unlearning only requires updating the corresponding shard and submodel. Ullah et al.~\cite{UllahM0RA21} developed a total variation stable learning algorithm for smooth convex empirical risk minimization problems. Their approach to achieving fast unlearning involved a rejection sampling strategy to reduce the probability of recomputation.
While those approach guarantees the complete removal of data lineage, they are still computationally intensive for large-scale models or scenarios where unlearning requests are frequent. Moreover, this approach is not applicable to post-hoc model unlearning, where the trained model has already been deployed.

\smallskip
\noindent {\bf \textit{ii) Approximate unlearning.}} As an alternative to the retraining-based exact unlearning methods, approximate machine unlearning algorithms aim to balance computational efficiency with unlearning utility. They provide resource-efficient post-hoc solutions for removing data from large models by adjusting model parameters. Based on how they calculate the model update, we classify those unlearning algorithms into three groups:

\textit{Log-based retrieval}: 
Amnesiac Unlearning~\cite{GravesNG21} directly subtracts the corresponding parameter updates, which have been logged during the training process, of the small batches containing targeted data from the model weights. 
Thudi et al.~\cite{ThudiDCP22} design a Standard Deviation Loss for the original training process, which is beneficial to reduce the unlearning verification error of the later unlearning by adding back the logged gradients.
Generally speaking, those log-based methods are memory-intensive for keeping gradient records during large model training. They are practical only when removing a small subset of data known in advance, such as temporary authorization data, by saving updates for a small batch. It is also challenging for these methods to remove the influence of targeted samples on later ones, whose gradients may be affected by the targeted sample.

\textit{Hessian-based update}: 
Guo et al.~\cite{GuoGHM20} develop a certified-removal mechanism for data deletion in $l_2$ regularized linear models by applying a single Newton step to the model parameters. This step aims to approximately minimize the leave-$k$-out loss, thereby removing the influence of the deleted data point. 
Izzo et al.~\cite{IzzoSCZ21} introduce the projective residual update to reduce the time complexity associated with the approximate data deletion in linear and logistic models. 
Fisher Forgetting~\cite{GolatkarCVPR20,GolatkarECCV20} approximates the scrubbing procedure of selective forgetting through a noisy Newton update, deriving it as reducing the KL divergence distance between two model distributions: one trained on the original dataset and the other trained on the retained dataset. They calculate the corresponding update by approximating the Hessian of the forgotten data using the Fisher Information Matrix. 
Despite there being a theoretical foundation for data deletion in linear models, approximating the influence of targeted samples in deep models remains challenging due to their non-convexity and the randomness of perturbations. Computation efficiency is also a concern when it comes to large models.

\textit{Dynamics Masking}: 
Forsaken~\cite{MaLLLMR23} introduces a mask gradient generator that can iteratively generate mask gradients to “stimulate” neural neurons to unlearn the memorization of given samples. 
Selective Synaptic Dampening~\cite{abs-2308-07707} uses the Fisher information matrix from training and forgetting data to identify key forget set parameters. Then, it dampens these based on their significance to the forget set relative to the overall training data. 
Jia et al.~\cite{JiaLRYLLSL23} explore the application of model sparsification via weight pruning in machine unlearning.
These methods efficiently achieve machine unlearning by masking parameter dynamics of given samples but rely on explaining their opaque performance to evaluate their unlearning utility. 

This paper will primarily focus on unlearning commitment management for approximate algorithms. Since exact unlearning provides definitive unlearning, 
approximate unlearning operations are performed on the parameter space, making its unlearning utility less transparent, an evaluation step is necessary. Additionally, approximate unlearning is more acceptable for large models due to its computational efficiency.

\subsection{Failure of Existing Unlearning Measurements}\label{sec:subsec:measurement}
Current methods for measuring unlearning vary widely, from analyzing weight distributions to testing the accuracy of data that should be forgotten. However, as Table~\ref{tab:metric_align} illustrates, these metrics do not align well with the requirements of LUCM. We compare these metrics with our designed approach to highlight their differences. In this section, we will examine these metrics in detail, identify their shortcomings, and explain why they fail to align with LUCM.

\begin{table}[h]
\centering
\caption{Misalignment of unlearning metrics with LUCM requirements. Symbols: $\fullcirc$ represents full alignment with requirements; $\halfcirc$ indicates partial or conditional alignment; $\emptycirc$ denotes no alignment.}
\begin{tabular}{lcccc}
\hline
& Sample-level & Robust & Efficient & General \\
\hline
Weight-based & \emptycirc & \halfcirc & \emptycirc & \emptycirc \\
Accuracy-based  & \emptycirc & \emptycirc & \fullcirc & \emptycirc \\
MI-based & \fullcirc & \halfcirc & \emptycirc & \fullcirc \\
\hline
Our Method & \fullcirc & \fullcirc & \fullcirc & \fullcirc \\
\hline
\end{tabular}
\label{tab:metric_align}
\end{table}

\noindent {\bf \textit{Weight distribution analysis}}~\cite{GolatkarCVPR20,GolatkarECCV20} typically assumes a proxy for the optimal (retrained) model weights. The smaller the distance between the approximate unlearned model and the proxy, the better the unlearning completeness. However, due to the unavailability of the retrained model, this criterion is often replaced by maximizing the distance between the original model and the approximate unlearned model, along the direction of the proxy. Yet, this substitute criterion fails to provide a definitive standard for complete unlearning, offering only a directional measurement. Besides, both the bounds between the unlearned model and the proxy, and between the proxy and the optimal weights, become unbounded when the unlearning batch size is large and continuous approximate unlearning updates are conducted. Another work~\cite{abs-2208-10836} evaluates unlearning algorithms in a white-box setting based on epistemic uncertainty. It calculates the residual information about the unlearned dataset using the trace of the Fisher Information Matrix (FIM). Similar to the task of dataset auditing, it fails to provide sample-level measurement results. Additionally, the computational demands of this metric are comparable to those of an approximate unlearning algorithm, potentially leading to a significant added computational burden. This would be particularly impactful for large-scale models or in situations where real-time performance is critical. Thudi et al. ~\cite{ThudiDCP22} derive a proxy named unlearning error for verification error (the $l_2$ difference between the weights of an approximately unlearned and a naively retrained model). Unfortunately, they designed it only as a regularization term and minimized it during the original training to improve the ability to later unlearn (by external algorithms) with a smaller verification error. Besides, the strong assumptions about stochasticity in this method impede its extension to general unlearning methods as the measurements for unlearning.

 \smallskip 
\noindent {\bf \textit{The accuracy evaluations}}~\cite{GravesNG21,cvprMehtaPSR22,GolatkarCVPR20,GolatkarECCV20} are misleading. This is because unlearning completeness relies not on the correctness of predictions: an incorrect but consistent prediction by both unlearned and retrained systems does not decrease completeness~\cite{CaoY15}. In other words, accuracy evaluations (regardless of using retain, forget, or test datasets) are only applicable to evaluate model utility performance. The consistency evaluation is applicable to measure unlearning completeness, but it is not practical since the retrained model is unavailable, as we have discussed. Interclass Confusion test~\cite{goel2023adversarial} introduces an invasive method by injecting a strong differentiating influence specific to the forgetting dataset into the training dataset via label manipulations. Such an inject test requires specifying the forgetting set prior to training, which is a constrained unlearning scenario, and could potentially degrade the model's performance related to the forgetting dataset.

\smallskip
\noindent {\bf \textit{Membership inference (MI)}}, a widely used technique for determining whether a sample belongs to a model's training data, has been utilized by researchers to evaluate the effectiveness of unlearning algorithms~\cite{LiuT20,HuangLL21,GravesNG21,MaLLLMR23,GolatkarECCV20}. Yet, when considering its application to LUCM, the significant computational burden makes it impractical and inefficient. The need to train a separate inference model for each target sample, coupled with the extensive computational resources required, hinders its scalability and feasibility, especially in scenarios involving the continuous unlearning of numerous data samples throughout the model's lifecycle.  

Moreover, MI techniques are primarily designed to accurately identify members while minimizing false alarms. This aim is evident in their optimization for a high True Positive Rate at a low False Positive Rate, which we denote as \textit{MI\_TPR@LowFPR}. As Carlini et al.~\cite{CarliniCN0TT22} pointed out: 'If a membership inference attack can reliably violate the privacy of even just a few users in a sensitive dataset, it has succeeded.' However, when the target sample is 'unlearned' from the model, the focus shifts to measuring the completeness of the unlearning for that sample. In this context, we are particularly concerned with the unlearned members, who, in an ideal unlearning update, should be non-members of the model. Consequently, this shifts the problem to Non-membership Inference (NMI).

\smallskip
\noindent \textbf{Why are effective MI techniques inadequate for NMI tasks?}
To understand this, let's first examine the confusion matrix shifts from MI to NMI shown in Figure~\ref{fig:shifts}. As we can derive from the figure, a high \textit{NMI\_TPR@LowFPR} corresponds to the high \textit{MI\_TNR@LowFNR}, following a consistent notation with previous. However, the design and optimization of MI techniques results in less attention being paid to this corner.

\begin{figure}[h]
    \centering
    \includegraphics[width=0.9\linewidth]{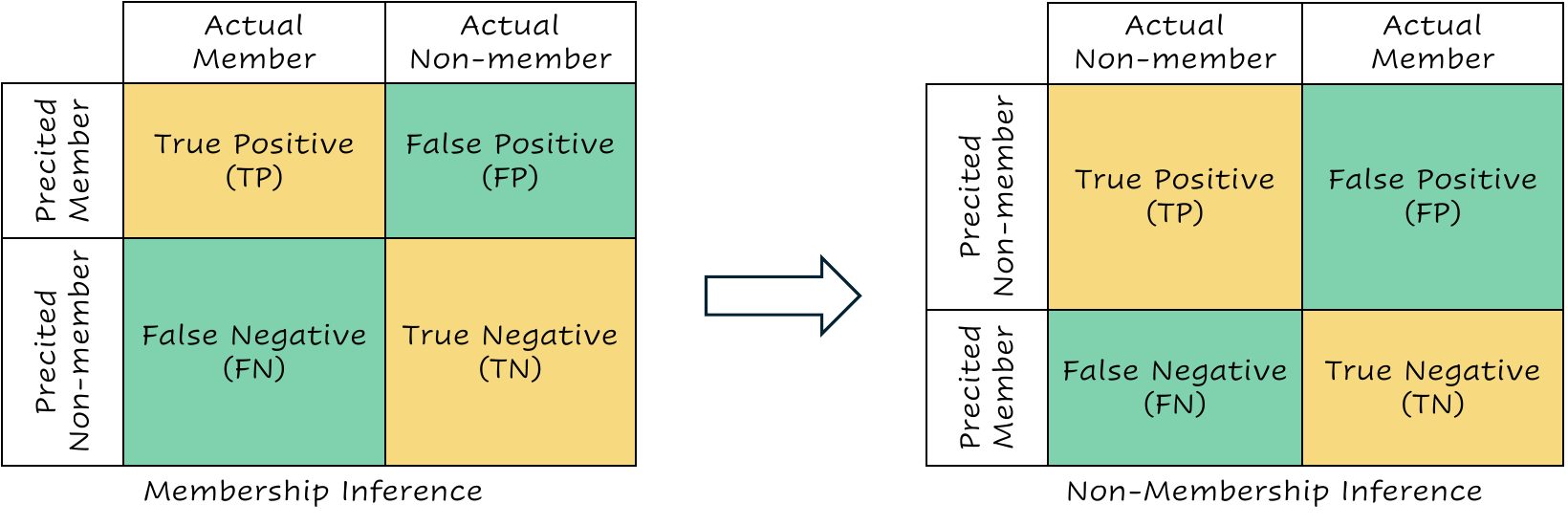}
    \caption{Confusion Matrix Shifts from Membership to Non-Membership}
    \label{fig:shifts}
\end{figure}

We further illustrate the impact of such design biases on inference performance through a concrete example in Figure~\ref{fig:prefer}. Both Model 0 and Model 1 achieve the same Area Under the Curve (AUC) score within the membership inference task. Model 0 is characterized by a high \textit{MI\_TPR@LowFPR} but demonstrates a low \textit{MI\_TNR@LowFNR}. It excels at membership inference but fails to perform well in non-membership inference. We  will also see similar phenomenons in our experiments, see Figure \ref{fig:retrain_results_tprs} and \ref{fig:retrain_results_auc} for details. A more detailed description of this shift is provided in Section~\ref{sec:method}.

\begin{figure}[h]
    \centering
    \includegraphics[width=0.6\linewidth]{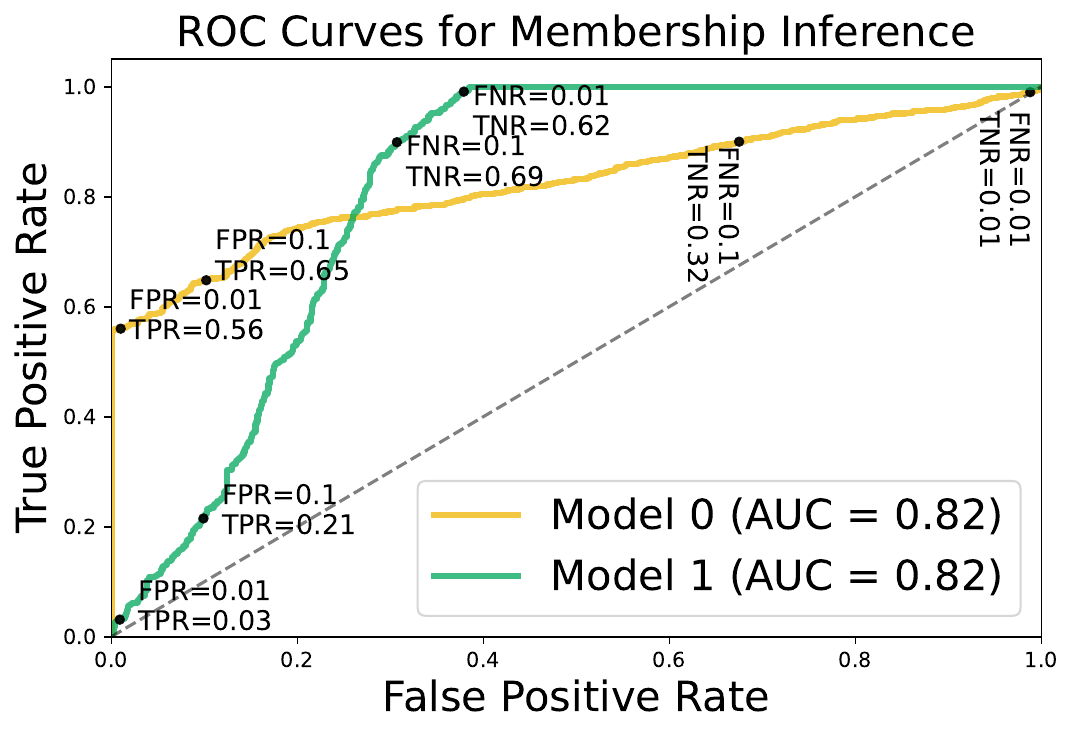}
    \caption{Membership Prefer vs. Non-Membership Prefer}
    \label{fig:prefer}
\end{figure}

Note that achieving a perfect score (100\% AUC) in membership inference undoubtedly signifies mastery in both membership preference and non-membership preference at the same time. Yet, consistently reaching such a level remains a practical challenge. Thus, it leaves space to enhanced strategies specifically aimed at improving non-membership inference. Implementing focused approaches for non-membership inference could effectively reduce the existing gap in measuring the completeness of unlearning processes.

\section{Methodology}\label{sec:method}

As we have discussed, the inherent limitations of approximate unlearning algorithms underscore a significant drawback: their unavoidable inability to guarantee the complete removal of data lineage.
Thus, the question of managing lifecycle unlearning commitment arises for approximate unlearning systems, with the aim of sensing and providing early warning for both inadvertent and malicious loss of control during unlearning.

\subsection{Approximate Unlearning System}

Approximate unlearning systems typically find applications in scenarios involving large-scale models or frequent unlearning requests. In such cases, the computational cost of complete retraining or exact unlearning becomes prohibitive, making approximate techniques a more viable and scalable solution. This system is triggered upon receipt of an unlearning request. Considering the specific requirements of the task and available resource constraints, the system selects an appropriate approximate unlearning technique from a range of options to efficiently address the request. After completion of the unlearning process, the system delivers the output and incorporates the updated results into the application, while providing transparent and auditable provenance of the unlearning operation, enabling stakeholders to track and monitor potential risks and uncertainties.

Despite their computational benefits, approximate unlearning algorithms introduce a trade-off between efficiency and the completeness of data removal. As a result, managing the lifecycle unlearning commitment and ensuring accountability becomes crucial. These systems require robust mechanisms to monitor and quantify the extent of data removal, as well as safeguards to prevent inadvertent or malicious loss of control during the unlearning process.

\smallskip 
\noindent \textbf{Problem Statement:} In real-world unlearning systems, an Unlearning Commitment Manager has black-box access to the original and unlearned models, meaning they can query the output for given samples from both models. It only accepts the unlearning requests for samples certified as training members of the original model. Upon receiving an unlearning request, a post-hoc algorithm is faithfully applied to the original model to efficiently approximate the unlearning of the requested samples, referred to as unlearned members. The remaining samples in the training set are termed retained members. Furthermore, there exists a dataset of non-members associated with the original model; these datasets are not involved in the update process between the original and unlearned models, but can be accessed by the manager. The primary role of the Unlearning Commitment Manager is to timely monitor the sample-level unlearning completeness for target samples throughout the entire unlearning lifecycle, providing early warnings for any anomalies related to unlearning. This includes issuing alerts for under-unlearning of unlearned samples and over-unlearning of retained samples.

\subsection{Formalizing Unlearning Completeness}

The varying criteria for defining approximate unlearning significantly complicate the process of assessing unlearning completeness at the sample level. This issue is further exacerbated by the complexity of deep learning models and diversity of data distributions.
Rather than focusing on specific unlearning algorithms, we aim to understand how a model behaves after unlearning instead, which is key to this assessment. To design a general assessment method, we first discuss whether the issue of unlearning completeness measurement can be translated into the problem of non-membership inference attacks. 

As we mentioned, membership inference approaches are primarily designed to differentiate training members from non-members. However, their application could become problematic when there are retained members, unlearned members, and non-members simultaneously present. In the unlearning context, researchers typically use these methods to assess how well they can differentiate between unlearned members and non-members to measure the unlearning utility~\cite{GravesNG21,MaLLLMR23,GolatkarECCV20}. If these methods fail to distinguish between the two—essentially leading to results similar to random guessing—such outcomes suggest successful unlearning. 

\begin{figure}[h]
    \centering
    \includegraphics[width=0.8\linewidth]{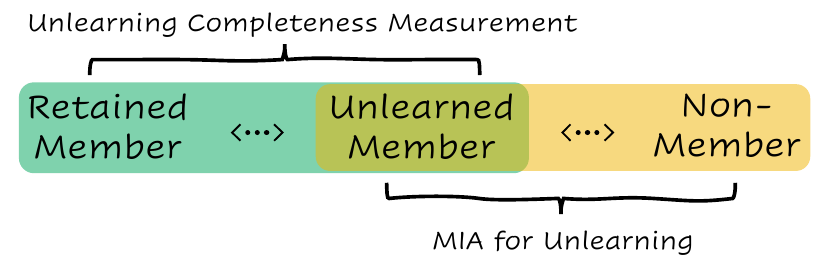}
    \caption{MIA for Unlearning vs. Unlearning Completeness Measurement}
    \label{fig:diff_setting}
\end{figure}

But when they are applied to differentiate between retained members and unlearned members (a critical problem of LUCM), results that approximate random guessing in this context signal unsuccessful unlearning. This subtlety shows that the way we measure unlearning completeness differs from traditional membership inference settings, as illustrated in Figure~\ref{fig:diff_setting}. Some may argue that the inability to distinguish between unlearned members and non-members could indicate unlearning completeness. However, as discussed in Section~\ref{sec:subsec:measurement}, the membership-preference design of existing methods may cause them to be inadequate for achieving a high \textit{MI\_TNR@LowFNR} in the ideal unlearning scenario, where both unlearned members and non-members should be classified as non-members.

Let event $A$ represent the scenario where a target sample $x$ is included in the training set of the original model $\theta_{\text{ori}}$. Conversely, let event $B$ indicate that the same target sample $x$ is not part of the training set of the model after 'unlearning' $\theta_{\text{unl}}$. Utilizing Bayes' Theorem, we can calculate the conditional probability of event $B$ given that event $A$ is true—the successful unlearning of sample $x$ from the trained model. This allows us to assess the sample $x$'s unlearning likelihood. The formula provided by Bayes' Theorem is as follows:

\begin{equation}
    P(B|A) = \frac{P(A|B) \cdot P(B)}{P(A)}.
\end{equation}
Here, $P(B|A)$ quantifies the probability that sample $x$ has been unlearned (event $B$), given that it was originally part of the model's training set (event $A$). $P(A|B)$ is the likelihood of sample $x$ being in the training set of $\theta_{\text{ori}}$ given it is absent from the unlearned model $\theta_{\text{unl}}$'s training set.

Unlike the privacy attack settings, in LUCM, the server could reject the unlearning request if the target sample $x$ was not in the training set, indirectly confirming its prior inclusion ($A$). This makes $P(B)$—a key metric for assessors, especially when $A$'s occurrence is confirmed, shifting their focus to this probability as a measure of unlearning success. Bayes' Theorem simplifies to:

\begin{equation}
    P(B|A) = P(B).
\label{eq2}
\end{equation} 

Eq (\ref{eq2}) directly illustrates that we should pay attention to non-membership inference, not the membership inference. A more rigorous analysis on Eq (\ref{eq2}) is provided in Appendix~\ref{theoremcpua_appendix}.

\subsection{Non-membership Inference in LUCM}

Within the LUCM framework, enhancing the design of non-membership inference starts by understanding its unique setup. As part of this framework, we define samples within three distinct subsets based on their membership status relative to two given models $\theta_{\text{ori}}$ and $\theta_{\text{unl}}$. Let $M_{\theta}(z)$ be the membership indicator function for a model with parameters $\theta$, where $M_{\theta}(z) = 1$ if $z$ is a member of the model (that is, in the training set on which the model $\theta$ was trained) and $M_{\theta}(z) = 0$ otherwise. These subsets are defined as follows:
\begin{equation}
\begin{split}
      D_{rm} & = \{ z \in D \mid M_{\theta_{ori}}(z) = 1 \text{ and } M_{\theta_{unl}}(z) = 1 \}, \\ 
      D_{um} & = \{ z \in D \mid M_{\theta_{ori}}(z) = 1 \text{ and } M_{\theta_{unl}}(z) = 0 \}, \\
      D_{nm} & = \{ z \sim \pi \ \mid M_{\theta_{ori}}(z) = 0 \text{ and } M_{\theta_{unl}}(z) = 0 \}, 
\end{split}
\label{eq7}
\end{equation}
where $D_{rm}$  is the set of retained members, $D_{um}$ includes all unlearned members, and $D_{nm}$ comprises non-members.

As a manager, one has known the member information for the original model $\theta_{\text{ori}}$. The primary objective then is predicting their non-membership status in the model $\theta_{\text{unl}}$ after unlearning has been applied. This differs from traditional (non)membership inference approaches, which are typically adversarial to the threat model and lack direct knowledge of (non)member data; consequently, they must mimic the (non)member behavior using shadow models. In contrast, within the LUCM setting, the manager can directly observe both the (non)member behavior associated with $\theta_{\text{ori}}$ and the behavior of $D_{nm}$ derived from both $\theta_{\text{ori}}$ and $\theta_{\text{unl}}$ without the need for shadow models and without compromising the setting. This setting thereby creates opportunities for implementing non-membership inference more efficiently. Next, we design two unlearning completeness scores and integrate them into a single, enhanced score.

For a target sample $z=(x,y)$, let us denote the loss of a model $\theta$ on this sample as $\mathcal{L}_{\theta}(z)$. The model $\theta$ assigns confidence scores $f_{\theta}(x, y)$ to its ground truth label. The logit scaled confidence, denoted as $g_{\theta}(z) = \log(f_{\theta}(x, y) / (1 - f_{\theta}(x, y))$, is a transformation of the confidence score $f_{\theta}(x, y)$ using the logit function. This transformation maps the confidence scores, which typically lie between 0 and 1, to a more interpretable scale ranging from negative infinity to positive infinity.

\smallskip
 \textbf{Likelihood's Difference Score}: This method quantifies non-membership likelihoods for each model individually, and calculates the differences between the measurements on original and the unlearned models. When provided with the original model $\theta_{ori}$ and the non-member set $D_{nm}$, we estimate a Gaussian distribution $G_{ori} \sim \mathcal{N}(\mu_{ori}, \sigma^{2}_{ori})$, characterized by its mean $\mu_{ori}$ and variance $\sigma^{2}_{ori}$, based on the logit scaled model confidences of $\theta_{ori}$ within $D_{nm}$, following the implementation in ~\cite{CarliniCN0TT22}. Similarly, for the unlearned model, we obtain another Gaussian distribution $G_{unl} \sim \mathcal{N}(\mu_{unl}, \sigma^{2}_{unl})$. The lower the target model's confidence relative to the estimated mean value for non-members, the more likely it is that the query sample is a non-member. For a target sample $z$, we compute two separate non-membership likelihoods for each model:
\begin{equation}
\begin{split}
      h_{\theta_{ori}}(z) & = Pr[g_{\theta_{ori}}(z)<G_{ori}],\\ 
      h_{\theta_{unl}}(z) & = Pr[g_{\theta_{unl}}(z)<G_{unl}].   
\end{split}
\end{equation}

We then normalize the difference between the two single-model-based likelihoods to obtain the likelihood difference score:
\begin{equation}
    \text{L-Diff}(z) = \frac{1+h_{\theta_{unl}}(z) - h_{\theta_{ori}}(z)}{2}.
\end{equation}

\smallskip
\textbf{Difference's Likelihood Score}: The second method tracks changes in logit-scaled model confidences between the original model $\theta_{ori}$ and the unlearned model $\theta_{unl}$ for the provided samples. It then directly estimates a single likelihood score that reflects the degree of unlearning based on the observed changes.

In the context of the learning and unlearning processes, let us define a fixed-membership-status group $D_{fix} = D_{rm} \cup D_{nm}$, which comprises samples whose membership status remains unchanged throughout these processes. Based on the assumption that an optimal approximate unlearning algorithm could minimize side effects on the model behavior on retained samples and the model generalizability, it is reasonable to infer that samples belonging to $D_{fix}$ exhibit only minor changes in confidence. For this group, we could estimate a distribution to model these changes. It involves calculating the confidence changes of its samples from $\theta_{ori}$ to $\theta_{unl}$ and fitting these to a Gaussian distribution $G_{fix}$, characterized by the mean and variance. 

In contrast, samples in the update-status group, for example $D_{upt} = D_{um}$, should display more distinct confidence changes. The higher the confidence changes relative to the estimated mean value for $D_{fix}$, the more likely it is that the query sample update its status.

However, given that model confidence is bounded to the range $[0,1]$, the confidence change is constrained to $[-1,1]$. This range implies a non-normal distribution. To address this, we employ a three-step boosted strategy: 1) applying two distinct adjustment methods to parameterize the non-normal confidence change distribution; 2) calculating the likelihoods of a target sample's confidence change relative to both distributions, separately; 3) taking the average of the two likelihoods to obtain our final unlearning score. The two adjustment methods are logit scaling and Median Absolute Deviation (MAD,~\cite{LEYS2013764}). We provide details in the following. 

Again with the logit scaling, for a sample $z$, we compute its logit scaled confidence change as $\phi_{A}(z) = g_{\theta_{unl}}(z) - g_{\theta_{ori}}(z)$. Because $g_{\theta}(z)$ has mapped the confidence score from a finite interval to an infinite range of values, the confidence change $\phi_{A}(z)$ can now take values over the entire real line, making it easier to estimate and model using statistical techniques.

For $D_{fix}$, we can approximate the distribution of the corresponding logit scaled confidence changes using a Gaussian distribution $G_{fix, A}$ with $\mu_{A}$ and variance $\sigma^{2}_{A}$. The likelihood that the target sample $z$ belongs to $D_{upt}$ based on logit-scaled model confidence is calculated as follows:
\begin{equation*}
    D_{A}\text{Lik}(z) = 1 - \text{Pr}[\phi_{A}(z) > G_{fix, A}], \text{where } G_{fix, A} \sim \mathcal{N}(\mu_{A}, \sigma^{2}_{A}).
\end{equation*}

For MAD, we fist compute the samples confidence changes without logit scaling as $\phi_{B}(z) = f_{\theta_{unl}}(x, y) - f_{\theta_{\text{ori}}}(x, y)$, where $z=(x,y)$. Then we directly calculate the mean $\mu_{B}$ of the model confidence change for $D_{fix}$ and replacing the variance calculation with MAD, for the values are bounded in an interval. Specifically, $var_{B}=c\cdot \textit{MAD}$, where $c$ is a constant to make MAD consistent with the standard deviation for normal distribution. $\textit{MAD}$, defined as the median of the absolute deviations from the data's median, offers a more robust estimate, being able to handle skewed distributions and less influenced by outliers compared to the standard deviation or variance. Therefore, we obtain $G_{fix, B}$. The likelihood that $z$ belongs to $D_{upt}$ directly directly based on model confidence is calculated as follows:
\begin{equation*}
    D_{B}\text{Lik}(z) = 1 - \text{Pr}[\phi_{B}(z) > G_{fix, B}], \text{where } G_{fix, B} \sim \mathcal{N}(\mu_{B}, var_{B}).
\end{equation*}

Further, the difference's likelihood score is:
\begin{equation}
    \text{D-Liks}(z) = \text{Mean} (D_{A}\text{Lik}(z),D_{B}\text{Lik}(z)).
\end{equation}

Finally, the measured score of unlearning completeness for sample $z$ is a boosted version based on $\text{L-Diff}(z)$ and $\text{D-Liks}(z)$. For the sake of simplicity, we choose to use the arithmetic average as our boosting function in this work.
\begin{equation}
    \textbf{UnleScore}(z) = \text{Boost} (\text{L-Diff}(z),\text{D-Liks}(z)).
\label{eq:unlearningscore}
\end{equation}
 
Note that $D_{rm}$ cannot be identified from $D_{um}$ in advance, as it is the objective of non-membership inference in LUCM. Consequently, $D_{fix}$ is not available. Instead, we use $D_{nm}$ to replace $D_{fix}$, and calculate the approximate mean and variance of $G_{fix}$. Experimental results will demonstrate the effectiveness of this replacement.

\smallskip
\noindent \textbf{Discussions.} For sample-level unlearning completeness measurement, our designed non-membership inference outputs unlearning scores scaled within the range of $[0,1]$ that represent unlearning completeness, rather than straightforward binary decisions. This approach is grounded in the rationale that a continuous score offers a more nuanced and detailed understanding of the model's behavior, specifically in the setting of LUCM.
This approach not only streamlines the quantitative measurements of unlearning completeness but also reduces the computational effort by eliminating the need for shadow model training, offering a more practical and efficient way for LUCM. The black-box design makes it generally applicable to a variety of unlearning methods.
When applying the designed metric to evaluate the performance of approximate unlearning algorithms, we prioritize the \textit{NMI\_TPR@LowFPR} as our primary metric. Additionally, we report on the balanced area under the curve (AUC) in our experimental analyses, following the evaluation methods implemented in \cite{CarliniCN0TT22}.

\section{Experimental Evaluation Setup}

We take 5 datasets in our evaluation benchmark, consisting of two image classification datasets (Cifar10~\cite{krizhevsky2009learning}, Cifar100~\cite{krizhevsky2009learning}), a shopping record dataset (Purchase100~\cite{ShokriSSS17}), a hospital record dataset (Texas100~\cite{ShokriSSS17}), and a location dataset (Location30~\cite{ShokriSSS17}). These datasets provide a varied testing ground for machine learning models, balancing the need for diverse data types with privacy considerations. Further information about these datasets and data processing can be found in Appendix~\ref{dataset_appendix}.

To validate the measurement utility of the designed metrics, we use an exactly retrained model as the benchmark for ground truth, obtaining outputs for both exact unlearned and retained samples. When applying this metric to benchmark approximate unlearning algorithms, we incorporate 7 approximate machine unlearning methods into our evaluation framework. These include \textbf{Fine Tuning~\cite{GolatkarCVPR20}}, \textbf{Gradient Ascent~\cite{abs-2111-08947}}, \textbf{Fisher Forgetting~\cite{GolatkarCVPR20}}, \textbf{Forsaken~\cite{MaLLLMR23}}, \textbf{L-Codec~\cite{cvprMehtaPSR22}}, \textbf{Boundary Unlearning~\cite{ChenGL0W23}}, and \textbf{SSD~\cite{abs-2308-07707}}. Appendix~\ref{approximate_alg_appendix} provides the detailed descriptions of these algorithms.

\subsection{Metrics Baselines}

We compare our designed metrics with the state-of-the-art privacy leakage-based metrics related to machine unlearning, focusing on exact retraining auditing tasks to demonstrate the utility of our metrics.
\begin{figure*}[h]
    \centering
    \includegraphics[width=0.95\linewidth]{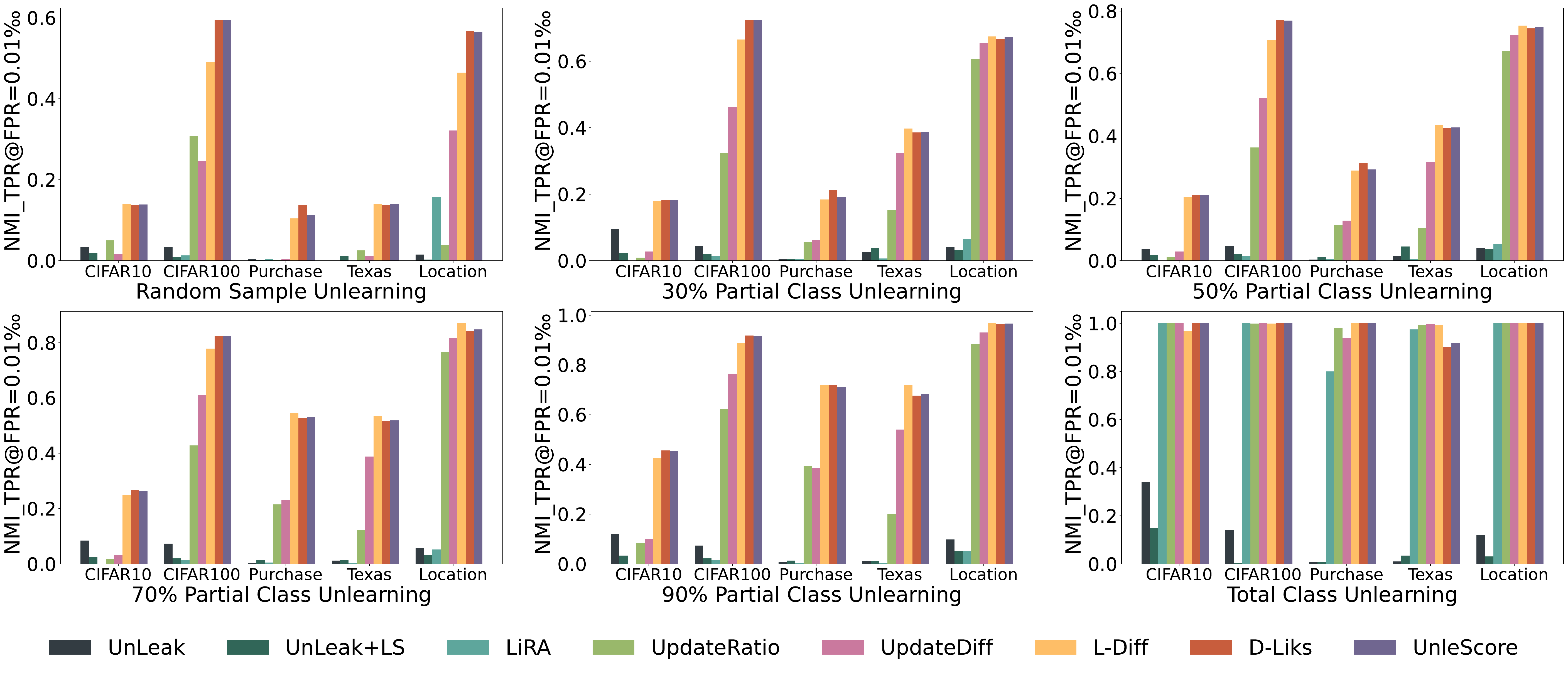}
    \caption{Statistical results (\textit{NMI\_TPR@FPR=0.01‰}) of metric utility  on 3 types of exact unlearning. }
    \label{fig:retrain_results_tprs}
\end{figure*}

\noindent \textbf{UnLeak:} Following the instruction of ~\cite{Chen000HZ21}, we train a baseline model, referred to as the `shadow original model', on the entire shadow dataset. Subsequently, we train 16 separate `shadow exact retraining models' on 16 distinct subsets of the shadow dataset, each termed a `shadow retaining set'. These shadow retaining sets, randomly sampled from 80\% of the shadow dataset, constituted 50\% of the shadow dataset's size each time. The adversary then processes the remaining 20\% of the shadow dataset samples, feeding them into their corresponding shadow retraining models and the shadow original model to obtain their posterior outputs. We train the attack model using the features constructed from these posteriors. Finally, the attack model can make predictions for target samples based on their constructed features.

\noindent \textbf{UnLeak+LS:} To enhance the performance of UnLeak, we further apply the logit scaling to the obtained posteriors, as implemented in LiRA~\cite{CarliniCN0TT22}. Then, we train the attack model using the newly constructed features from the scaled logits. 

\noindent \textbf{LiRA:} We implement the offline LiRA~\cite{CarliniCN0TT22} on $\theta_{unl}$ to predict if the target point is a non-member of $\theta_{unl}$. These shadow models replicate the architectural design of the original models. Additionally, the hyper parameters used in training the shadow models are aligned with those used in training the original models. Similarly, we train 128 shadow models on randomly sampled shadow datasets. For each target sample, we can query its non-member outputs on those shadow datasets, estimate the corresponding Gaussian distribution, and calculate the LiRA score. 

\noindent \textbf{UpdateRatio, UpdateDiff:} Following the procedure outlined by Jagielski et al.~\cite{Jagielski23}, we calculate the LiRA scores for a given target sample using both the original trained model and the unlearned model.  For estimating the LiRA scores, 128 shadow models were employed. We then combined two scores into a single score using the `ScoreRatio' and `ScoreDiff' methods used in ~\cite{Jagielski23}, separately. Based on the scoring function used, we named the two adversarial methods `UpdateRatio' and `UpdateDiff' for convenience.

\noindent \textbf{L-Diff, D-Liks, UnleScore:} As the proposed metrics, we will explore \textbf{L-Diff}, \textbf{D-Liks}, and \textbf{UnleScore} separately to understand their individual superiority in various unlearning measurement tasks in Section~\ref{sec:subsec:overallvalidation}.  Subsequently, we will employ \textbf{UnleScore} to detect unlearning anomalies and benchmark the performance of approximate unlearning algorithms.

It's important to note that LiRA, UpdateRatio, and UpdateDiff were initially designed for membership inference purposes. In our experiments, we flip the outcomes to adapt their outputs from indicating membership probability to reflecting non-membership probability.

\section{Unlearning Metric Validation}\label{sec:validation}

In this section, we begin by validating the effectiveness of our newly designed metrics for measuring the sample-level unlearning completeness. We accomplish this through a comprehensive analysis that includes measurements in exact unlearning tasks, analysis of score distributions across groups with varying non-membership statuses, and detection of unlearning anomalies.

\subsection{Assessing Metrics' Measurement Utility in Exact Unlearning Tasks}\label{sec:subsec:overallvalidation}

We first apply our designed metrics and baselines to three different exact unlearning tasks, each varying in measurement difficulty due to the different distribution overlaps between retained and exact unlearned samples. These include:

\begin{itemize}
    \item \textbf{Random Sample Unlearning:} This process targets samples within a \textit{randomly selected batch} without focusing on any specific class or data characteristic.

    \item \textbf{Partial Class Unlearning:}  This approach involves unlearning a portion of the samples from a specific class in the original model.
    
    \item \textbf{Total Class Unlearning:} The model is required to unlearn all instances belonging to a specified class.     
\end{itemize}

For random sample unlearning, we remove 500 randomly chosen samples from the training set and retrain the model on the retained set from scratch for each dataset. For partial class unlearning, we set the portions to 30\%, 50\%, 70\%, and 90\%, respectively. For both partial and total class unlearning, we select the first 10 classes from each dataset and perform unlearning for each class individually, averaging the results across these 10 classes. 

We compare the results of L-Diff, D-Liks, and five other related MIAs to measure the unlearning completeness for all exactly unlearned samples, as well as for retained samples from all training members. Given that this is a scoring task, we first use the output scores from baseline methods directly for all retained and exact unlearned samples. Then we collect the scores of all samples and their non-membership status to calculate the \textit{NMI\_TPR@FPR=0.01‰} for each dataset. This involves setting a binary threshold for the scores to achieve an FPR of 0.01‰, and then computing the corresponding TPR. We prioritized the \textit{NMI\_TPR@FPR=0.01‰} as the primary metric for interpreting the statistical measurement performance. By setting such a stringent threshold (0.01‰ FPR), we ensure that only the most clear-cut cases are classified as unlearned samples if a binary output is required based on the measuring score. 

As shown in Figure~\ref{fig:retrain_results_tprs}, L-Diff, D-Liks, and UnleScore outperform other baselines by a considerable margin. For LiRA, UpdateRatio, and UpdateDiff, their \textit{NMI\_TPR@FPR=0.01‰} results match the performance of L-Diff and D-Liks \textbf{only} in the total class unlearning task. Compared to L-Diff, D-Liks performs better in random sample unlearning and partial class unlearning tasks, while L-Diff excels in total class unlearning tasks. UnleScore, which averages the scores of L-Diff and D-Liks at the sample level, results in a trade-off in performance. We will demonstrate in Section~\ref{sec:subsec:correlation} how the general superiority of our designed metric is useful for detecting unlearning anomalies.
We also present the AUC scores in Figure~\ref{fig:retrain_results_auc} in the Appendix for reference; however, we do not consider them as a meaningful metric for this analysis. AUC scores, which are summarized results across all possible thresholds, fail to capture fine-grained differences in scenarios with low FPR. There can be different values of \textit{NMI\_TPR@FPR=0.01‰} even when the AUC scores are identical. The results of Figure~\ref{fig:retrain_results_tprs} and Figure~\ref{fig:retrain_results_auc} highlight our earlier point: membership inference techniques do not necessarily perform well in non-membership inference tasks.

\noindent {\bf \textit{Measurability challenges vary by task.}} 
The difficulty of obtaining accurate unlearning measurements varies significantly across tasks, primarily due to the degree of distribution overlap between unlearned and retained samples. Let's discuss each task individually. For the total class unlearning measurement task, nearly all metrics achieved perfect \textit{NMI\_TPR@FPR=0.01‰} across five datasets, except for UnLeak and UnLeak+LS. For partial class unlearning tasks, as the unlearned portion of the targeted class increases from 30\% to 90\%, both our designed metrics and the baselines improve their results. However, our metrics consistently maintain a distinctly superior performance compared to the baselines.

The most challenging task, random sample unlearning, often results in similar distributions for unlearned and retained sets, significantly complicating measurements. This overlap prevents any metric from consistently achieving a perfect result. Nevertheless, in datasets like CIFAR10, Purchase, and Texas, our designed metrics significantly outperform other metrics, achieving a tenfold increase in \textit{NMI\_TPR@FPR=0.01‰}. In the CIFAR100 and Location datasets, the performance advantages of the designed metrics are approximately twice that of other metrics. More analyses are provided in Appendix~\ref{sec:subsec:overallvalidation_add}.

After individually analyzing the contributions of L-Diff and D-Liks, we will use \textbf{UnleScore}—the average of the two metrics—for subsequent analyses for convenience.

\begin{figure*}[h]
    \centering
    \begin{subfigure}[b]{0.16\linewidth}
         \centering
         \includegraphics[height=0.08\textheight]{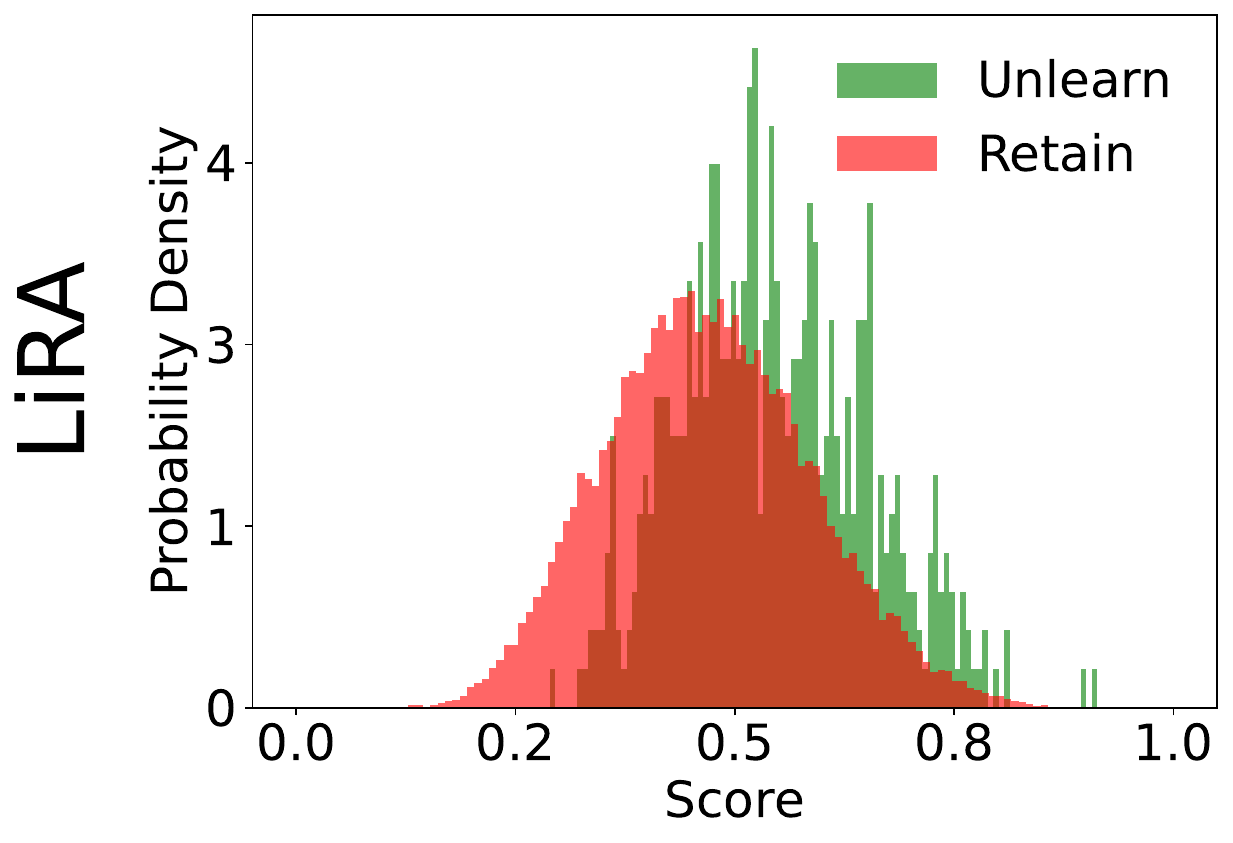}
         \caption{Random Sample}
     \end{subfigure}
     \begin{subfigure}[b]{0.15\linewidth}
         \centering
         \includegraphics[height=0.08\textheight]{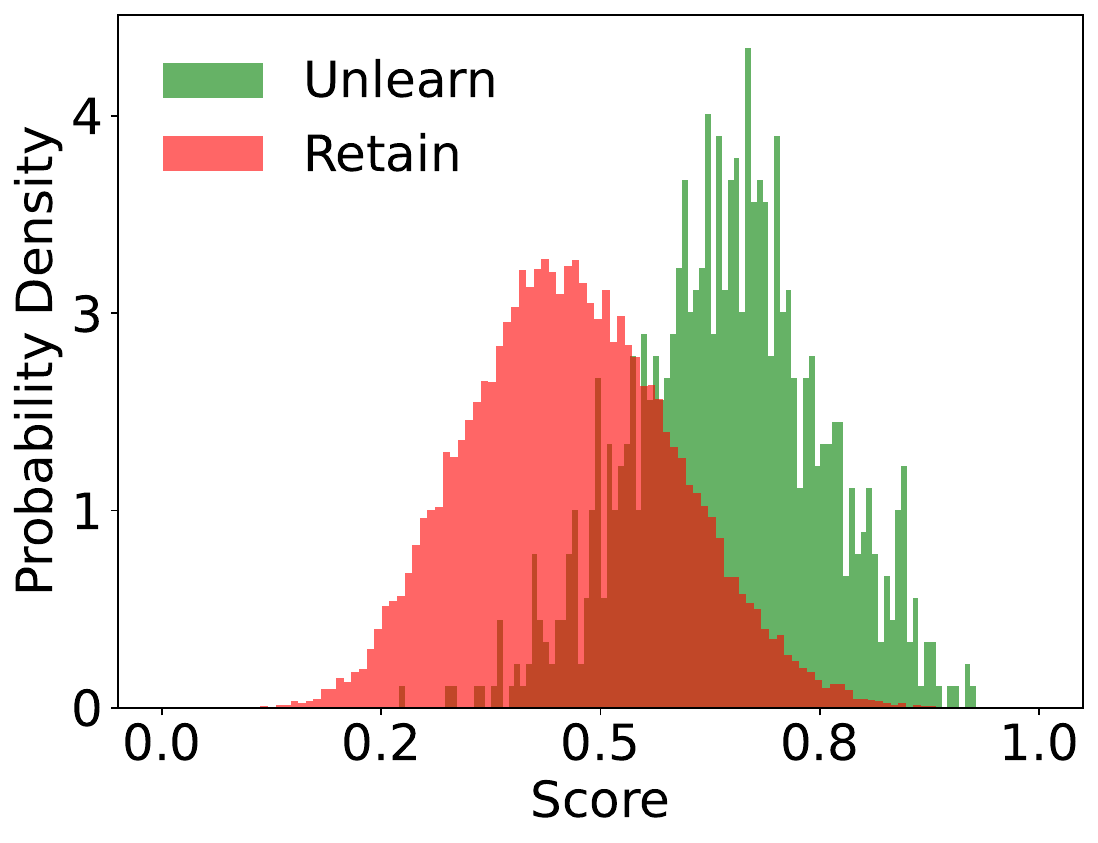}
         \caption{30\% Part-Class}
     \end{subfigure}
     \begin{subfigure}[b]{0.15\linewidth}
         \centering
         \includegraphics[height=0.08\textheight]{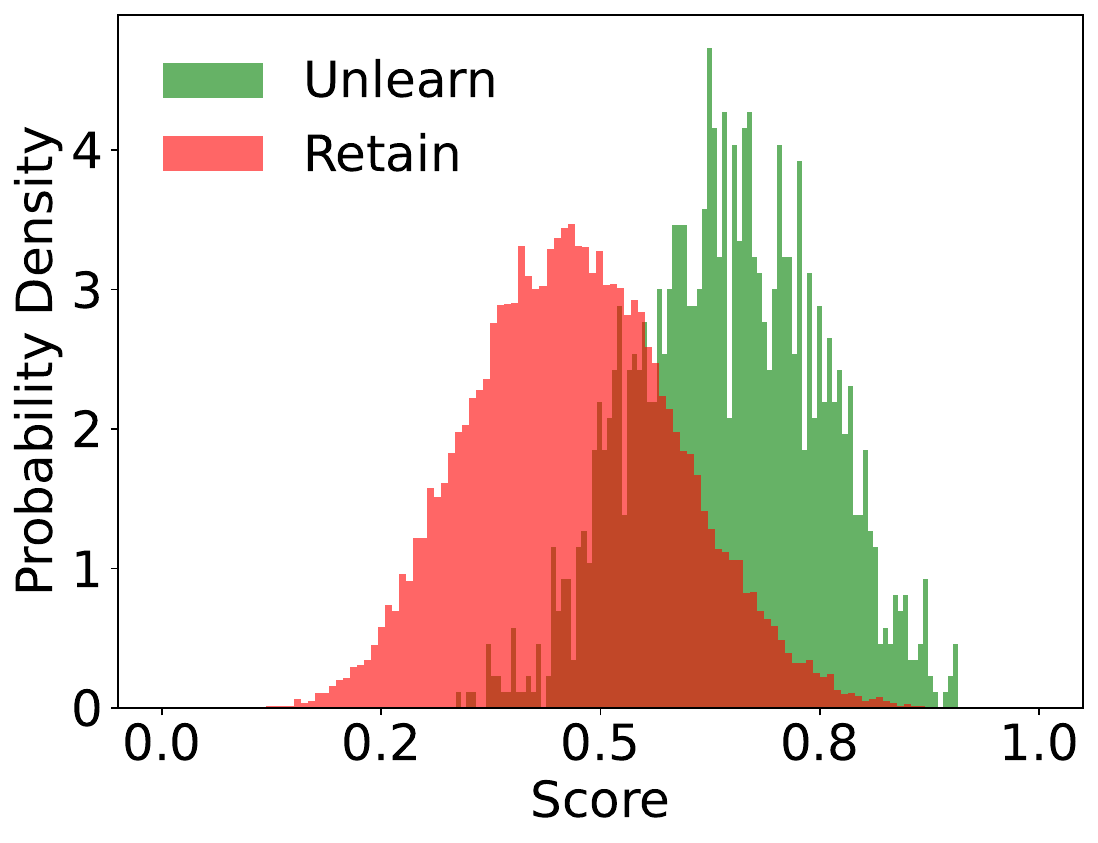}
         \caption{50\% Part-Class}
     \end{subfigure}
     \begin{subfigure}[b]{0.15\linewidth}
         \centering
         \includegraphics[height=0.08\textheight]{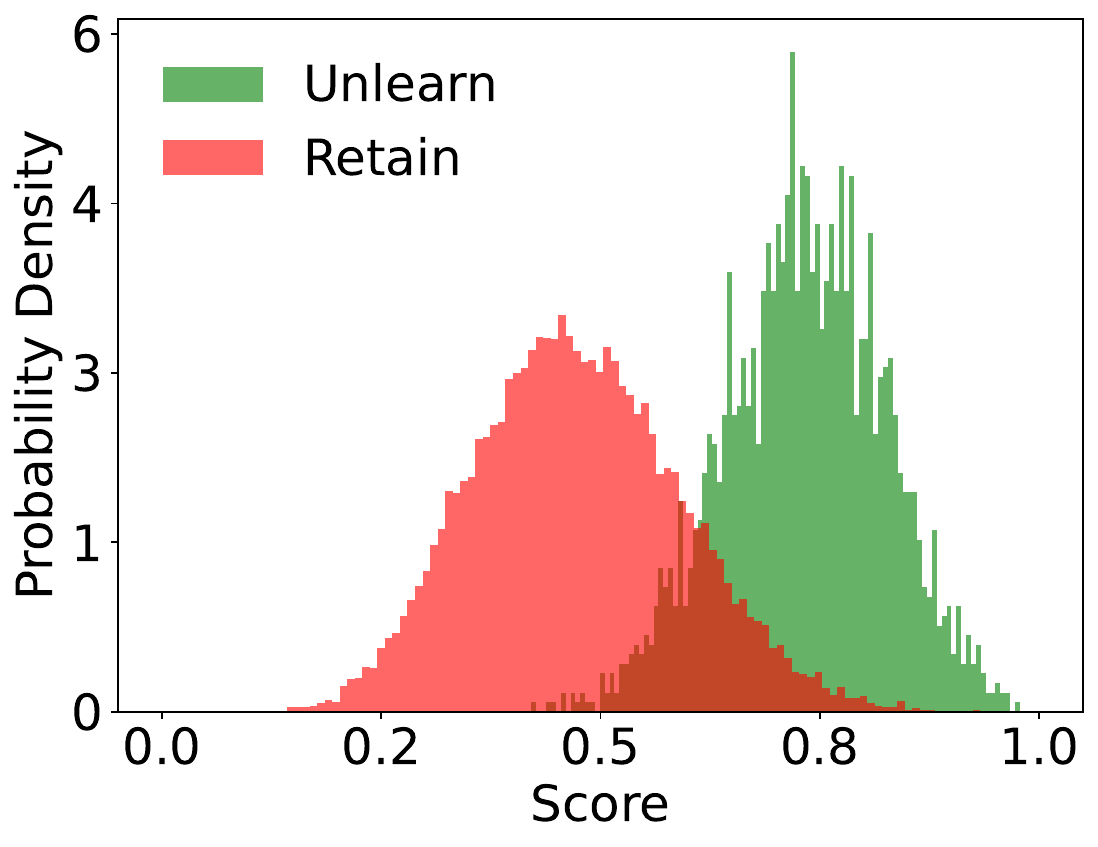}
         \caption{70\% Part-Class}
     \end{subfigure}
     \begin{subfigure}[b]{0.15\linewidth}
         \centering
         \includegraphics[height=0.08\textheight]{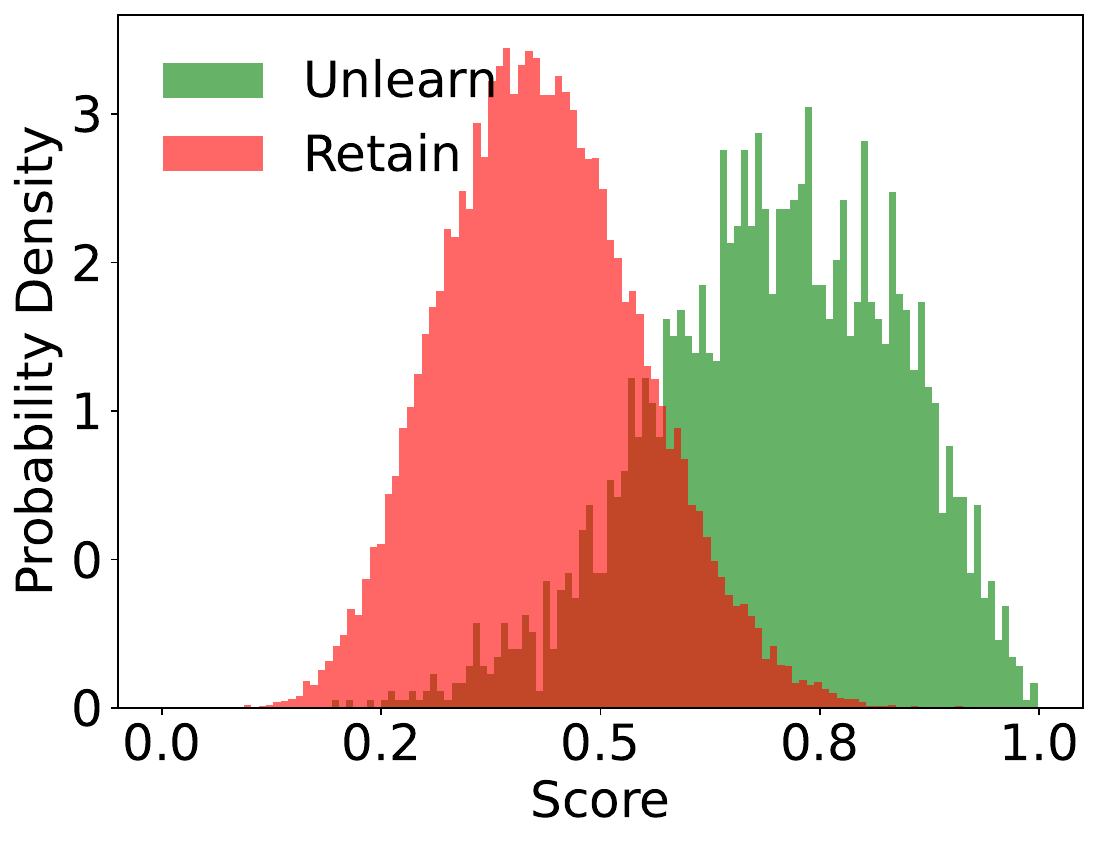}
         \caption{90\% Part-Class}
     \end{subfigure}
     \begin{subfigure}[b]{0.15\linewidth}
         \centering
         \includegraphics[height=0.08\textheight]{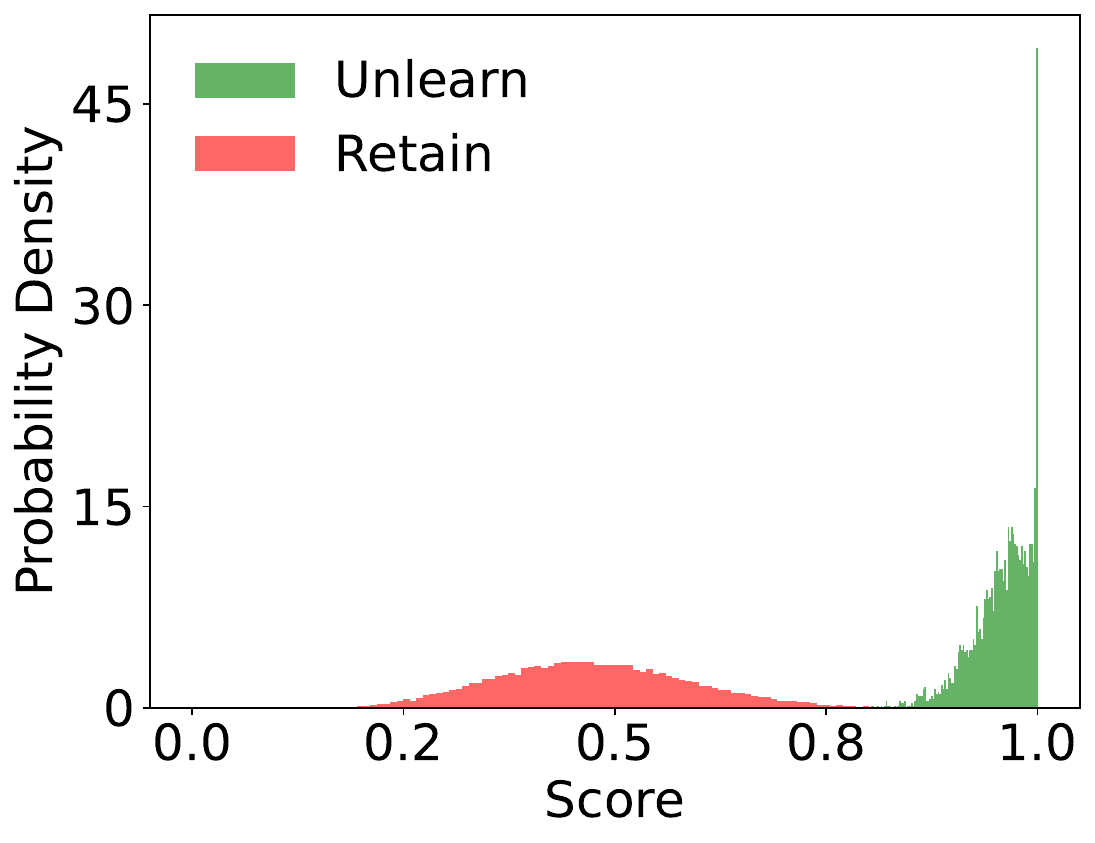}
         \caption{Total Class}
     \end{subfigure}
     \hfill
     \begin{subfigure}[b]{0.16\linewidth}
         \centering
         \includegraphics[height=0.08\textheight]{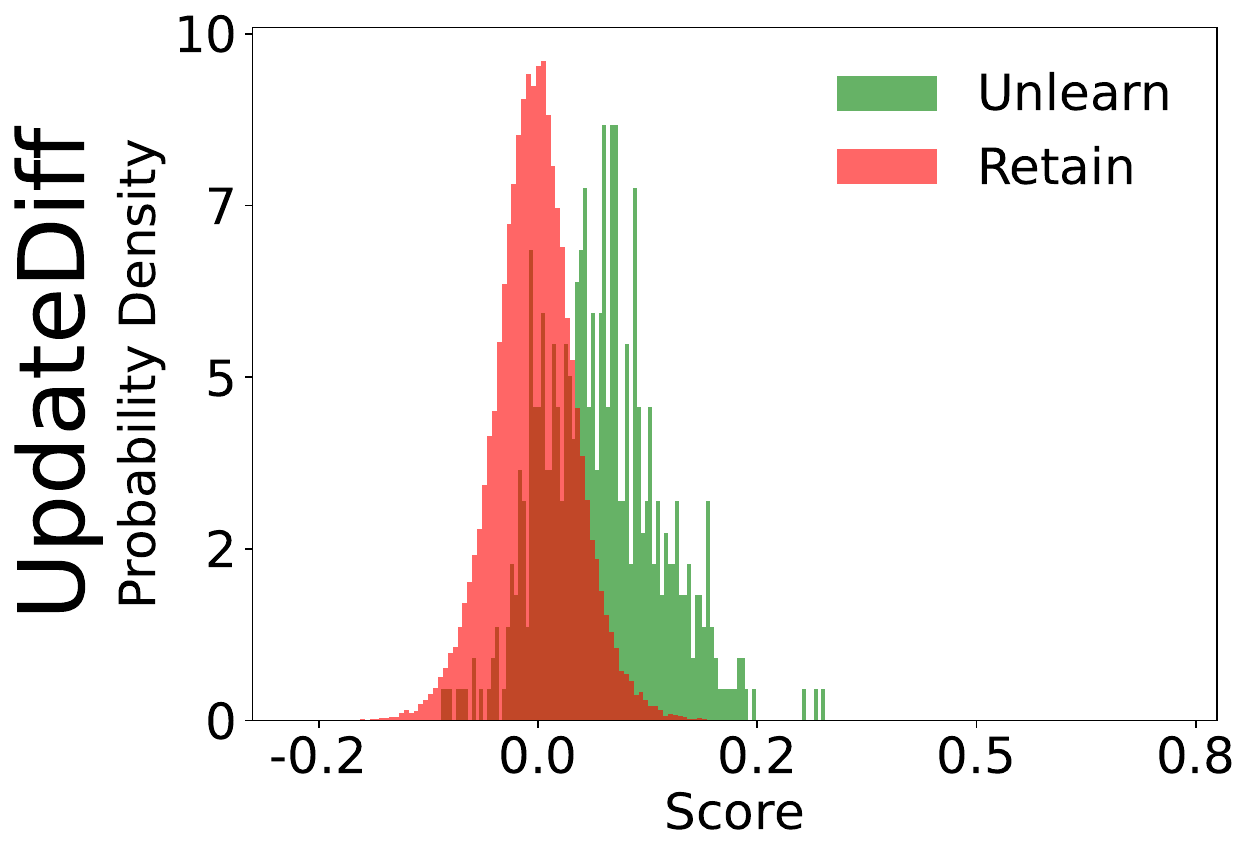}
         \caption{Random Sample}
     \end{subfigure}     
\begin{subfigure}[b]{0.15\linewidth}
         \centering
         \includegraphics[height=0.08\textheight]{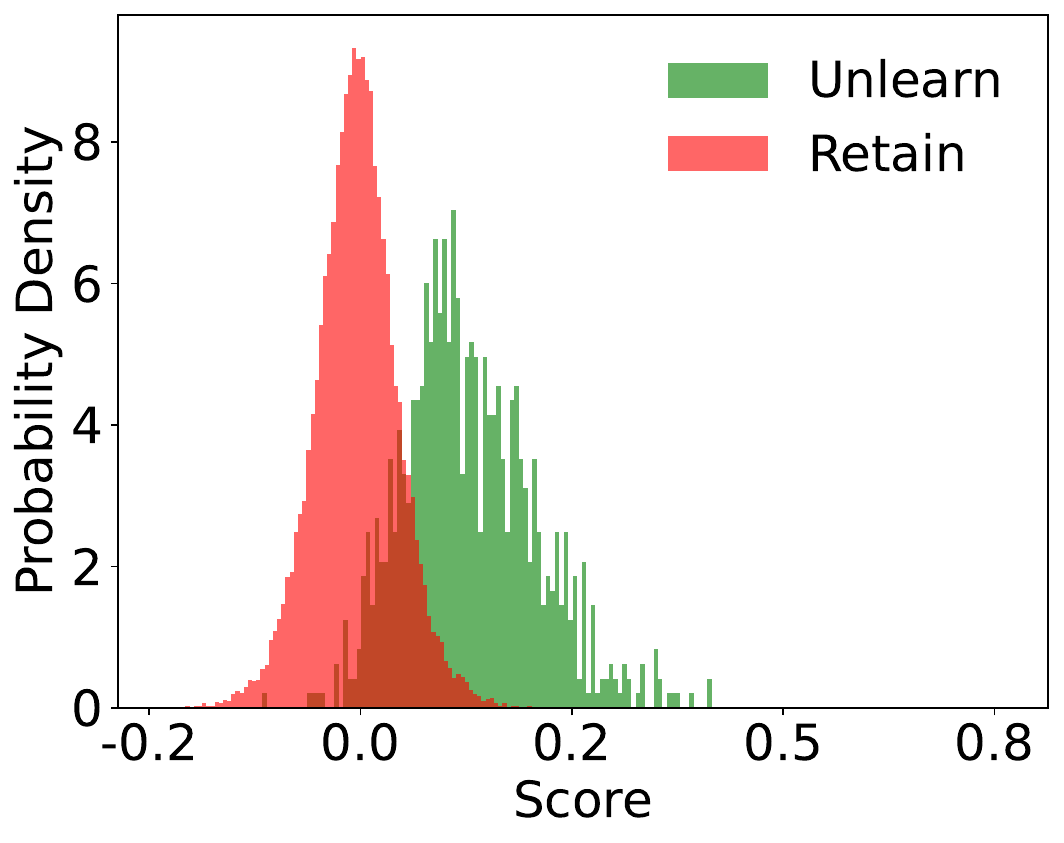}
         \caption{30\% Part-Class}
     \end{subfigure}     
\begin{subfigure}[b]{0.15\linewidth}
         \centering
         \includegraphics[height=0.08\textheight]{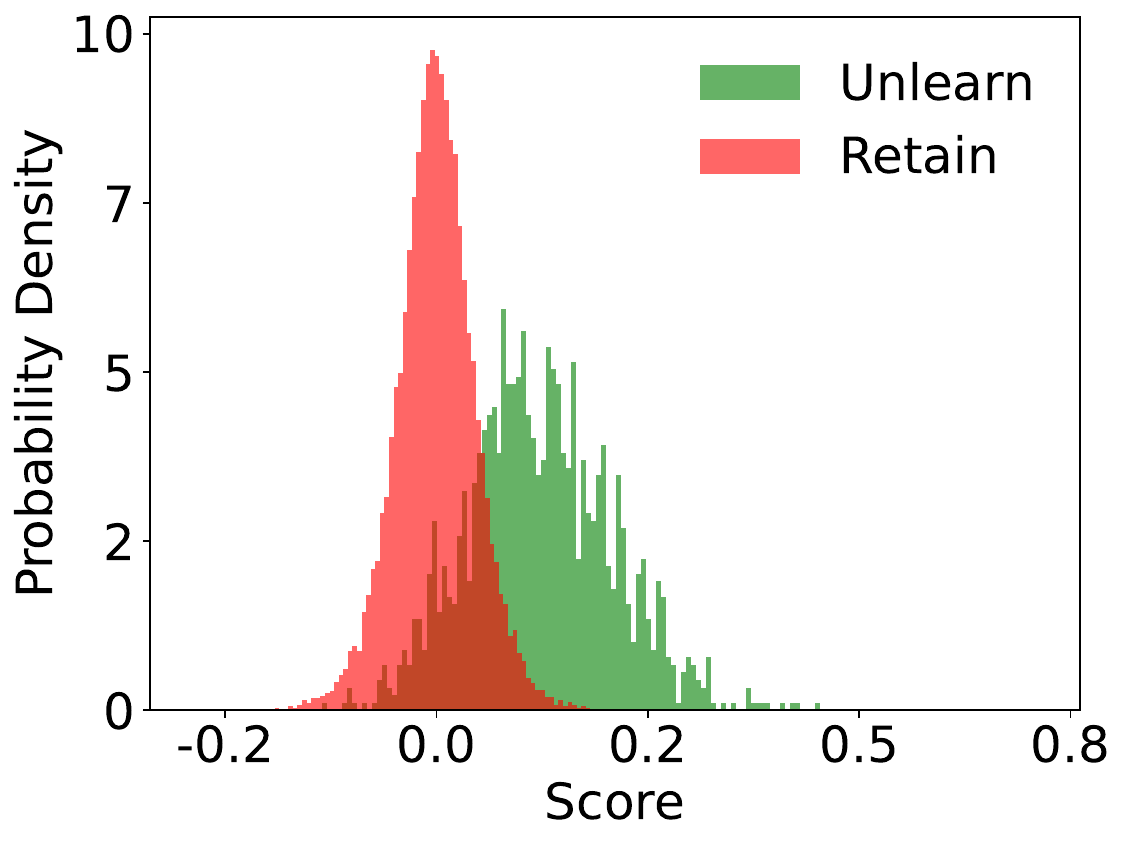}
         \caption{50\% Part-Class}
     \end{subfigure}     
\begin{subfigure}[b]{0.15\linewidth}
         \centering
         \includegraphics[height=0.08\textheight]{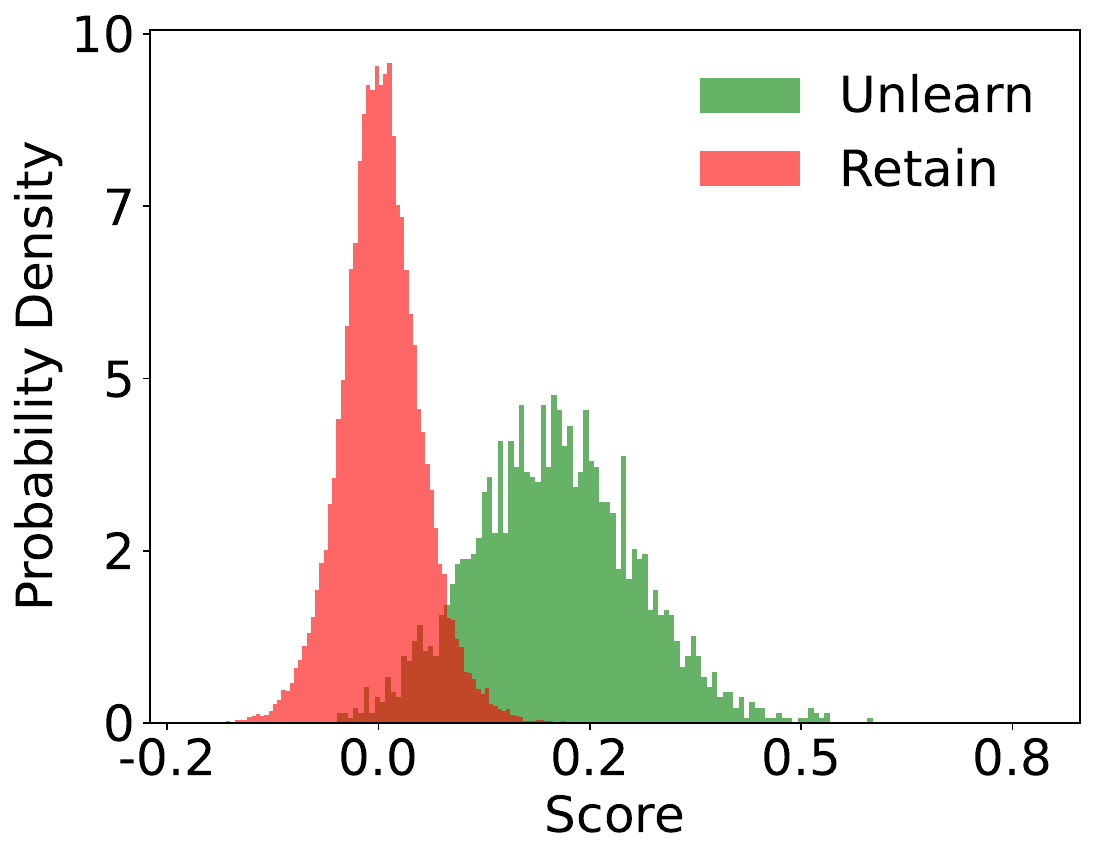}
         \caption{70\% Part-Class}
     \end{subfigure}     
\begin{subfigure}[b]{0.15\linewidth}
         \centering
         \includegraphics[height=0.08\textheight]{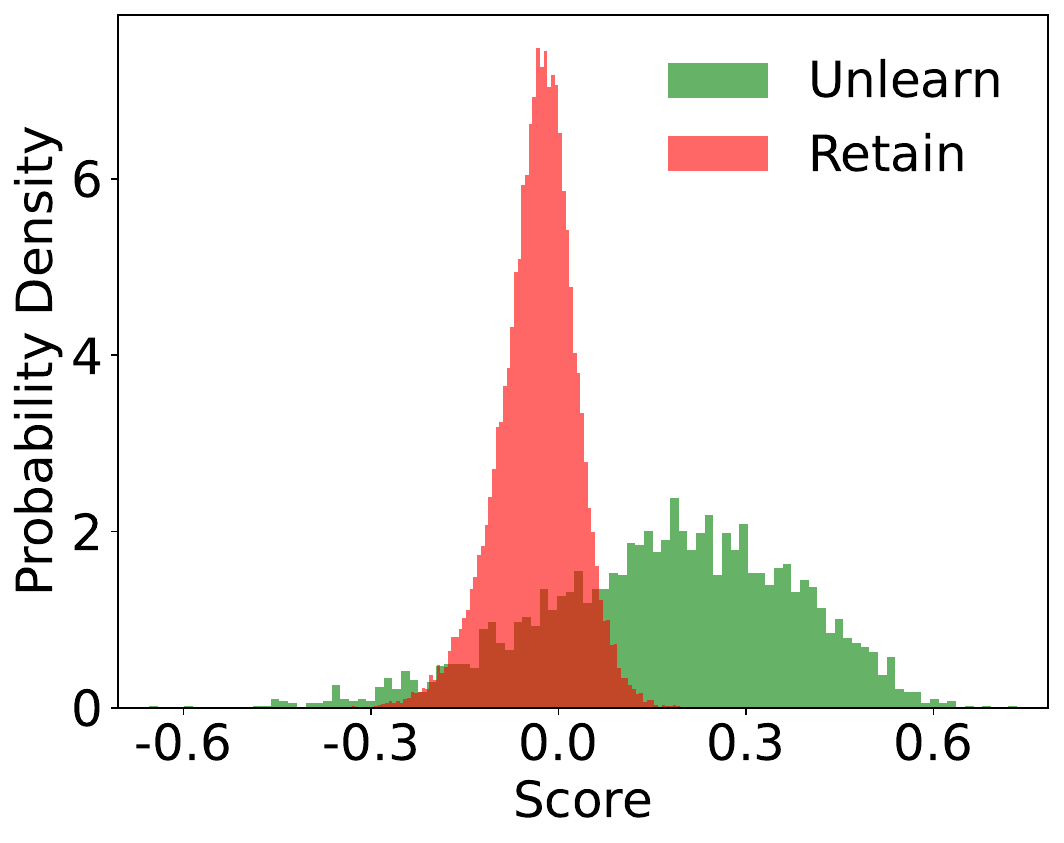}
         \caption{90\% Part-Class}
     \end{subfigure}     
\begin{subfigure}[b]{0.15\linewidth}
         \centering
         \includegraphics[height=0.08\textheight]{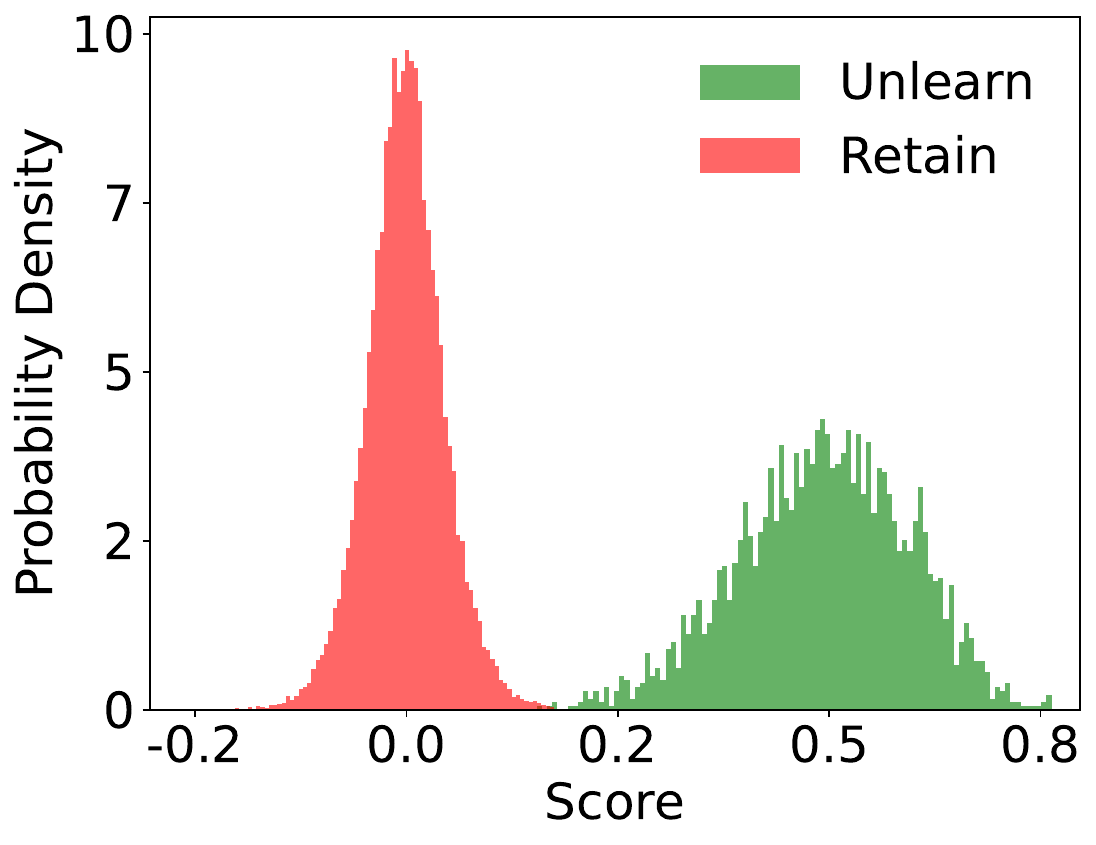}
         \caption{Total Class}
     \end{subfigure}     
     \hfill
\begin{subfigure}[b]{0.16\linewidth}
         \centering
         \includegraphics[height=0.08\textheight]{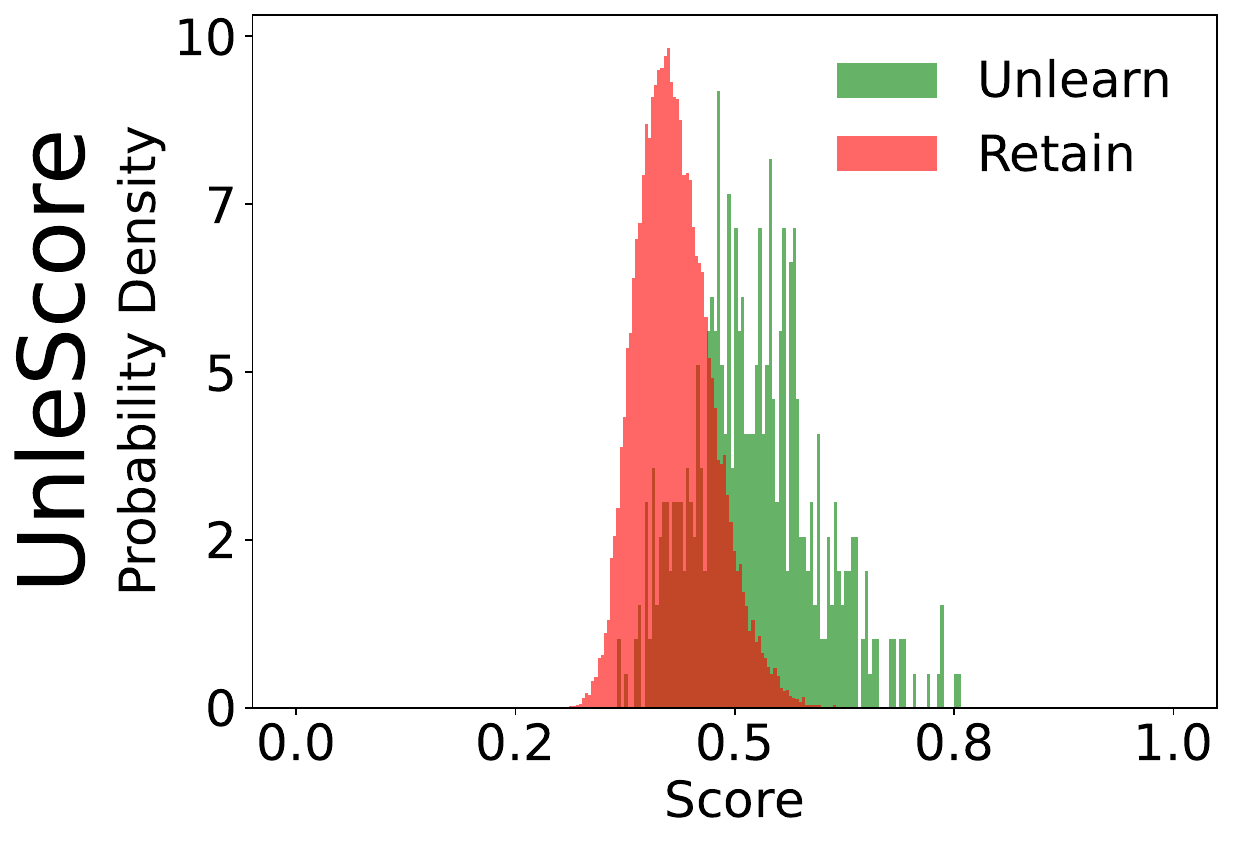}
         \caption{Random Sample}
     \end{subfigure}    
\begin{subfigure}[b]{0.15\linewidth}
         \centering
         \includegraphics[height=0.08\textheight]{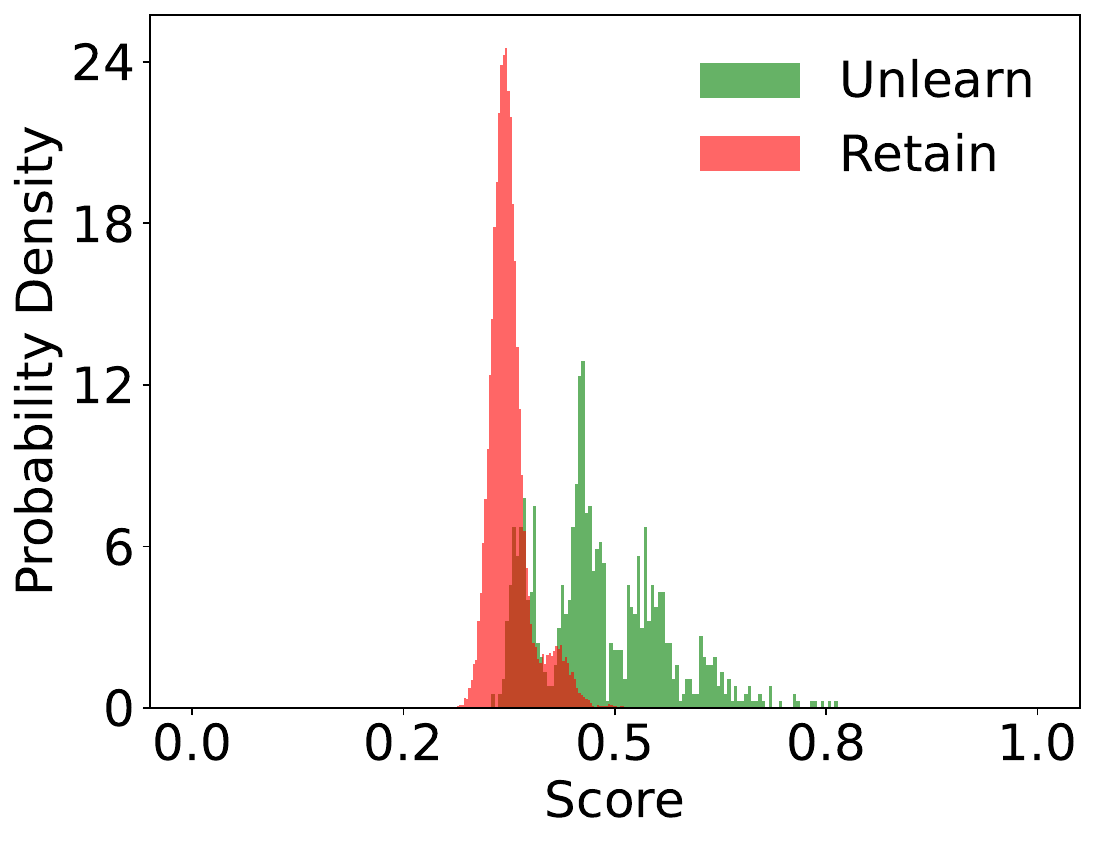}
         \caption{30\% Part-Class}
     \end{subfigure}    
\begin{subfigure}[b]{0.15\linewidth}
         \centering
         \includegraphics[height=0.08\textheight]{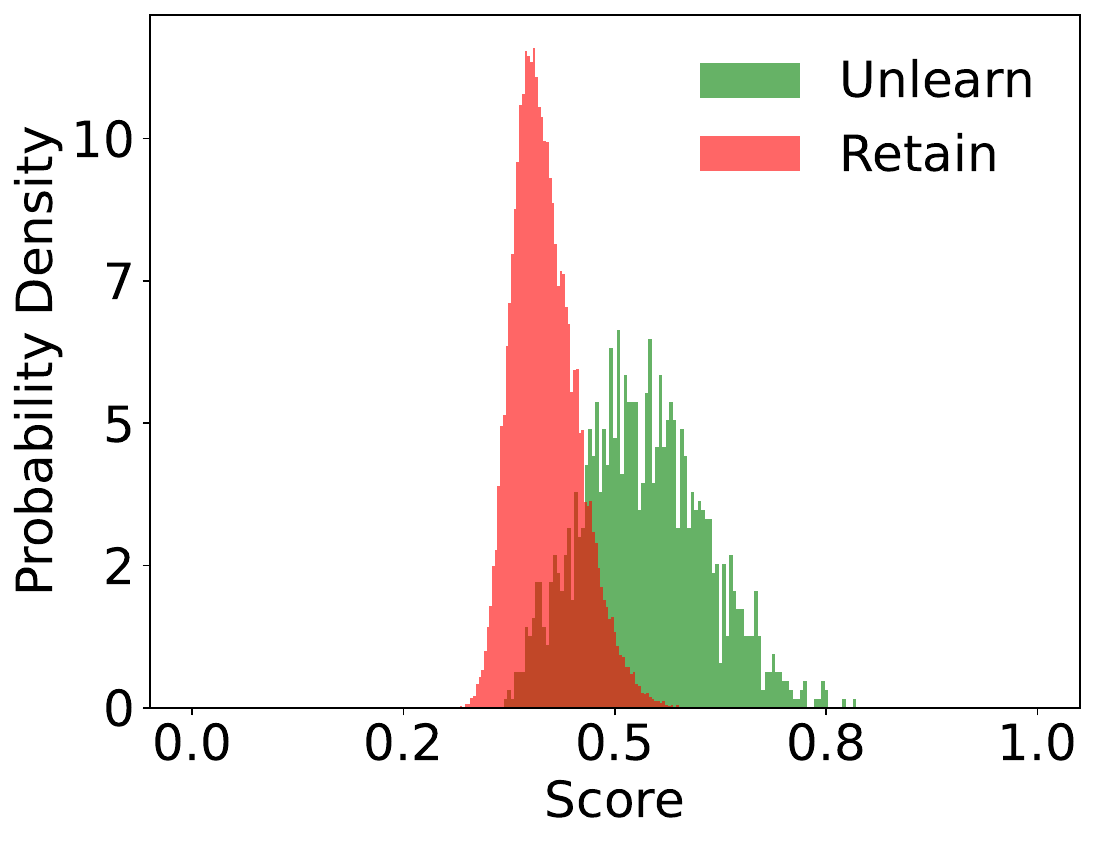}
         \caption{50\% Part-Class}
     \end{subfigure}    
\begin{subfigure}[b]{0.15\linewidth}
         \centering
         \includegraphics[height=0.08\textheight]{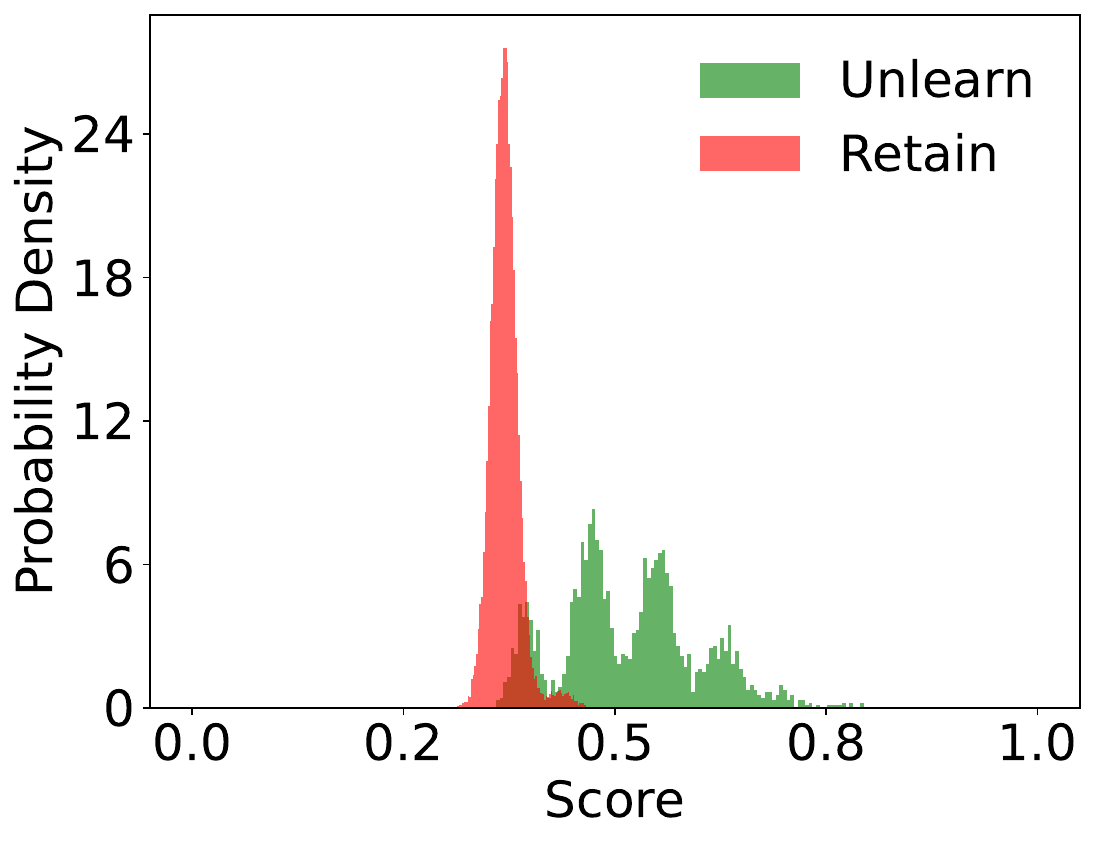}
         \caption{70\% Part-Class}
     \end{subfigure}    
\begin{subfigure}[b]{0.15\linewidth}
         \centering
         \includegraphics[height=0.08\textheight]{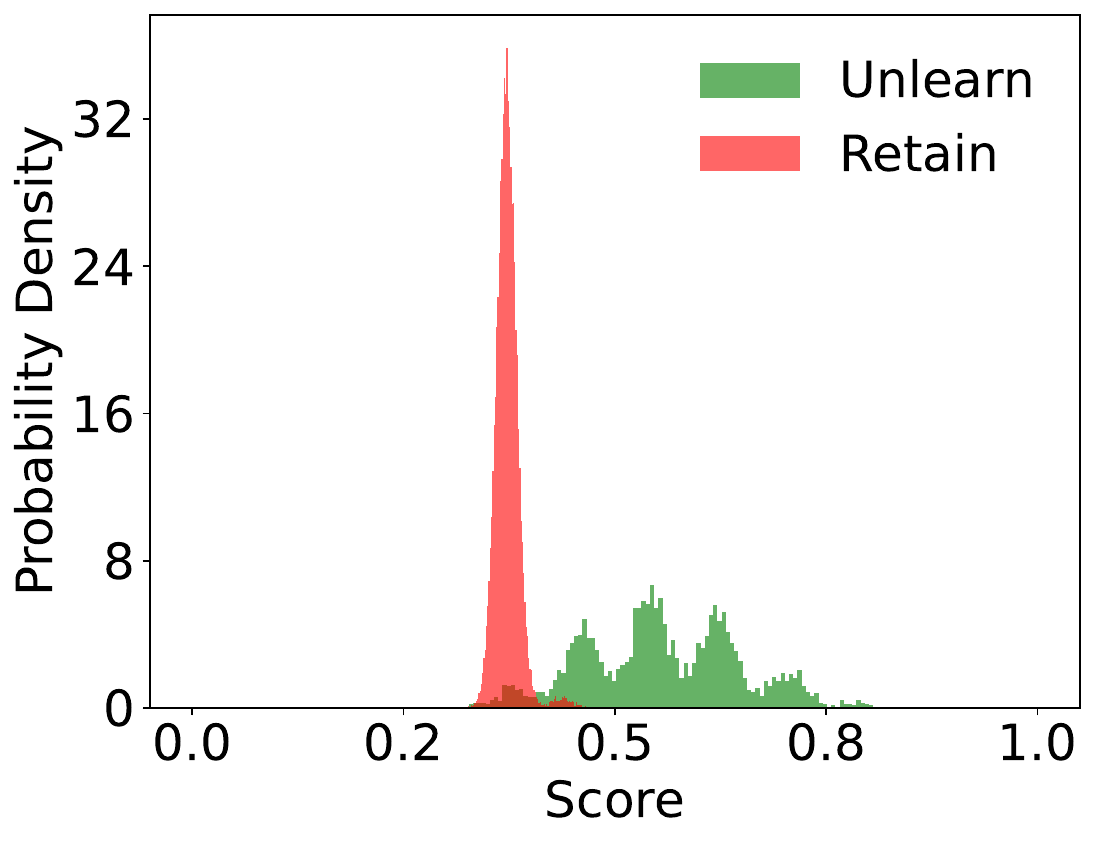}
         \caption{90\% Part-Class}
     \end{subfigure}    
\begin{subfigure}[b]{0.15\linewidth}
         \centering
         \includegraphics[height=0.08\textheight]{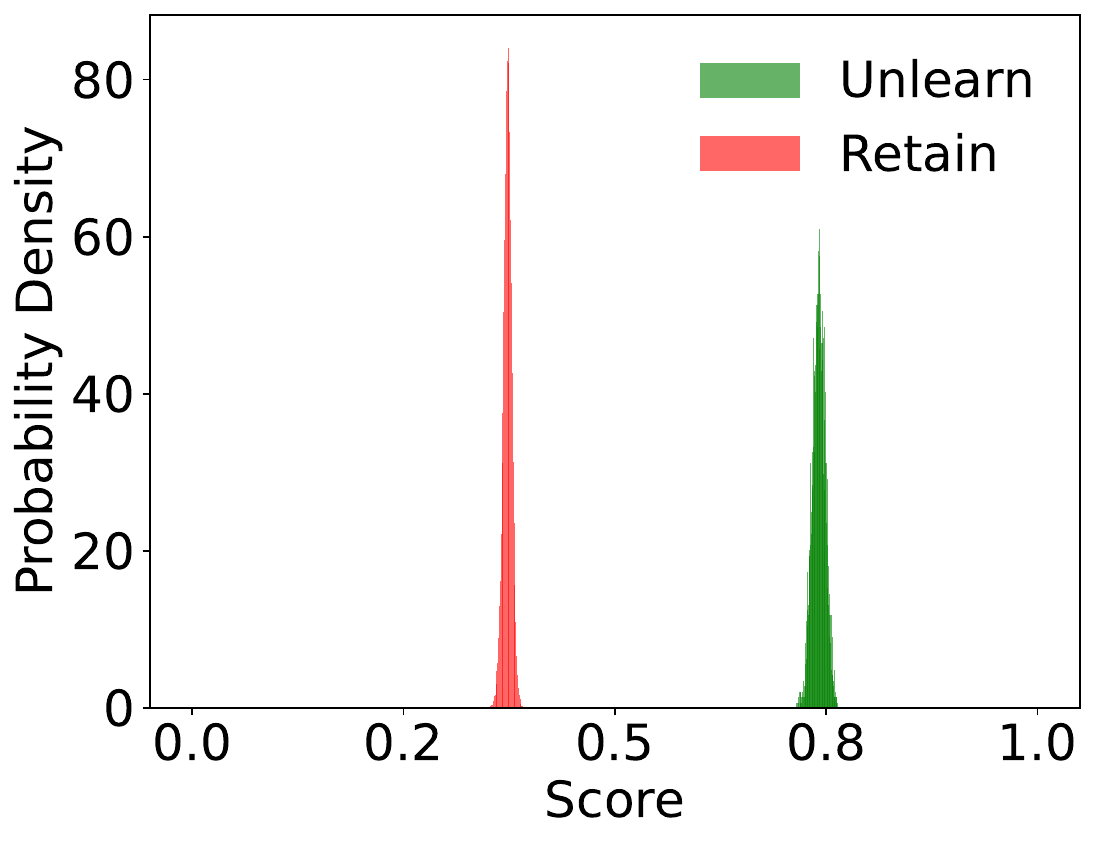}
         \caption{Total Class}
     \end{subfigure}       
     \caption{Score Distributions of Unlearning Metrics with Different Unlearning Tasks on CIFAR10. See Appendix~\ref{sec:subsec:metricthreshold_add} for results on other datasets.}
    \label{fig:unlearningscores_cifar10}
\end{figure*}

\subsection{Metric Threshold for Sufficient Unlearning}\label{sec:subsec:metricthreshold}

Is achieving a metric close to 100\% truly a sufficient condition for exact unlearning? Our metric, which ranges from 0 to 1, is designed to differentiate between unlearned and retained samples through an unlearning score. While we do not aim to establish a binary threshold for this measurement task, analyzing the score distributions for both unlearned and retained samples during exact retraining may offer valuable insights into the adequacy of the high score as a sufficient condition for exact unlearning. It's important to note that attaining a 100\% metric score for exact retraining is also challenging. We just expect the unlearned samples will have higher unlearning scores than those retained samples. Consequently, we first explore what metric score level would constitute a sufficient condition for exact unlearning.

Figure~\ref{fig:unlearningscores_cifar10} shows the detailed measurement scoring results of UnleScore, LiRA and UpdateDiff for unlearned and retained samples. LiRA, although a leading MIA technique, fails to binary differentiate between the scores of unlearned and retained samples in random sample unlearning and partial class unlearning tasks. UpdateDiff, even though it employs a difference calculation similar to ours, reports broad score ranges for both retained and unlearned samples, with many retained samples scoring near the average for unlearned samples. In contrast, UnleScore delivers more robust results by concentrating the scores of retained samples within a very narrow range. In most cases, setting a threshold below 0.4 reliably differentiates between retained and unlearned samples in the exact unlearning setting. The reason that LiRA and UpdateDiff perform worse than our metrics in measuring random sample unlearning and partial class unlearning may be that they are designed for binary membership inference tasks and fail to provide discriminative scores for retained groups and unlearned groups when there is a large distribution overlap between two groups.

The specific output distribution of UnleScore makes it particularly well-suited for assessing the utility of approximate unlearning for two main reasons: i) The concentration score distributions for retained samples and exact unlearned samples leave a wide blank score area between them, specifically for the total class unlearning task. This space could be utilized to position the scores of approximate unlearning samples, thereby differentiating them from the distributions of both retained and exact samples and identifying anomalies. ii) The robustness of its scoring enables meaningful comparisons between different unlearning procedures. Particularly, when one algorithm achieves a higher UnleScore than another on the same task, it could indicate the superior unlearning efficacy of the former. Additionally, while LiRA and UpdateDiff effectively distinguish between unlearned and retained samples in total class unlearning, their performance on approximate unlearned samples is notably poor (it's reasonable, as they were originally designed for binary decisions). We will illustrate this point in the next section.

\subsection{Correlation Between UnleScore and the Degree of Unlearning Implementation}\label{sec:subsec:correlation}

\begin{figure}[h]
    \centering
    \begin{subfigure}[b]{0.32\linewidth}
         \centering
         \includegraphics[width=\linewidth]{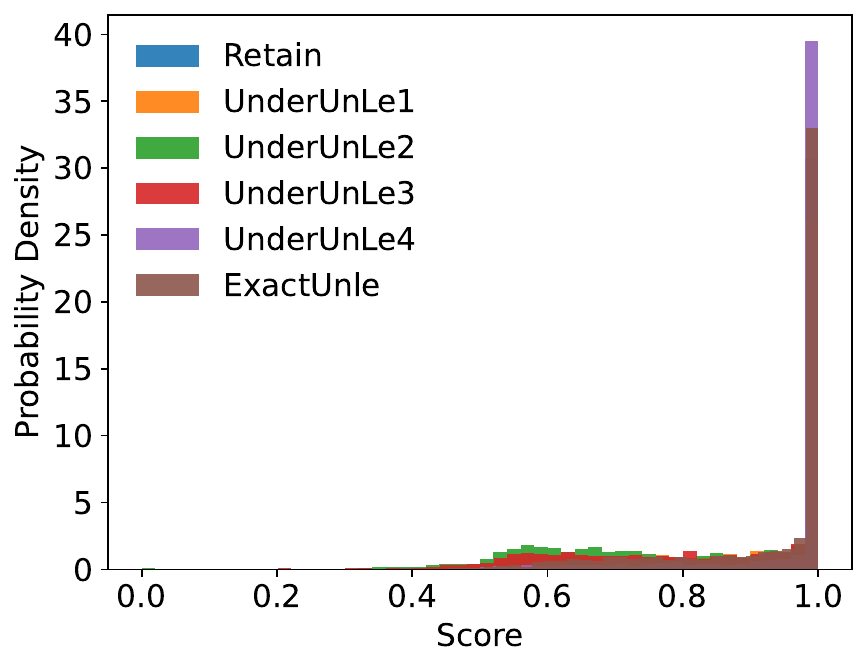}
         \caption{LiRA}
     \end{subfigure}
     \begin{subfigure}[b]{0.32\linewidth}
         \centering
         \includegraphics[width=\linewidth]{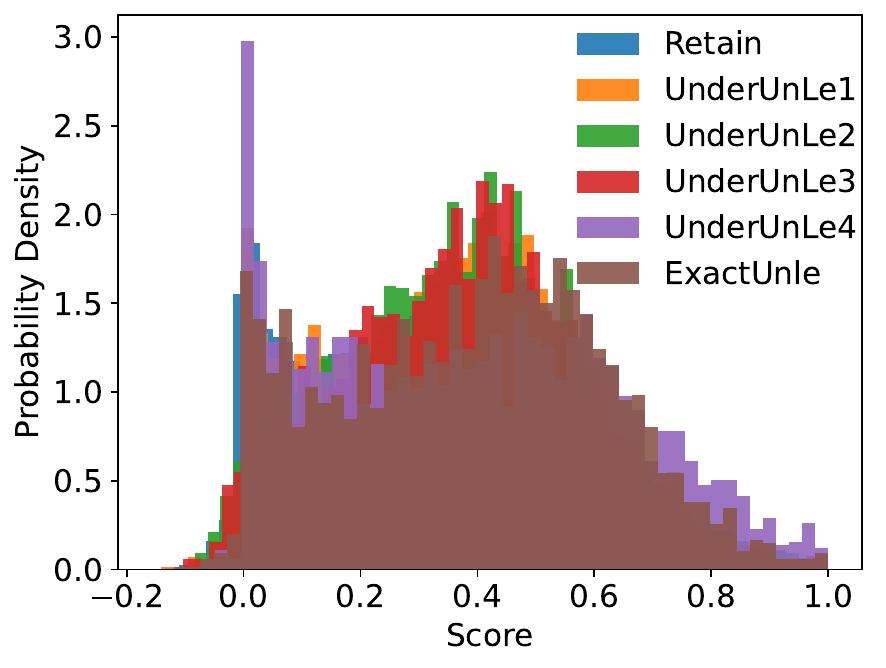}
         \caption{UpdateDiff}
     \end{subfigure}
     \begin{subfigure}[b]{0.32\linewidth}
         \centering
         \includegraphics[width=\linewidth]{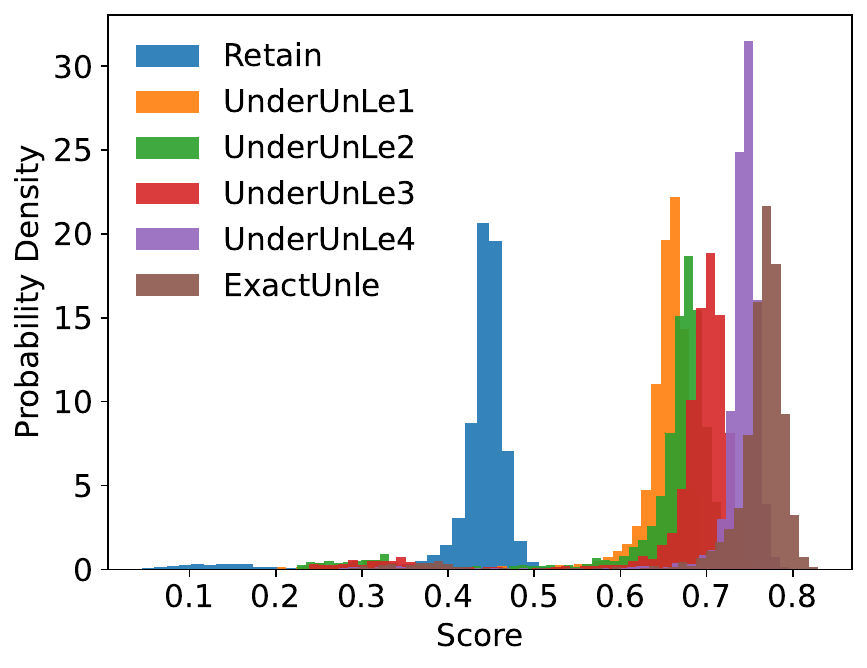}
         \caption{UnleScore}
     \end{subfigure}
     \caption{Scores of unlearning metrics on retained, under-unlearned, and exact unlearned groups within CIFAR10. Additional dataset results are in Appendix~\ref{sec:subsec:correlation_add}.}
        \label{fig:correlation_distributions}
\end{figure}

\noindent {\bf \textit{Under-unlearned group identification.}}  To further validate the correlation between the proposed metric and the completeness of data unlearning, we conduct a controlled experiment. We set up a task for CIFAR10 where classes 0-4 are requested for unlearning, and the remaining classes are retained. We start by dividing the training dataset into several disjoint groups: the \textbf{retained} set, four \textbf{under-unlearned} classes, and the \textbf{exact unlearned} class, each specified with a varying degree of unlearning. The original model is trained on the entire training set. The unlearned model is initially retrained solely on the retained set. To control its unlearning degree for the under-unlearned groups, we continue to finetune this unlearned model via these groups. We incrementally finetune the unlearned model on the first under-unlearned group for 10 epochs, the second under-unlearned group for 20 epochs, and so forth, each using a small learning rate. Note that we never train/finetune this model on the exact unlearned set. Therefore, the final unlearned model has varying memorization degrees corresponding to the unlearning levels of 6 groups.

With knowledge of the ground truth under-unlearned unlearning degrees for different groups, we can now analyze the correlation between the unlearning scores and the completeness of unlearning. The scoring results are provided in Figure~\ref{fig:correlation_distributions}. If we roughly define the unlearning levels as $[0, 1, 2, 3, 4, 5]$, the Pearson correlation coefficients between the scores and the unlearning levels are as follows: {\bf LiRA: 0.032, UpdateDiff: 0.071, and UnleScore: 0.830}. It is evident that our designed metric, UnleScore, achieves a superior correlation with the unlearning levels, whereas the baseline metrics fail to demonstrate a significant correlation. Consequently, it enables the identification of the corresponding unlearning risk areas for the target samples based on these results.

\smallskip

\noindent \textbf{Over-unlearned group identification in a camouflage case.} We now examine a more adversarial scenario in camouflage unlearning, where an approximate unlearning algorithm opts to camouflage unlearned samples by relabeling them with retained sample labels to achieve high UnleScores. Take 'airplane' images as an example: if all airplane images are relabeled as 'car' and the original model is fine-tuned on these camouflaged samples, the model’s confidence in recognizing airplanes as 'airplane' will decrease, potentially resulting in high UnleScores. However, this method merely changes the labels of the data without truly unlearning the information. Can it effectively circumvent our UnleScore measurements? Are we able to detect such superficial unlearning?

We implemented this camouflage-unlearning approach in the total class unlearning task of CIFAR10. Setting class-0 as the unlearned class, we adopted two strategies to 'camouflage' it. In the first strategy, all class-0 labels were replaced with class-1 (Template) labels. In the second strategy, the labels of class-0 samples were replaced with random labels. We report the corresponding UnleScores for both cases in Figure~\ref{fig:adv_cifar10}. In Case 1, following the camouflage operation, the UnleScores for the Camouflage Class are indeed higher than those of the retained classes. However, this operation significantly degrades the performance of Template Class predictions and also extends to degrading the model's ability to discriminate this class from other classes. Such over-unlearning anomalies can be detected by analyzing the distribution of the UnleScores of retained samples, especially if there is no distinct peak within a very wide score range. In Case 2, where the Camouflage Class is randomly relabeled, no specific target class suffers degradation in terms of prediction performance. Nevertheless, the model output for some retained samples is still affected. By identifying the existence of such a small subset of retained samples whose UnleScores deviate from the majority, we can conclude that over-unlearning anomalies are present.

\begin{figure}[t]
    \centering
     \begin{subfigure}[b]{0.48\linewidth}
         \centering
         \includegraphics[width=\linewidth]{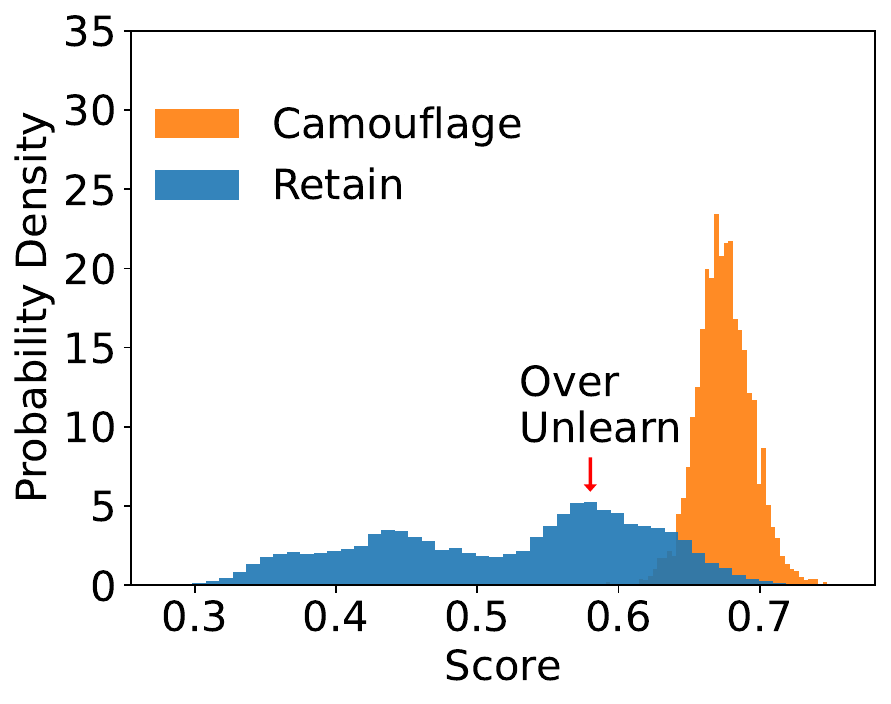}
         \caption{Case1}
     \end{subfigure}
    \begin{subfigure}[b]{0.48\linewidth}
         \centering
         \includegraphics[width=\linewidth]{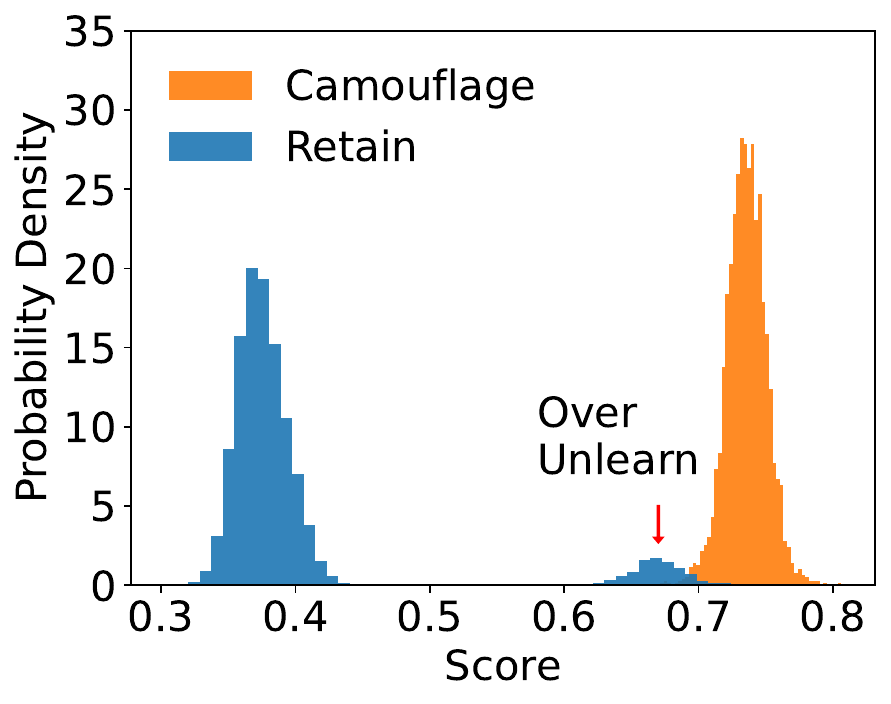}
         \caption{Case2}
     \end{subfigure}
     \caption{UnleScores for samples within Camouflage 'Unlearning' Cases within CIFAR10. Refer to Appendix~\ref{sec:subsec:over_add} for results on other datasets.}
        \label{fig:adv_cifar10}
\end{figure}
\subsection{Computation Efficiency}

We measured the total computation time of UnleScore for various unlearning measurement tasks across five datasets, contrasting it with the computation times of baseline metrics. The results are summarized in Table~\ref{query_time}. Notably, UnleScore achieves $~10\times $ greater efficiency compared to other metrics. This efficiency arises from UnleScore's capability to operate without the need for training any shadow models, making it scalable for larger datasets and models.

\begin{table}[h]
\centering
\caption{Computation Time of Unlearning Metrics (s)}
\label{query_time}
\begin{tabularx}{\linewidth}{>
{\hsize=0.95\hsize}X  >
{\hsize=0.7\hsize\centering\arraybackslash}X  >{\hsize=0.7\hsize\centering\arraybackslash}X  >{\hsize=0.65\hsize\centering\arraybackslash}X  >{\hsize=0.65\hsize\centering\arraybackslash}X  >{\hsize=0.6\hsize\centering\arraybackslash}X
}
\toprule\toprule
Dataset          & CIFAR10 & CIFAR100 & Location & Purchase & Texas   \\ 
\#Query Set  & 30000   & 30000    & 3008     & 117859   & 33864   \\ 
\midrule\midrule
UnLeak           & 67.07   & 306.42   & 14.19    & 219.81   & 162.68  \\
UnLeak+LS        & 67.35   & 291.95   & 14.14    & 203.96   & 159.50  \\
LiRA             & 262.18  & 262.63   & 2.41     & 42.72    & 18.73   \\
UpdateRatio      & 264.66  & 266.17   & 2.37     & 43.15    & 18.39   \\
UpdateDiff       & 264.66  & 266.17   & 2.37     & 43.15    & 18.39   \\
\midrule
UnleScore        & \textbf{11.09}   & \textbf{11.26}    & \textbf{0.87}     & \textbf{3.38}     & \textbf{2.08}    \\
\bottomrule\bottomrule
\end{tabularx}
\end{table}

\begin{table*}[ht]
\caption{Unlearning Results (NMI\_TPR@FPR=0.01‰) of 7 Approximate Unlearning Baselines}
\begin{tabularx}{\textwidth}{ >{\hsize=0.1\hsize}X  >{\hsize=0.8\hsize}X  >{\hsize=0.85\hsize\centering\arraybackslash}X  >{\hsize=0.85\hsize\centering\arraybackslash}X  >{\hsize=0.85\hsize\centering\arraybackslash}X  >{\hsize=0.85\hsize\centering\arraybackslash}X  >{\hsize=0.85\hsize\centering\arraybackslash}X  >{\hsize=0.85\hsize\centering\arraybackslash}X  >{\hsize=0.85\hsize\centering\arraybackslash}X  >{\hsize=0.85\hsize\centering\arraybackslash}X }
\toprule
\toprule
& Dataset & \textcolor{Blue}{Retrain} & \textcolor{BurntOrange}{FT} & Ascent & Forsaken & Fisher & L-Codec & Boundary & SSD \\
\midrule
\midrule
{\multirow{5}{*}{\rotatebox[origin=c]{90}{Random}}} &CIFAR10 & \textcolor{Blue}{13.73$\pm$0.54} & \textcolor{BurntOrange}{0.30$\pm$0.12} & \textcolor{Green}{0.32$\pm$0.19} & 0.19$\pm$0.15 & 0.09$\pm$0.05 & 0.05$\pm$0.04 & - & \textcolor{Green}{0.37$\pm$0.21} \\
&CIFAR100 & \textcolor{Blue}{59.18$\pm$0.79}   & \textcolor{BurntOrange}{0.52$\pm$0.25} & 0.18$\pm$0.11 & 0.20$\pm$0.11 & 0.20$\pm$0.13 & 0.10$\pm$0.07 & - & 0.38$\pm$0.09 \\
&Purchase & \textcolor{Blue}{11.34$\pm$0.27}   & \textcolor{BurntOrange}{4.04$\pm$0.21} & 0.20$\pm$0.02 & 0.18$\pm$0.03 & 0.11$\pm$0.04 & 0.11$\pm$0.04 & - & 0.27$\pm$0.03 \\
&Texas & \textcolor{Blue}{13.92$\pm$0.38}   & \textcolor{BurntOrange}{8.22$\pm$0.20} & 0.23$\pm$0.02 & 0.10$\pm$0.02 & 0.15$\pm$0.02 & 0.19$\pm$0.03 & - & 4.85$\pm$0.14 \\
&Location & \textcolor{Blue}{56.49$\pm$0.36}   & \textcolor{BurntOrange}{15.41$\pm$0.83} & 0.02$\pm$0.01 & 0.01$\pm$0.01 & 0.08$\pm$0.02 & 0.20$\pm$0.05 & - & 0.42$\pm$0.06 \\
\midrule
\midrule
{\multirow{5}{*}{\rotatebox[origin=c]{90}{Partial}}} &CIFAR10 & \textcolor{Blue}{21.06$\pm$0.67}   & \textcolor{BurntOrange}{0.24$\pm$0.06} & \textcolor{Green}{1.32$\pm$0.22} & \textcolor{Green}{1.33$\pm$0.24} & 0.08$\pm$0.03 & \textcolor{Green}{1.23$\pm$0.27} & \textcolor{Green}{1.35$\pm$0.21} & 0.17$\pm$0.05 \\
&CIFAR100 & \textcolor{Blue}{76.34$\pm$1.14}   & \textcolor{BurntOrange}{6.44$\pm$1.11} & \textcolor{Green}{55.36$\pm$2.69} & \textcolor{Green}{55.23$\pm$3.29} & 0.58$\pm$0.30 & \textcolor{Green}{23.54$\pm$1.57} & \textcolor{Green}{49.51$\pm$2.63} & 3.17$\pm$0.56 \\
&Purchase & \textcolor{Blue}{29.04$\pm$0.81}   & \textcolor{BurntOrange}{19.36$\pm$1.06} & 0.21$\pm$0.02 & 0.11$\pm$0.01 & 0.03$\pm$0.00 & 0.00$\pm$0.00 & {40.35$\pm$2.49} & 12.29$\pm$0.93 \\
&Texas & \textcolor{Blue}{41.63$\pm$1.08}   & \textcolor{BurntOrange}{47.41$\pm$1.29} & {57.18$\pm$1.88} & 1.74$\pm$0.21 & 0.07$\pm$0.02 & - & {73.44$\pm$2.23} & {60.17$\pm$2.28} \\
&Location & \textcolor{Blue}{75.49$\pm$0.52}   & \textcolor{BurntOrange}{60.95$\pm$0.95} & 35.44$\pm$1.03 & 2.47$\pm$0.28 & 3.12$\pm$0.31 & 1.75$\pm$0.21 & \textcolor{Green}{70.61$\pm$3.00} & 42.13$\pm$2.00 \\
\midrule
\midrule
{\multirow{5}{*}{\rotatebox[origin=c]{90}{Total}}} &CIFAR10 & \textcolor{Blue}{99.98$\pm$0.01}   & \textcolor{BurntOrange}{21.12$\pm$1.86} & \textcolor{Green}{75.45$\pm$0.88} & 9.79$\pm$0.80 & 0.04$\pm$0.02 & \textcolor{Green}{46.02$\pm$1.62} & \textcolor{Green}{44.37$\pm$2.43} & \textcolor{Green}{25.35$\pm$1.52} \\
&CIFAR100 & \textcolor{Blue}{100.00$\pm$0.00}   &\textcolor{BurntOrange}{71.39$\pm$1.81} & 69.19$\pm$1.49 & 58.62$\pm$1.89 & 0.26$\pm$0.12 & 15.28$\pm$0.27 & 65.10$\pm$1.75 & 2.06$\pm$0.41 \\
&Purchase & \textcolor{Blue}{100.00$\pm$0.00}   &\textcolor{BurntOrange}{100.00$\pm$0.00} & 0.10$\pm$0.01 & 0.02$\pm$0.00 & 0.02$\pm$0.00 & 0.01$\pm$0.00 & 99.96$\pm$0.00 & 17.04$\pm$0.52 \\
&Texas & \textcolor{Blue}{91.64$\pm$0.31}   & \textcolor{BurntOrange}{93.09$\pm$0.15} & 56.70$\pm$0.97 & 0.61$\pm$0.09 & 29.16$\pm$0.03 & - & 87.21$\pm$0.14 & 59.50$\pm$1.27 \\
&Location & \textcolor{Blue}{100.00$\pm$0.00}   &\textcolor{BurntOrange}{83.84$\pm$0.36} & 41.73$\pm$0.46 & 0.93$\pm$0.11 & 11.37$\pm$0.21 & 2.67$\pm$0.14 & \textcolor{Green}{100.00$\pm$0.00} & 59.56$\pm$1.36 \\
\bottomrule
\bottomrule
\end{tabularx}
\label{unlearning_utility_tprs}
     {\raggedright *We use \textcolor{Blue}{Retrain} results as the ground truth and \textcolor{BurntOrange}{FT (Fine Tuning)}  as the lower threshold. An \textcolor{Green}{acceptable approximate unlearning algorithm} should yield results that, at least, fall within the middle area between these two benchmarks.\par}
\end{table*}

\section{Benchmarking Approximate Unlearning Algorithms}\label{audit_unle}

In this section, we utilize the proposed unlearning metrics to benchmark the performance of existing approximate unlearning baselines. Our evaluation includes the implementation and detailed examination of seven approximate unlearning methods, focusing on their unlearning utility, resilience, and equity. We begin by quantifying their unlearning utility, measuring the UnleScore for unlearned samples and retained samples, and calculating the statistical \textit{NMI\_TPR@LowFPR} results. We then analyze the resilience of these methods in a continuous unlearning setting, observing how their unlearning utility varies in response to variations in the times of unlearning requests. Furthermore, we explore unlearning equity by investigating the variance in the difficulty of unlearning across different samples or classes, revealing that some classes might be unlearned more easily than others. Like the exact retraining method, we apply three types of unlearning requests to these baselines.

\subsection{Unlearning Utility}

Following the implementation of metric validation in the last section, we measure the approximate unlearning results of various baseline algorithms. Table~\ref{unlearning_utility_tprs}  summarizes their \textit{NMI\_TPR@FPR=0.01‰} results across different unlearning tasks with exact retraining serving as the ground truth. The corresponding AUC scores are available in Table~\ref{unlearning_utility_auc}, located in Appendix~\ref{addition_utility}. We categorize acceptable approximate unlearning algorithms (those falling between Retrain and Fine Tuning, considering their approximate properties) as green.

The effectiveness of these algorithms varies significantly across datasets and scenarios, with no baseline method consistently achieving acceptable results. Specifically, in the random sample unlearning scenario, nearly all approximate algorithms underperform compared to Finetune, let alone Retrain, emphasizing the challenges of unlearning within groups that have overlap distributions with the retained set. In the partial class unlearning scenario, algorithms such as Ascent, Forsaken, L-Codec, and Boundary occasionally achieve acceptable levels. Notably, the use of the Boundary algorithm in this context indicates a case of over-unlearning because it was initially designed for total class scenarios. In total class unlearning, almost all algorithms show improved performance; except Forsaken and Fisher, all achieve acceptable results at least once. The Fisher algorithm often fails to achieve high unlearning performance, potentially due to the implementation aimed at reducing its significant time complexity but at the cost of performance.

\subsection{Unlearning Resilience}

\begin{figure}[ht]
    \centering
    \centering
     \begin{subfigure}[b]{0.47\textwidth}
         \centering
         \includegraphics[width=\linewidth]{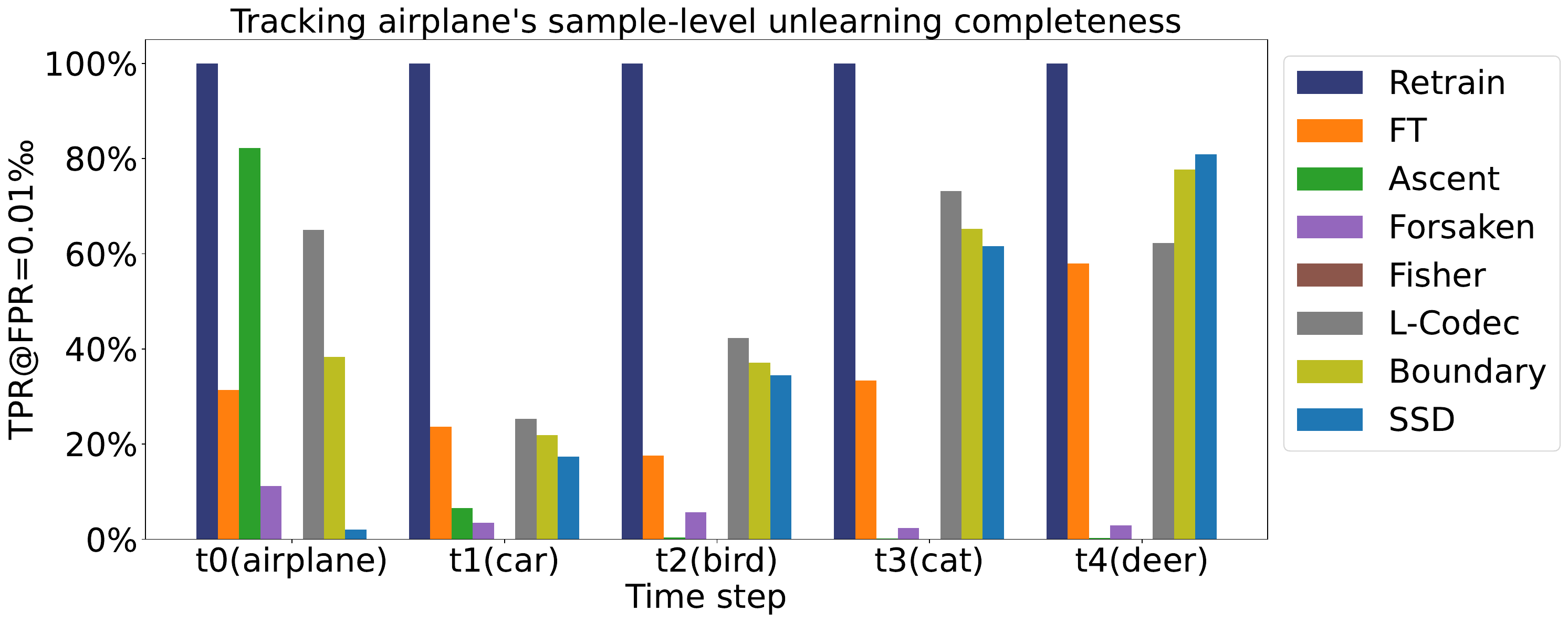}
         \caption{CIFAR10}
         \label{fig:y equals x}
     \end{subfigure}
     \hfill
     \begin{subfigure}[b]{0.47\textwidth}
         \centering
         \includegraphics[width=\linewidth]{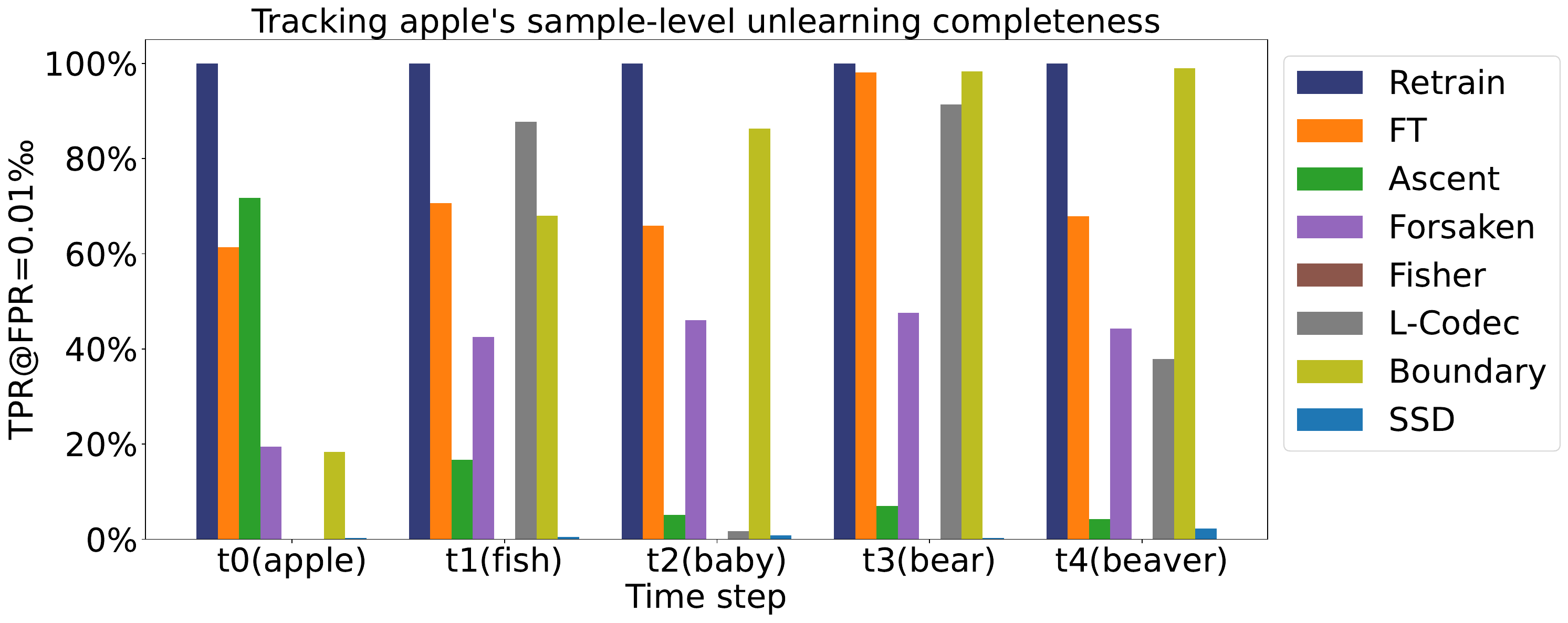}
         \caption{CIFAR100}
         \label{fig:three sin x}
     \end{subfigure}
    \caption{Statistical Measurement Outcomes (\textit{NMI\_TPR@FPR=0.01‰}) of Approximate Baselines for All Samples in the 0-th Class throughout the Unlearning Lifecycle. The unlearned class is indicated at the corresponding time step.}
    \label{continue_tpr_cifar}
\end{figure}

In this section, we examine a scenario of continual unlearning,  where unlearning requests arrive sequentially, with each request targeting a different batch. To simulate this, we randomly select five distinct subsets from each dataset’s training set. For random sample unlearning, we randomly draw five non-overlapping groups from the training set, each containing 100 samples. For partial and total class unlearning, we randomly select five classes to form the unlearning queue. The chosen algorithms are tasked with sequentially unlearning these five subsets on an original model, ensuring that by the end of the process, each subset has been addressed. To assess the impact of successive unlearning actions on previously unlearned groups, we monitor and report the unlearning performance of all samples in the first group after each iteration. This methodology allows us to observe the cumulative effects of unlearning on the model’s behavior over time and evaluate the resilience of different unlearning methods. Figure~\ref{continue_tpr_cifar} presents the total class unlearning results for the CIFAR10 and CIFAR100 image datasets, Additional results are available in Appendix~\ref{addition_resil}.

The CIFAR10 dataset's unlearning sequence starts with the 'airplane' class, followed by the 'automobile' class, denoted as 'car' in Figure~\ref{continue_tpr_cifar}. Initially, when only the 'airplane' class is removed, algorithms such as FT, Ascent, L-Codec, and Boundary maintain a TPR@FPR=0.01‰ above 30\%. Their performance on 'airplane' significantly shifts after the unlearning of the 'car' class, which shares similar textures with 'airplane'; a marked decrease in the unlearning scores for 'airplane' is observed. This may be due to the removal of the 'car' class impairing the model's residual capability to distinguish 'car' from 'airplane'. Then, unlearning classes less similar to 'airplane', such as 'bird', 'cat', and 'deer', leads to a progressive improvement in the unlearning score for 'airplane'. This may be attributed to the reduction in the number of retained classes, which degrades the model's overall capability and further exacerbates the forgetting of the 'airplane' class. For CIFAR100, algorithms such as FT, Forsaken, and Boundary exhibit similar trends of improved unlearning results with each update. However, it is notable that Ascent’s initially perfect unlearning performance (for both CIFAR10 and CIFAR100) sharply declines after only the second unlearning request. This shows that its unlearning resilience is indeed problematic.

\smallskip
\noindent {\bf \textit{Unlearning resilience risk vs. privacy onion effect.}} This phenomenon, where the impact of unlearning specific classes on the model is related to their correlation with previously unlearned classes, is similar to the 'privacy onion effect'~\cite{CarliniJZPTT22}—removing the "layer" of outlier points that are most vulnerable to a privacy attack exposes a new layer of previously vulnerable points. However, the 'privacy onion effect' in unlearning only occurs in approximate methods. For exact unlearning via retraining, unlearning-related classes do not affect their unlearning utility compared to previous classes. It is able to maintain near-perfect TPR scores consistently throughout the unlearning lifecycle. The low unlearning resilience exposes a significant drawback of approximate unlearning algorithms: \textit{\textbf{previously unlearned samples may be inadvertently reactivated by subsequent model updates!}} This emphasizes the urgency of managing unlearning commitments made by approximate unlearning algorithms throughout the unlearning lifecycle.

\subsection{Unlearning Equity}

\begin{figure}[h]
    \centering
    \includegraphics[width=0.95\linewidth]{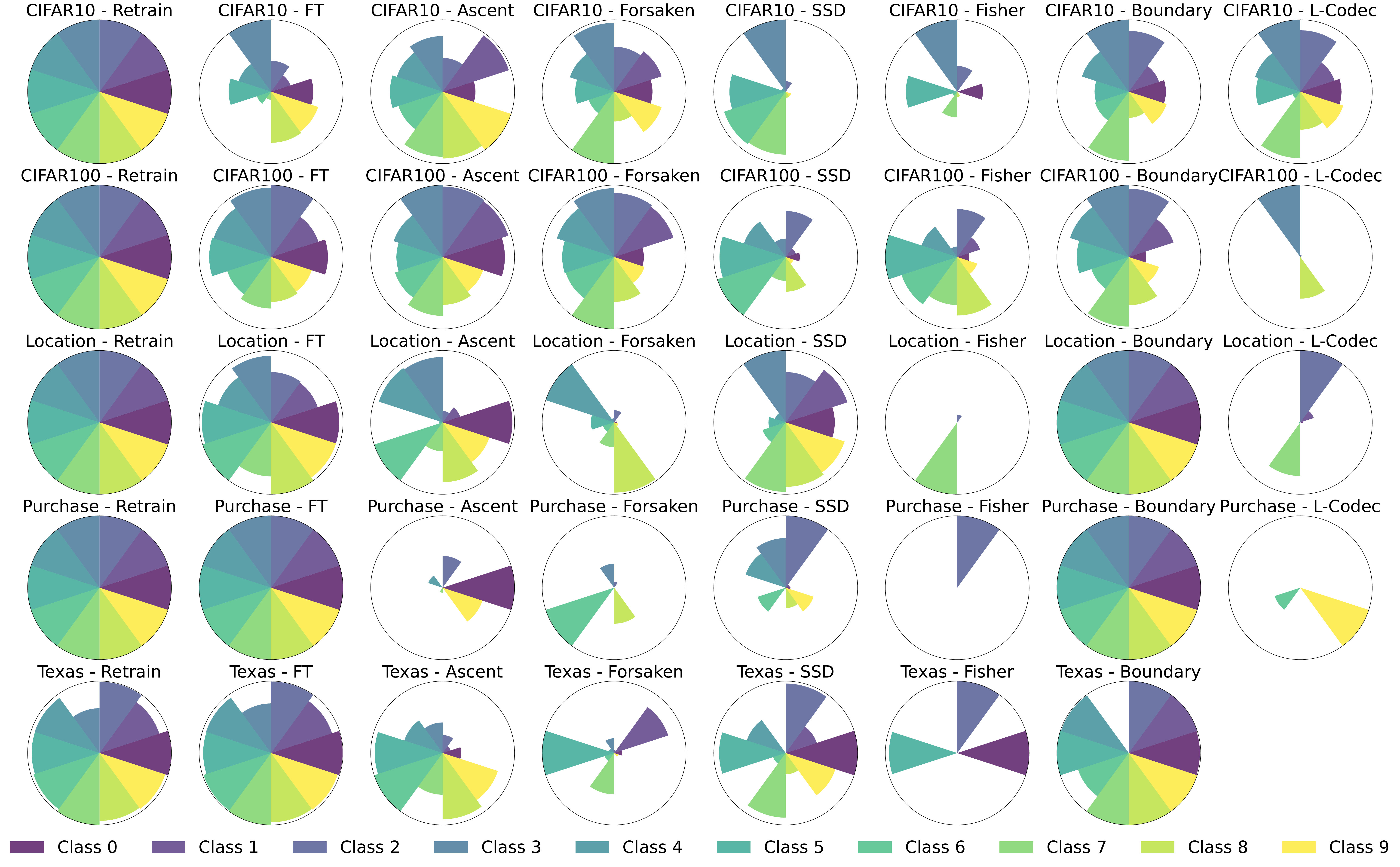}
    \caption{Total Class Unlearning Performance (NMI\_TPR@FPR=0.01‰) for Top 10 Classes Across 5 Datasets. The results are presented as relative values, normalized against the best-performing class among the 10 classes.}
    \label{fairtpr_totalclass}
\end{figure}

We continue our analysis to assess the variations in unlearning equity across different unlearning groups for each unlearning baseline. We perform total class unlearning individually for the top 10 classes of each dataset, then compare the unlearning performance of the approximate unlearning algorithms across these different classes. The results are summarized in Figure~\ref{fairtpr_totalclass}. Several algorithms, including Ascent, Forsaken, SSD, Fisher, and L-Codec, exhibit problematic issues with lower unlearning equity, failing to provide consistent unlearning utility across different request groups. In contrast, FT and Boundary perform better, although they still exhibit significant biases across classes in certain datasets, such as with CIFAR10. This could be due to the limited number of classes in CIFAR10 (only 10), which tends to highlight class biases more clearly. 

The difficulty of unlearning can vary significantly between classes within the same dataset. For example, in CIFAR10, class 3 achieves better unlearning utility with almost all algorithms compared to other classes. This suggests that additional efforts are necessary for groups that are harder to unlearn. Furthermore, LUCM is essential for identifying such groups throughout the unlearning lifecycle when using approximate unlearning algorithms.

The results of 50\% partial class unlearning are provided in Appendix~\ref{addition_equity}, supporting the same conclusion. Additional AUC scores are also provided, but AUC again fails to identify such unlearning inequity.

\section{Discussions}

\noindent \textbf{Practical impacts of UnleScore.} Unlike MIAs for binary decisions, UnleScore is more suitable for providing quantitative measurements for 'approximate unlearned samples'. Not only does it serve as a monitoring metric in LUCM, but it can also be used to provide more fine-grained analysis for research related to approximate unlearning, such as model editing. The efficiency of UnleScore makes it lightweight enough to be used frequently to analyze all samples simultaneously. This analysis helps to understand how data residual memorization changes over time. By detailing the nuances of unlearning performance, UnleScore helps enhance the security and effectiveness of machine unlearning systems.

\noindent \textbf{Approximate unlearning risks.} In this paper, we design tests for continual unlearning and multi-class unlearning, and identify two risks (unlearning resilience and equity) associated with approximate unlearning algorithms in maintaining lifecycle unlearning commitments. Consequently, we believe that a one-time UnleScore does not conclusively verify exact unlearned samples in an approximate unlearning context; it is only appropriate for revealing the residual memorization of approximate unlearned samples. We expect that the continual unlearning and multi-class tests could become the benchmark for evaluating approximate unlearning. We do not claim that this work covers all potential risks of approximate unlearning. However, these are emergencies because they exist even without external threats.

\noindent \textbf{Limitations} 1) \textit{Utility}. As we have also shown in experiments, UnleScore still cannot provide nearly perfect measurement performance for random sample unlearning and small-ratio partial class unlearning, although it remains superior to baselines. The overlapping distributions between retained data and unlearned data can significantly affect the metric’s ability to accurately measure unlearning completeness. More powerful metrics are still needed. We hope our research serves as the first step in this area and attracts community attention to this problem. 2) \textit{Privacy Risks.} UnleScore requires knowledge of the membership of the original model, as it is designed to serve as an integral part of the machine learning system. However, this may pose increased privacy leakage risks if the measurement results are obtained by an external party. Therefore, it is essential to study how to combine it with state-of-the-art privacy protection techniques, such as differential privacy, without impacting the measurement utility of unlearning.

\section{Conclusion}

This work first introduces the Lifecycle Unlearning Commitment Management (LUCM) task for approximate unlearning, identifying its special challenges beyond traditional MIAs. We then design UnleScore to efficiently measure the sample-level unlearning completeness and show how it can be utilized to detect unlearning anomalies during approximate unlearning, including under-unlearning and over-unlearning. We apply it to benchmark existing approximate unlearning algorithms and reveal two risks of approximate unlearning (not present in exact unlearning): the resilience risk and the equity risk. Both risks highlight the importance of LUCM when using approximate unlearning algorithms. As approximate unlearning becomes the de facto choice for post-hoc unlearning solutions of large models, LUCM will grow increasingly important.


\bibliographystyle{plain}
\bibliography{reference}

\newpage
\appendix

\section{Related Work}\label{sec:appendix_related}

\subsection{Dataset Auditing}

Dataset auditing is a process to verify whether a query dataset has been removed from a trained model. Operating in a black-box setting, the auditor has access only to the training algorithm and the model's outputs, not the training dataset or model parameters. Unlike membership inference, which assesses individual sample status, dataset auditing provides a binary decision at the dataset level, indicating whether the entire query dataset was used in training the model.

\noindent {\bf \textit{Calibrating}}: A recent work ~\cite{LiuT20} considers the dataset auditing problem by pointing out that Membership Inference Attacks (MIAs) always return false positives when the query dataset and the training dataset overlap, which frequently occurs in the real world. To overcome the drawback of MIAs in data auditing, a calibrated model is first created on a calibration dataset sampled with a distribution similar to the training dataset but with no overlap with the query dataset. Based on the output distribution of the shadow models trained on the query dataset and calibrated set, it introduces the Kolmogorov-Smirnov (K-S) distance to detect if the target model has used or forgotten the query dataset.

\noindent {\bf \textit{EMA}}: The Ensembled Membership Auditing (EMA) method, proposed by~\cite{HuangLL21}, is a two-stage method. It first conducts MIA for each query sample using various metrics. Subsequently, it selects a filter threshold for sample-wise predictions that maximizes the balanced accuracy and aggregates the results into a binary decision.

These methods typically address dataset membership as a collective issue, thus overlooking the nuanced requirements of auditing at the individual sample level. Such broad approaches contrast with the fine-grained challenges faced by unlearning commitment managers, who must assess each sample individually. This emphasis on sample-specific measurements introduces a more complex task, highlighting the need for tailored strategies that go beyond dataset-wide assessments to effectively mitigate risks associated with sample-level unlearning anomalies.

\subsection{Proof-of-(Un)Learning}

Proof-of-learning (PoL) is a proof mechanism that allows the author party to generate proof supporting its claim on the computational efforts necessary for training. Jia et al.~\cite{JiaYCDTCP21} first proposed a PoL solution by logging training checkpoints to ensure that spoofing is as costly as honestly obtaining the proof through actual model training. However, recent work ~\cite{thudi2022necessity} points out that the PoL framework cannot be used for unlearning auditing, as there are forging attacks that could spoof audits of unlearning in the parameter space by synthesizing the unlearned model to bypass necessary unlearning computations. In response, Weng et al.~\cite{WengYDHWW24} present a trusted hardware-empowered instantiation using an Intel SGX enclave to achieve Proof-of-Unlearning (PoUL) from the perspective of trusted execution environments, verifying the necessary computations during the unlearning process.

\noindent \textbf{Distinguishing LUCM from Po(U)L.}
Note that Po(U)L, which centers on providing proof of computation execution in the (Un)- Learning process, is distinct from LUCM. LUCM aims to provide sample-level unlearning completeness measurements for the output of an honestly unlearning operation within unlearning systems. This is crucial because, for many approximate unlearning algorithms, faithful execution does not naturally result in the complete removal of data lineage, due to the algorithms' prior relaxation of the criteria defining unlearning. Recent works have pointed out the fragility of the relied-upon foundational techniques of these algorithms, specifically in deep learning, from both empirical~\cite{BasuPF21} and theoretical sides~\cite{ChourasiaS23}. Some studies also reveal the external threats originating from the opacity of approximate unlearning~\cite{DiDAKS22, abs-2309-08230, QianZLMH23}. Therefore, it is essential to focus efforts on developing and refining unlearning completeness measurements for effective risk management. Discussions related to attacks on Proof-of-(Un)Learning, such as forging attacks, also fall outside our scope. We have to claim that the fact that retrained parameter states can be constructed non-uniquely, as induced by the forging map, does not undermine their validity as exact unlearning outputs.

\section{Proof of Theorem~\ref{theoremcpua} }\label{theoremcpua_appendix}

Underpinned by Theorem~\ref{theoremcpua}, we can equate $P(B|A)$ with $P(B)$, effectively framing the measuring of approximate unlearning completeness as a non-membership inference challenge, i.e.,  it clarifies that upon confirming $A$, analyzing unlearning completeness shifts seamlessly to evaluating non-membership, streamlining the assessing process.

\begin{theorem}[Conditional Probability of Unlearning Auditing]
\label{theoremcpua}
The conditional probability $P(B|A)$ equals the marginal probability $P(B)$ when event $A$—the inclusion of sample $x$ in the original model $\theta_{\text{ori}}$'s training set—is confirmed ($P(A)=1$), simplifying the assessment of unlearning's efficacy.
\end{theorem}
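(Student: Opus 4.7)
The plan is to treat the claim as a direct consequence of the definition of conditional probability combined with the law of total probability, under the strong hypothesis $P(A)=1$. First, I would invoke the definition $P(B\mid A) = P(A\cap B)/P(A)$, which is well-defined precisely because $P(A)=1>0$; substituting the hypothesis collapses the ratio to $P(B\mid A) = P(A\cap B)$.

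Second, I would expand the marginal $P(B)$ using the partition $\{A, A^{c}\}$ of the sample space, giving $P(B) = P(A\cap B) + P(A^{c}\cap B)$. Because $P(A^{c}) = 1 - P(A) = 0$ and monotonicity of measure yields $P(A^{c}\cap B)\le P(A^{c})=0$, the second summand vanishes. Chaining this with the first step produces $P(B\mid A) = P(A\cap B) = P(B)$, which is precisely the desired identity. A parallel short argument, if a reader prefers, is to note that $B = (A\cap B)\cup(A^{c}\cap B)$ is a disjoint union whose second piece is a null set, so the two sides of the identity are equal as measures of the same event up to a null symmetric difference.

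The only non-routine point to articulate is the justification for treating $A$ as a sure event rather than merely a high-probability event. Here I would connect the algebraic reduction back to the LUCM problem statement: the unlearning commitment manager only admits an unlearning request once the target sample has been certified as a training member of the original model $\theta_{\text{ori}}$, so within the probability space relevant to the auditing procedure the event $A$ is enforced and $P(A)=1$ is literal, not asymptotic. Once this framing is granted, there is no analytic obstacle; the main expository task, and the closest thing to a "hard part," is simply making this conditioning explicit so that the reduction from $P(B\mid A)$ to $P(B)$ in Eq.~(2) of the main text is unambiguous and the downstream interpretation as a non-membership inference problem is licensed.
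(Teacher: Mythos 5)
Your proof is correct, but it takes a genuinely different route from the paper's. You decompose the marginal $P(B)$ over the partition $\{A, A^{c}\}$ and annihilate the second term via $P(A^{c}\cap B)\le P(A^{c})=0$, so that $P(B\mid A)=P(A\cap B)=P(B)$ follows from the single hypothesis $P(A)=1$ with no input from the unlearning setting. The paper instead applies Bayes' theorem, expands the denominator as $P(A)=P(A\mid B)P(B)+P(A\mid B^{c})P(B^{c})$, obtains $P(B\mid A)=1-P(A\mid B^{c})P(B^{c})$, asserts that equality with $P(B)=1-P(B^{c})$ holds if and only if $P(A\mid B^{c})=1$, and then discharges that condition by a domain argument: a sample in the unlearned model's training set must have been in the original model's training set. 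Your route buys simplicity and generality --- it exposes the theorem as the standard fact that conditioning on an almost-sure event is vacuous, it dispenses with the paper's extra appeal to domain semantics (which is in any case already forced by $P(A)=1$ whenever $P(B^{c})>0$, since then $P(A\cap B^{c})=P(B^{c})$), and it quietly repairs the small imprecision in the paper's ``if and only if,'' which overlooks the degenerate alternative $P(B^{c})=0$. What the paper's longer route buys is interpretive content: isolating $P(A\mid B^{c})=1$ as the operative condition ties the algebra to the unlearning semantics and to its surrounding Case~1/Case~2 discussion of (in)dependence between $A$ and $B$. Your closing paragraph, grounding $P(A)=1$ in the manager's certification of training membership before accepting a request, matches the paper's own justification for treating the hypothesis as literal rather than asymptotic.
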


\begin{proof}

Given events $A$ and $B$ where $A$ represents the inclusion of a target sample $x$ in the training set of an original model $\theta_{\text{ori}}$, and $B$ represents the absence of sample $x$ from the training set after the model has undergone a process of so-called `unlearning' to become $\theta_{\text{unl}}$, the conditional probability $P(B|A)$ is equal to the marginal probability $P(B)$, when we have validated the occurrence of event $A$ ($P(A)=1$). 

We analyze the relationship between $A$ and $B$ under two distinct scenarios.

\noindent
\textbf{Case 1:} The events $A$ and $B$ are statistically independent, which can occur if the server performing the unlearning process has access to sample $x$. Under this condition, the presence or absence of $x$ in the original model $\theta_{\text{ori}}$ has no bearing on its presence in the unlearned model $\theta_{\text{unl}}$. For example, a dishonest server could incrementally include $x$ in the training data for the unlearned model $\theta_{\text{unl}}$ without it having been in the original model. Formally, this independence implies that:

\begin{equation}
    P(B|A) = P(B).
\end{equation}

\noindent
\textbf{Case 2:} The events $A$ and $B$ are not statistically independent, which can occur if the server lacks access to sample $x$ during unlearning. Here, the probability $P(B)$ is conditioned by $P(A)$. For example, $P(B)=1$ when $P(A)=0$.

Applying Bayes' Theorem under the assumption that $P(A)=1$, we have:

\begin{equation}
    \begin{split}
      P(B|A) & = \frac{P(A|B) \cdot P(B)}{P(A)} \\
      & = \frac{P(A|B) \cdot P(B)}{P(A|B) \cdot P(B) + P(A|B^{c}) \cdot P(B^{c})} \\
      & = \frac{1 - P(A|B^{c}) \cdot P(B^{c})}{1} \\
      & = 1 - P(A|B^{c}) \cdot P(B^{c}).
    \end{split}
\end{equation}

Therefore, $P(B|A)=P(B)$ if and only if $P(A|B^{c}) = 1$. It's obvious that when sample $x$ is part of the training set of the unlearned model $\theta_{\text{unl}}$, it must also be part of the training set of original model $\theta_{\text{ori}}$, i.e., $P(A|B^{c}) = 1$. Thus, the equality $P(B|A) = P(B)$ holds when $A$ and $B$ are independent. In other words, leveraging non-membership inference allows us to perform unlearning auditing seamlessly.

\end{proof}

\section{Data and Setup} \label{dataset_appendix}

\subsection{Datasets}

\noindent \textbf{Cifar10}: This dataset contains a diverse set of 60,000 small, 32x32 pixel color images categorized into 10 distinct classes, each represented by 6,000 images. It is organized into 50,000 training images and 10,000 test images. The classes in Cifar10 are exclusive, featuring a range of objects like birds, cats, and trucks, making it ideal for basic tasks.

\noindent \textbf{Cifar100}: Cifar100 is similar to Cifar10 in its structure, consisting of 60,000 32x32 color images. However, it expands the complexity with 100 unique classes, which can be further organized into 20 superclasses. Each image in Cifar100 is associated with two types of labels: a `fine' label identifying its specific class, and a `coarse' label indicating the broader superclass it belongs to. This dataset is suited for more nuanced evaluation.

\noindent \textbf{Purchase100}: It contains 197,324 anonymized data about customer purchases across 100 different product categories. Each record in the dataset represents an individual purchase transaction and includes details such as product category, quantity, and transaction time, which is useful for analyzing consumer behavior patterns.

\noindent \textbf{Texas100}: This dataset comprises 67,330 hospital discharge records from the state of Texas. It includes anonymized patient data such as diagnosis, procedure, length of stay, and other relevant clinical information. The data is grouped into 100 classes, and used for healthcare data analysis and predictive modeling.

\noindent \textbf{Location30}: This dataset includes 5,010 location "check-in" records of different individuals. It is organized into 30 distinct categories, representing different types of geosocial behavior. The 446 binary attributes correspond to various regions or location types, denoting whether or not the individual has visited each area. The primary classification task involves using these 446 binary features to accurately predict an individual's geosocial type.

\subsection{Data Processing} In our experiments, we divide the datasets into three distinct sets: training, test, and shadow. The training set is employed to train the original model, and the test set is used to assess the performance of the trained model. The shadow set is used to develop attack models employed by baseline methods and serves as the non-member set used by our designed method. Depending on the types of unlearning required, the retained set is formed by excluding the requested unlearning samples from the original training set.

For the Cifar10 and Cifar100 datasets, we have randomly chosen 20,000 images from their training datasets to form the shadow set for each. The rest of the images are utilized for training classifiers, while the predefined test images make up the test set. In the case of the Purchase100, Texas100, and Location30 datasets, we randomly select 20\% of the records to the test set for each. Subsequently, we select 40,000, 20,000, and 1,000 records as the shadow set for the Purchase100, Texas100, and Location30 datasets, respectively, leaving the remaining records as the training set for each dataset.

\subsection{Original Models} We employ the Resnet18 model as the original model for learning tasks on the Cifar10 and Cifar100 datasets. For classification tasks involving the Purchase100, Texas100, and Location30 datasets, we have implemented a four-layer fully connected neural network as the original model. This architecture comprises hidden layers with 1024, 512, 256, and 128 neurons, respectively.

\section{Unlearning Baselines}\label{approximate_alg_appendix}

\noindent \textbf{Exact Retraining:} The model is initialized and retrained only on the retained dataset, using the same random seeds and hyperparameters as the original model. We consider the results of this method as the ground truth for exact unlearning.

\noindent \textbf{Fine Tuning:} We fine-tune the originally trained model on the retained set for 5 epochs with a large learning rate. This method is intended to leverage the catastrophic forgetting characteristic of deep learning models~\cite{PNAS17James,GolatkarCVPR20}, wherein directly fine-tuning the model without the requested subset may make the model to forget it. Google has also adopted this approach as the starting point for their unlearning challenge.\footnote{https://unlearning-challenge.github.io} Given its simplicity, we use this method as the lower baseline for unlearning benchmarks.

\noindent \textbf{Gradient Ascent:}  Initially, we train the initial model on the unlearning set to record the accumulated gradients. Subsequently, we update the original trained model by adding the recorded gradients as the inverse of the gradient descent learning process.

\noindent \textbf{Fisher Forgetting:} As per ~\cite{GolatkarCVPR20}, we utilize the Fisher Information Matrix (FIM) of samples related to the retaining set to calculate optimal noise for erasing information of the unlearning samples. Given the huge memory requirement of the original Fisher Forgetting implementation, we employ an elastic weight consolidation technique (EWC) (as suggested by ~\cite{EWCpnas.1611835114}) for a more efficient FIM estimation.

\noindent \textbf{Forsaken:}  We implement the Forsaken~\cite{MaLLLMR23} method by masking the neurons of the original trained model with gradients (called mask gradients) that are trained to eliminate the memorization of the unlearning samples.

\noindent \textbf{L-Codec:} Similar to Fisher Forgetting, L-Codec uses optimization-based updates to achieve approximate unlearning. To make the Hessian computation process scalable with the model's dimensions, ~\cite{cvprMehtaPSR22} leverages a variant of a new conditional independence coefficient to identify a subset of the model parameters that have the most semantic overlap at the individual sample level.

\noindent \textbf{Boundary Unlearning:} Targeting class-level unlearning tasks, this method~\cite{ChenGL0W23} shifts the original trained model's decision boundary to imitate the decision-making behavior of a model retrained from scratch.

\noindent \textbf{SSD:} Selective Synaptic Dampening (SSD)~\cite{abs-2308-07707} is a fast, approximate unlearning method. SSD employs the first-order FIM to assess the importance of parameters associated with the unlearning samples. It then induces forgetting by proportionally dampening these parameters according to their relative importance to the unlearning set in comparison to the broader training dataset.

\section{Additional measurement results of unlearning metrics}\label{addition_metric}

\subsection{Additional Results of Metric Utility}\label{sec:subsec:overallvalidation_add}

\begin{figure}[h]
    \centering
    \includegraphics[width=0.95\linewidth]{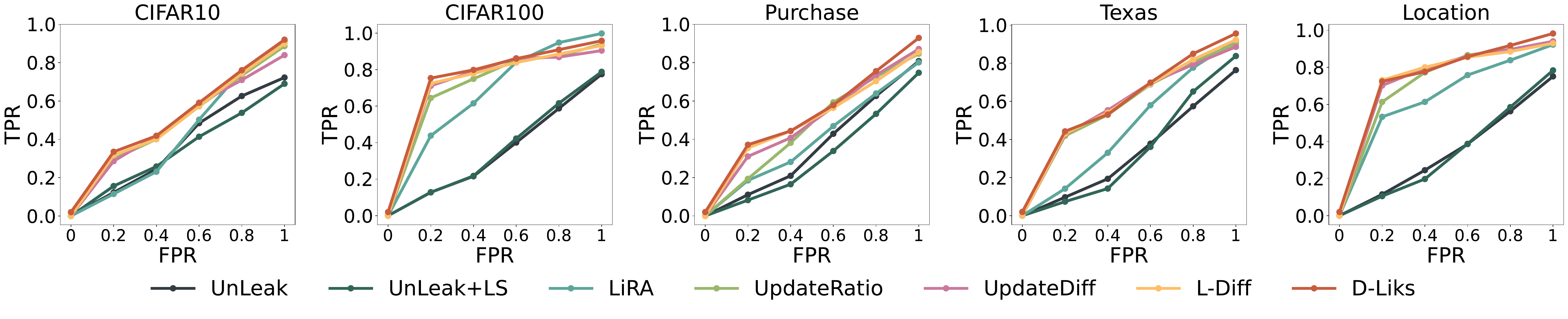}
    \caption{ROC curves on exact random sample unlearning}
    \label{retrain_curves_full}
\end{figure}
\begin{figure}[h]
    \centering
    \includegraphics[width=0.95\linewidth]{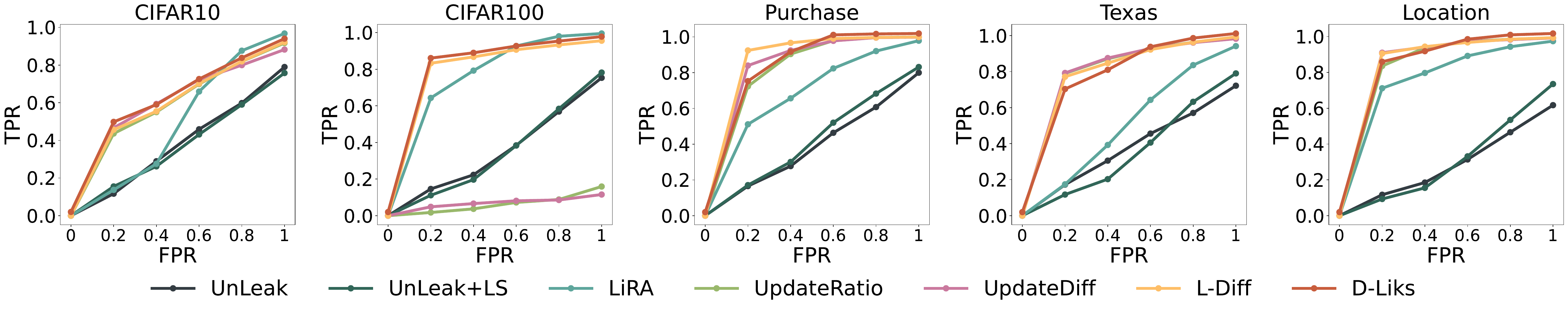}
    \caption{ROC curves on exact partial class unlearning}
    \label{partclass_curves_full}
\end{figure}

\begin{figure}[h]
    \centering
    \includegraphics[width=0.95\linewidth]{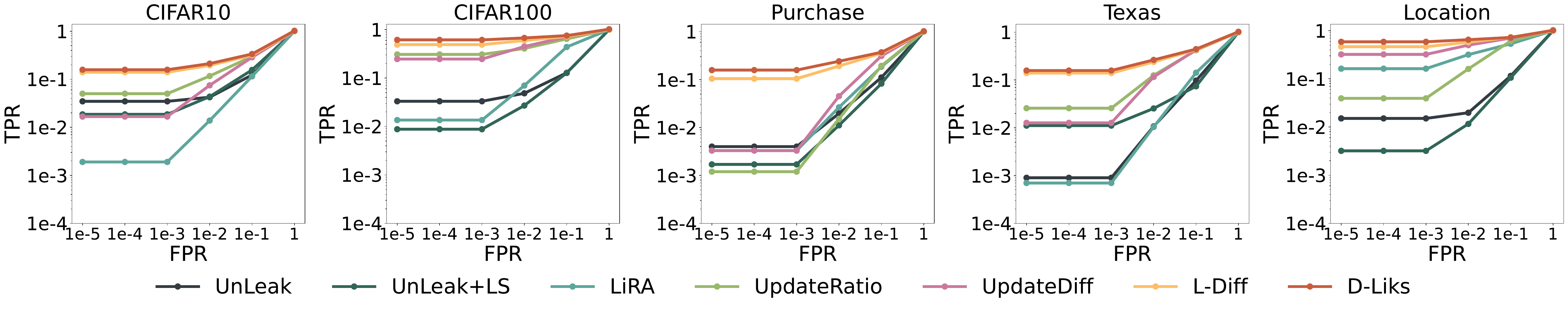}
    \caption{Measuring results on exact random sample unlearning}
    \label{retrain_curves}
\end{figure}

\begin{figure*}[t]
    \centering
    \includegraphics[width=0.95\linewidth]{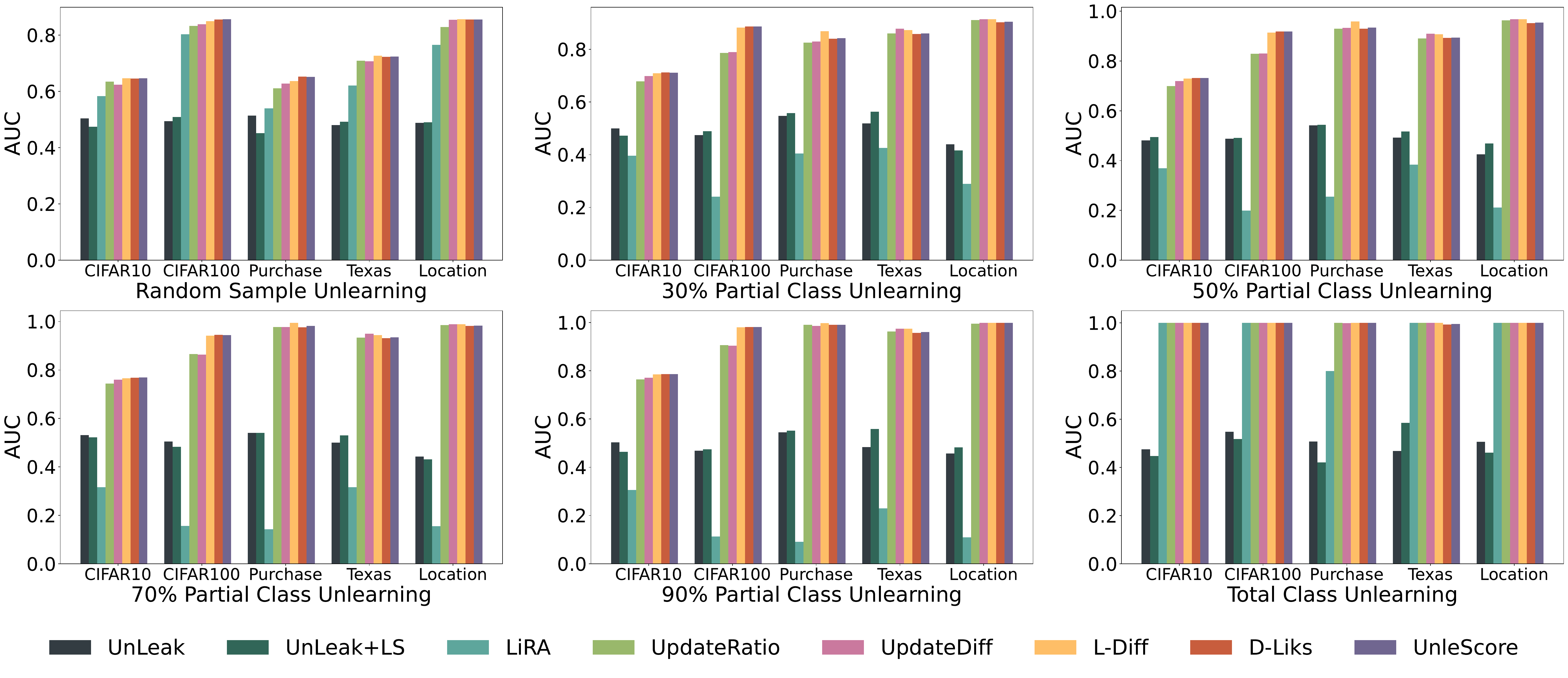}
    \caption{Statistical results (AUC) of metric utility on 3 types of exact unlearning.}
    \label{fig:retrain_results_auc}
\end{figure*}

Figure~\ref{fig:retrain_results_auc} presents the summarized AUC scores for all metrics across three exact unlearning tasks applied to five datasets. Although LiRA, UpdateRatio, and UpdateDiff seem comparable to our designed metrics, they exhibit distinctly different score distributions, as illustrated in Figure~\ref{fig:unlearningscores_cifar10}, and different results at lower FPR, as shown in Figure~\ref{retrain_curves_full} and \ref{partclass_curves_full}. Consequently, we do not regard the AUC as a meaningful statistic for evaluating these measurement results.

For random sample unlearning, we further analyze TPR variations as the FPR shifts from 1e-5 to 1e-1, as shown in Figure~\ref{retrain_curves}. On a logarithmic scale, L-Diff and D-Liks significantly outperform other baselines, particularly at lower FPRs, with minor differences between each other. Besides, the ROC curves for LiRA, UpdateRatio, and UpdateDiff align with those of L-Diff and D-Liks only after the FPR exceeds $0.5$, which is a high threshold, as detailed in Figure~\ref{retrain_curves_full}. Despite their relative superiority, L-Diff and D-Liks achieve a maximum \textit{NMI\_TPR@FPR=0.01‰} of only 10\%, underscoring the substantial challenges in measuring unlearning completeness at the sample level.

\subsection{Under-unlearned Group}\label{sec:subsec:correlation_add}

Figure~\ref{fig:correlation_dist_cifar100}, \ref{fig:correlation_dist_purchase}, \ref{fig:correlation_dist_texas}, and \ref{fig:correlation_dist_location} present the measurement results for LiRA, UpdateDiff, and UnleScore across 6 sample groups (retained, under-unlearned 1-4, exact unlearned), using the CIFAR100, Purchase, Texas, and Location datasets, respectively. For LiRA and UpdateDiff, their measurements on under-unlearned groups with different unlearning levels always overlap. In contrast, UnleScore offers clearer separation compared to LiRA and UpdateDiff.

\begin{figure}[h]
    \centering
    \begin{subfigure}[b]{0.32\linewidth}
         \centering
         \includegraphics[height=0.08\textheight]{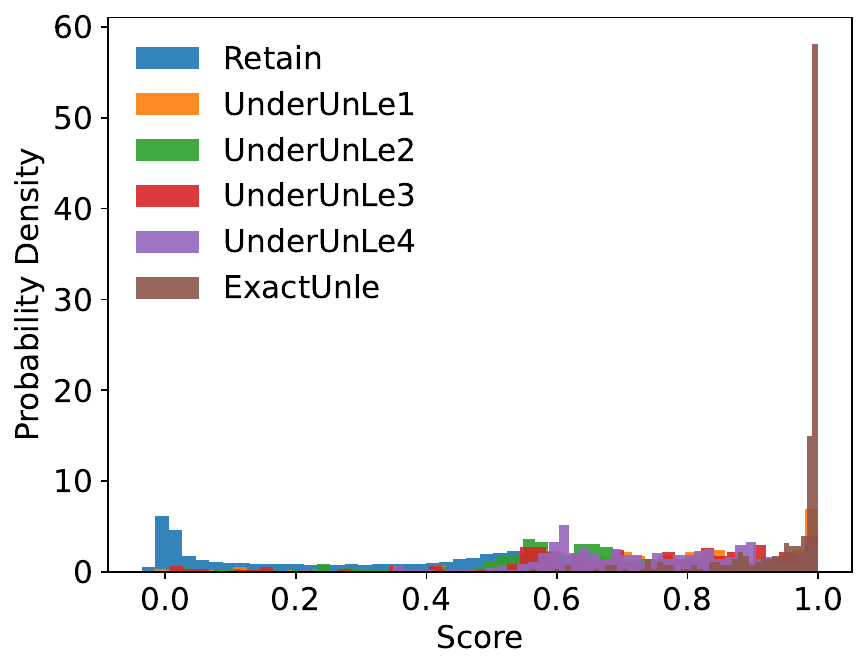}
         \caption{LiRA}
     \end{subfigure}
     \begin{subfigure}[b]{0.32\linewidth}
         \centering
         \includegraphics[height=0.08\textheight]{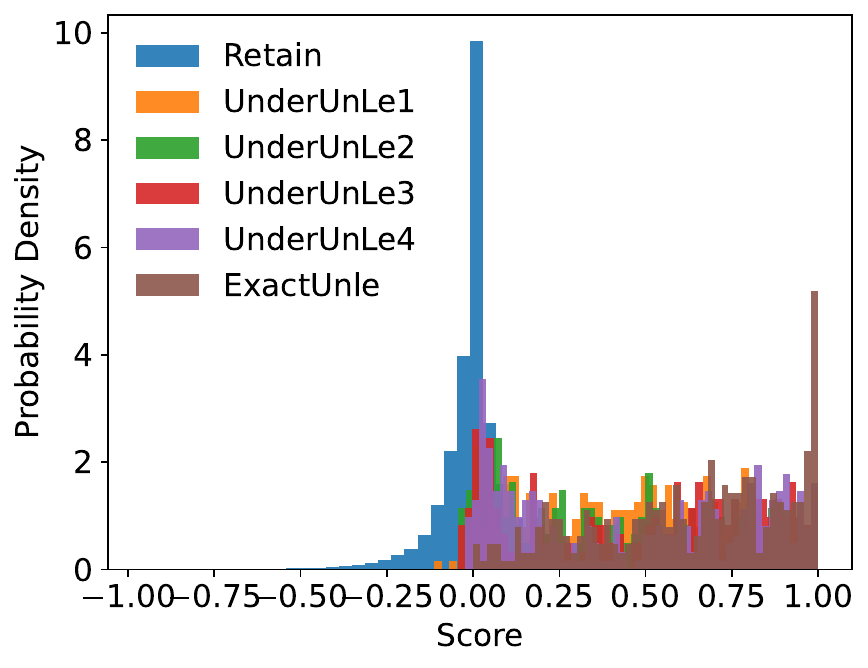}
         \caption{UpdateDiff}
     \end{subfigure}
     \begin{subfigure}[b]{0.32\linewidth}
         \centering
        \includegraphics[height=0.08\textheight]{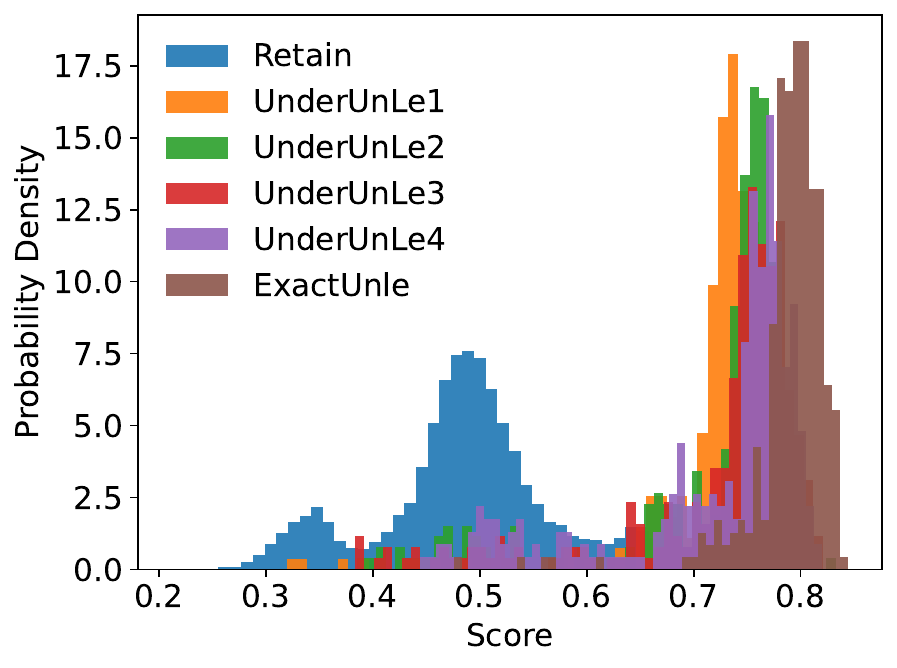}
         \caption{UnleScore}
     \end{subfigure}
     \caption{Scores of unlearning metrics on retained, under-unlearned, and exact unlearned groups within CIFAR100.}
\label{fig:correlation_dist_cifar100}
\end{figure}
\begin{figure}[h]
    \centering
    \begin{subfigure}[b]{0.32\linewidth}
         \centering
         \includegraphics[width=\linewidth]{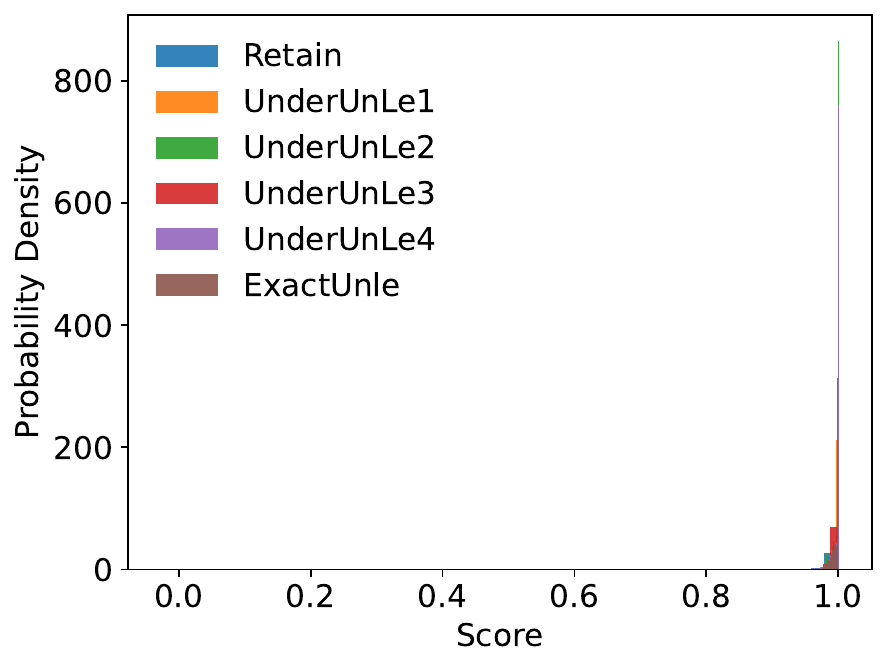}
         \caption{LiRA}
     \end{subfigure}
     \begin{subfigure}[b]{0.32\linewidth}
         \centering
         \includegraphics[width=\linewidth]{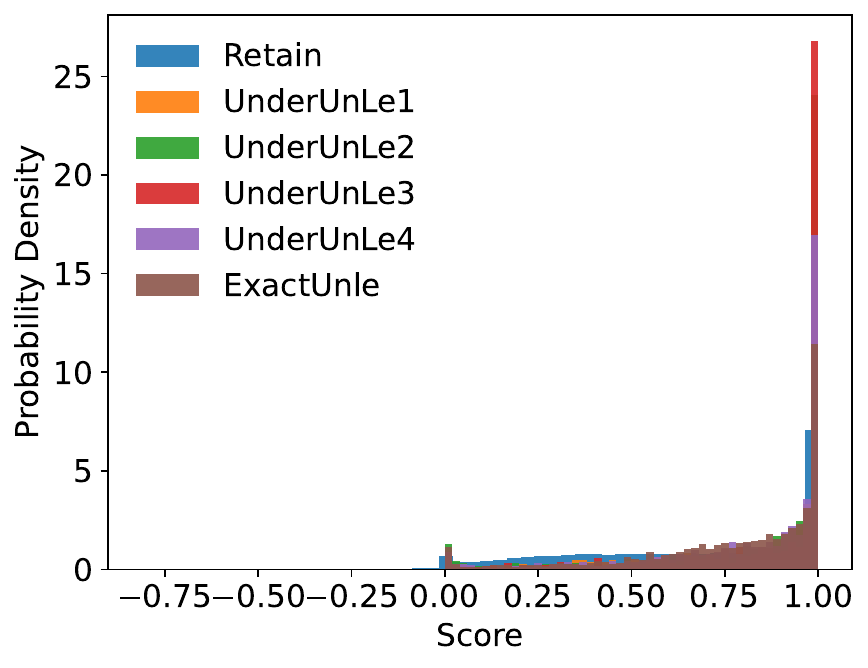}
         \caption{UpdateDiff}
     \end{subfigure}
     \begin{subfigure}[b]{0.32\linewidth}
         \centering
        \includegraphics[width=\linewidth]{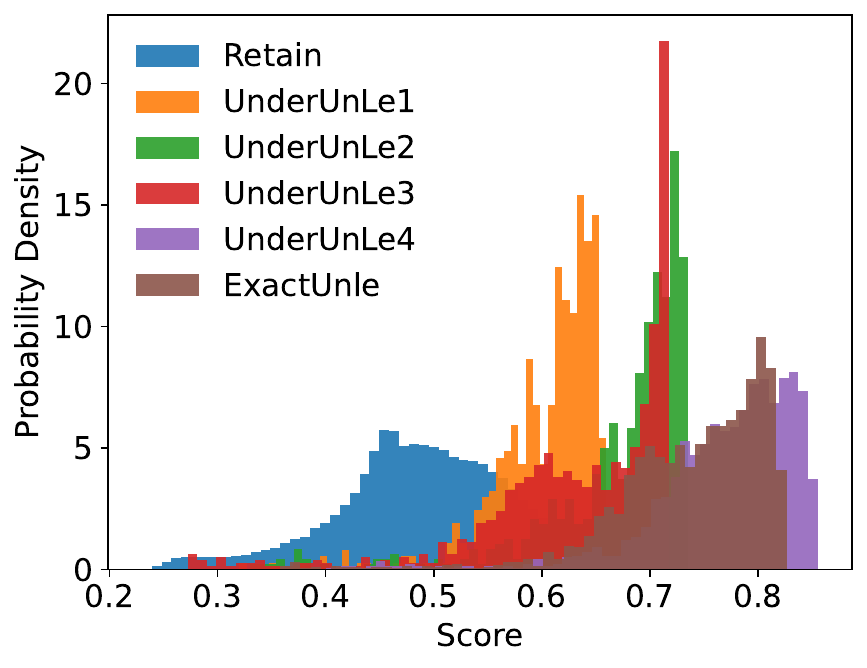}
         \caption{UnleScore}
     \end{subfigure}
     \caption{Scores of unlearning metrics on retained, under-unlearned, and exact unlearned groups within Purchase.}
\label{fig:correlation_dist_purchase}
\end{figure}
\begin{figure}[h]
    \centering
    \begin{subfigure}[b]{0.32\linewidth}
         \centering
         \includegraphics[width=\linewidth]{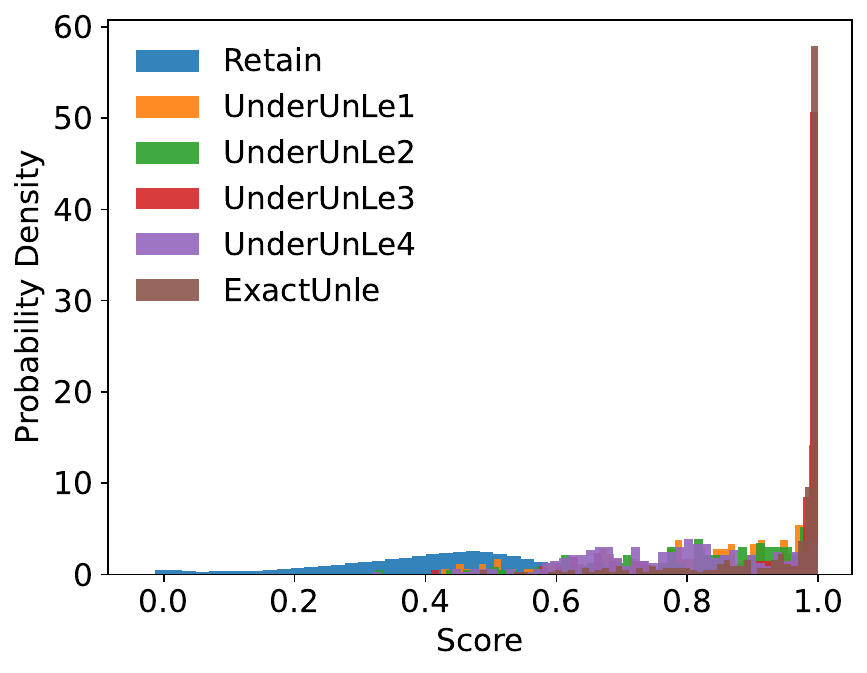}
         \caption{LiRA}
     \end{subfigure}
     \begin{subfigure}[b]{0.32\linewidth}
         \centering
         \includegraphics[width=\linewidth]{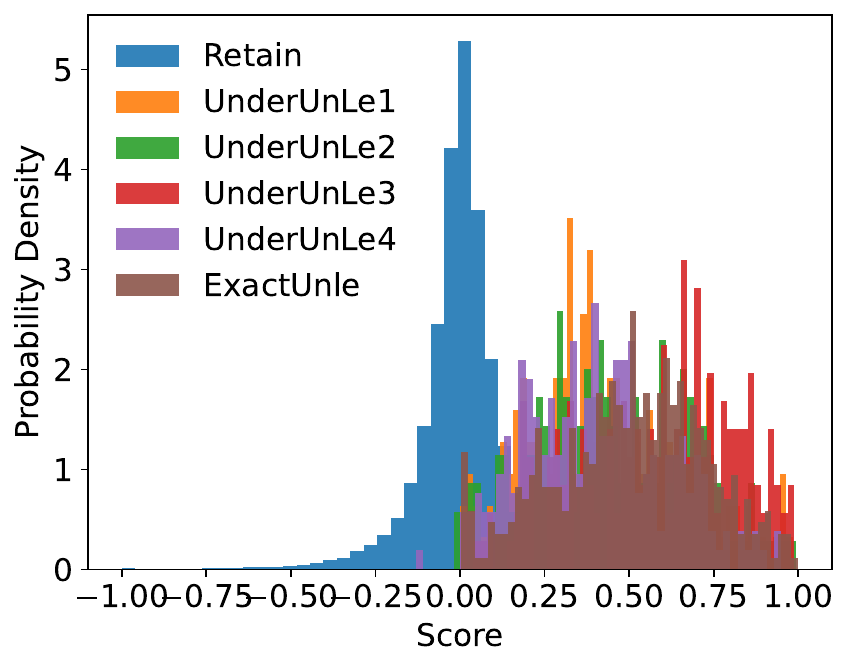}
         \caption{UpdateDiff}
     \end{subfigure}
     \begin{subfigure}[b]{0.32\linewidth}
         \centering
        \includegraphics[width=\linewidth]{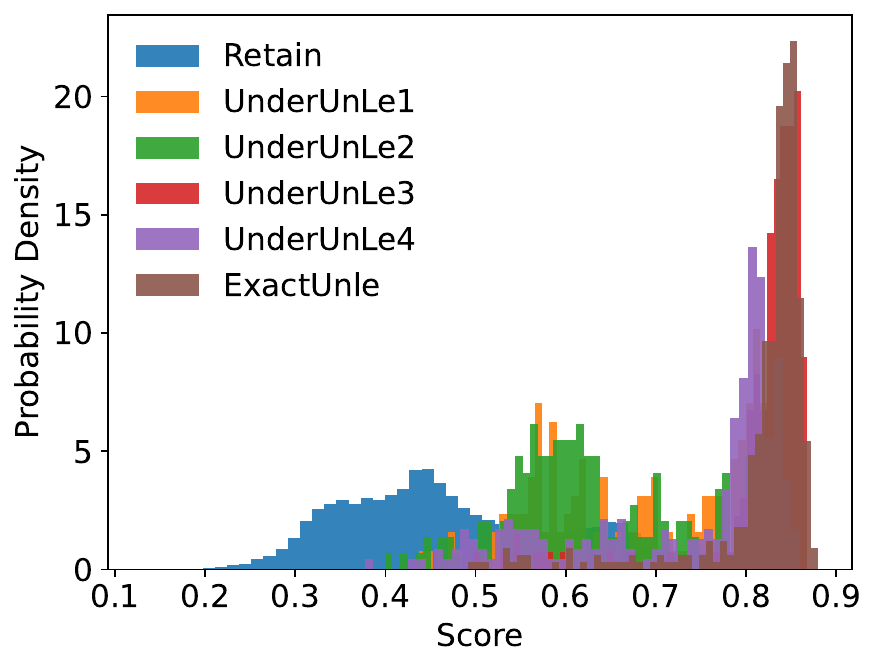}
         \caption{UnleScore}
     \end{subfigure}
     \caption{Scores of unlearning metrics on retained, under-unlearned, and exact unlearned groups within Texas.}
\label{fig:correlation_dist_texas}
\end{figure}
\begin{figure}[h]
    \centering
    \begin{subfigure}[b]{0.32\linewidth}
         \centering
         \includegraphics[width=\linewidth]{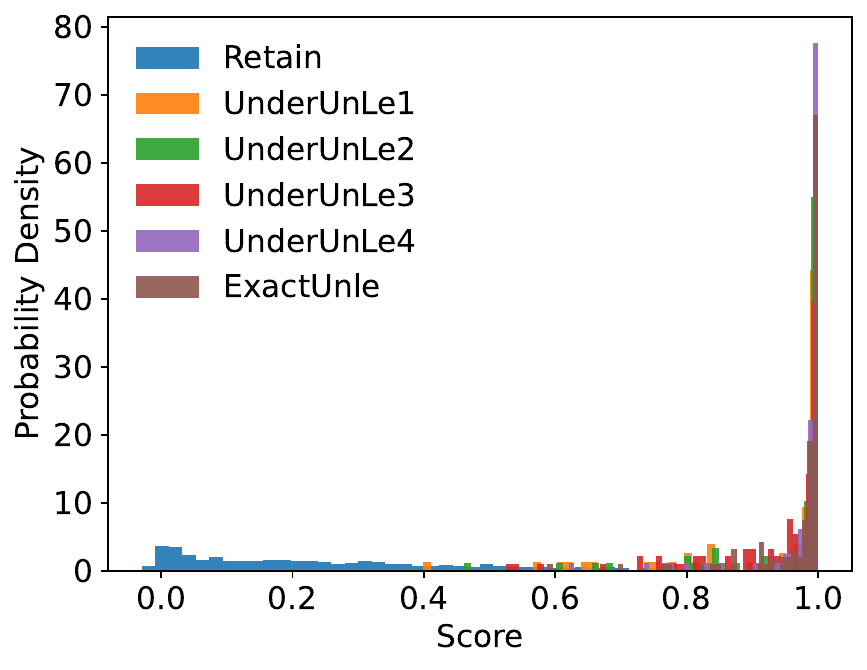}
         \caption{LiRA}
     \end{subfigure}
     \begin{subfigure}[b]{0.32\linewidth}
         \centering
         \includegraphics[width=\linewidth]{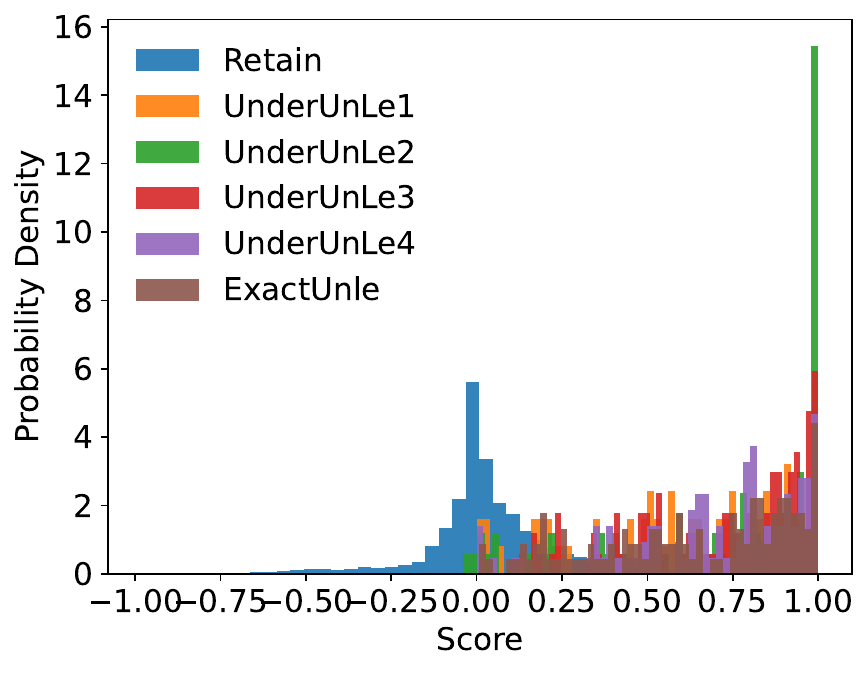}
         \caption{UpdateDiff}
     \end{subfigure}
     \begin{subfigure}[b]{0.32\linewidth}
         \centering
        \includegraphics[width=\linewidth]{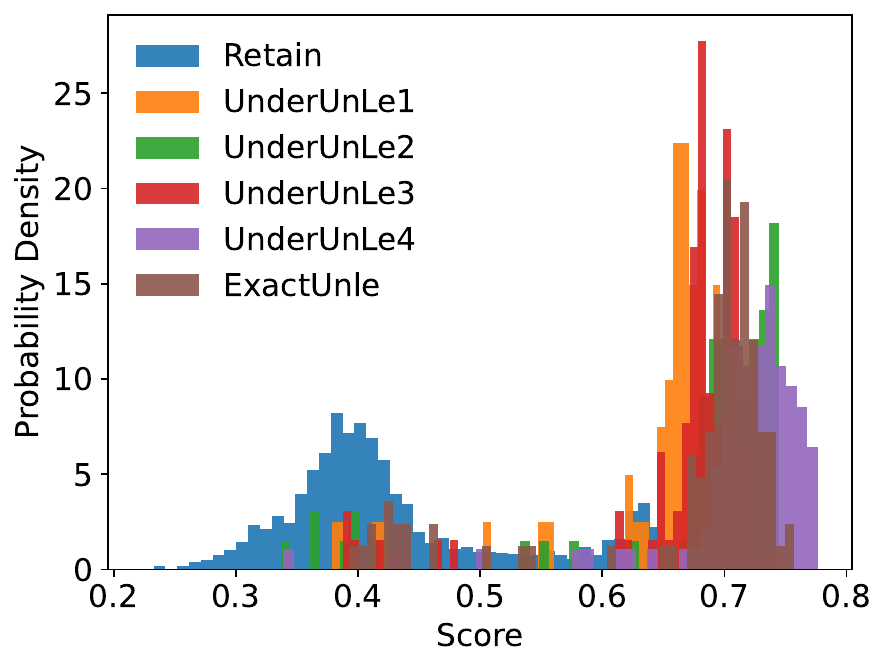}
         \caption{UnleScore}
     \end{subfigure}
     \caption{Scores of unlearning metrics on retained, under-unlearned, and exact unlearned groups within Location.}
\label{fig:correlation_dist_location}
\end{figure}

\subsection{Over-unlearned Group}\label{sec:subsec:over_add}

Figure~\ref{fig:adv_cifar100}, \ref{fig:adv_purchase}, \ref{fig:adv_texas}, and \ref{fig:adv_location} report the measurement results of UnleScore for the camouflage unlearning case across four additional datasets. These results demonstrate smaller differences between case1 and case2 compared to those in CIFAR10, likely due to the significantly smaller size of the class-0 subset in these datasets. CIFAR100, Purchase, and Texas each have 100 classes, while Location has 30 classes, compared to only 10 in CIFAR10. As a result, relabeling class-0 to other classes or to a single class produces a similar distribution, with no distinct peak observed in the scores of retained samples across a broad score range. This allows us to infer the presence of over-unlearning.

\begin{table*}[h]
\caption{Unlearning results (AUC) of 7 approximate unlearning baselines}
\begin{tabularx}{\textwidth}{ >{\hsize=0.1\hsize}X  >{\hsize=0.8\hsize}X  >{\hsize=0.85\hsize\centering\arraybackslash}X  >{\hsize=0.85\hsize\centering\arraybackslash}X  >{\hsize=0.85\hsize\centering\arraybackslash}X  >{\hsize=0.85\hsize\centering\arraybackslash}X  >{\hsize=0.85\hsize\centering\arraybackslash}X  >{\hsize=0.85\hsize\centering\arraybackslash}X  >{\hsize=0.85\hsize\centering\arraybackslash}X  >{\hsize=0.85\hsize\centering\arraybackslash}X }
\toprule
\toprule
& Dataset & Retrain & FT & Ascent & Forsaken & Fisher & L-Codec & Boundary & SSD \\
\midrule
\midrule
{\multirow{5}{*}{\rotatebox[origin=c]{90}{Random}}} &CIFAR10 & 65.48$\pm$0.89 & 53.17$\pm$0.70 & 51.82$\pm$0.84 & 52.98$\pm$0.83 & 50.63$\pm$0.57 & 48.31$\pm$0.49 & - & 53.48$\pm$0.41 \\
&CIFAR100 & 86.31$\pm$0.61 & 57.83$\pm$0.77 & 50.84$\pm$0.59 & 53.24$\pm$0.30 & 50.97$\pm$0.77 & 56.48$\pm$0.20 & - & 55.82$\pm$0.32 \\
&Purchase & 65.19$\pm$0.16 & 64.31$\pm$0.07 & 49.24$\pm$0.13 & 50.33$\pm$0.07 & 49.96$\pm$0.15 & 48.72$\pm$0.08 & - & 47.98$\pm$0.14 \\
&Texas & 72.46$\pm$0.16 & 68.81$\pm$0.23 & 50.08$\pm$0.19 & 48.21$\pm$0.19 & 49.61$\pm$0.17 & 49.74$\pm$0.09 & - & 56.02$\pm$0.21 \\
&Location & 85.47$\pm$0.18 & 79.08$\pm$0.22 & 49.19$\pm$0.13 & 46.41$\pm$0.22 & 51.88$\pm$0.09 & 51.74$\pm$0.16 & - & 50.21$\pm$0.08 \\
\bottomrule
\midrule
{\multirow{5}{*}{\rotatebox[origin=c]{90}{Partial Class}}} &CIFAR10 & 74.02$\pm$0.33 & 36.91$\pm$0.34 & 97.31$\pm$0.05 & 96.54$\pm$0.05 & 49.94$\pm$0.52 & 94.22$\pm$0.06 & 96.45$\pm$0.04 & 77.80$\pm$0.23 \\
&CIFAR100 & 91.33$\pm$0.57 & 77.22$\pm$0.94 & 98.90$\pm$0.11 & 99.29$\pm$0.06 & 49.94$\pm$1.25 & 70.45$\pm$0.62 & 98.91$\pm$0.08 & 89.61$\pm$0.35 \\
&Purchase & 93.81$\pm$0.05 & 95.29$\pm$0.04 & 59.70$\pm$0.12 & 50.94$\pm$0.14 & 49.43$\pm$0.19 & 27.55$\pm$0.11 & 99.75$\pm$0.01 & 98.91$\pm$0.03 \\
&Texas & 89.20$\pm$0.17 & 91.43$\pm$0.18 & 93.53$\pm$0.13 & 50.92$\pm$0.32 & 39.23$\pm$0.27 & - & 96.23$\pm$0.10 & 97.56$\pm$0.09 \\
&Location & 94.96$\pm$0.16 & 95.66$\pm$0.14 & 87.30$\pm$0.22 & 51.70$\pm$0.41 & 53.40$\pm$0.45 & 32.98$\pm$0.45 & 99.23$\pm$0.09 & 97.04$\pm$0.16 \\
\bottomrule
\midrule
{\multirow{5}{*}{\rotatebox[origin=c]{90}{Total Class}}} &CIFAR10 & 100.00$\pm$0.00 & 97.01$\pm$0.07 & 99.94$\pm$0.00 & 98.56$\pm$0.04 & 49.99$\pm$0.25 & 99.31$\pm$0.02 & 99.18$\pm$0.02 & 91.57$\pm$0.07 \\
&CIFAR100 & 100.00$\pm$0.00 & 99.52$\pm$0.04 & 99.20$\pm$0.06 & 99.42$\pm$0.04 & 50.30$\pm$1.11 & 62.34$\pm$0.62 & 99.22$\pm$0.05 & 91.26$\pm$0.26 \\
&Purchase & 100.00$\pm$0.00 & 100.00$\pm$0.00 & 57.48$\pm$0.10 & 52.60$\pm$0.09 & 49.11$\pm$0.10 & 56.23$\pm$0.11 & 100.00$\pm$0.00 & 99.14$\pm$0.02 \\
&Texas & 99.52$\pm$0.01 & 98.58$\pm$0.04 & 93.72$\pm$0.09 & 54.22$\pm$0.27 & 52.74$\pm$0.18 & - & 96.27$\pm$0.03 & 97.97$\pm$0.06 \\
&Location & 100.00$\pm$0.00 & 99.20$\pm$0.04 & 88.23$\pm$0.15 & 48.66$\pm$0.37 & 56.54$\pm$0.37 & 50.95$\pm$0.34 & 100.00$\pm$0.00 & 98.50$\pm$0.06 \\
\bottomrule
\bottomrule
\end{tabularx}
\label{unlearning_utility_auc}
\end{table*}

Figure~\ref{fig:unlearningscores_cifar100}, \ref{fig:unlearningscores_Purchase}, \ref{fig:unlearningscores_Texas}, and \ref{fig:unlearningscores_Location} show the score distributions of three unlearning metrics for the unlearning tasks on CIFAR100, Purchase, Texas, and Location, respectively. The conclusions on the three metrics remain consistent with our previous analysis on CIFAR10.

\begin{figure}[h]
    \centering
     \begin{subfigure}[b]{0.45\linewidth}
         \centering
         \includegraphics[width=\linewidth]{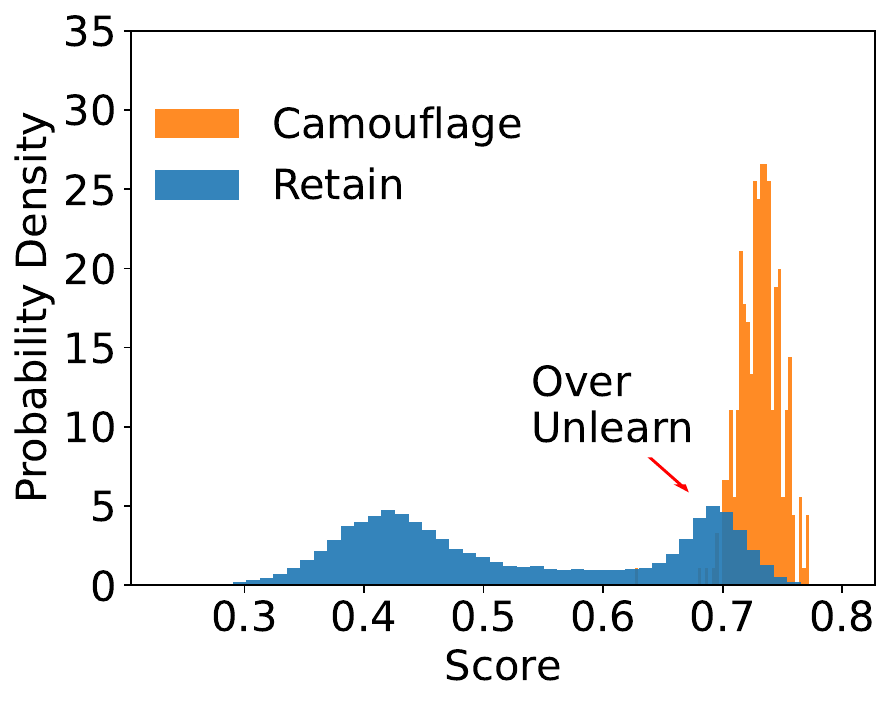}
         \caption{Case1}
     \end{subfigure}
    \begin{subfigure}[b]{0.45\linewidth}
         \centering
         \includegraphics[width=\linewidth]{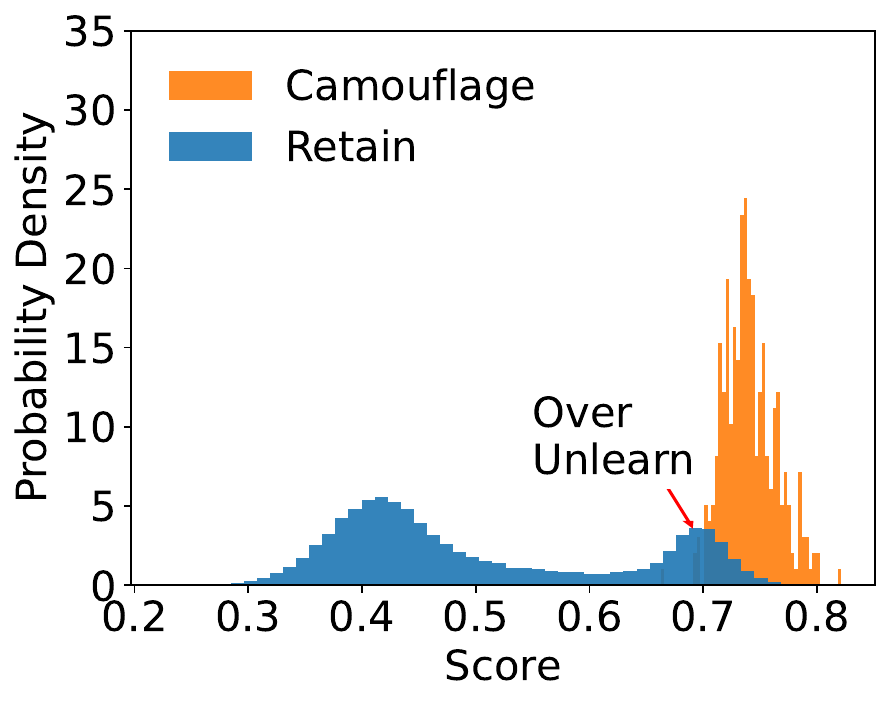}
         \caption{Case2}
     \end{subfigure}
     \caption{UnleScores for samples within Camouflage 'Unlearning' Cases of CIFAR100.}
     \label{fig:adv_cifar100}
\end{figure}
\begin{figure}[h]
    \centering
     \begin{subfigure}[b]{0.45\linewidth}
         \centering
         \includegraphics[width=\linewidth]{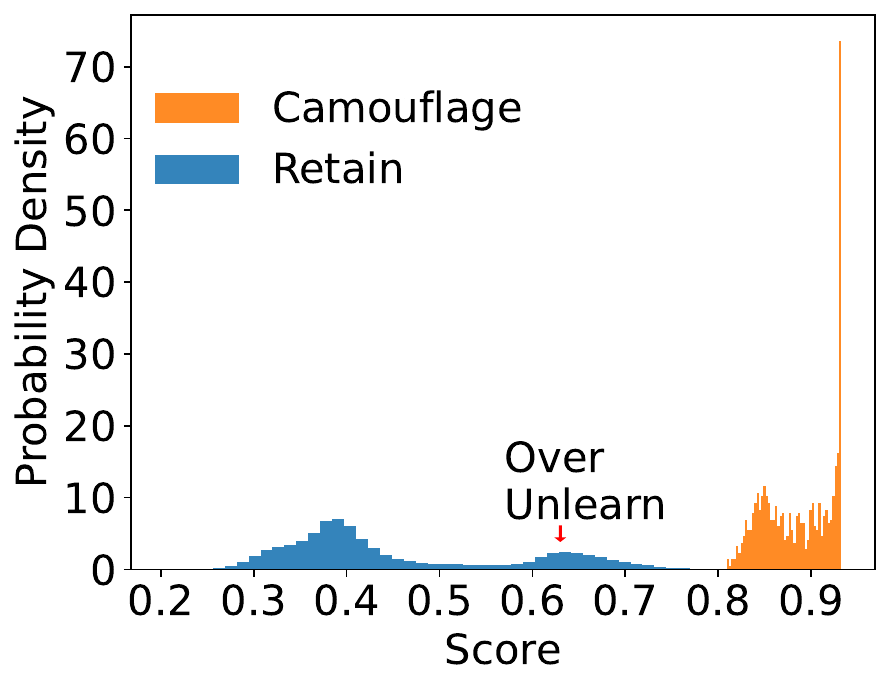}
         \caption{Case1}
     \end{subfigure}
    \begin{subfigure}[b]{0.45\linewidth}
         \centering
         \includegraphics[width=\linewidth]{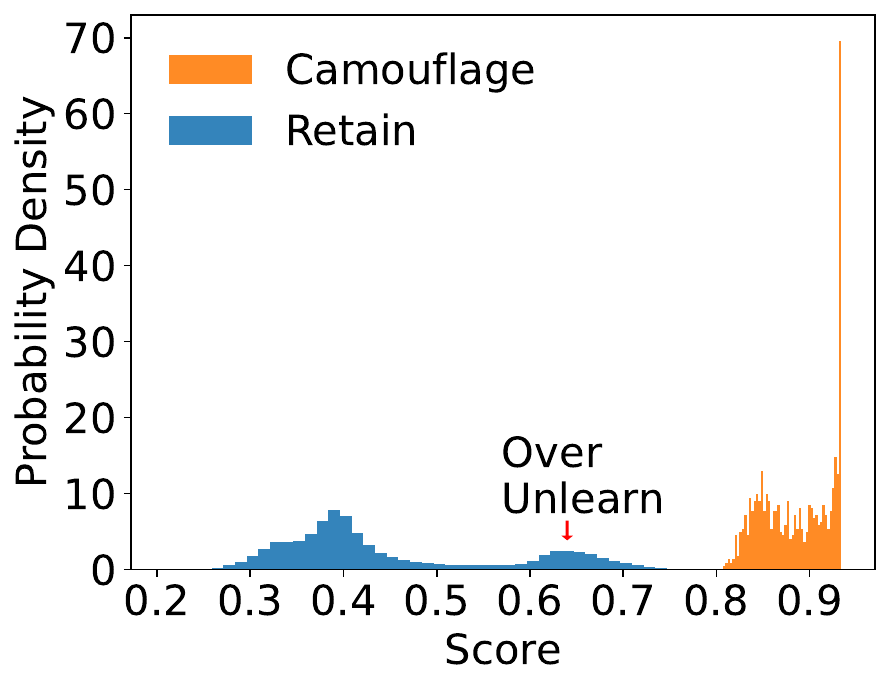}
         \caption{Case2}
     \end{subfigure}
     \caption{UnleScores for samples within Camouflage 'Unlearning' Cases of Purchase.}
     \label{fig:adv_purchase}
\end{figure}
\begin{figure}[h]
    \centering
     \begin{subfigure}[b]{0.45\linewidth}
         \centering
         \includegraphics[width=\linewidth]{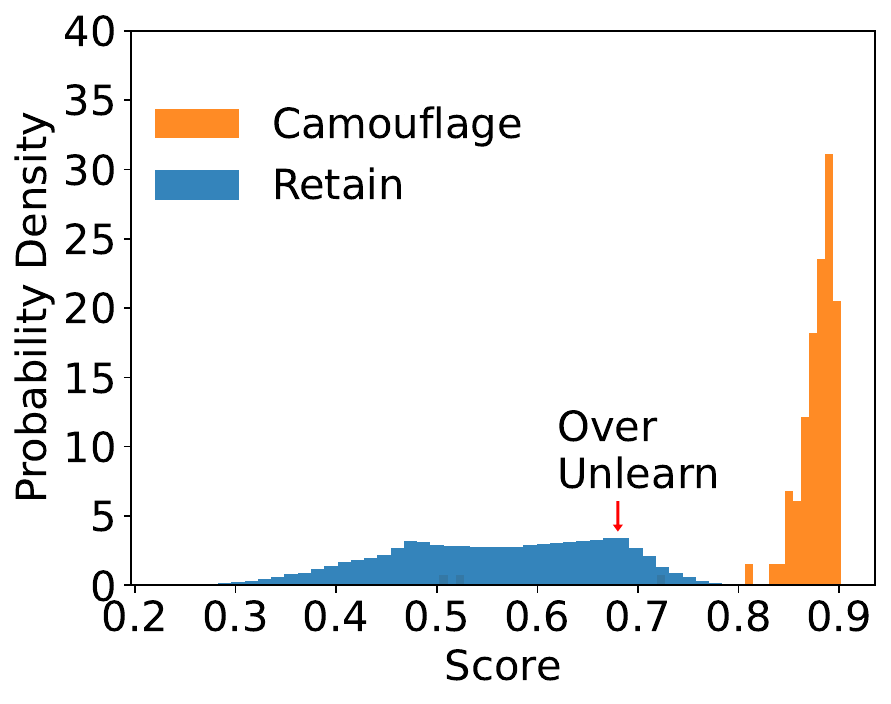}
         \caption{Case1}
     \end{subfigure}
    \begin{subfigure}[b]{0.45\linewidth}
         \centering
         \includegraphics[width=\linewidth]{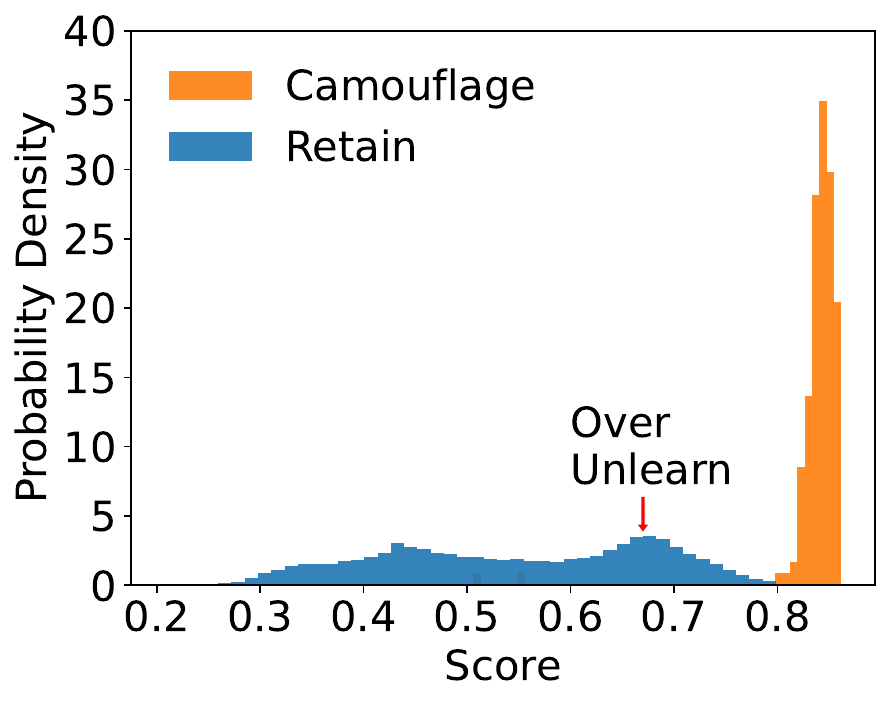}
         \caption{Case2}
     \end{subfigure}
     \caption{UnleScores for samples within Camouflage 'Unlearning' Cases of Texas.}
     \label{fig:adv_texas}
\end{figure}
\begin{figure}[h]
    \centering
     \begin{subfigure}[b]{0.45\linewidth}
         \centering
         \includegraphics[width=\linewidth]{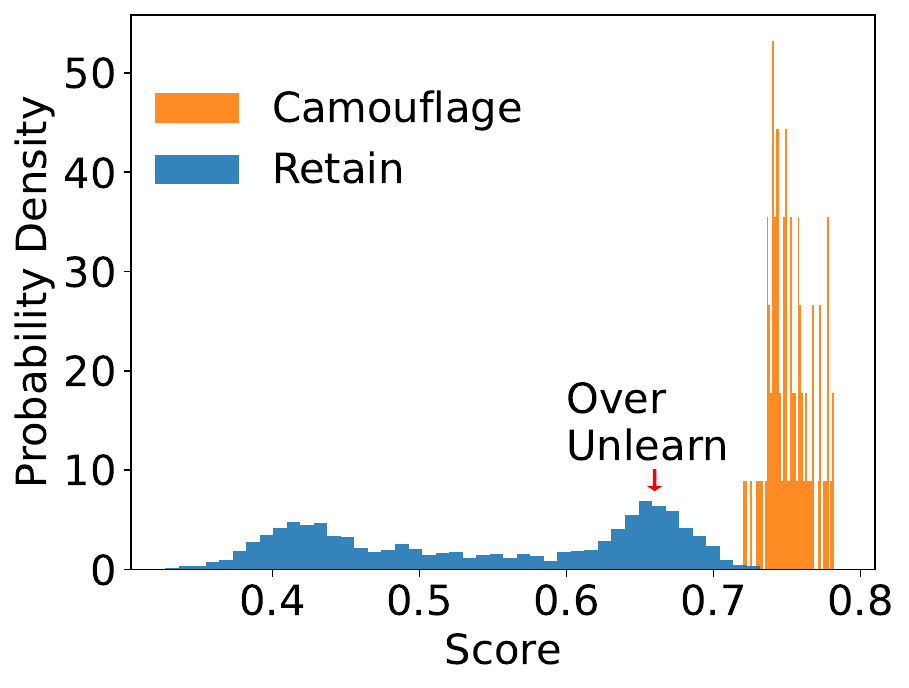}
         \caption{Case1}
     \end{subfigure}
    \begin{subfigure}[b]{0.45\linewidth}
         \centering
         \includegraphics[width=\linewidth]{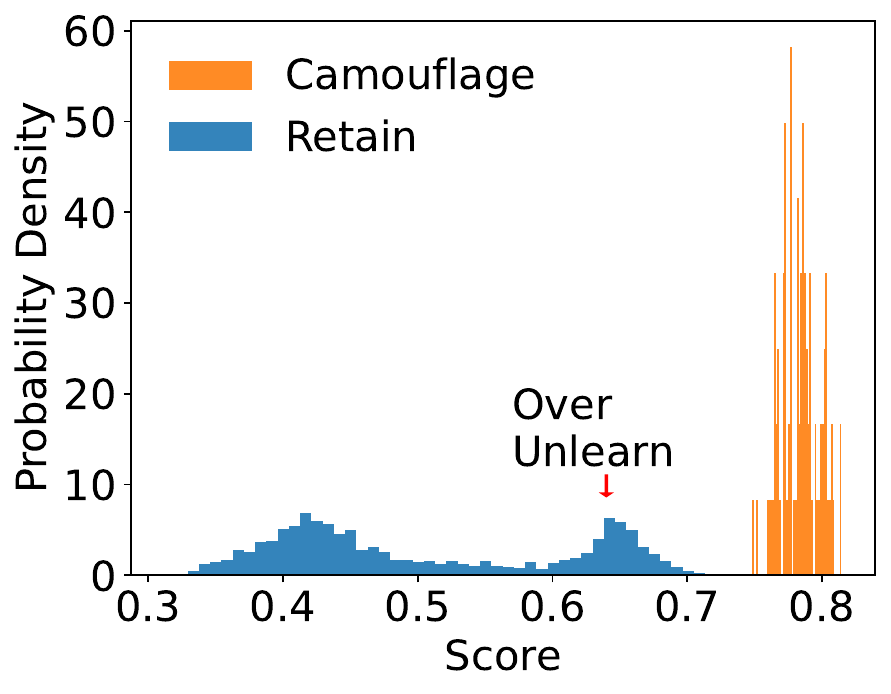}
         \caption{Case2}
     \end{subfigure}
     \caption{UnleScores for samples within Camouflage 'Unlearning' Cases of Location.}
     \label{fig:adv_location}
\end{figure}

\begin{figure}[ht]
    \centering
    \centering
     \begin{subfigure}[b]{0.45\textwidth}
         \centering
         \includegraphics[width=\linewidth]{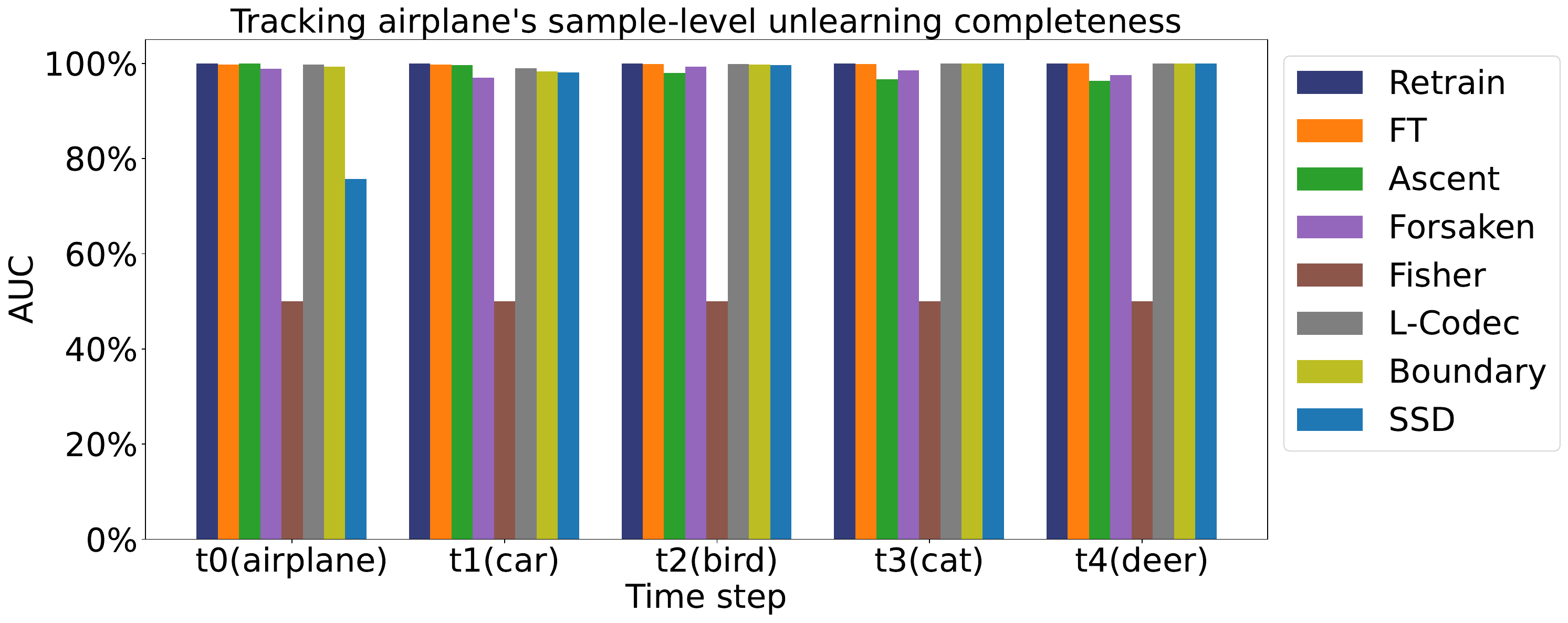}
         \caption{CIFAR10}
         \label{fig:y equals x}
     \end{subfigure}
     \hfill
     \begin{subfigure}[b]{0.45\textwidth}
         \centering
         \includegraphics[width=\linewidth]{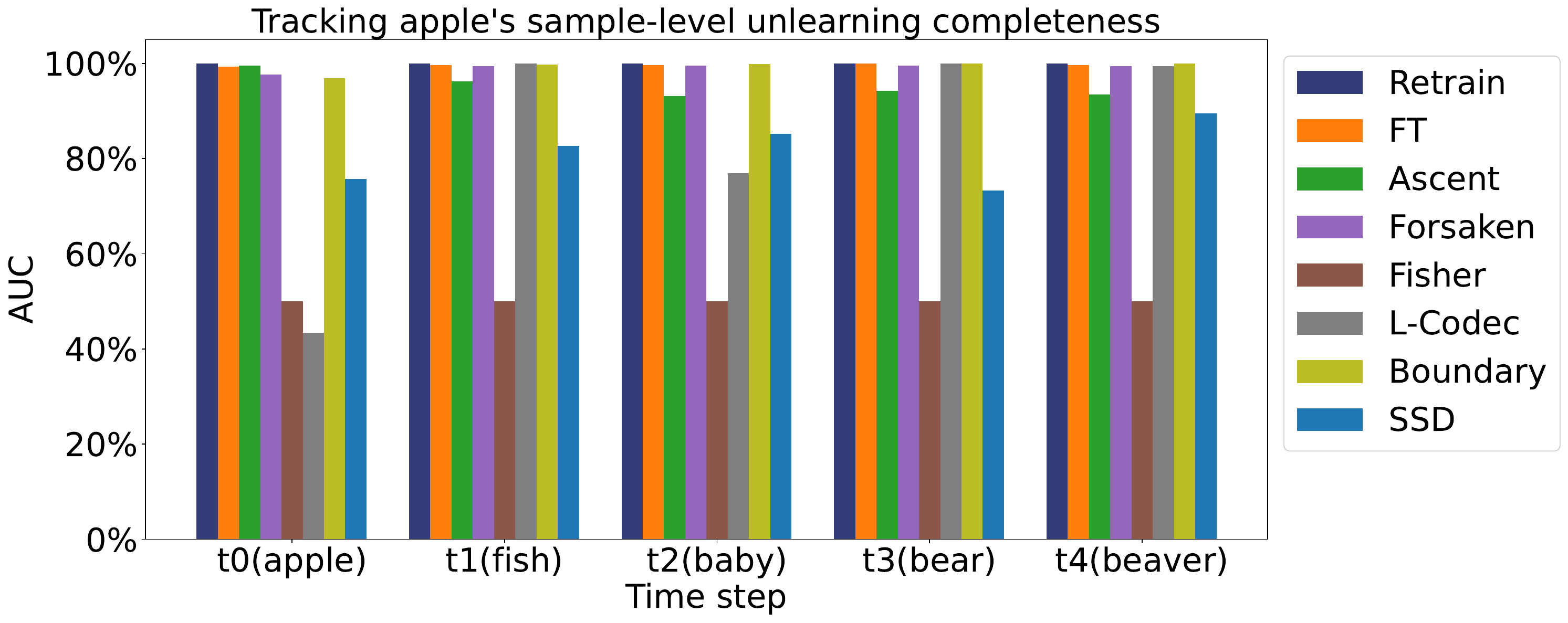}
         \caption{CIFAR100}
         \label{fig:three sin x}
     \end{subfigure}
    \caption{Total class unlearning resilience results (AUC) of baselines on CIFAR10 and CIFAR100}
    \label{continue_auc_cifar10_cifar100}
\end{figure}

\subsection{Additional Results of Score Distribution}\label{sec:subsec:metricthreshold_add}
\begin{figure*}[h]
    \centering
    \begin{subfigure}[b]{0.16\linewidth}
         \centering
         \includegraphics[height=0.08\textheight]{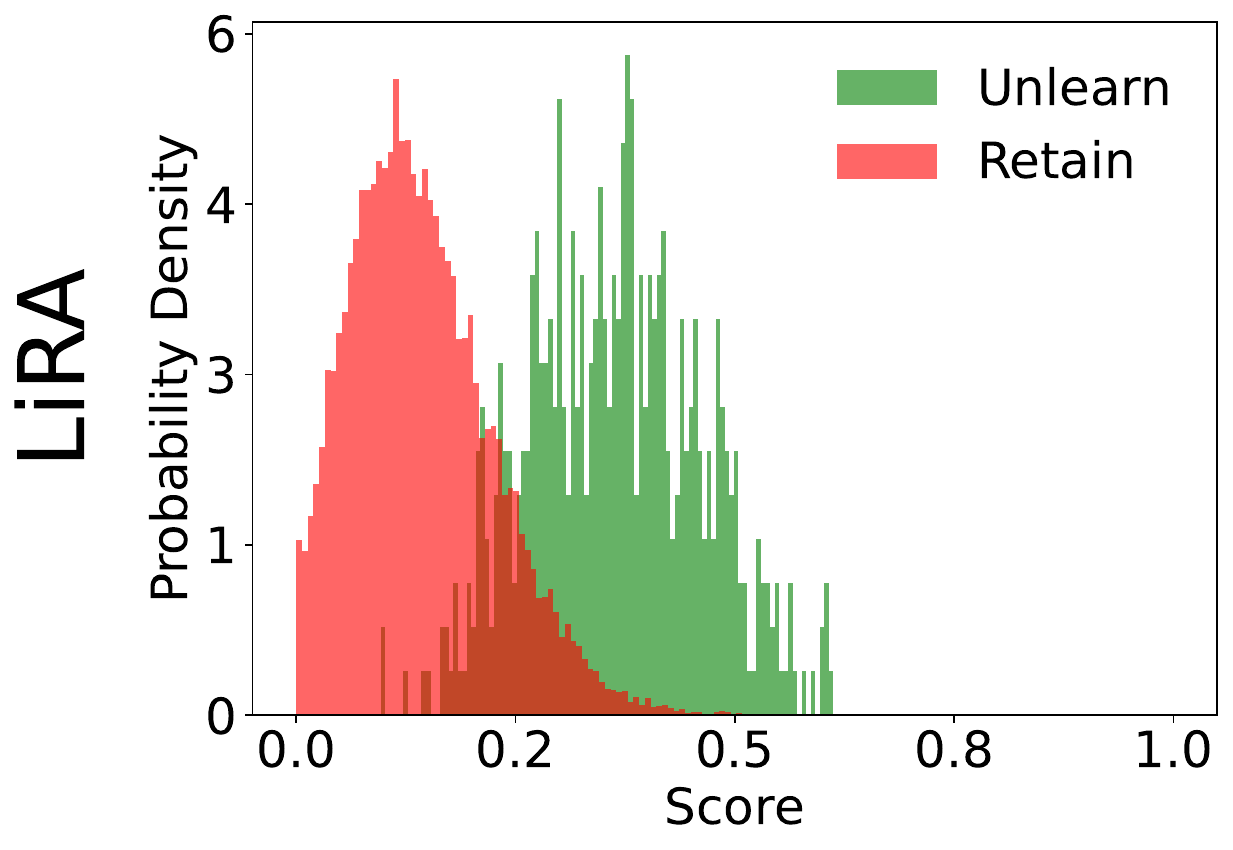}
         \caption{Random Sample}
     \end{subfigure}
     \begin{subfigure}[b]{0.15\linewidth}
         \centering
         \includegraphics[height=0.08\textheight]{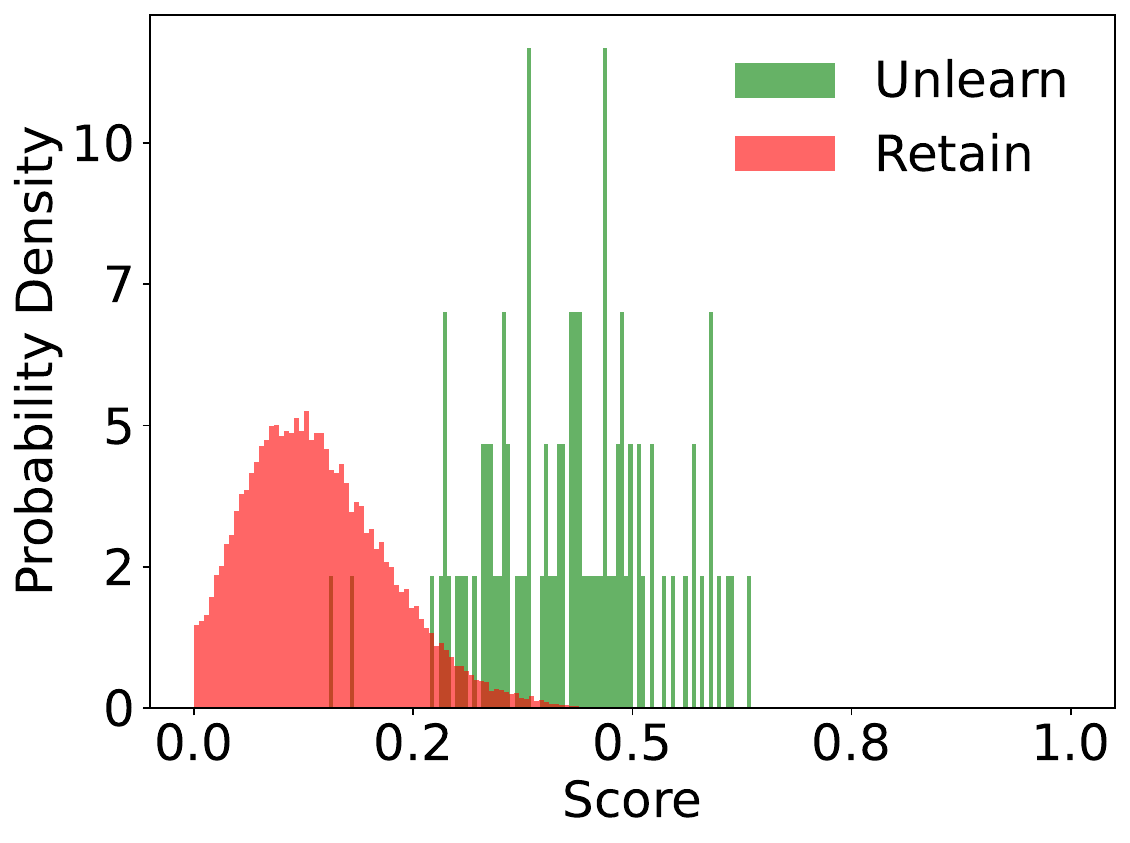}
         \caption{30\% Part-Class}
     \end{subfigure}
     \begin{subfigure}[b]{0.15\linewidth}
         \centering
         \includegraphics[height=0.08\textheight]{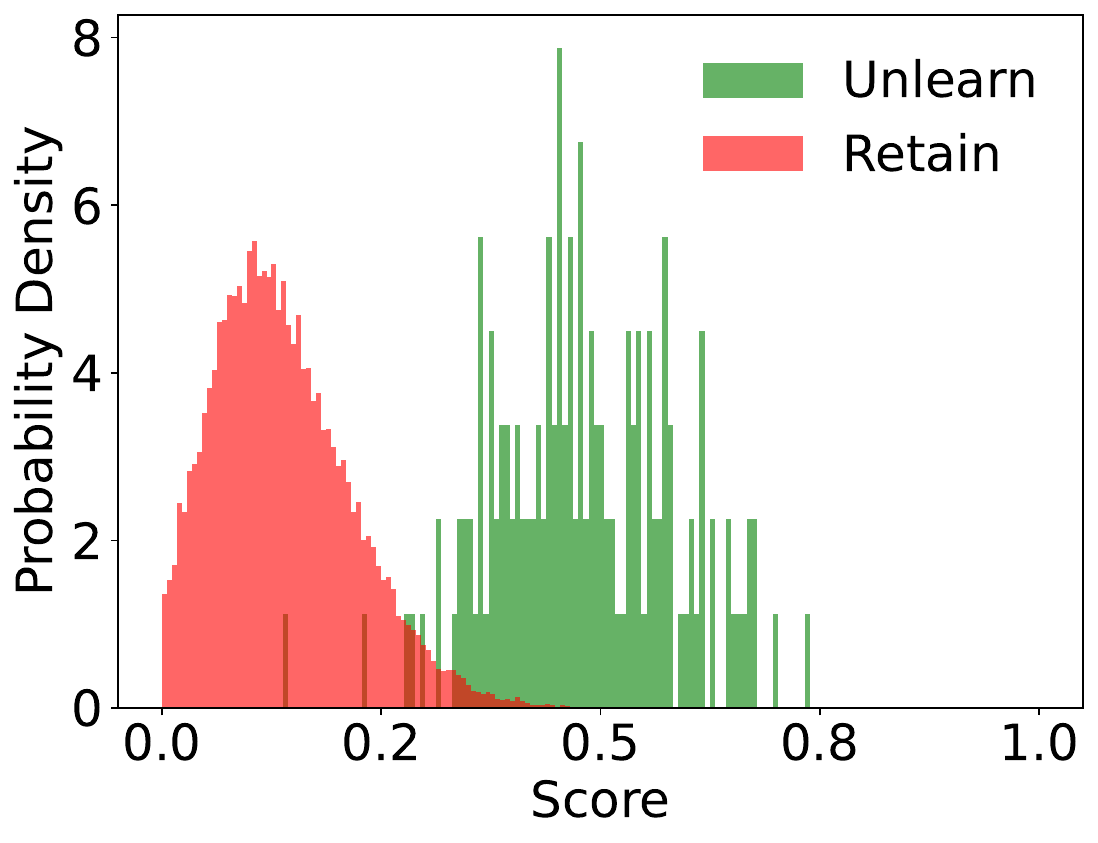}
         \caption{50\% Part-Class}
     \end{subfigure}
     \begin{subfigure}[b]{0.15\linewidth}
         \centering
         \includegraphics[height=0.08\textheight]{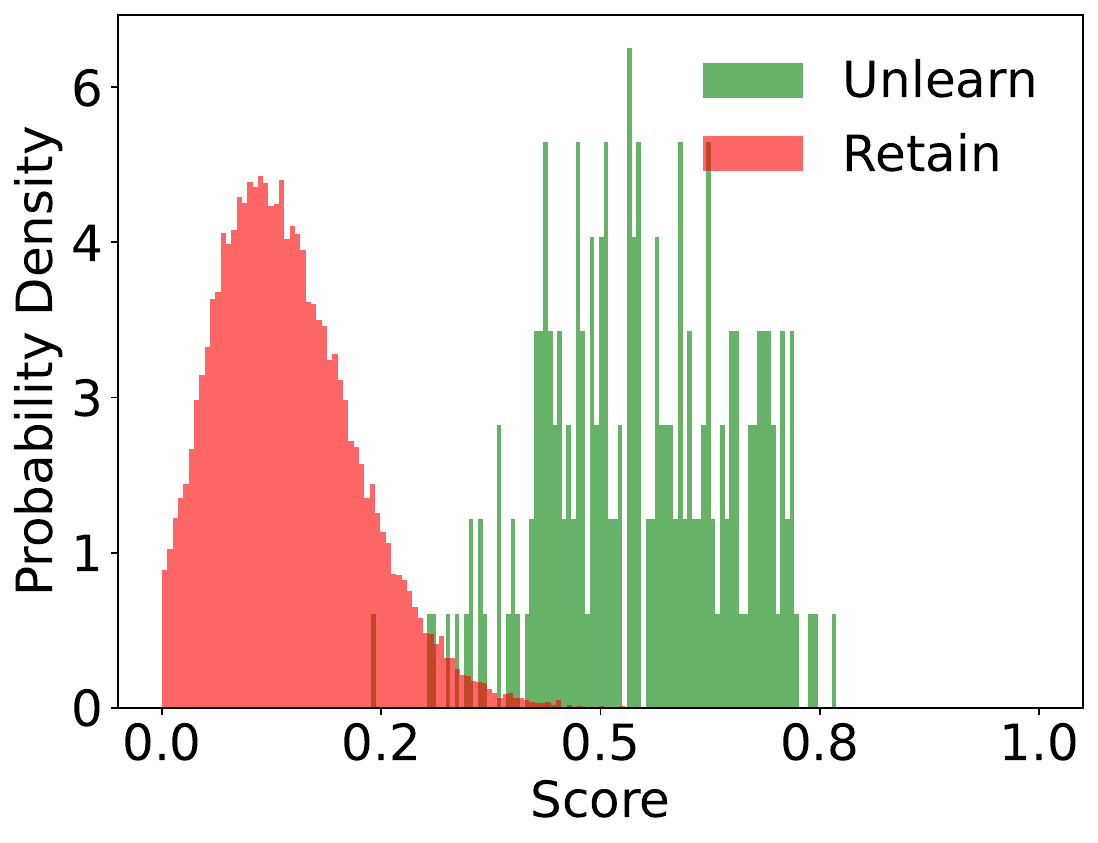}
         \caption{70\% Part-Class}
     \end{subfigure}
     \begin{subfigure}[b]{0.15\linewidth}
         \centering
         \includegraphics[height=0.08\textheight]{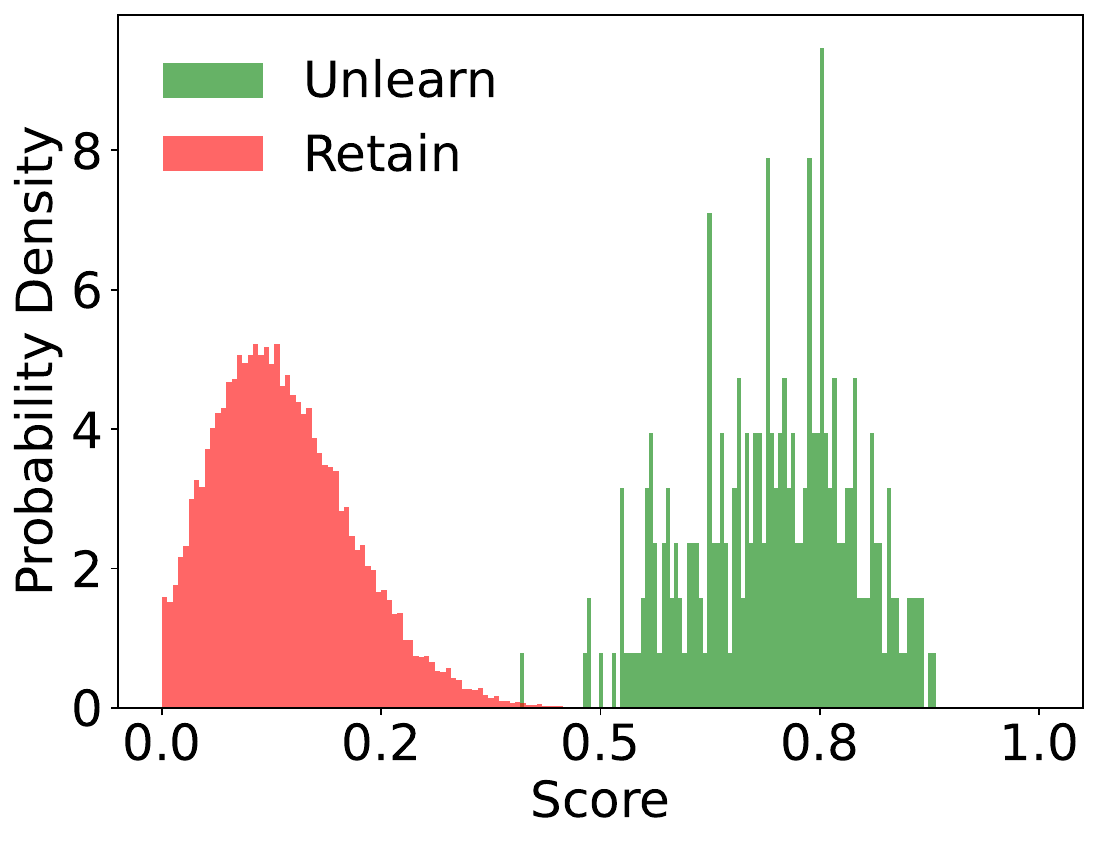}
         \caption{90\% Part-Class}
     \end{subfigure}
     \begin{subfigure}[b]{0.15\linewidth}
         \centering
         \includegraphics[height=0.08\textheight]{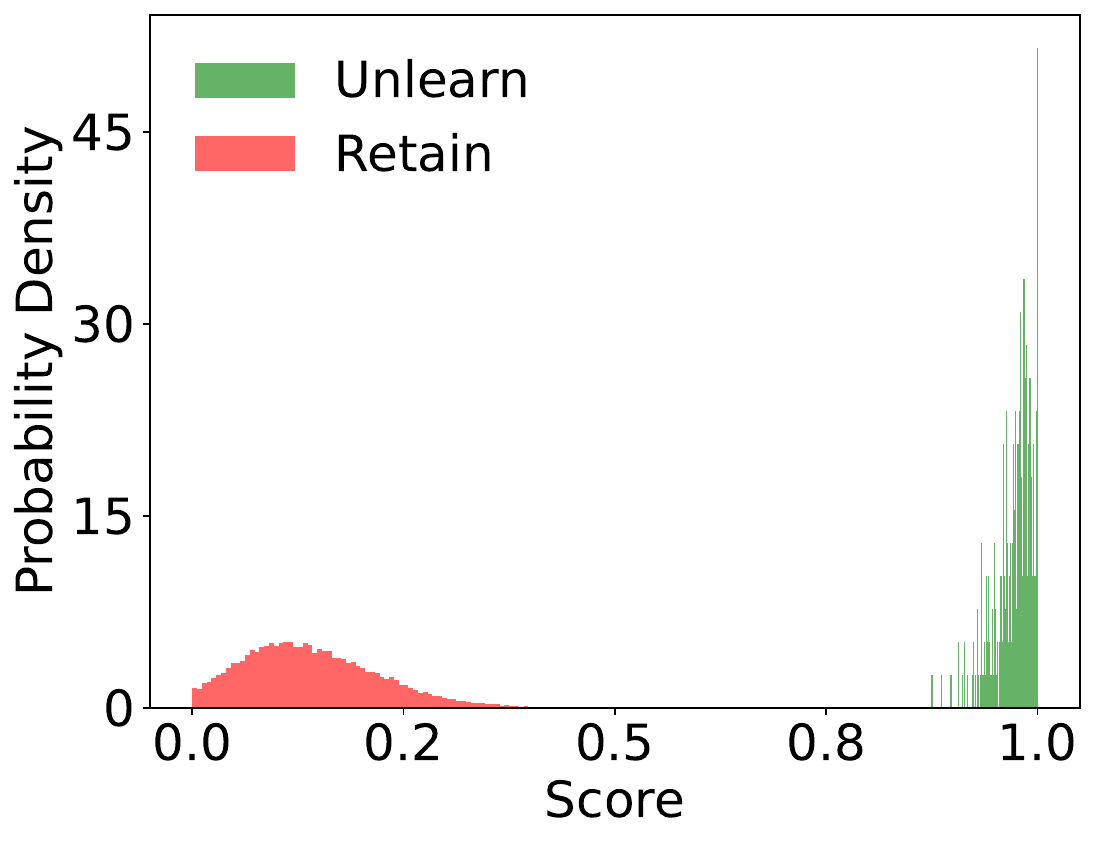}
         \caption{Total Class}
     \end{subfigure}
     \hfill
     \begin{subfigure}[b]{0.16\linewidth}
         \centering
         \includegraphics[height=0.08\textheight]{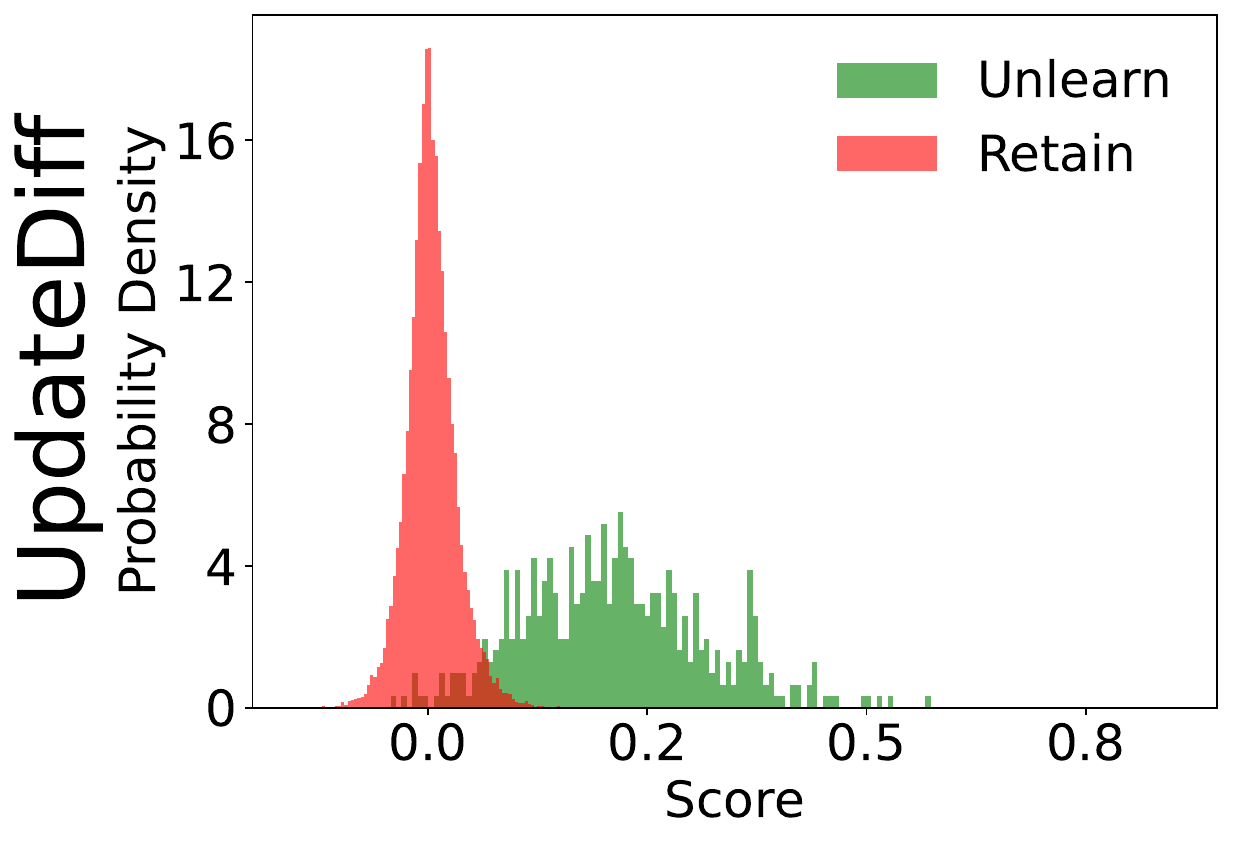}
         \caption{Random Sample}
     \end{subfigure}     
\begin{subfigure}[b]{0.15\linewidth}
         \centering
         \includegraphics[height=0.08\textheight]{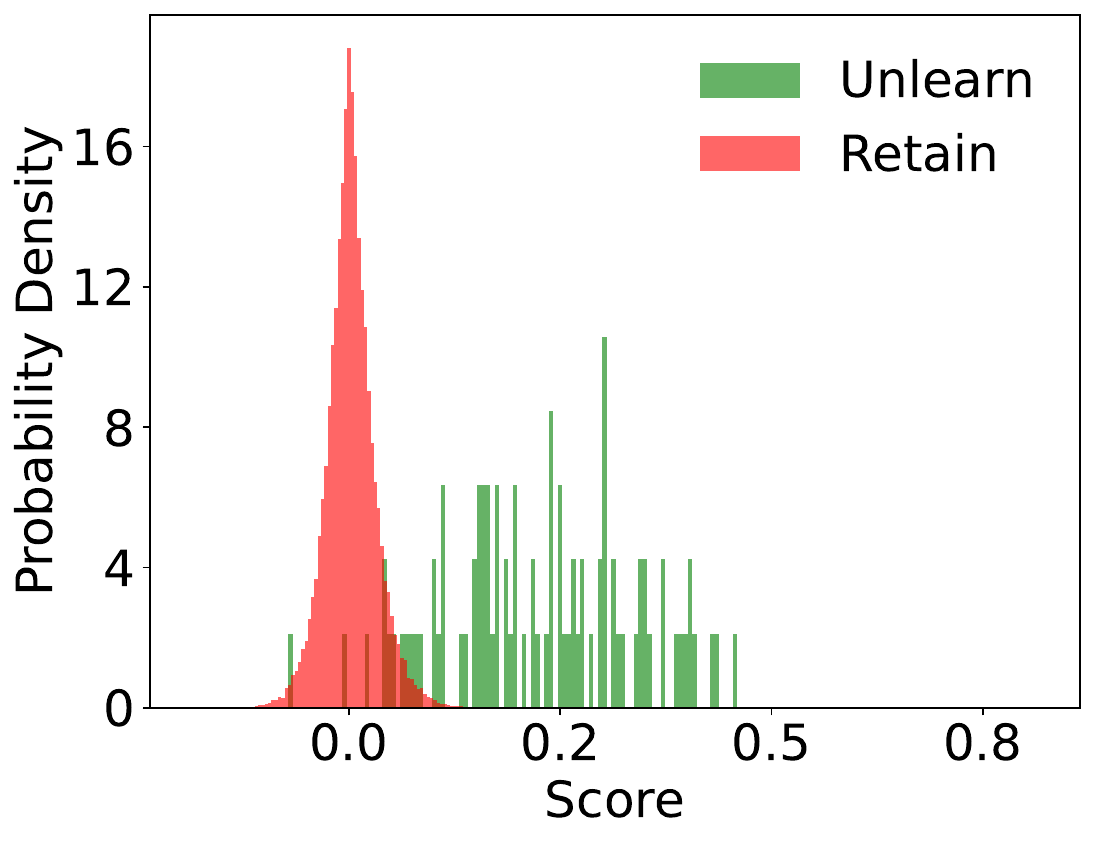}
         \caption{30\% Part-Class}
     \end{subfigure}     
\begin{subfigure}[b]{0.15\linewidth}
         \centering
         \includegraphics[height=0.08\textheight]{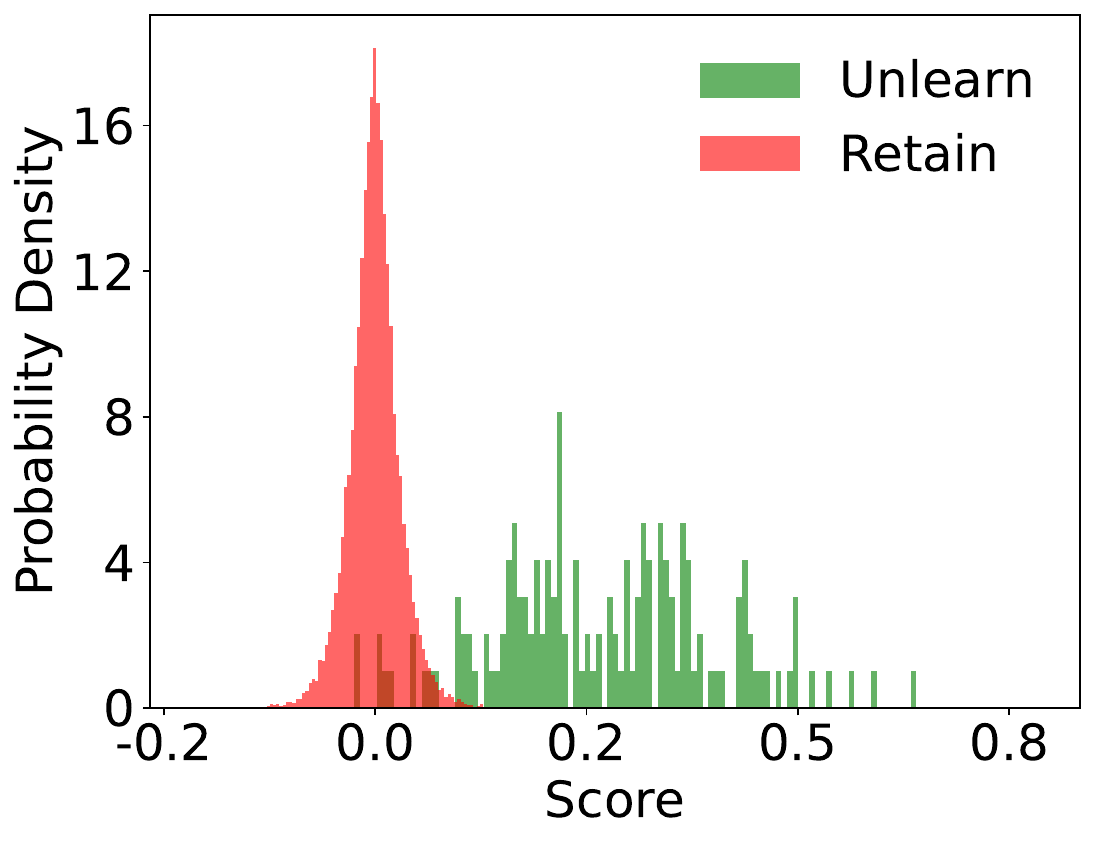}
         \caption{50\% Part-Class}
     \end{subfigure}     
\begin{subfigure}[b]{0.15\linewidth}
         \centering
         \includegraphics[height=0.08\textheight]{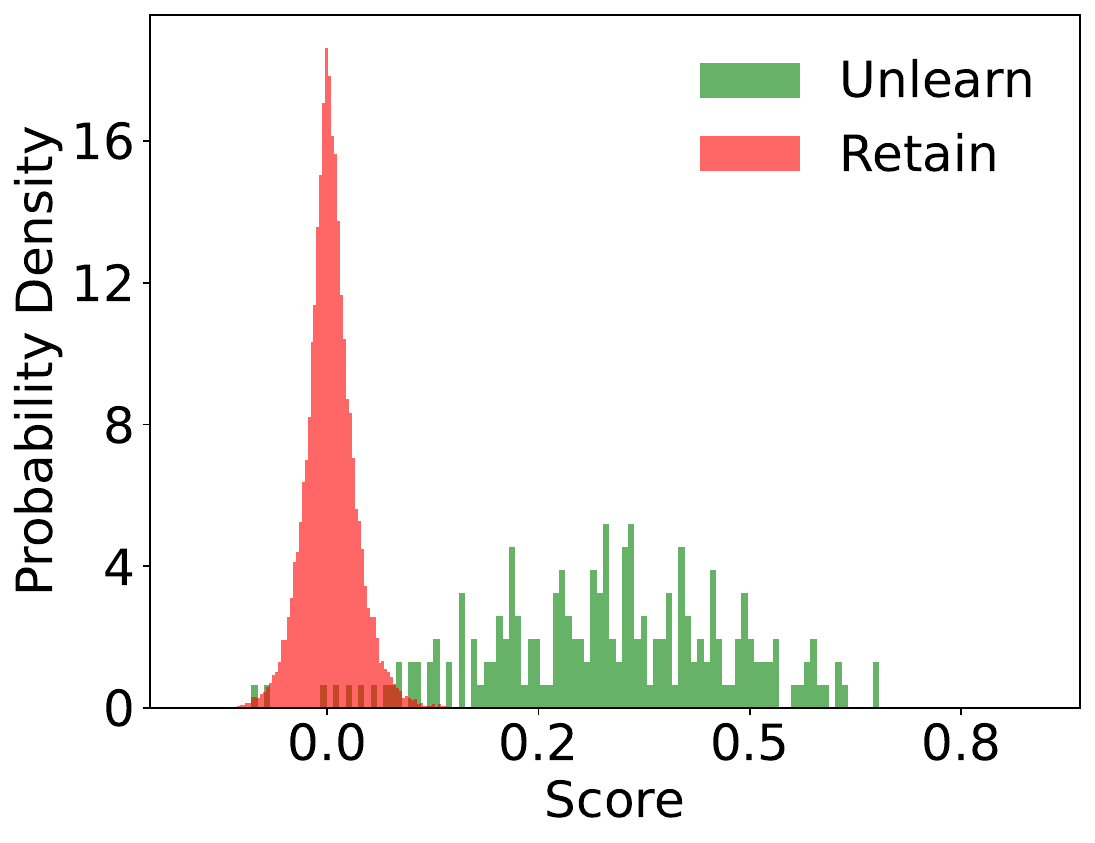}
         \caption{70\% Part-Class}
     \end{subfigure}     
\begin{subfigure}[b]{0.15\linewidth}
         \centering
         \includegraphics[height=0.08\textheight]{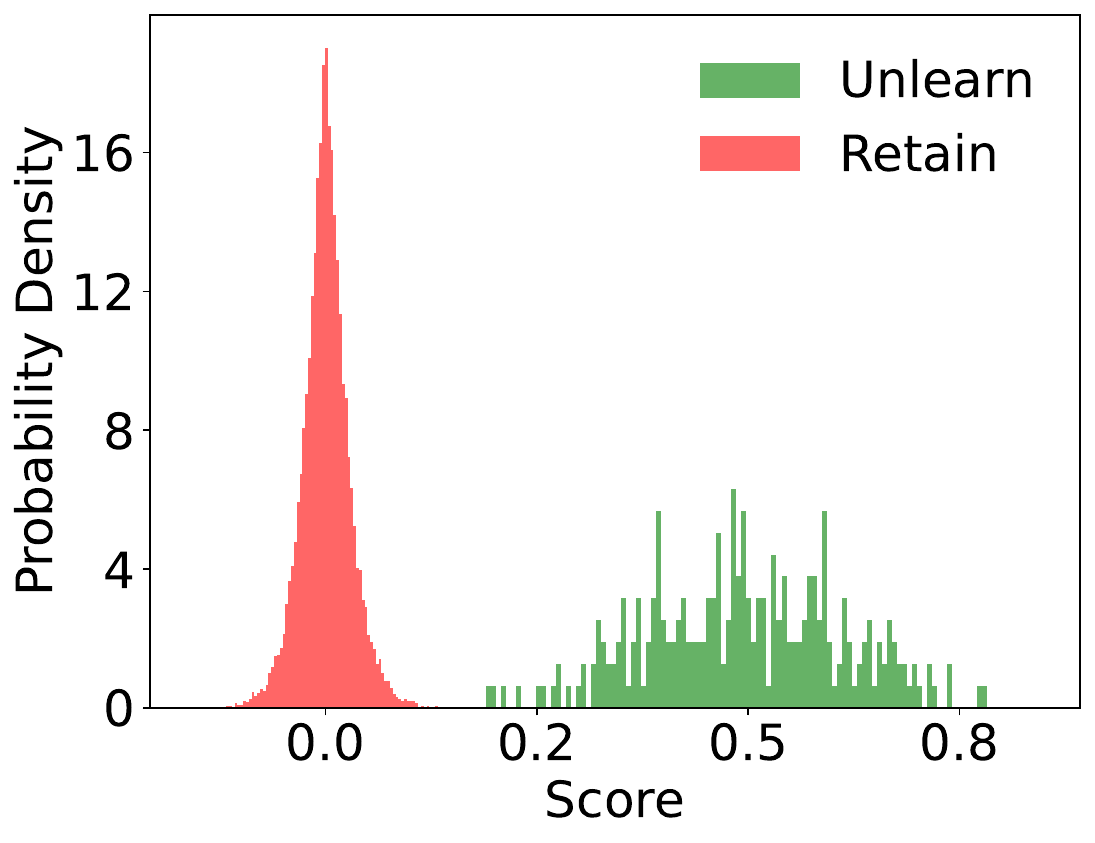}
         \caption{90\% Part-Class}
     \end{subfigure}     
\begin{subfigure}[b]{0.15\linewidth}
         \centering
         \includegraphics[height=0.08\textheight]{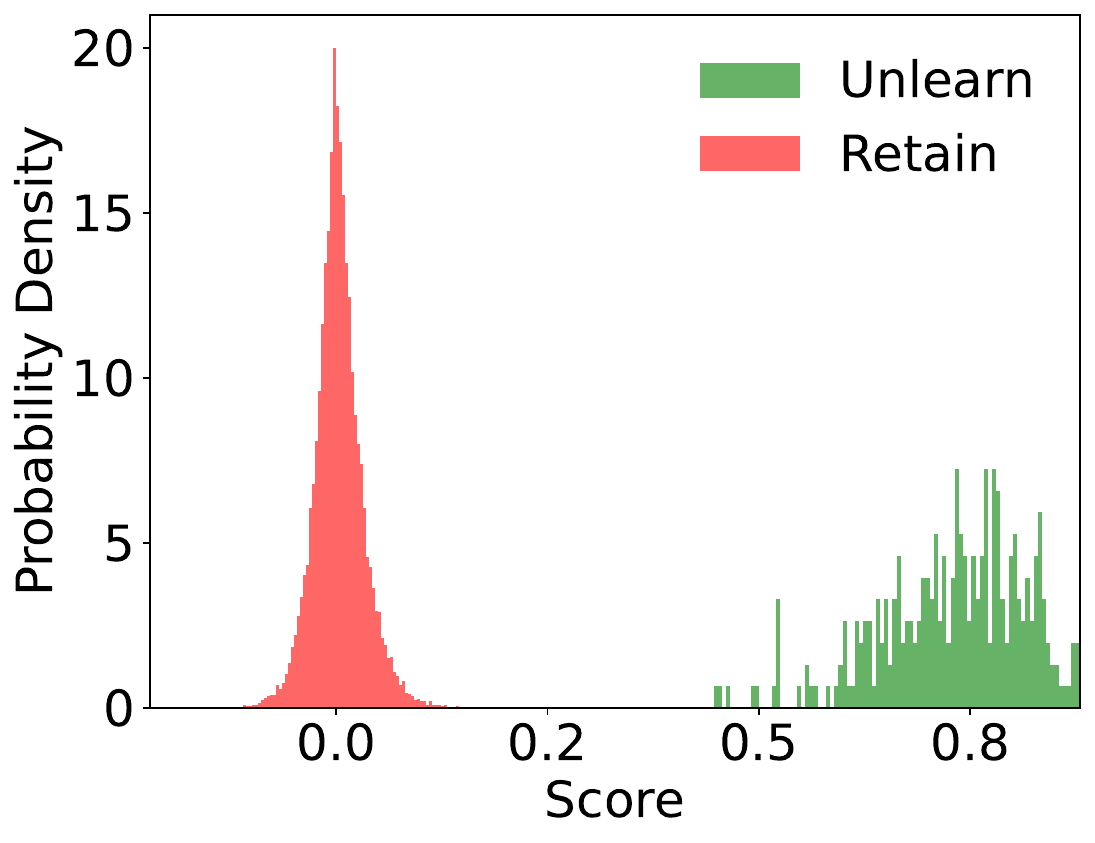}
         \caption{Total Class}
     \end{subfigure}     
     \hfill
\begin{subfigure}[b]{0.16\linewidth}
         \centering
         \includegraphics[height=0.08\textheight]{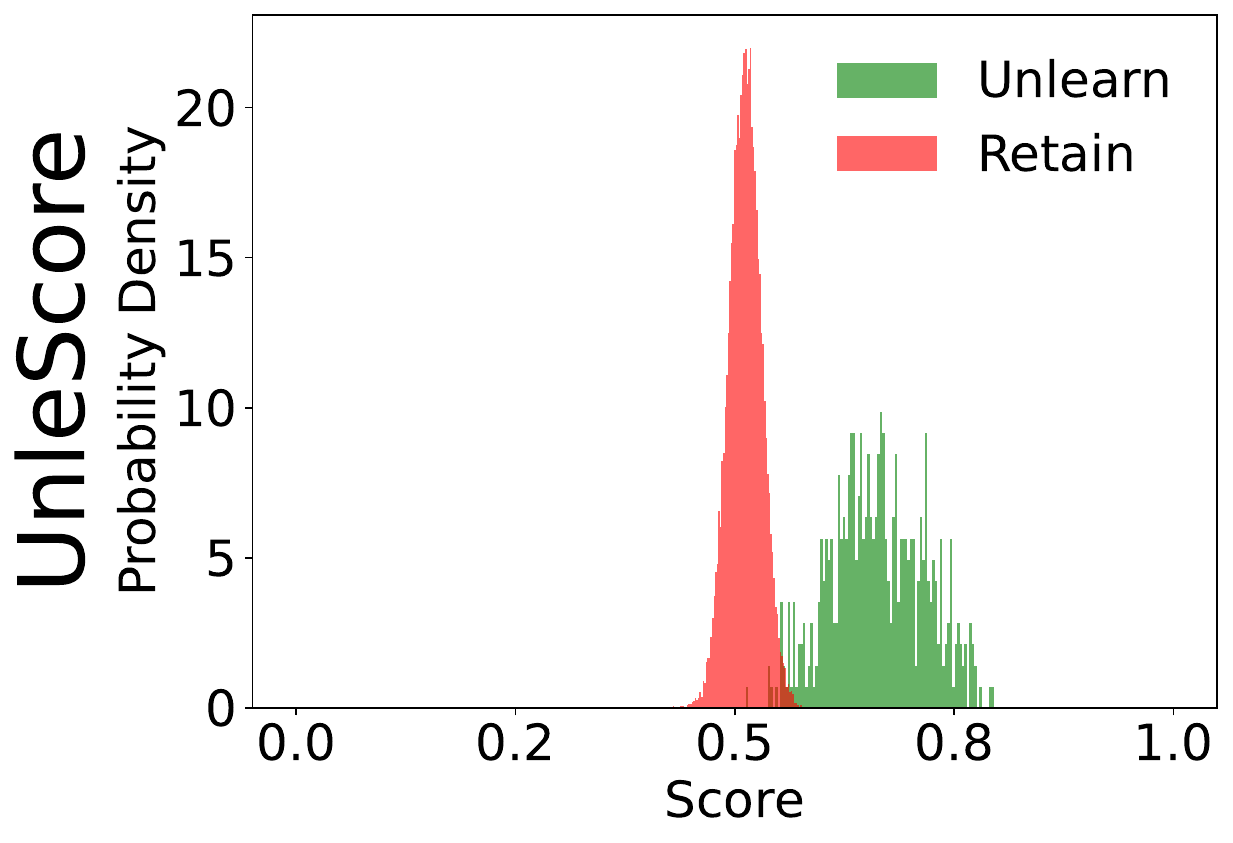}
         \caption{Random Sample}
     \end{subfigure}    
\begin{subfigure}[b]{0.15\linewidth}
         \centering
         \includegraphics[height=0.08\textheight]{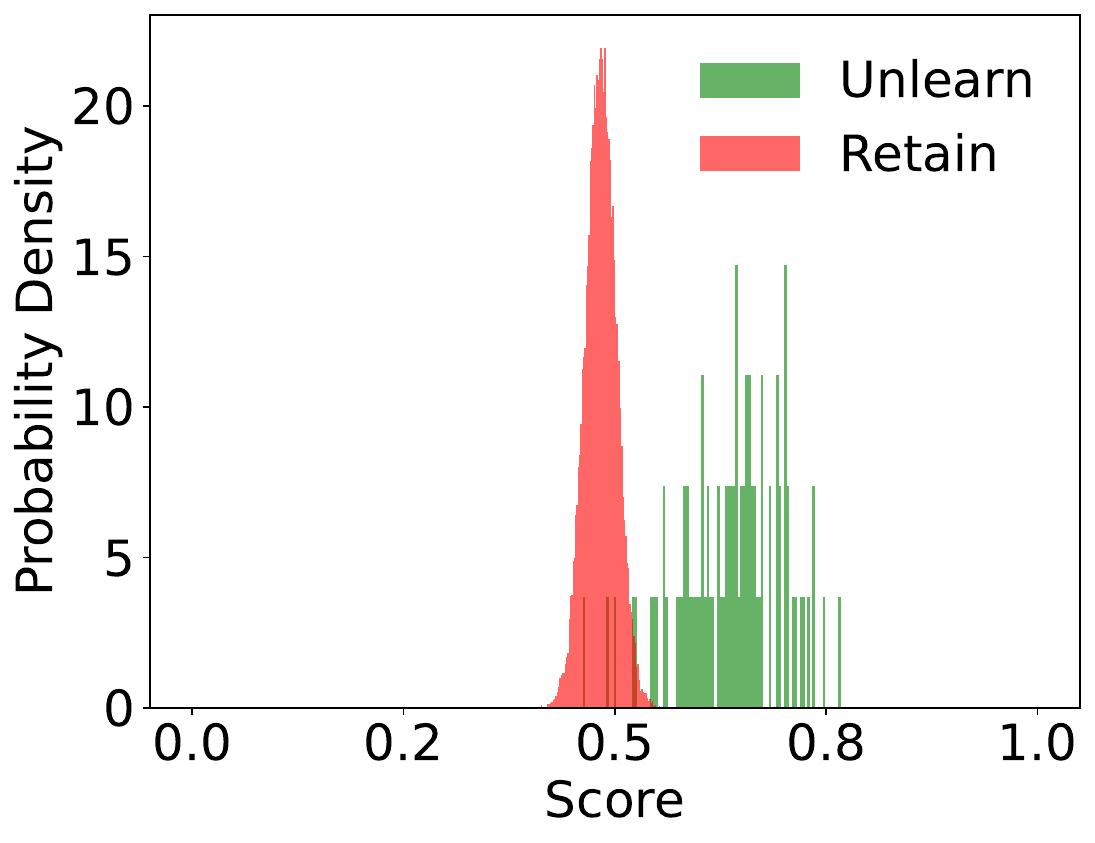}
         \caption{30\% Part-Class}
     \end{subfigure}    
\begin{subfigure}[b]{0.15\linewidth}
         \centering
         \includegraphics[height=0.08\textheight]{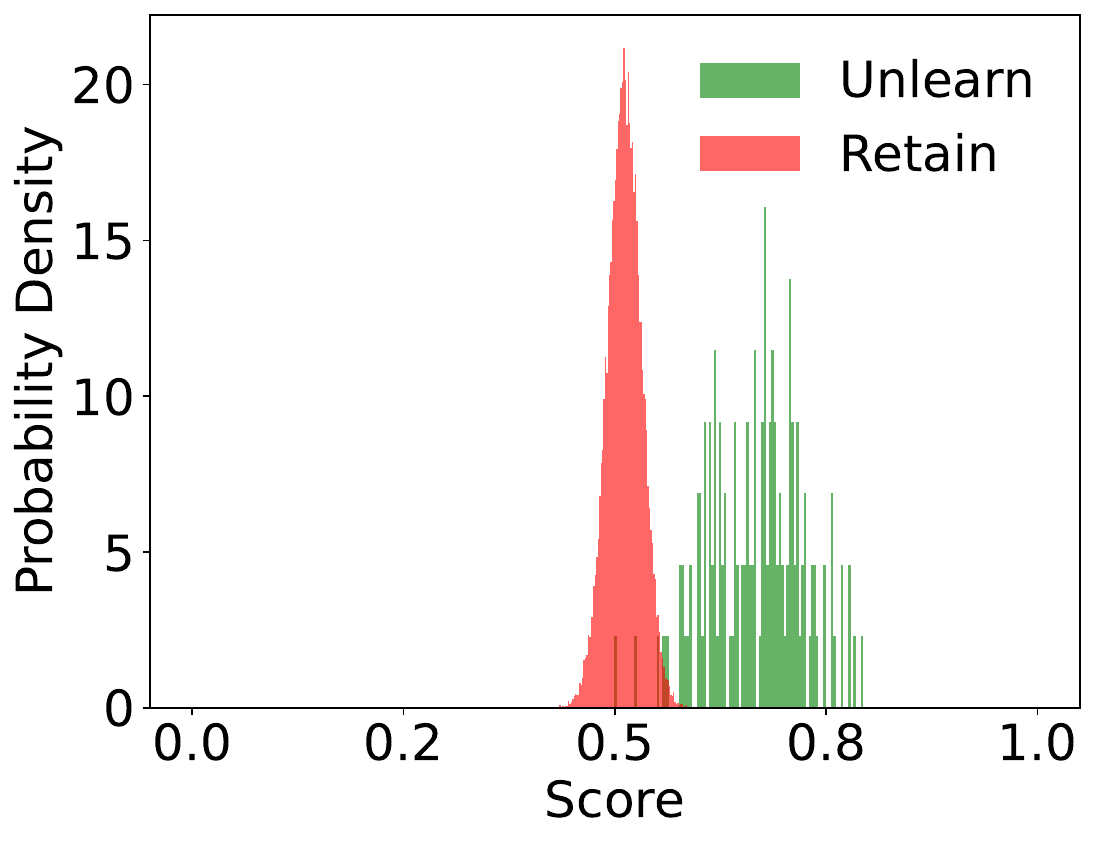}
         \caption{50\% Part-Class}
     \end{subfigure}    
\begin{subfigure}[b]{0.15\linewidth}
         \centering
         \includegraphics[height=0.08\textheight]{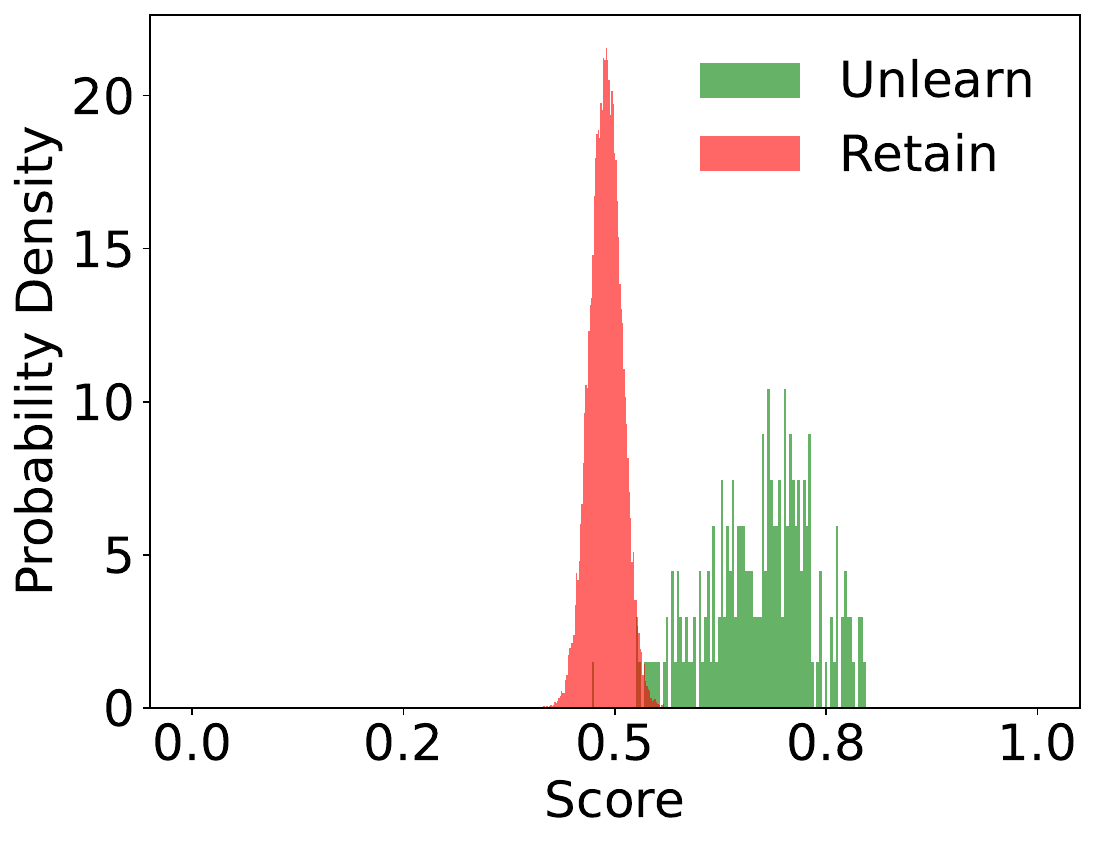}
         \caption{70\% Part-Class}
     \end{subfigure}    
\begin{subfigure}[b]{0.15\linewidth}
         \centering
         \includegraphics[height=0.08\textheight]{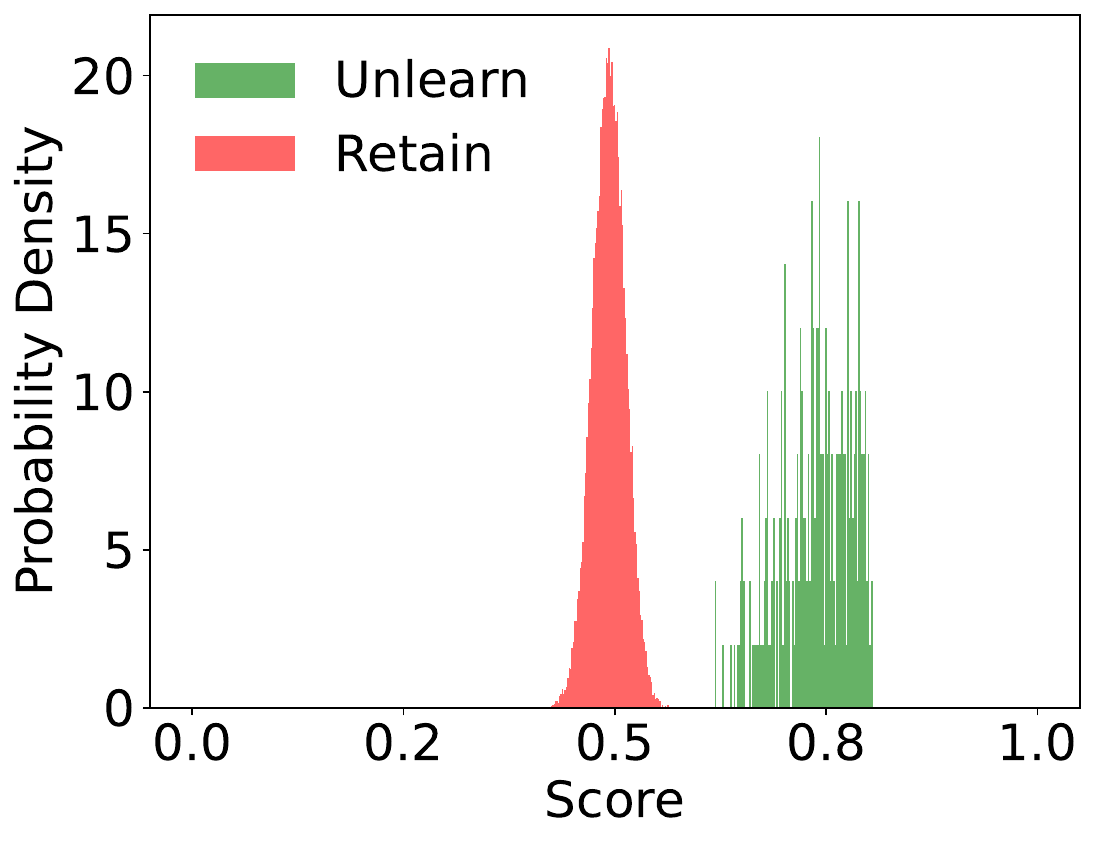}
         \caption{90\% Part-Class}
     \end{subfigure}    
\begin{subfigure}[b]{0.15\linewidth}
         \centering
         \includegraphics[height=0.08\textheight]{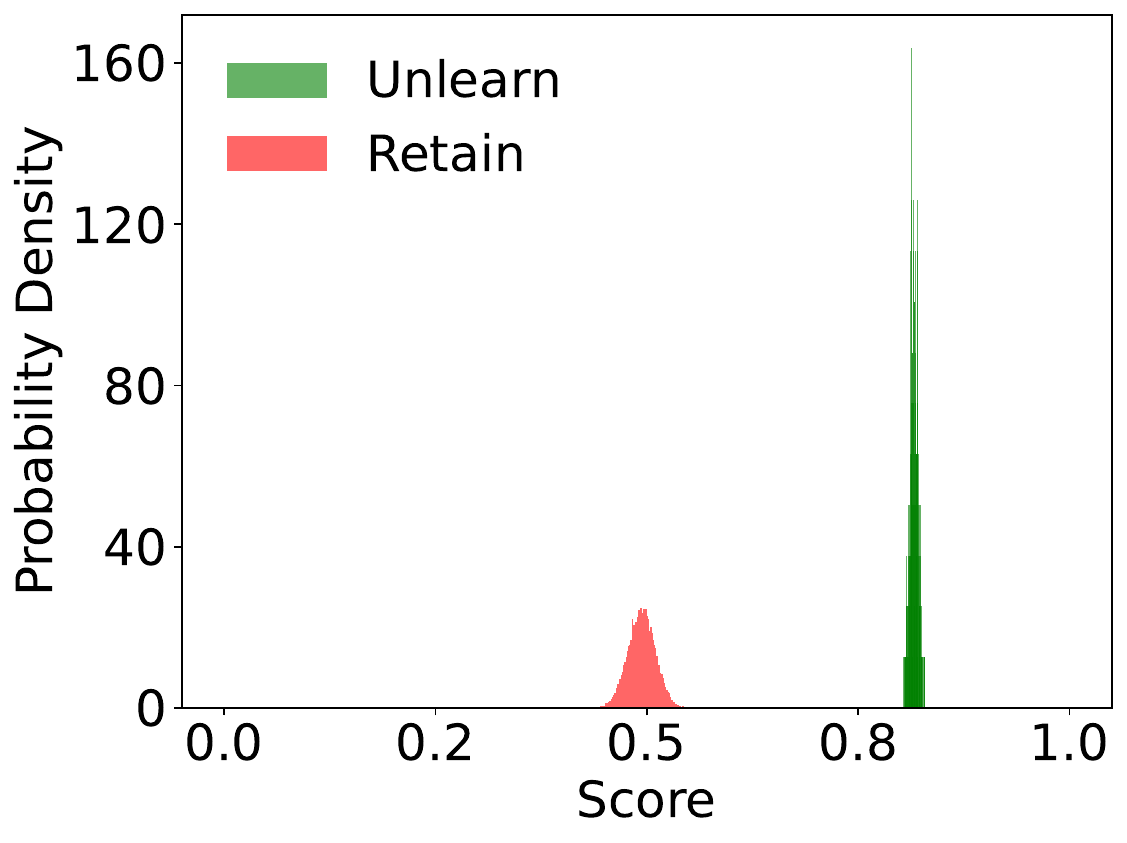}
         \caption{Total Class}
     \end{subfigure}       
     \caption{Score Distributions of Unlearning Metrics with Different Unlearning Tasks on CIFAR100.}
    \label{fig:unlearningscores_cifar100}
\end{figure*}

\begin{figure*}[h]
    \centering
    \begin{subfigure}[b]{0.16\linewidth}
         \centering
         \includegraphics[height=0.08\textheight]{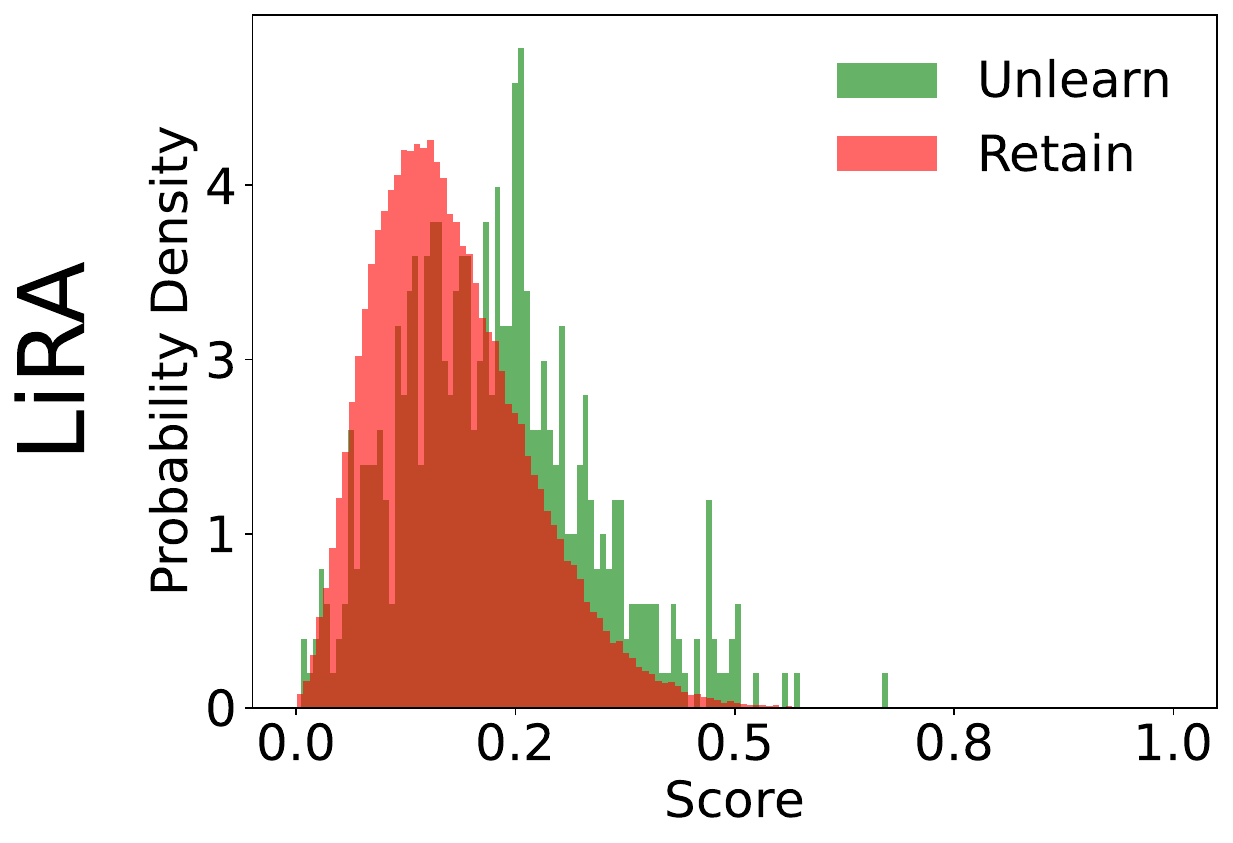}
         \caption{Random Sample}
     \end{subfigure}
     \begin{subfigure}[b]{0.15\linewidth}
         \centering
         \includegraphics[height=0.08\textheight]{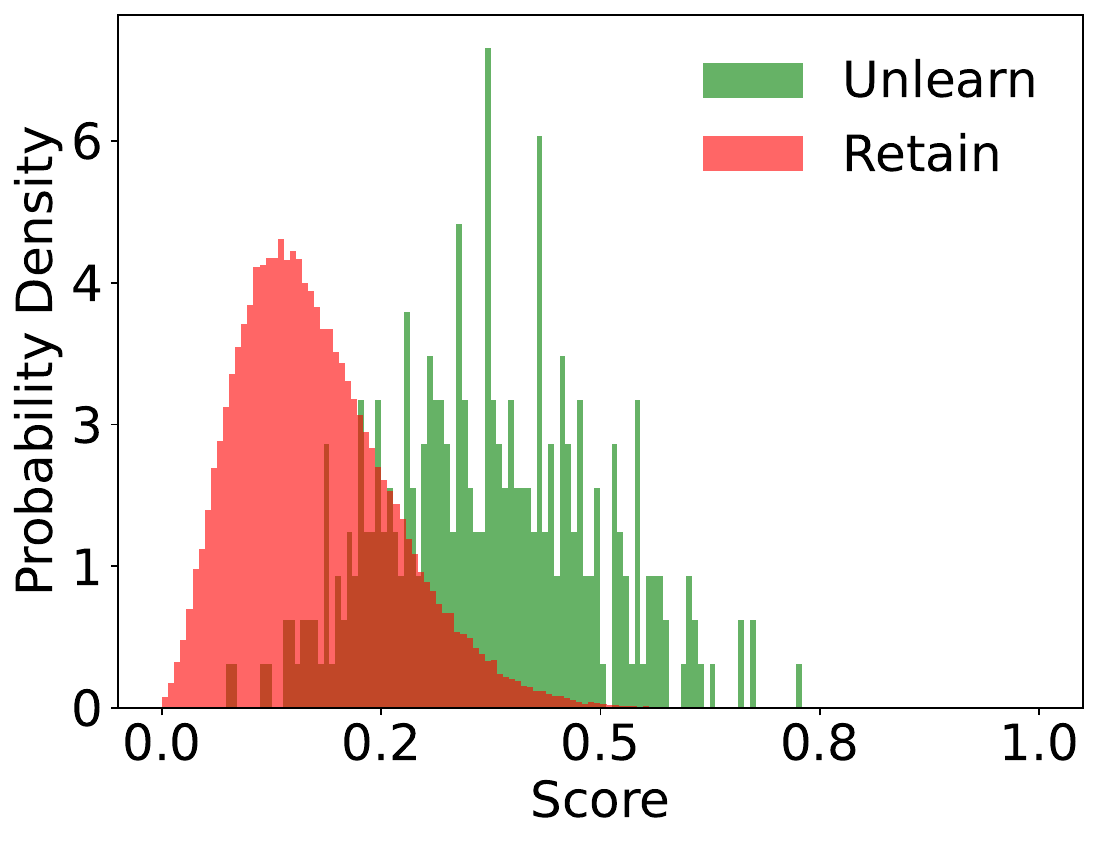}
         \caption{30\% Part-Class}
     \end{subfigure}
     \begin{subfigure}[b]{0.15\linewidth}
         \centering
         \includegraphics[height=0.08\textheight]{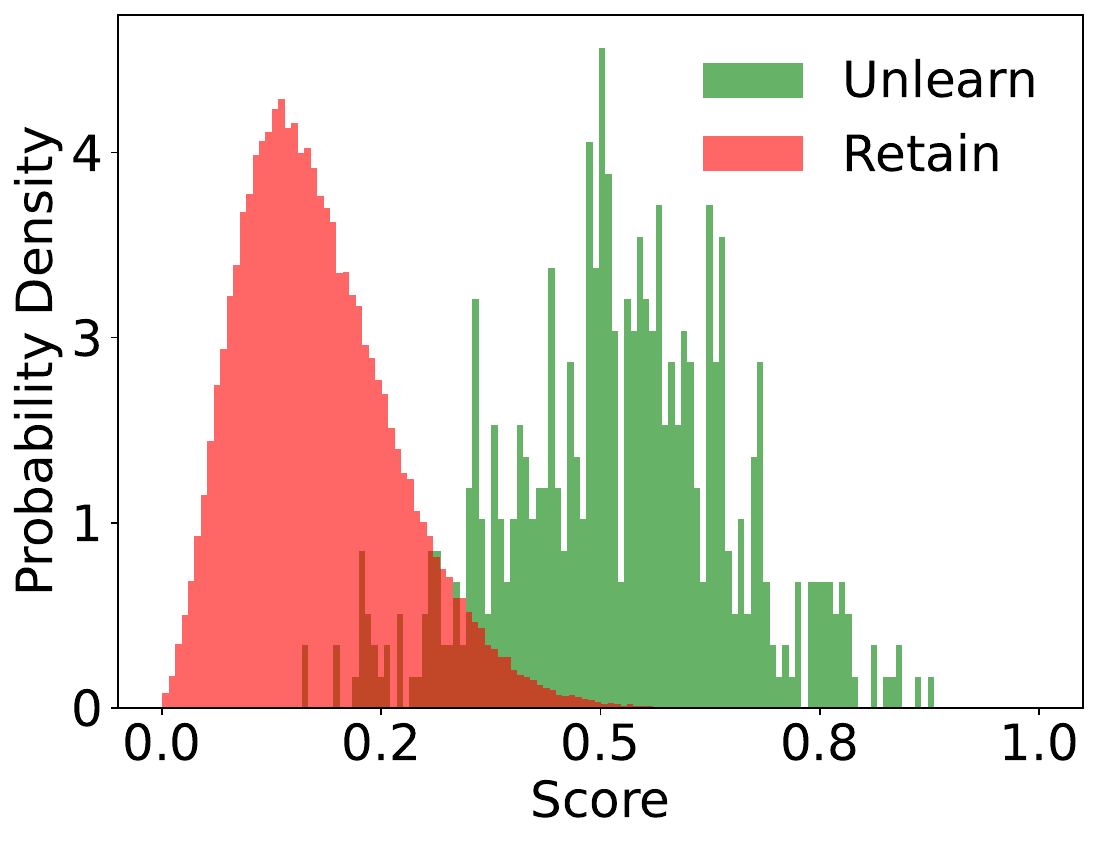}
         \caption{50\% Part-Class}
     \end{subfigure}
     \begin{subfigure}[b]{0.15\linewidth}
         \centering
         \includegraphics[height=0.08\textheight]{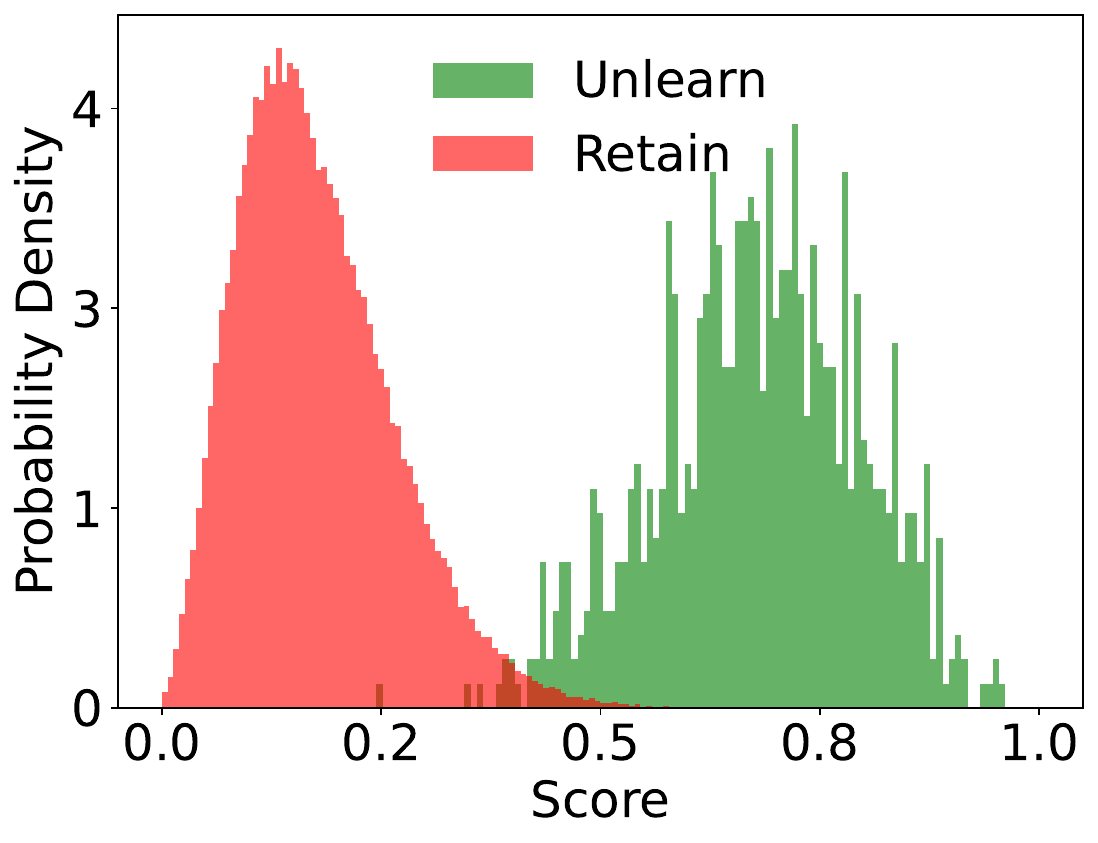}
         \caption{70\% Part-Class}
     \end{subfigure}
     \begin{subfigure}[b]{0.15\linewidth}
         \centering
         \includegraphics[height=0.08\textheight]{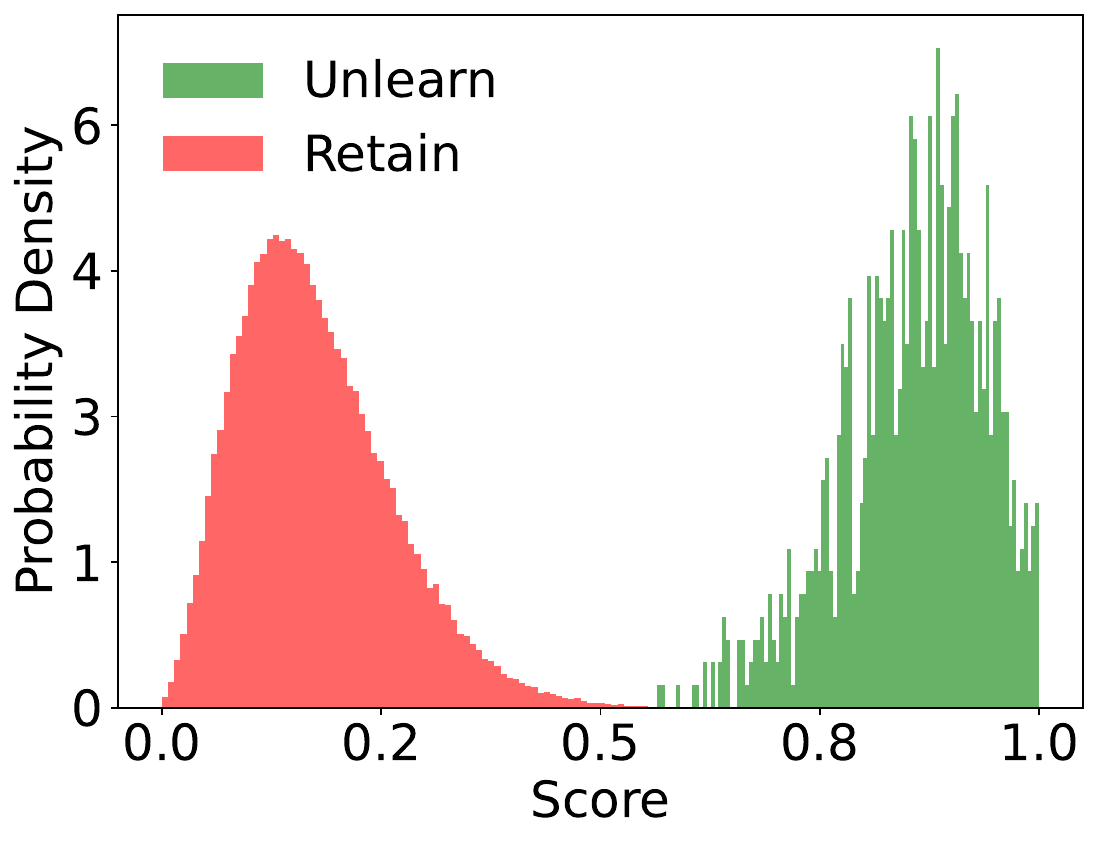}
         \caption{90\% Part-Class}
     \end{subfigure}
     \begin{subfigure}[b]{0.15\linewidth}
         \centering
         \includegraphics[height=0.08\textheight]{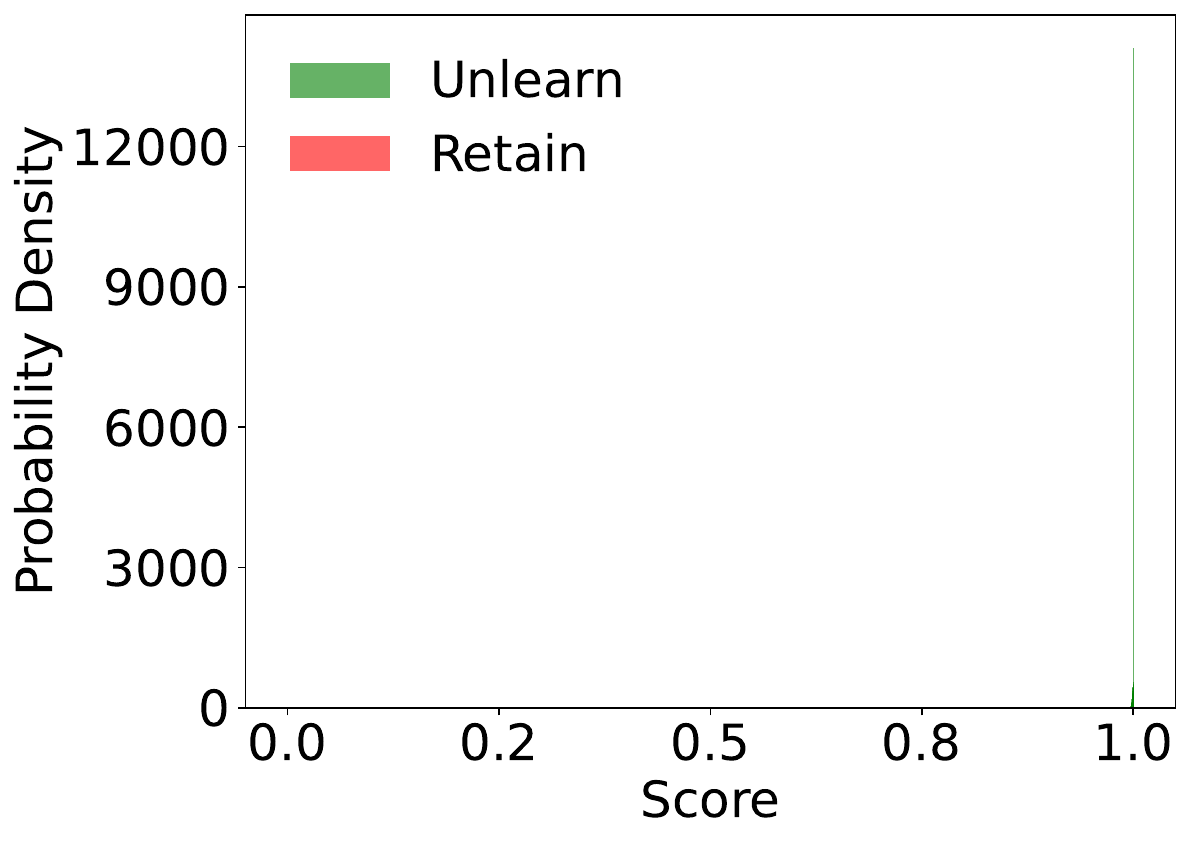}
         \caption{Total Class}
     \end{subfigure}
     \hfill
     \begin{subfigure}[b]{0.16\linewidth}
         \centering
         \includegraphics[height=0.08\textheight]{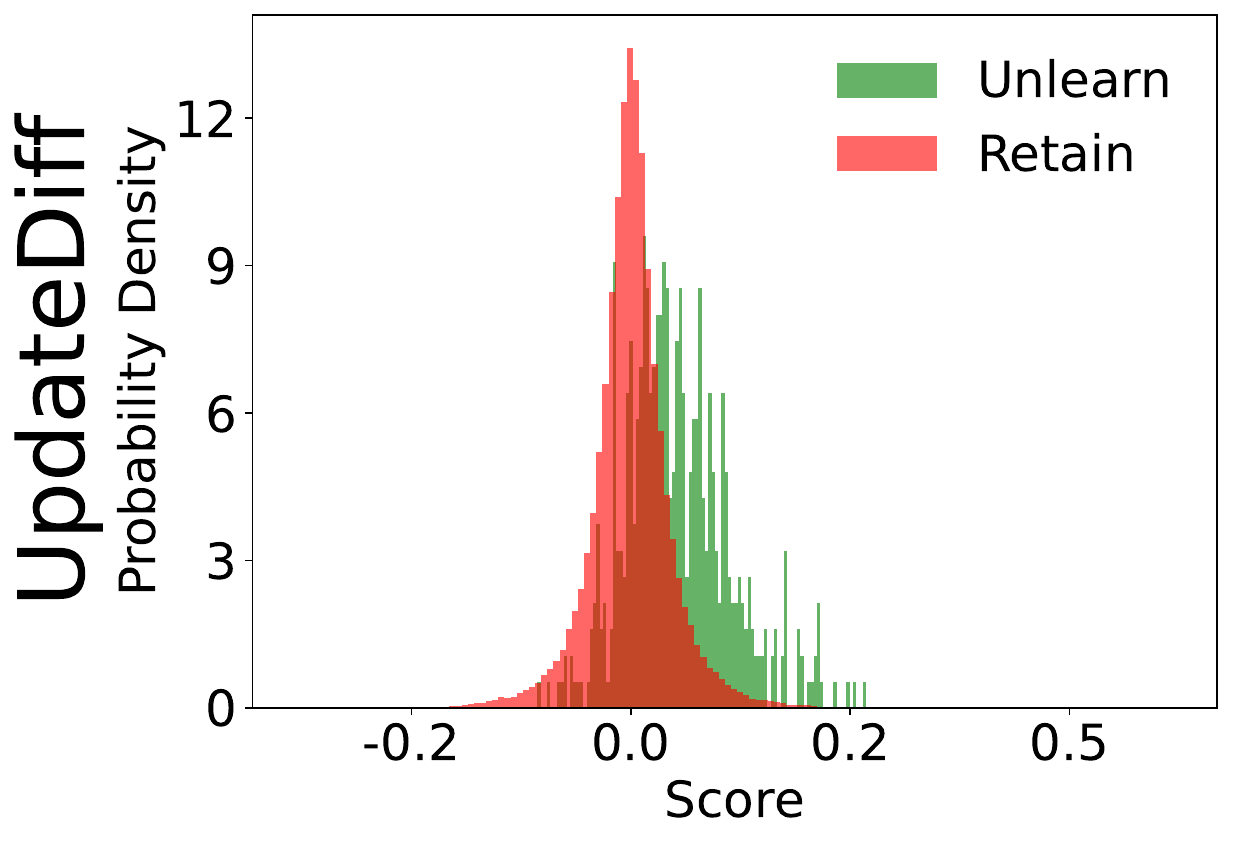}
         \caption{Random Sample}
     \end{subfigure}     
\begin{subfigure}[b]{0.15\linewidth}
         \centering
         \includegraphics[height=0.08\textheight]{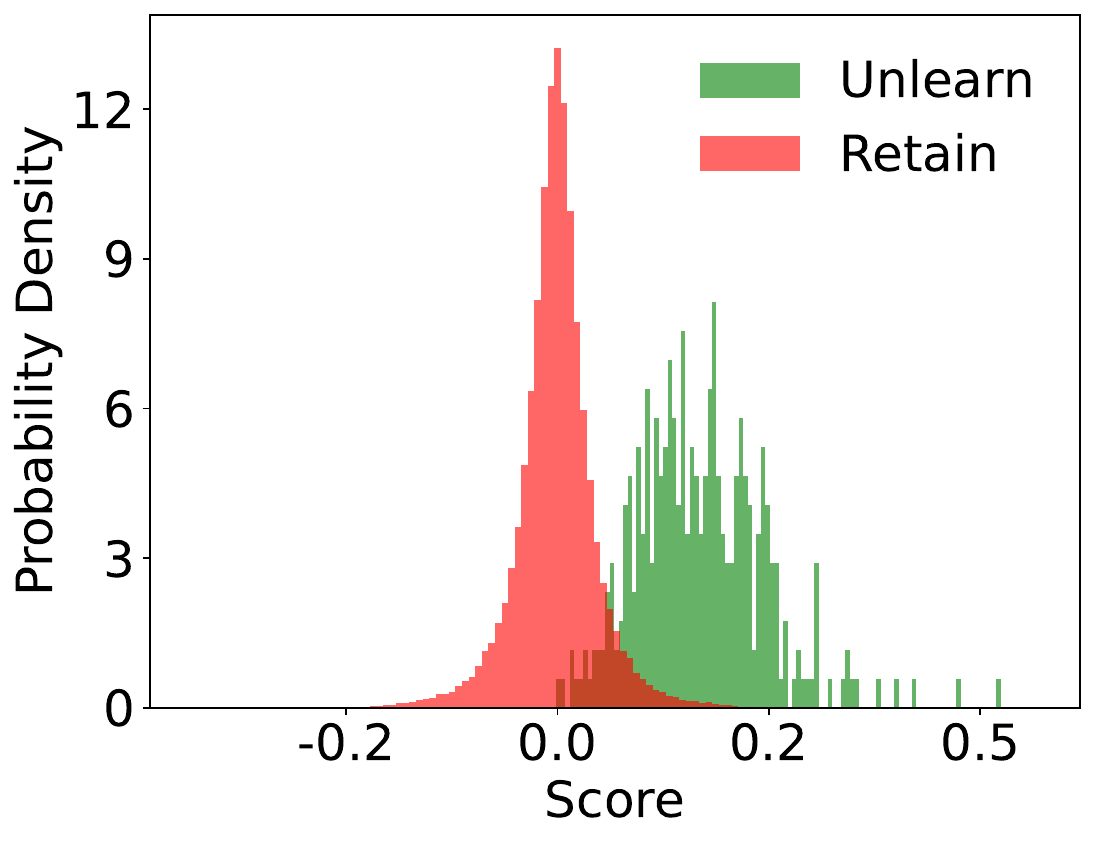}
         \caption{30\% Part-Class}
     \end{subfigure}     
\begin{subfigure}[b]{0.15\linewidth}
         \centering
         \includegraphics[height=0.08\textheight]{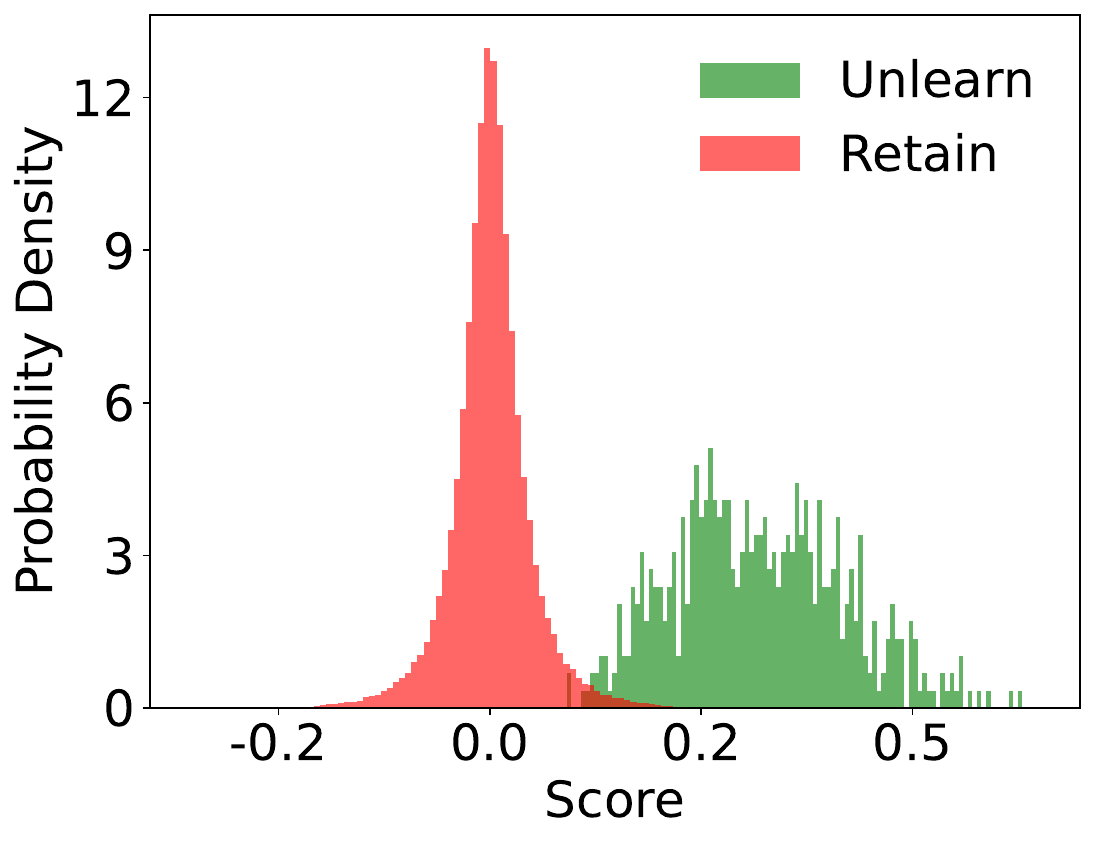}
         \caption{50\% Part-Class}
     \end{subfigure}     
\begin{subfigure}[b]{0.15\linewidth}
         \centering
         \includegraphics[height=0.08\textheight]{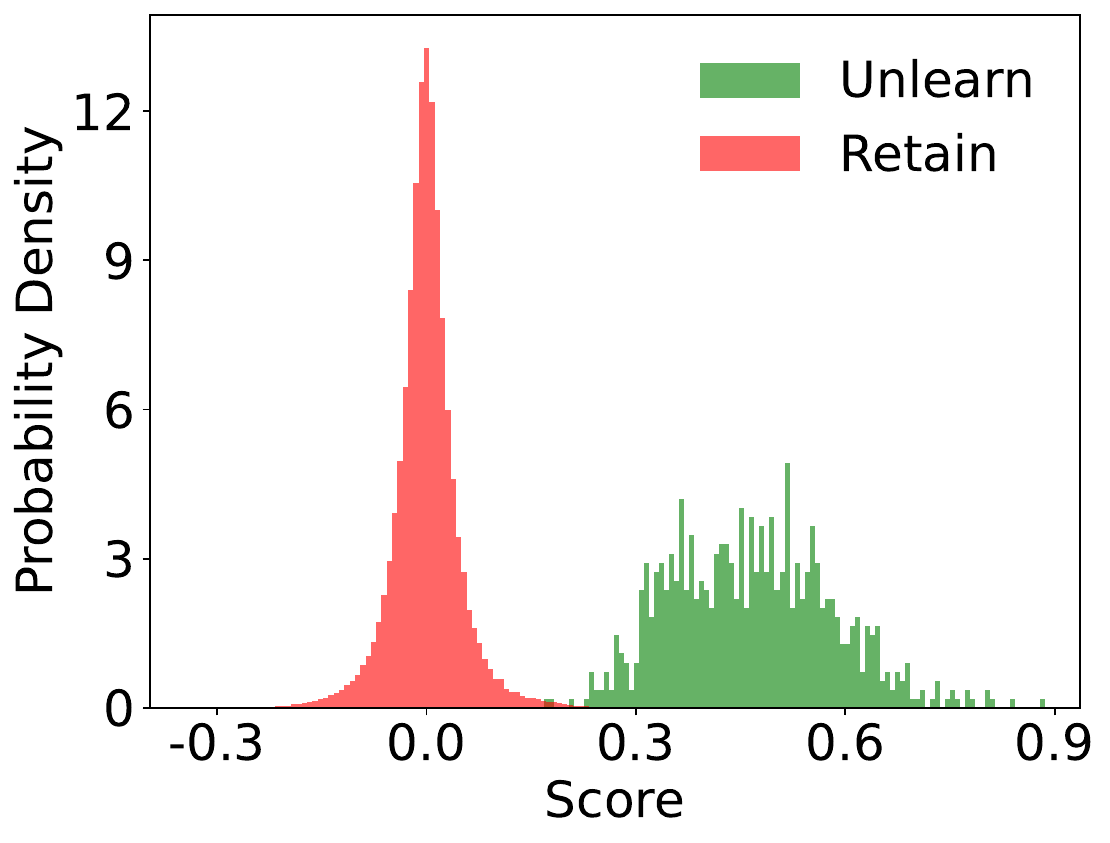}
         \caption{70\% Part-Class}
     \end{subfigure}     
\begin{subfigure}[b]{0.15\linewidth}
         \centering
         \includegraphics[height=0.08\textheight]{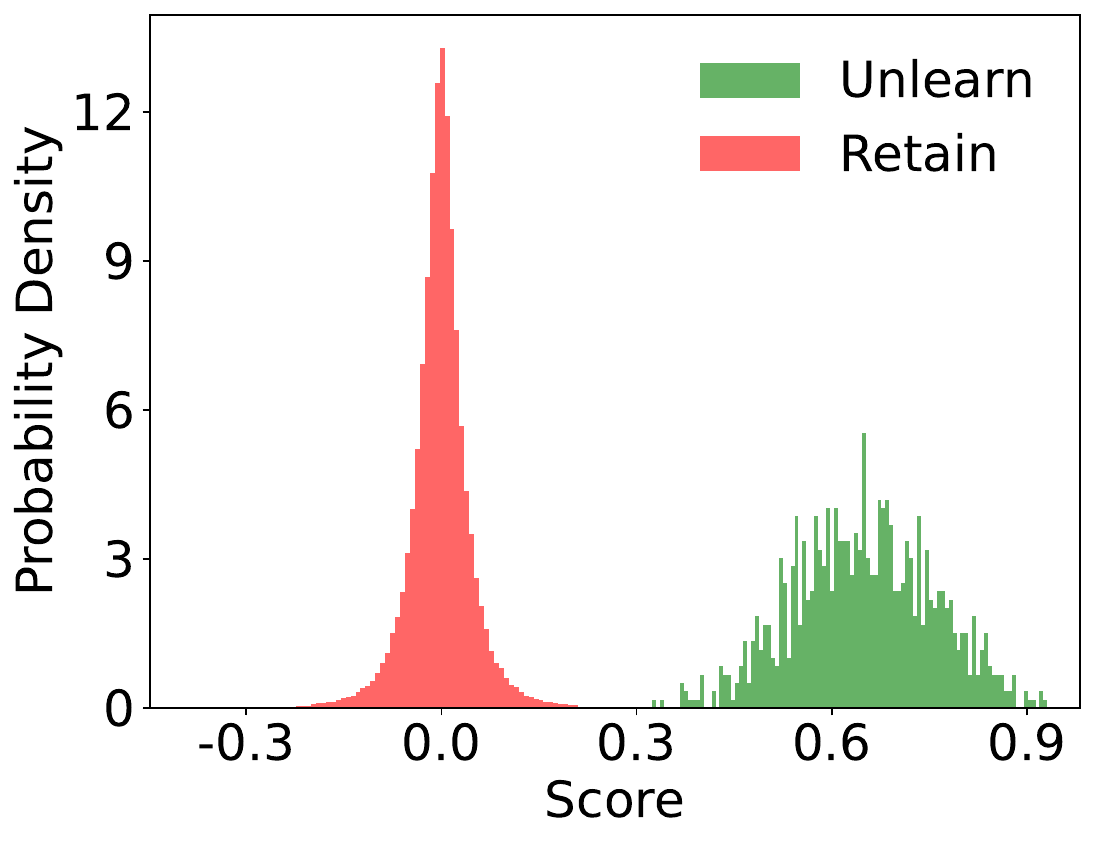}
         \caption{90\% Part-Class}
     \end{subfigure}     
\begin{subfigure}[b]{0.15\linewidth}
         \centering
         \includegraphics[height=0.08\textheight]{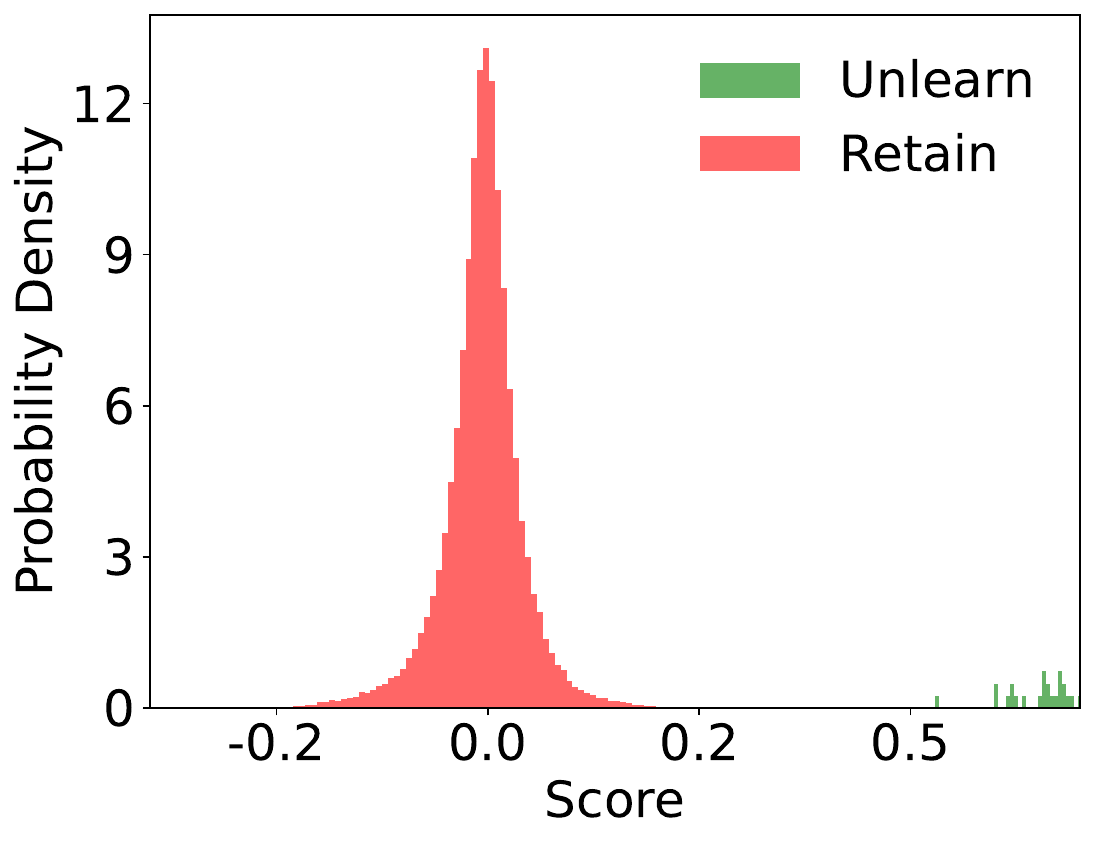}
         \caption{Total Class}
     \end{subfigure}     
     \hfill
\begin{subfigure}[b]{0.16\linewidth}
         \centering
         \includegraphics[height=0.08\textheight]{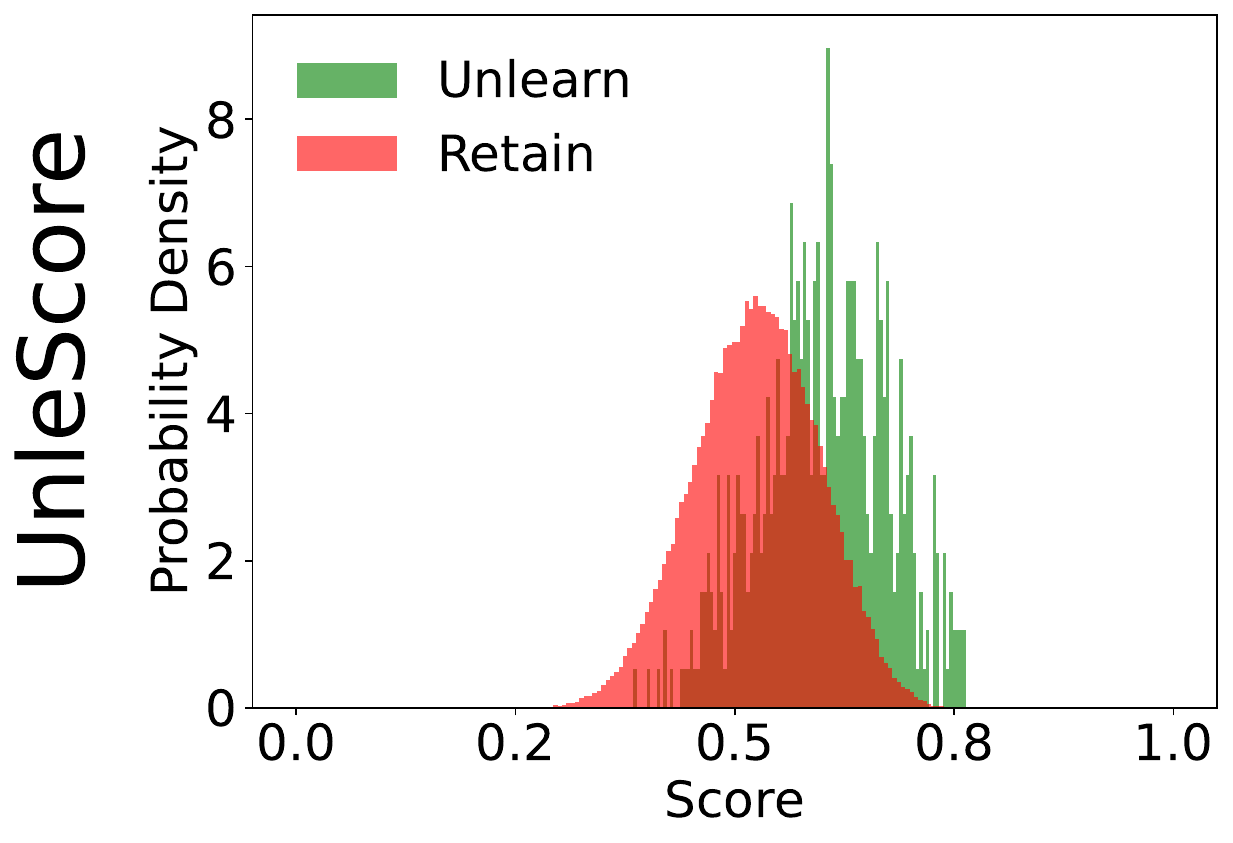}
         \caption{Random Sample}
     \end{subfigure}    
\begin{subfigure}[b]{0.15\linewidth}
         \centering
         \includegraphics[height=0.08\textheight]{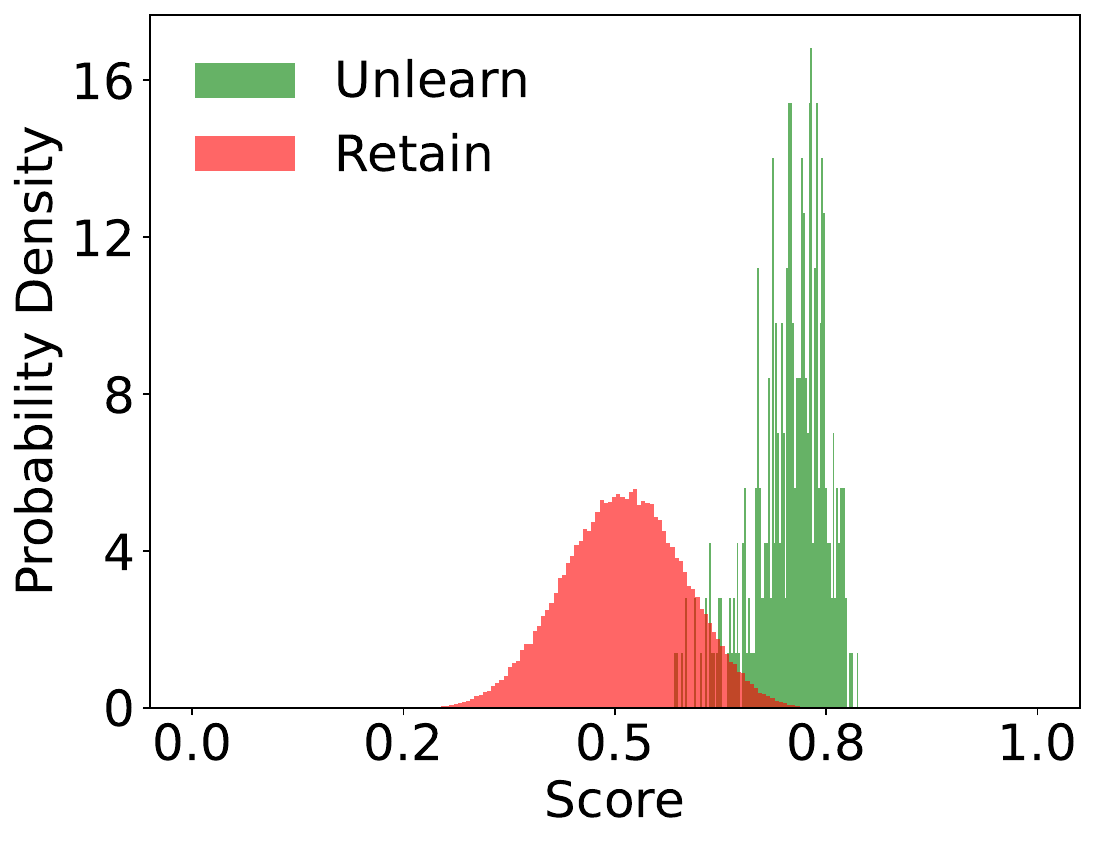}
         \caption{30\% Part-Class}
     \end{subfigure}    
\begin{subfigure}[b]{0.15\linewidth}
         \centering
         \includegraphics[height=0.08\textheight]{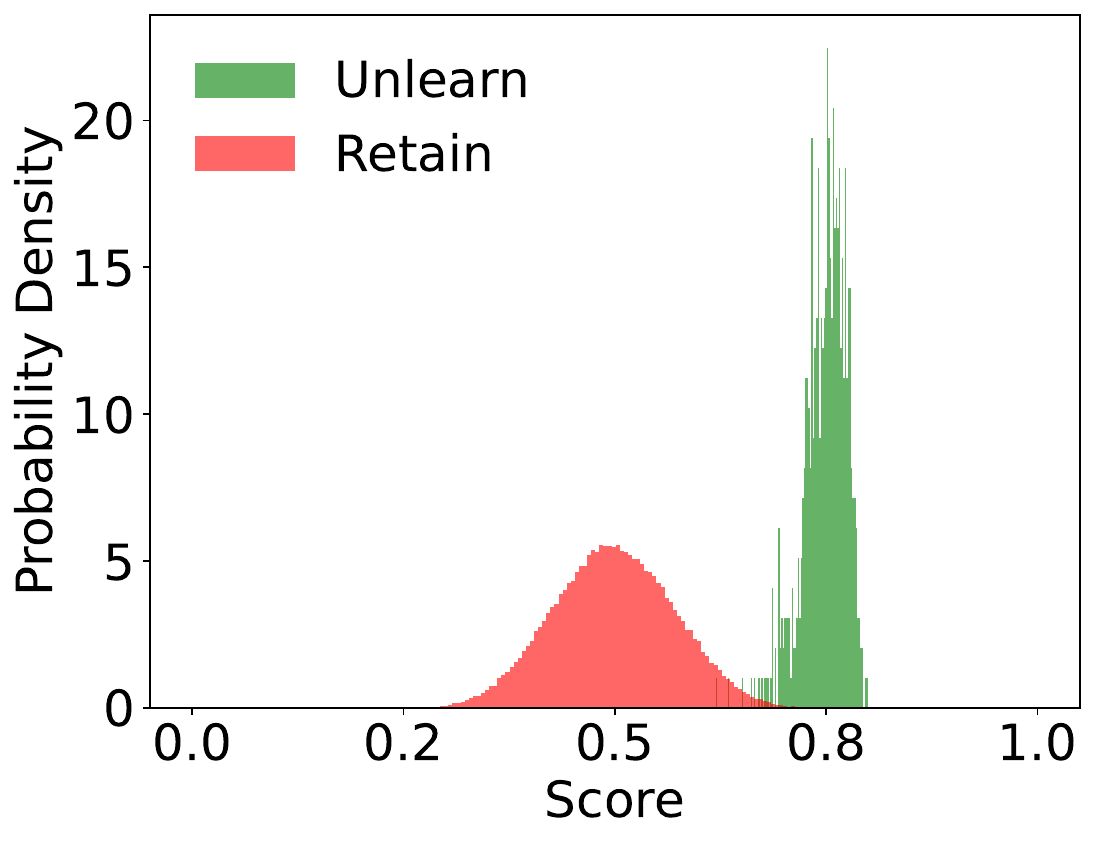}
         \caption{50\% Part-Class}
     \end{subfigure}    
\begin{subfigure}[b]{0.15\linewidth}
         \centering
         \includegraphics[height=0.08\textheight]{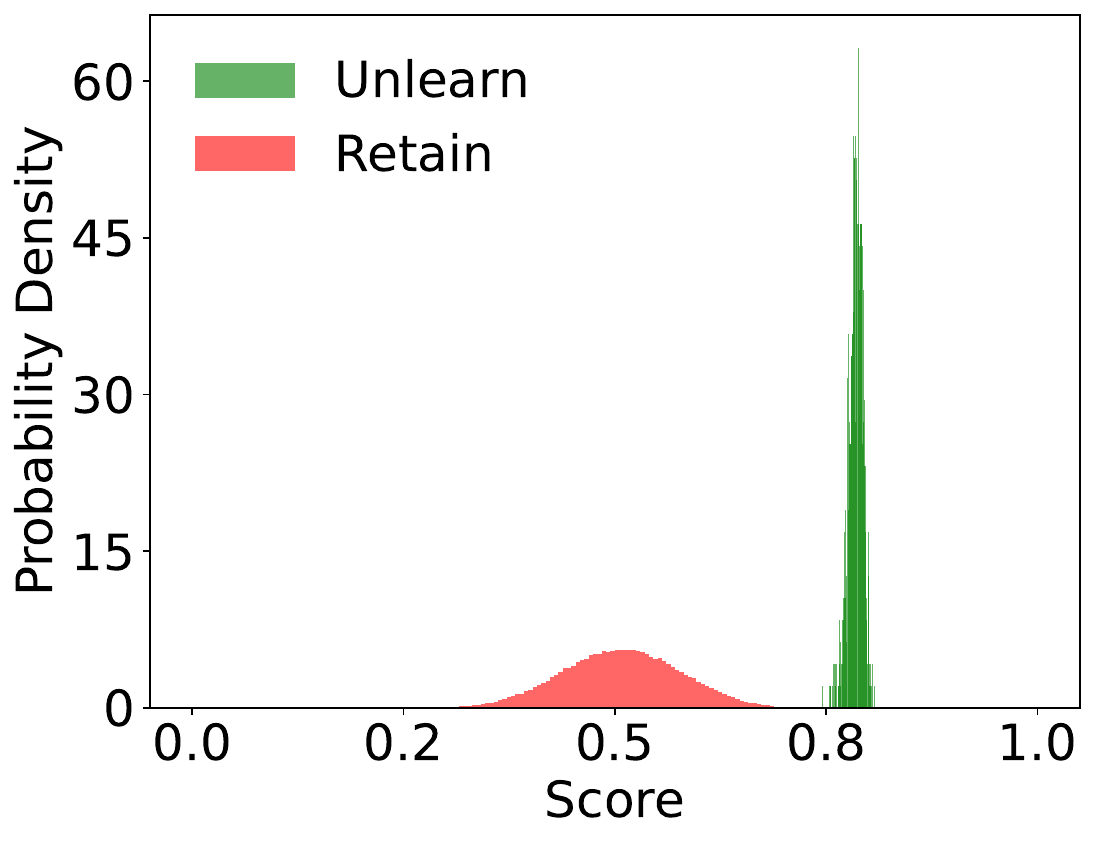}
         \caption{70\% Part-Class}
     \end{subfigure}    
\begin{subfigure}[b]{0.15\linewidth}
         \centering
         \includegraphics[height=0.08\textheight]{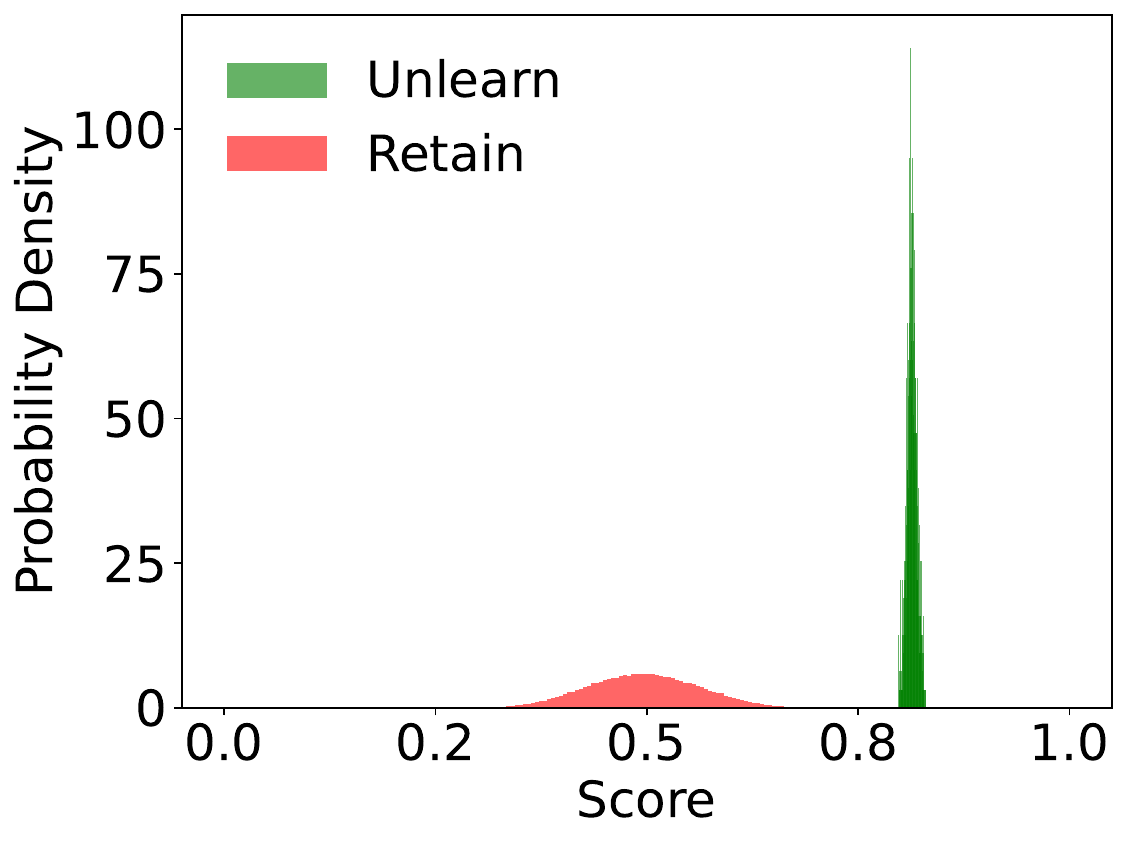}
         \caption{90\% Part-Class}
     \end{subfigure}    
\begin{subfigure}[b]{0.15\linewidth}
         \centering
         \includegraphics[height=0.08\textheight]{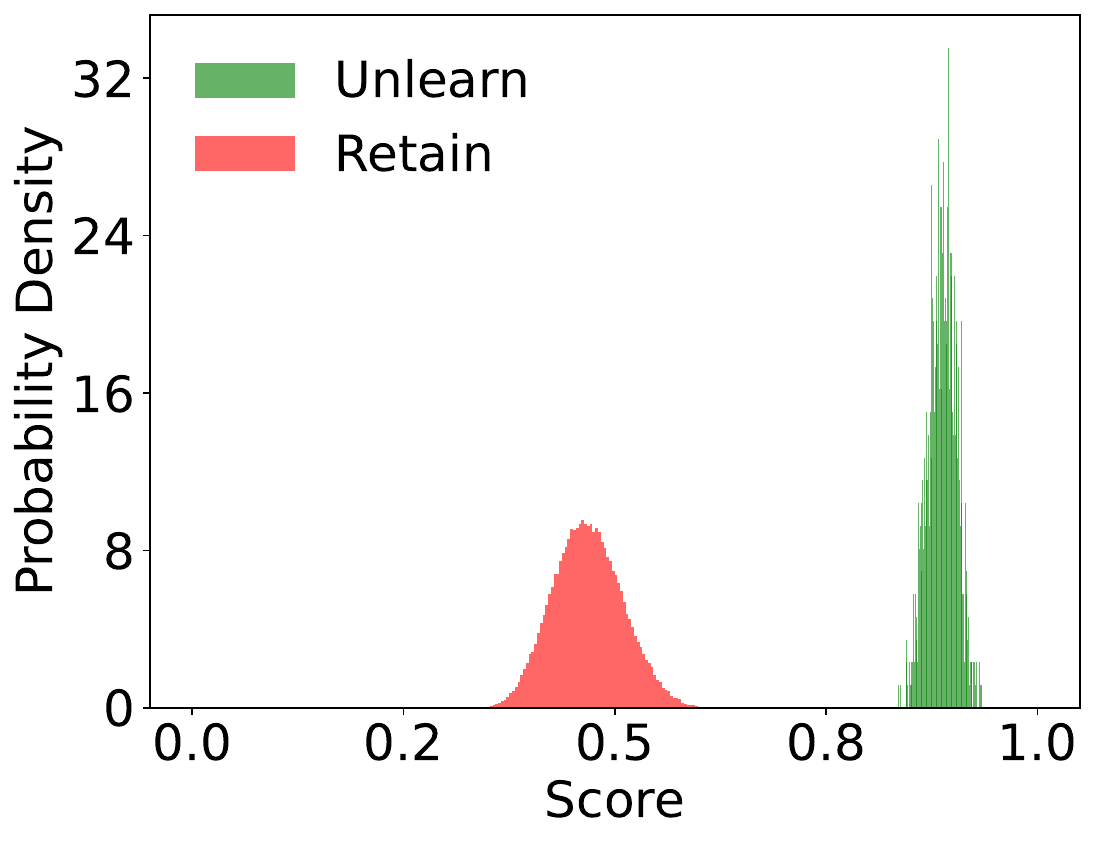}
         \caption{Total Class}
     \end{subfigure}       
     \caption{Score Distributions of Unlearning Metrics with Different Unlearning Tasks on Purchase.}
    \label{fig:unlearningscores_Purchase}
\end{figure*}

\begin{figure*}[h]
    \centering
    \begin{subfigure}[b]{0.16\linewidth}
         \centering
         \includegraphics[height=0.08\textheight]{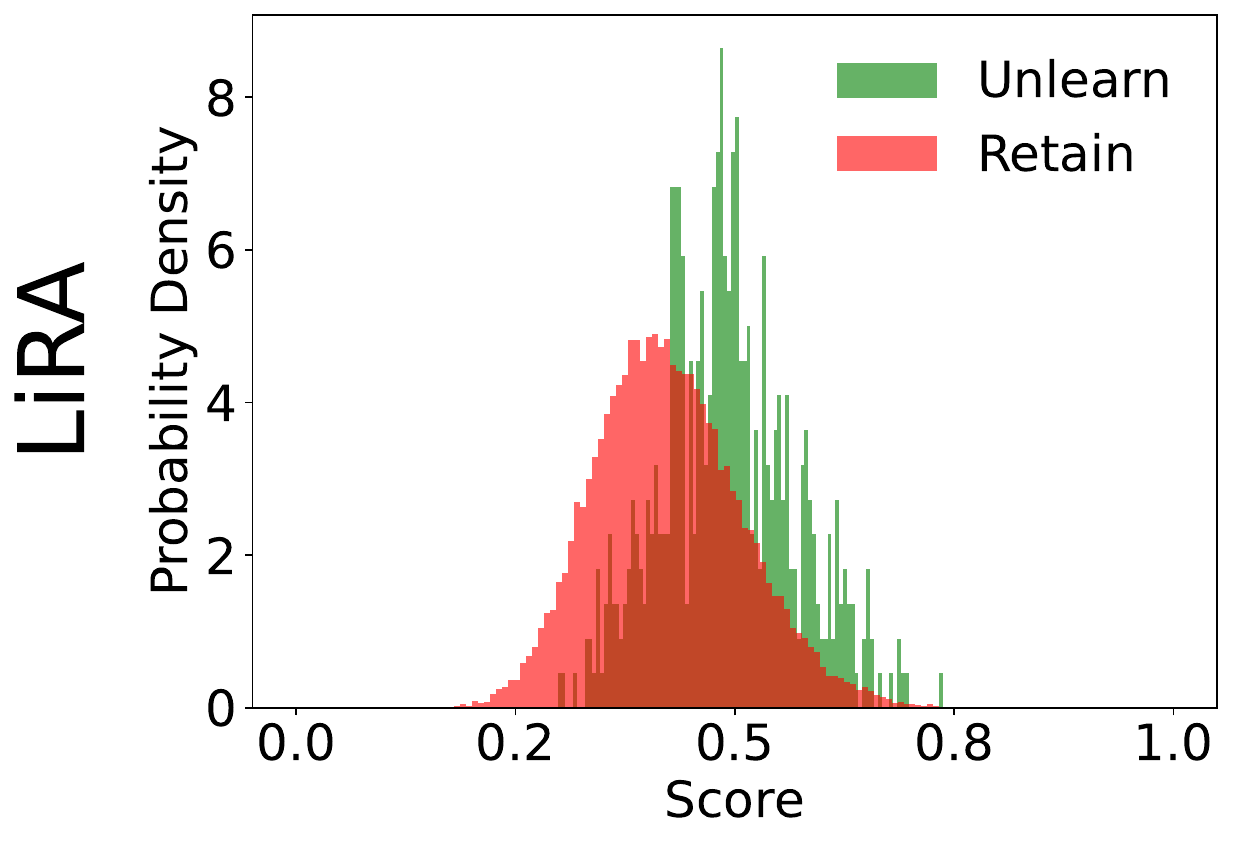}
         \caption{Random Sample}
     \end{subfigure}
     \begin{subfigure}[b]{0.15\linewidth}
         \centering
         \includegraphics[height=0.08\textheight]{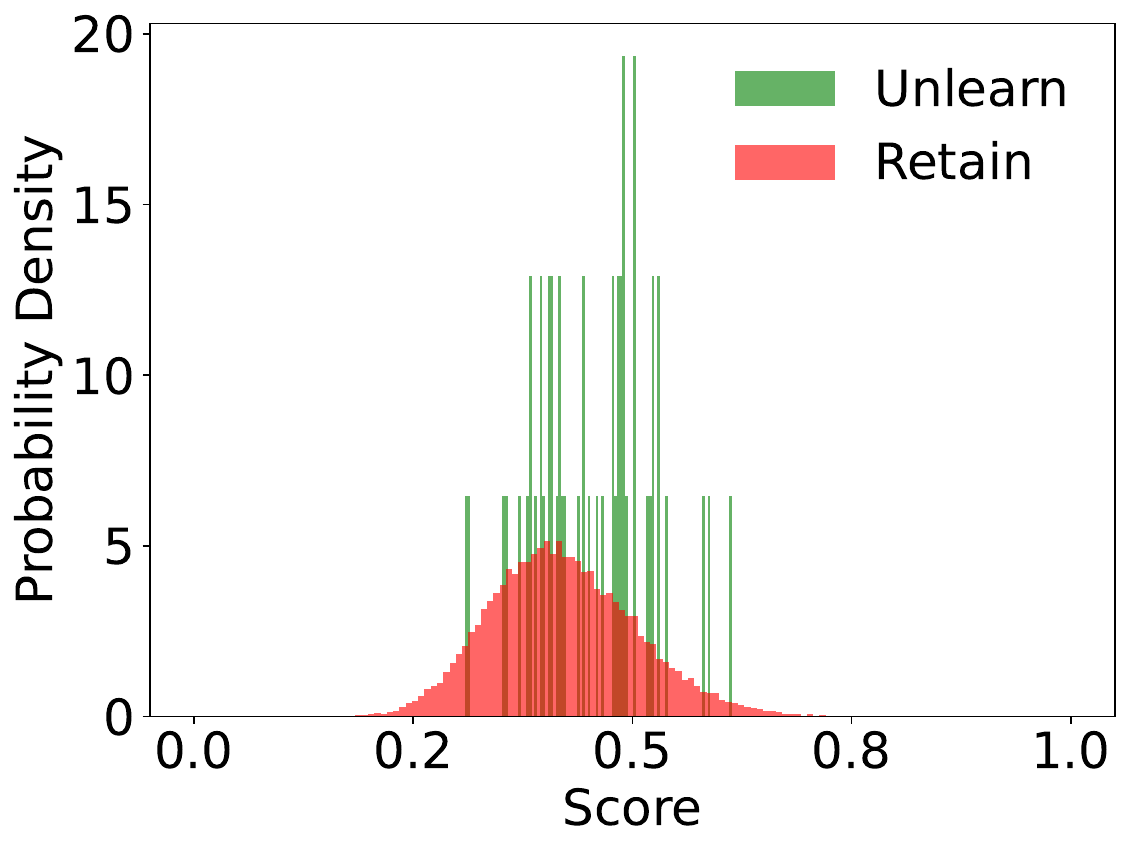}
         \caption{30\% Part-Class}
     \end{subfigure}
     \begin{subfigure}[b]{0.15\linewidth}
         \centering
         \includegraphics[height=0.08\textheight]{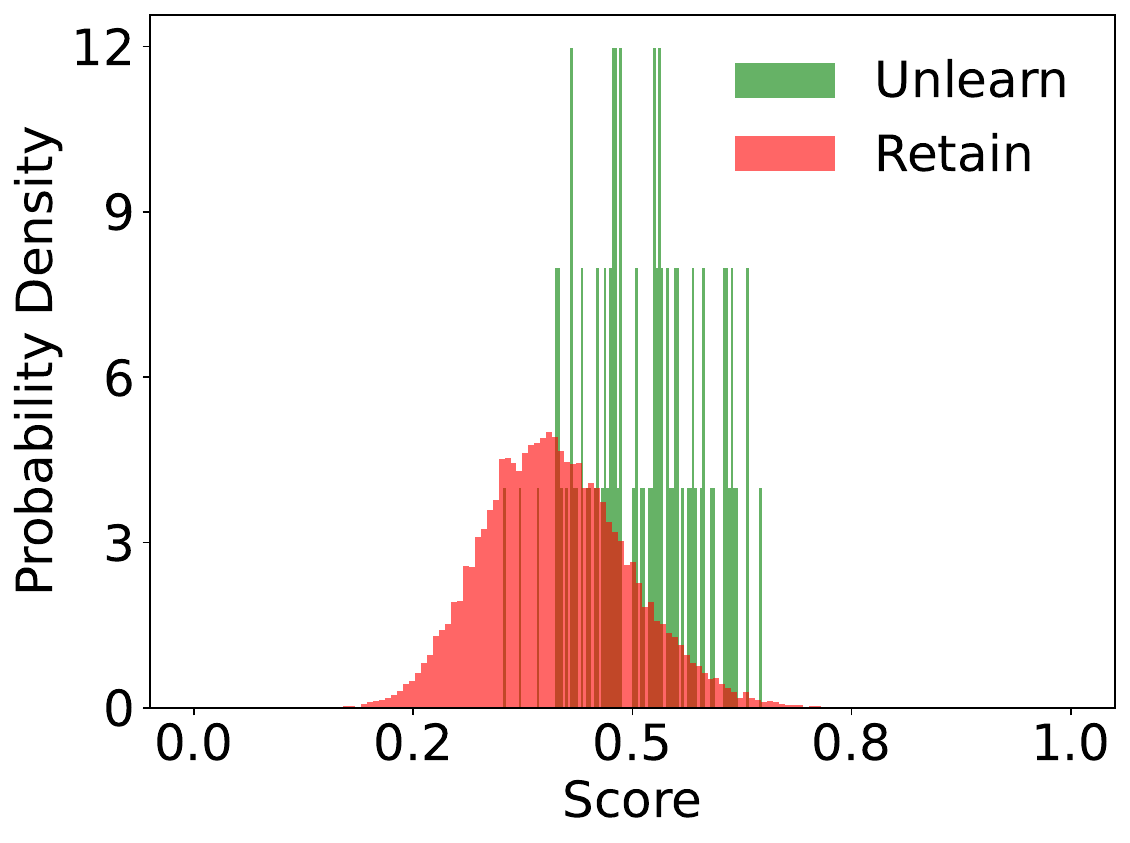}
         \caption{50\% Part-Class}
     \end{subfigure}
     \begin{subfigure}[b]{0.15\linewidth}
         \centering
         \includegraphics[height=0.08\textheight]{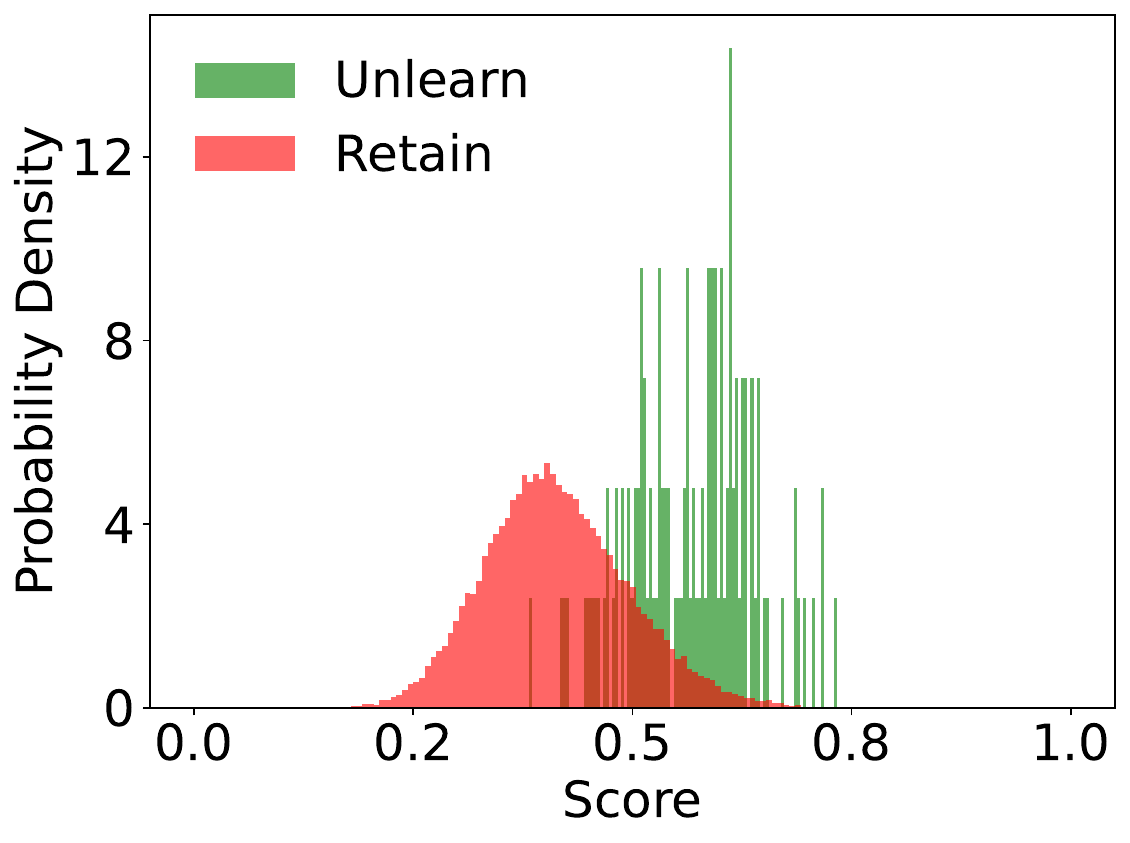}
         \caption{70\% Part-Class}
     \end{subfigure}
     \begin{subfigure}[b]{0.15\linewidth}
         \centering
         \includegraphics[height=0.08\textheight]{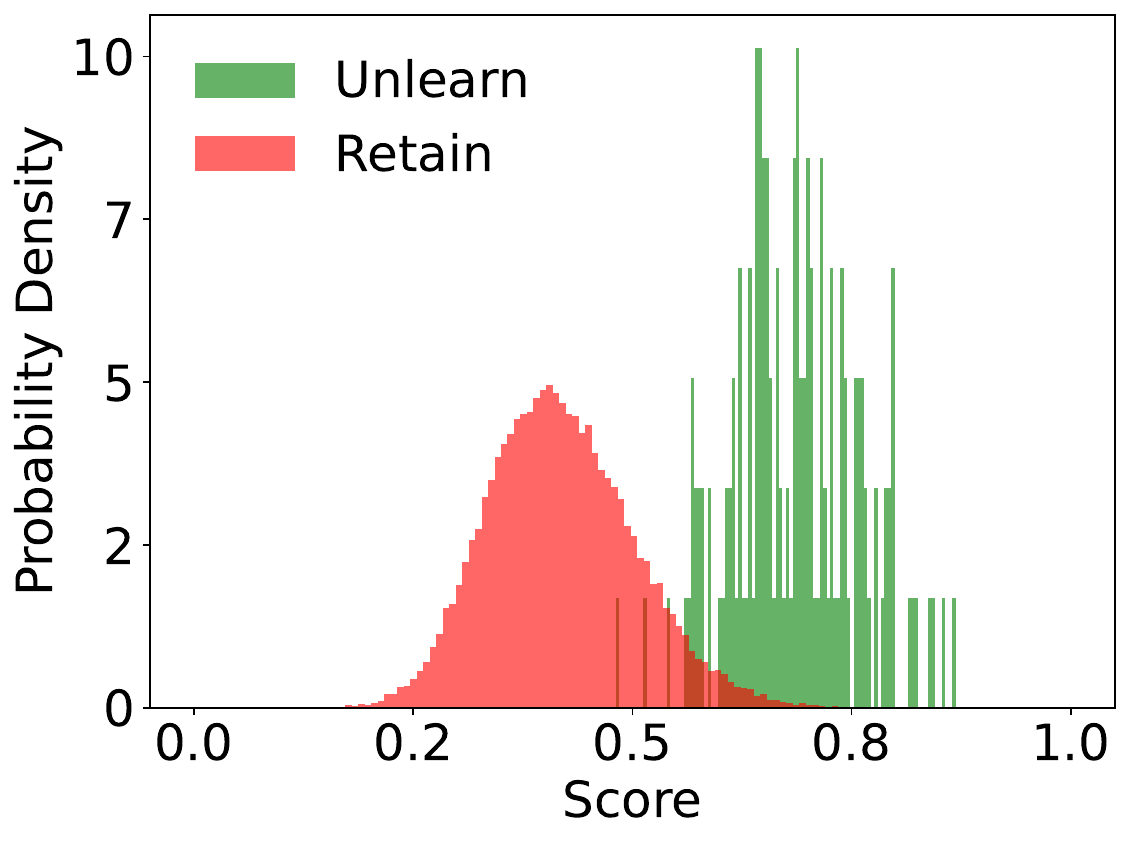}
         \caption{90\% Part-Class}
     \end{subfigure}
     \begin{subfigure}[b]{0.15\linewidth}
         \centering
         \includegraphics[height=0.08\textheight]{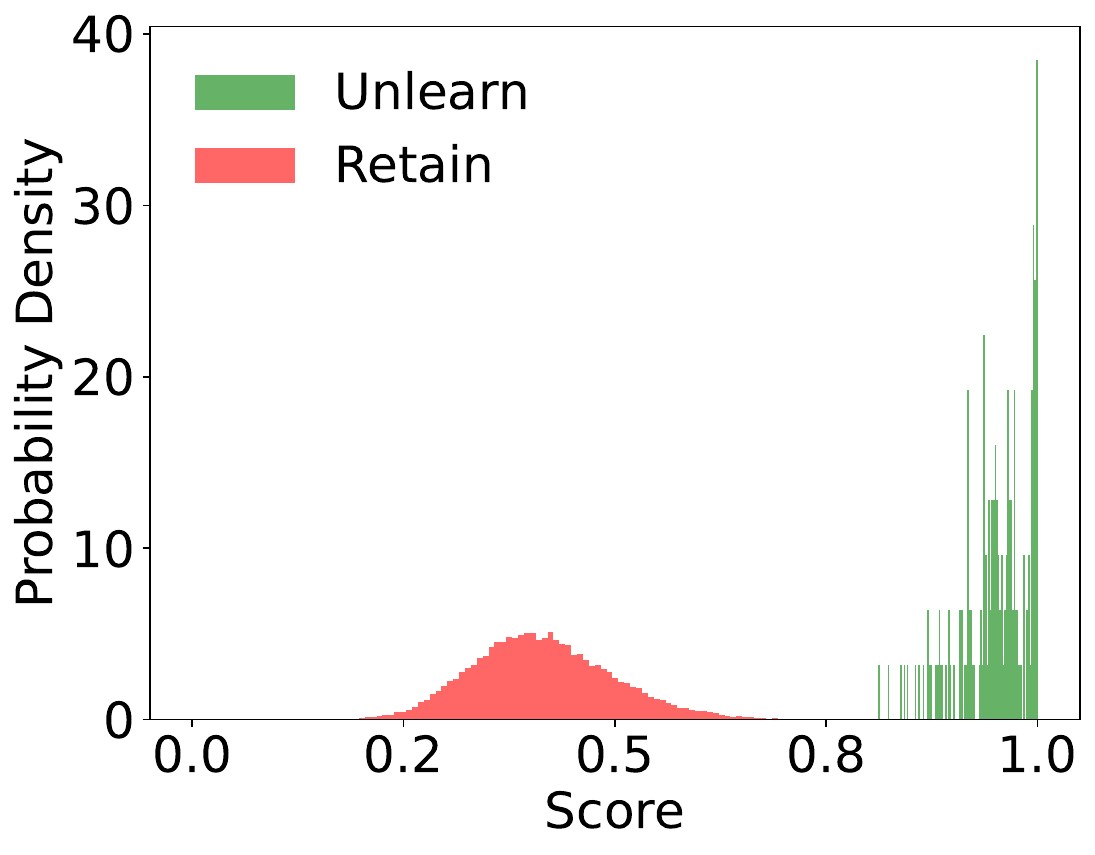}
         \caption{Total Class}
     \end{subfigure}
     \hfill
     \begin{subfigure}[b]{0.16\linewidth}
         \centering
         \includegraphics[height=0.08\textheight]{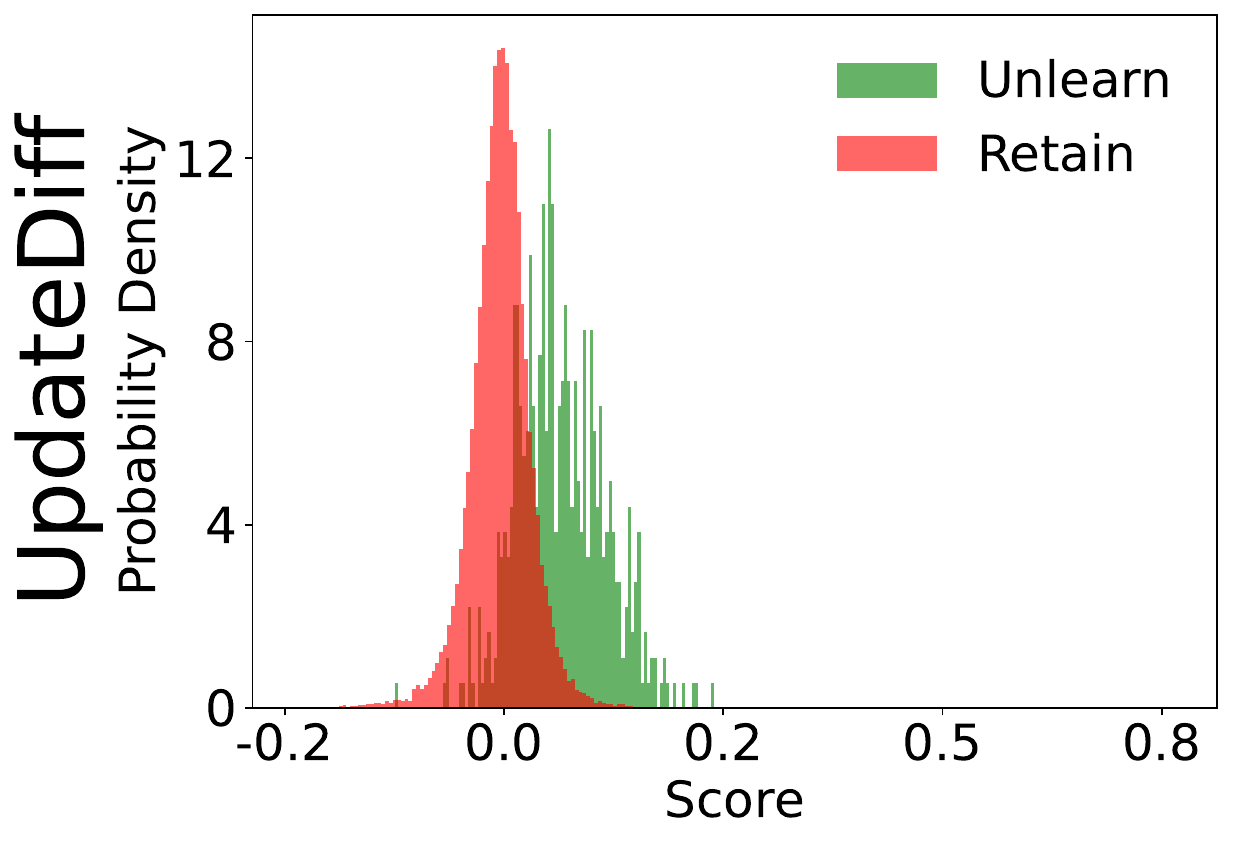}
         \caption{Random Sample}
     \end{subfigure}     
\begin{subfigure}[b]{0.15\linewidth}
         \centering
         \includegraphics[height=0.08\textheight]{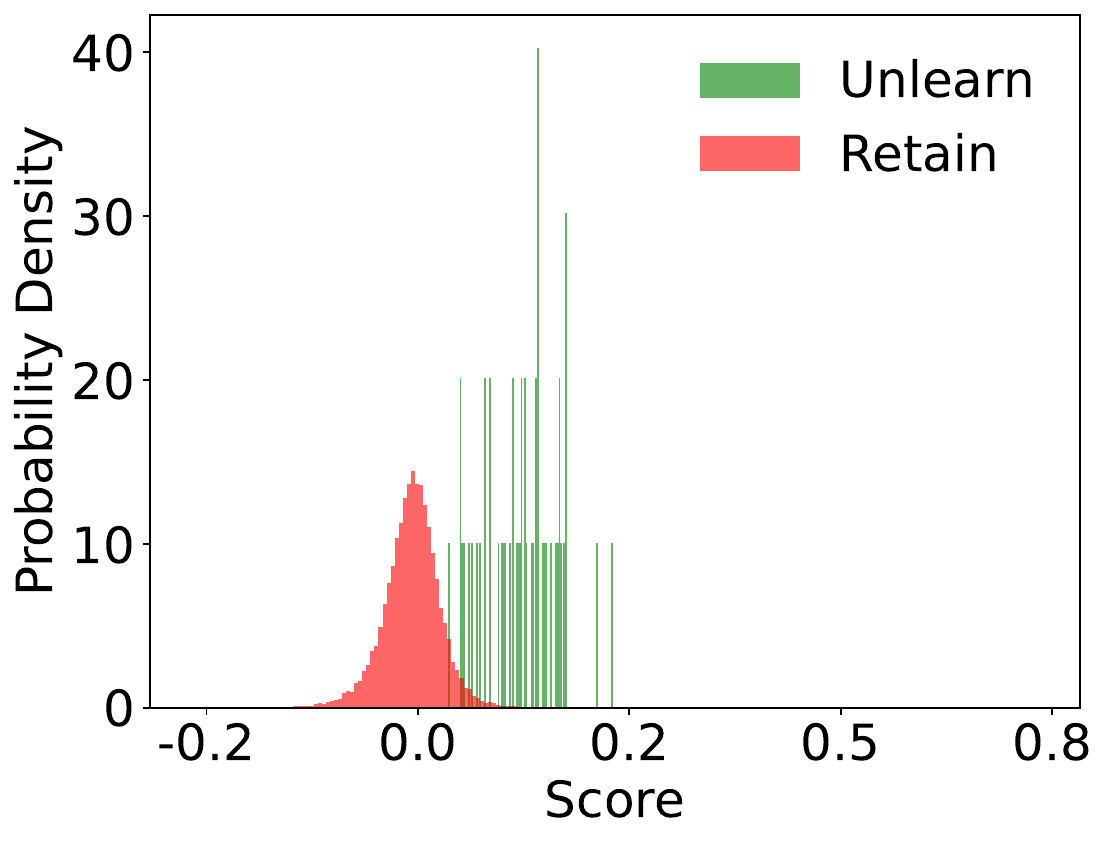}
         \caption{30\% Part-Class}
     \end{subfigure}     
\begin{subfigure}[b]{0.15\linewidth}
         \centering
         \includegraphics[height=0.08\textheight]{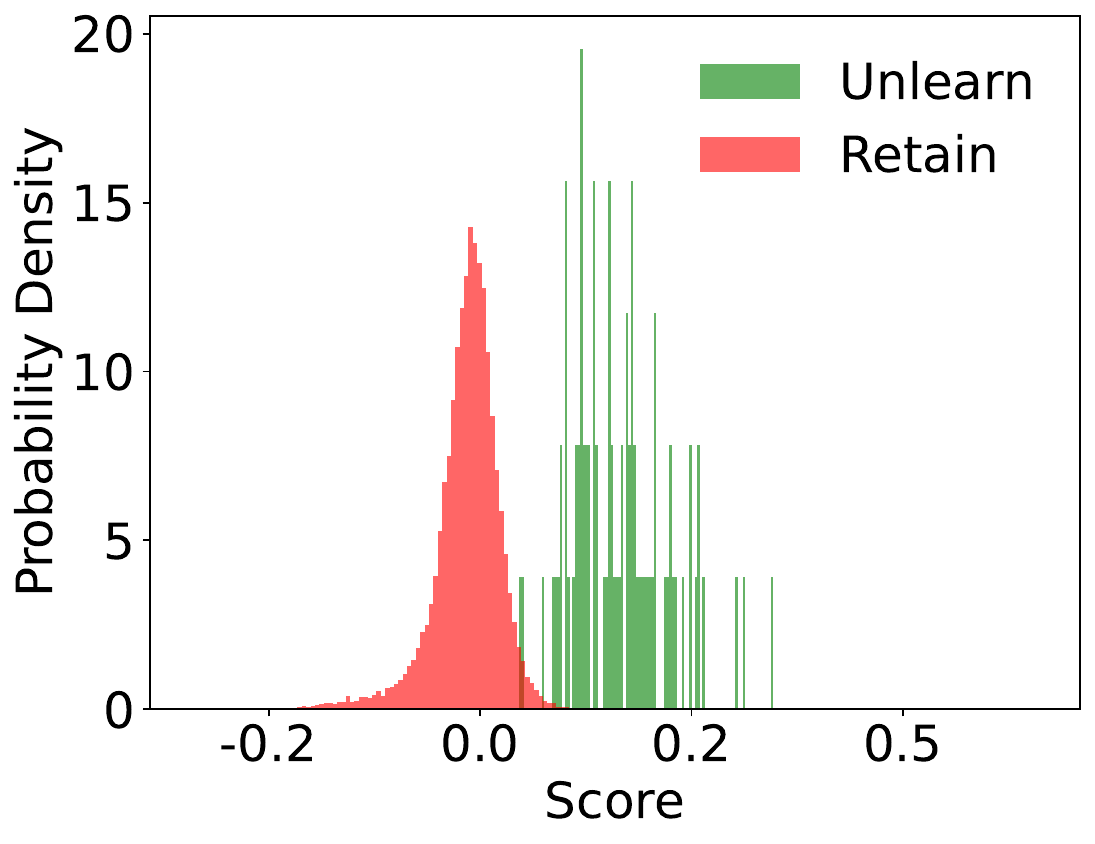}
         \caption{50\% Part-Class}
     \end{subfigure}     
\begin{subfigure}[b]{0.15\linewidth}
         \centering
         \includegraphics[height=0.08\textheight]{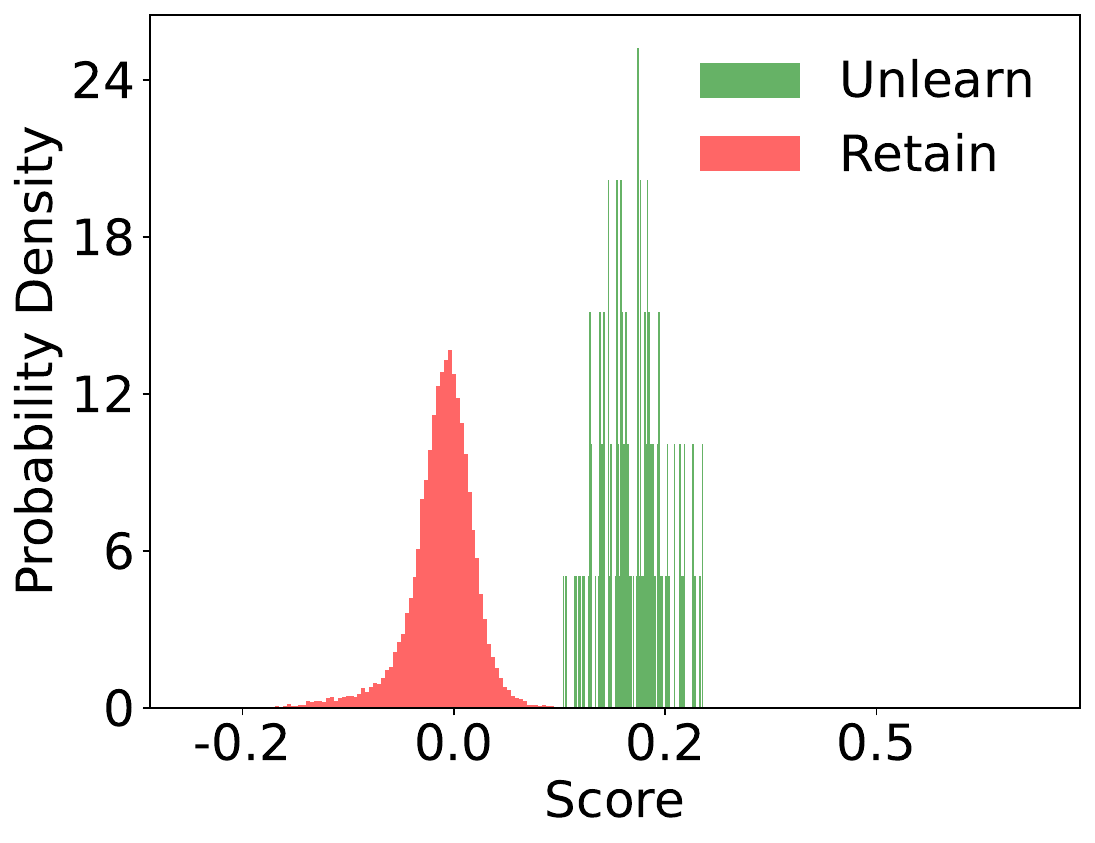}
         \caption{70\% Part-Class}
     \end{subfigure}     
\begin{subfigure}[b]{0.15\linewidth}
         \centering
         \includegraphics[height=0.08\textheight]{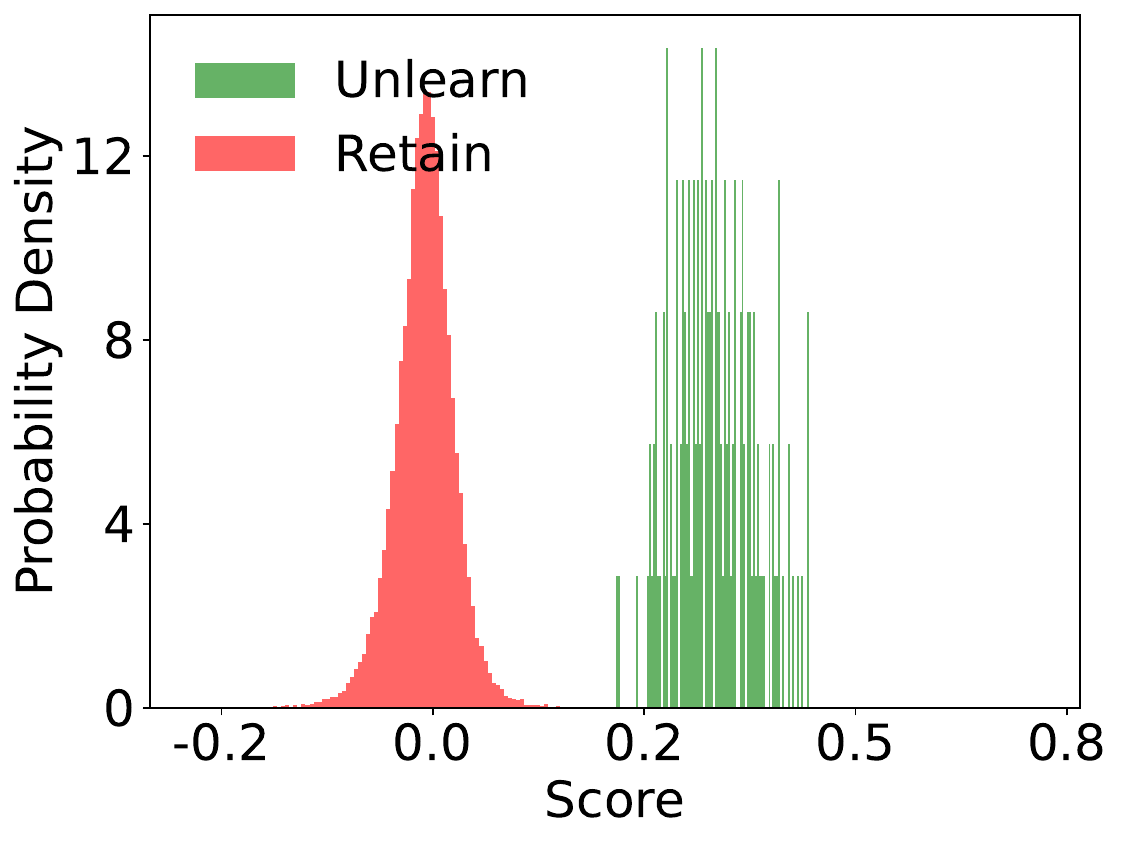}
         \caption{90\% Part-Class}
     \end{subfigure}     
\begin{subfigure}[b]{0.15\linewidth}
         \centering
         \includegraphics[height=0.08\textheight]{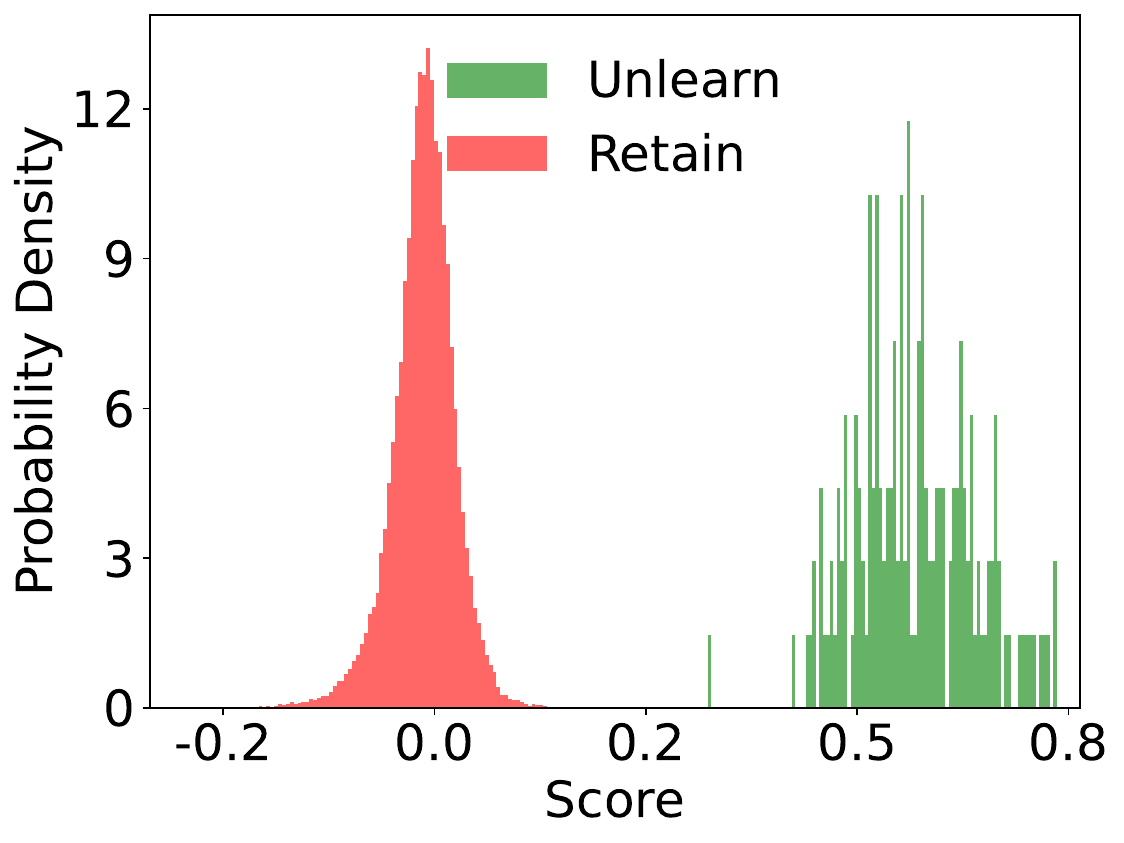}
         \caption{Total Class}
     \end{subfigure}     
     \hfill
\begin{subfigure}[b]{0.16\linewidth}
         \centering
         \includegraphics[height=0.08\textheight]{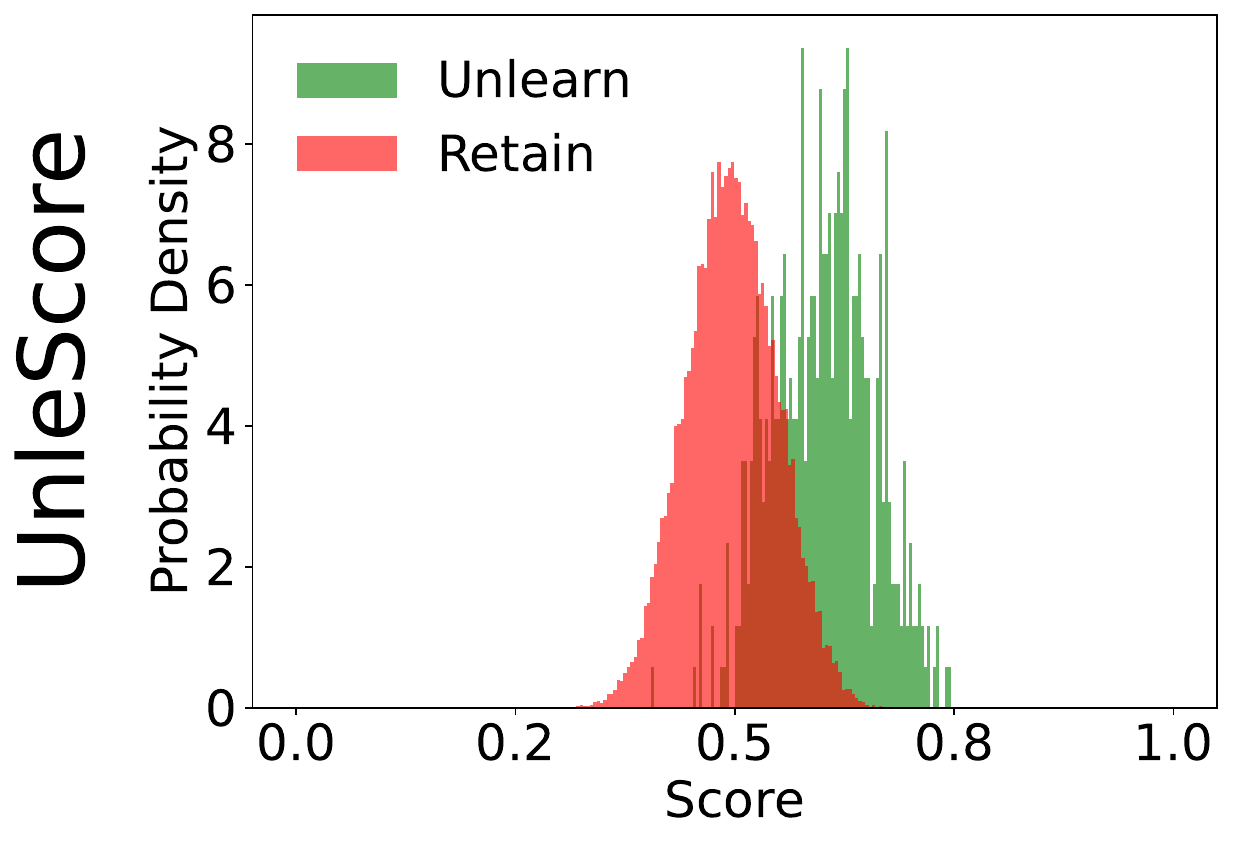}
         \caption{Random Sample}
     \end{subfigure}    
\begin{subfigure}[b]{0.15\linewidth}
         \centering
         \includegraphics[height=0.08\textheight]{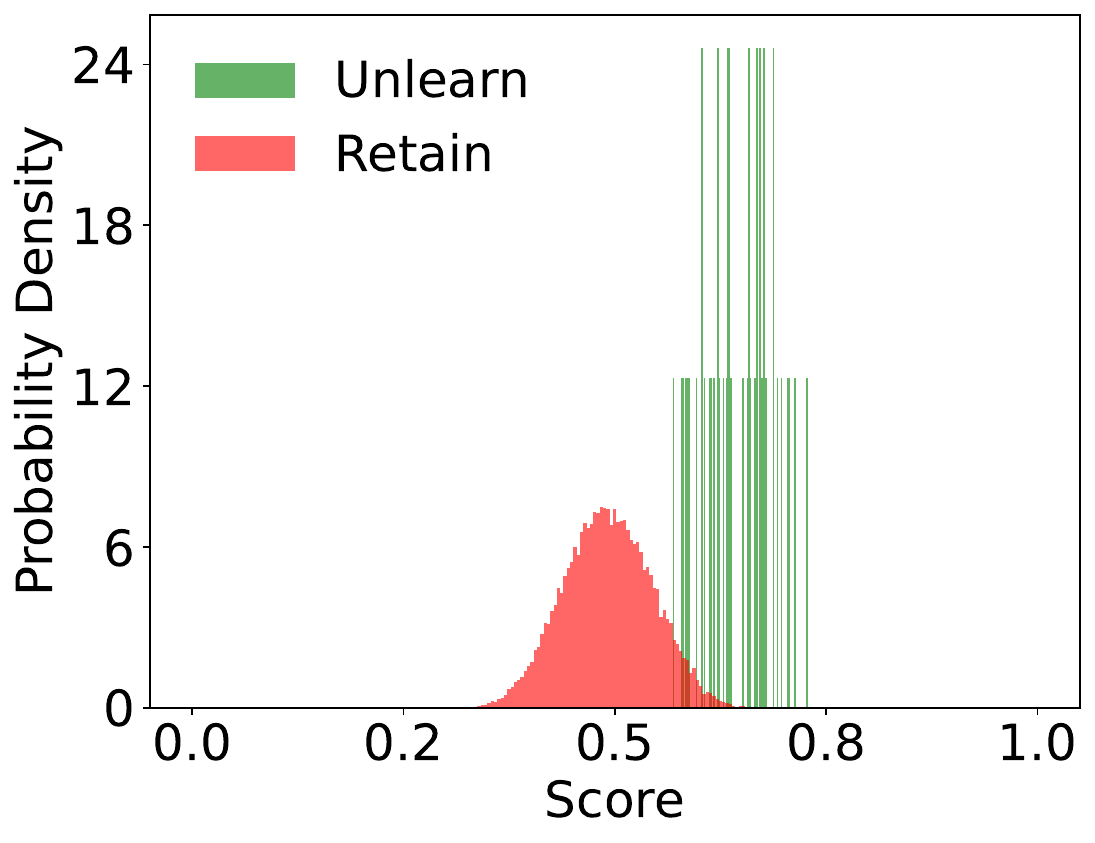}
         \caption{30\% Part-Class}
     \end{subfigure}    
\begin{subfigure}[b]{0.15\linewidth}
         \centering
         \includegraphics[height=0.08\textheight]{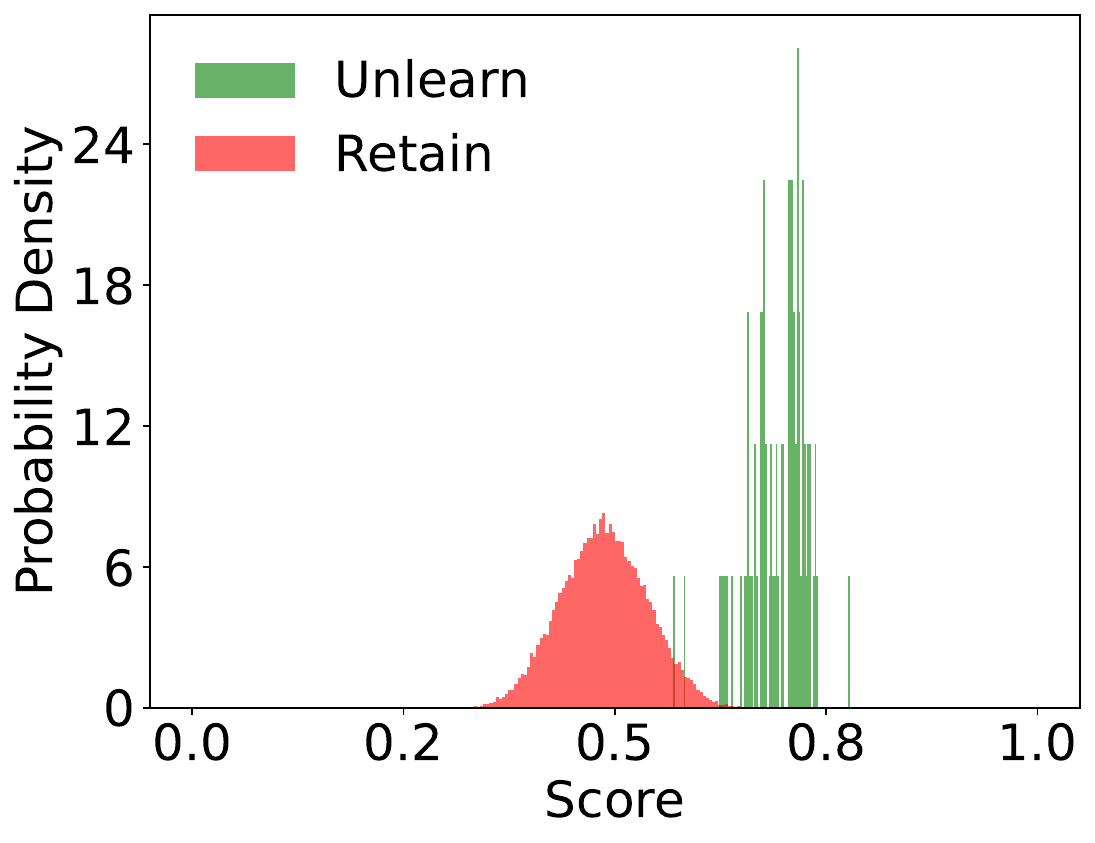}
         \caption{50\% Part-Class}
     \end{subfigure}    
\begin{subfigure}[b]{0.15\linewidth}
         \centering
         \includegraphics[height=0.08\textheight]{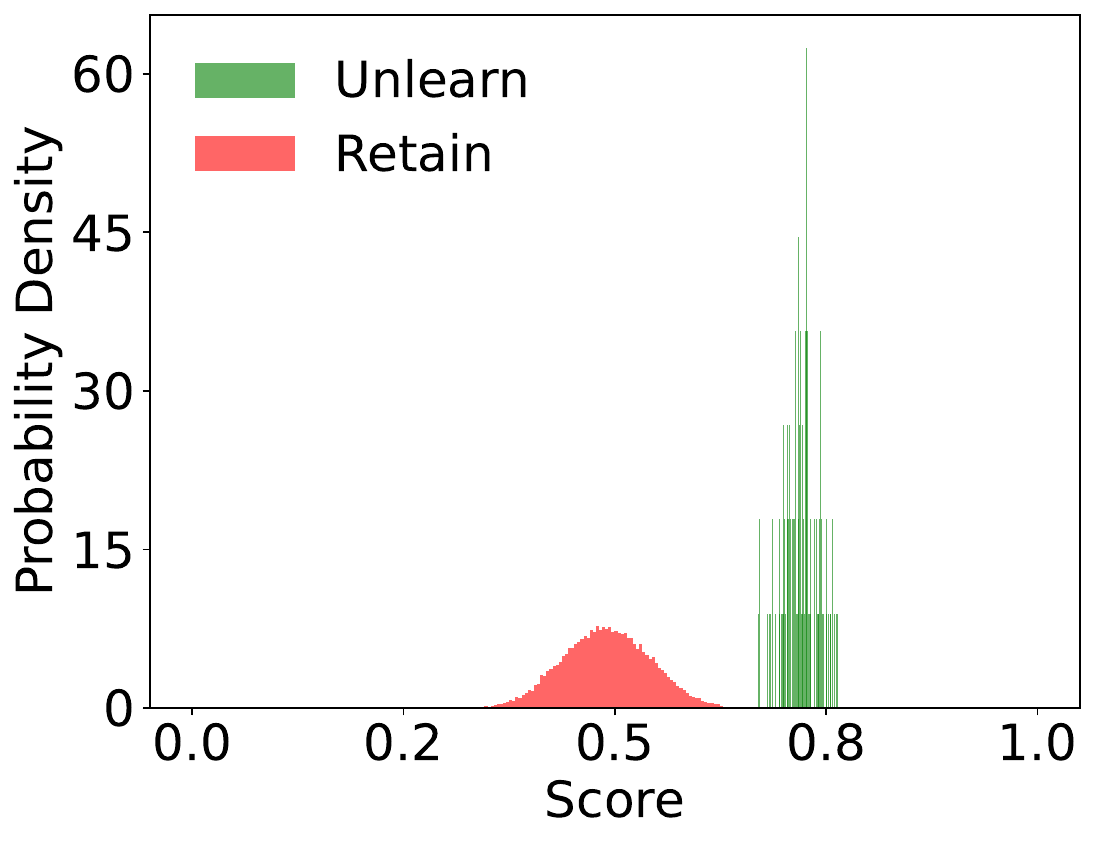}
         \caption{70\% Part-Class}
     \end{subfigure}    
\begin{subfigure}[b]{0.15\linewidth}
         \centering
         \includegraphics[height=0.08\textheight]{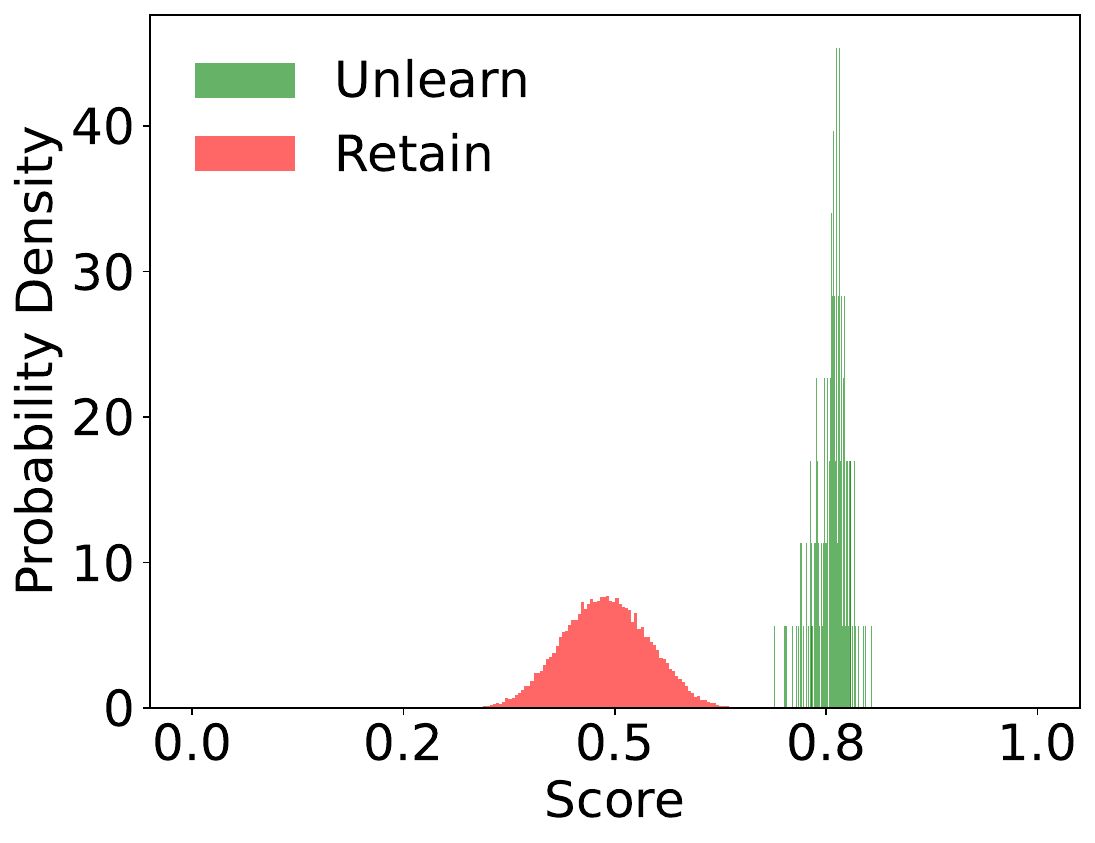}
         \caption{90\% Part-Class}
     \end{subfigure}    
\begin{subfigure}[b]{0.15\linewidth}
         \centering
         \includegraphics[height=0.08\textheight]{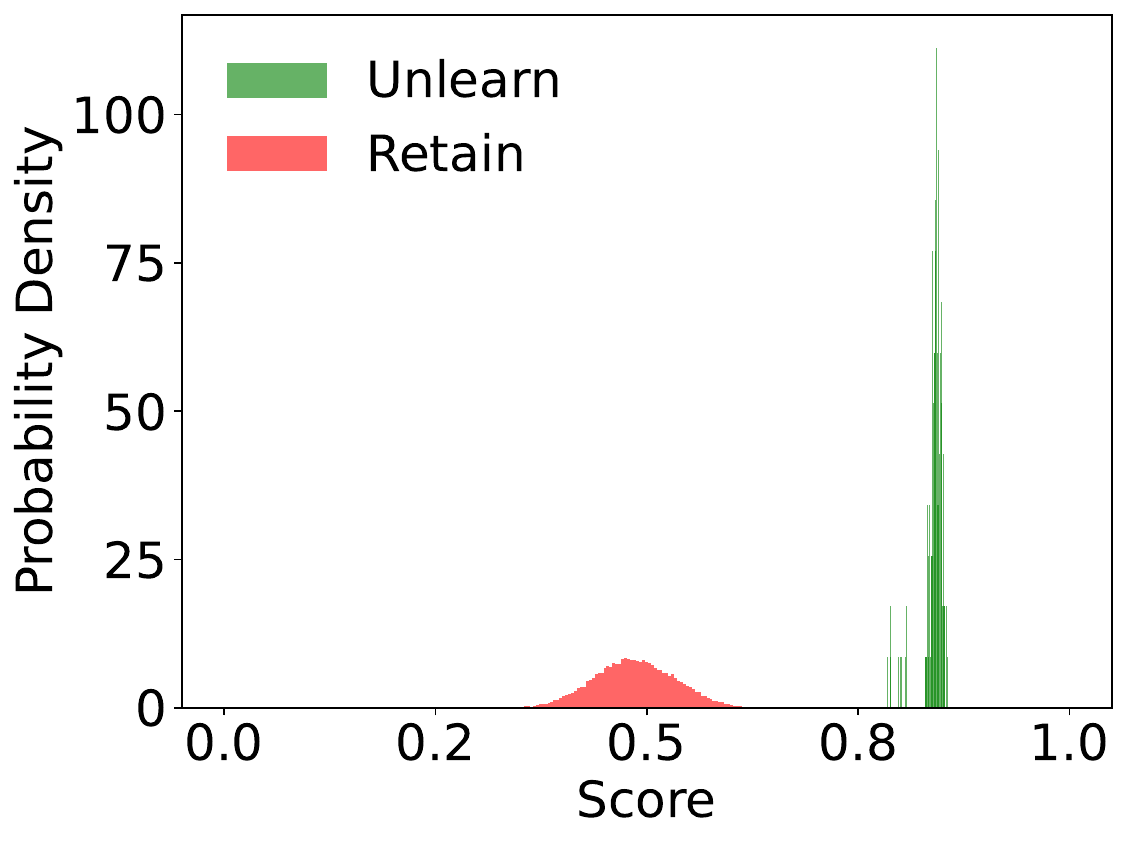}
         \caption{Total Class}
     \end{subfigure}       
     \caption{Score Distributions of Unlearning Metrics with Different Unlearning Tasks on Texas.}
    \label{fig:unlearningscores_Texas}
\end{figure*}

\begin{figure*}[h]
    \centering
    \begin{subfigure}[b]{0.16\linewidth}
         \centering
         \includegraphics[height=0.08\textheight]{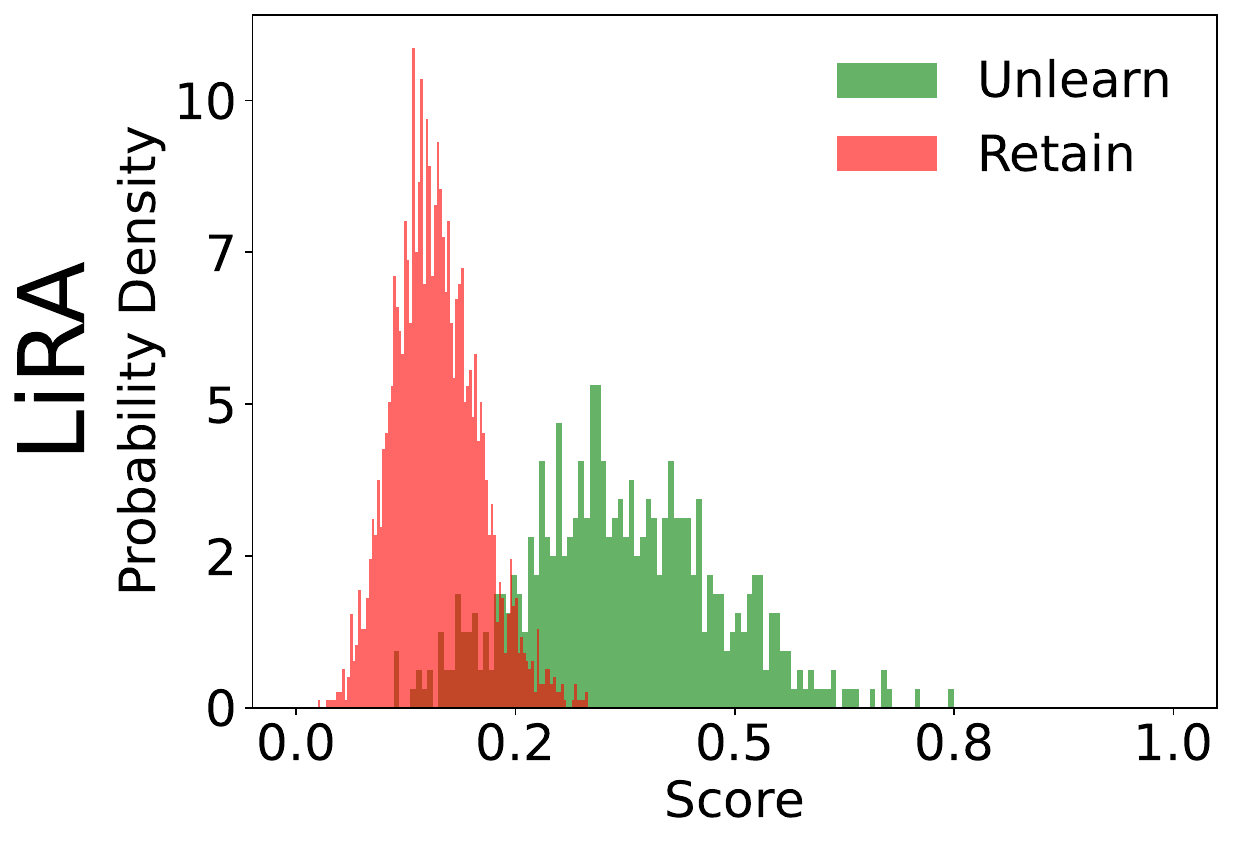}
         \caption{Random Sample}
     \end{subfigure}
     \begin{subfigure}[b]{0.15\linewidth}
         \centering
         \includegraphics[height=0.08\textheight]{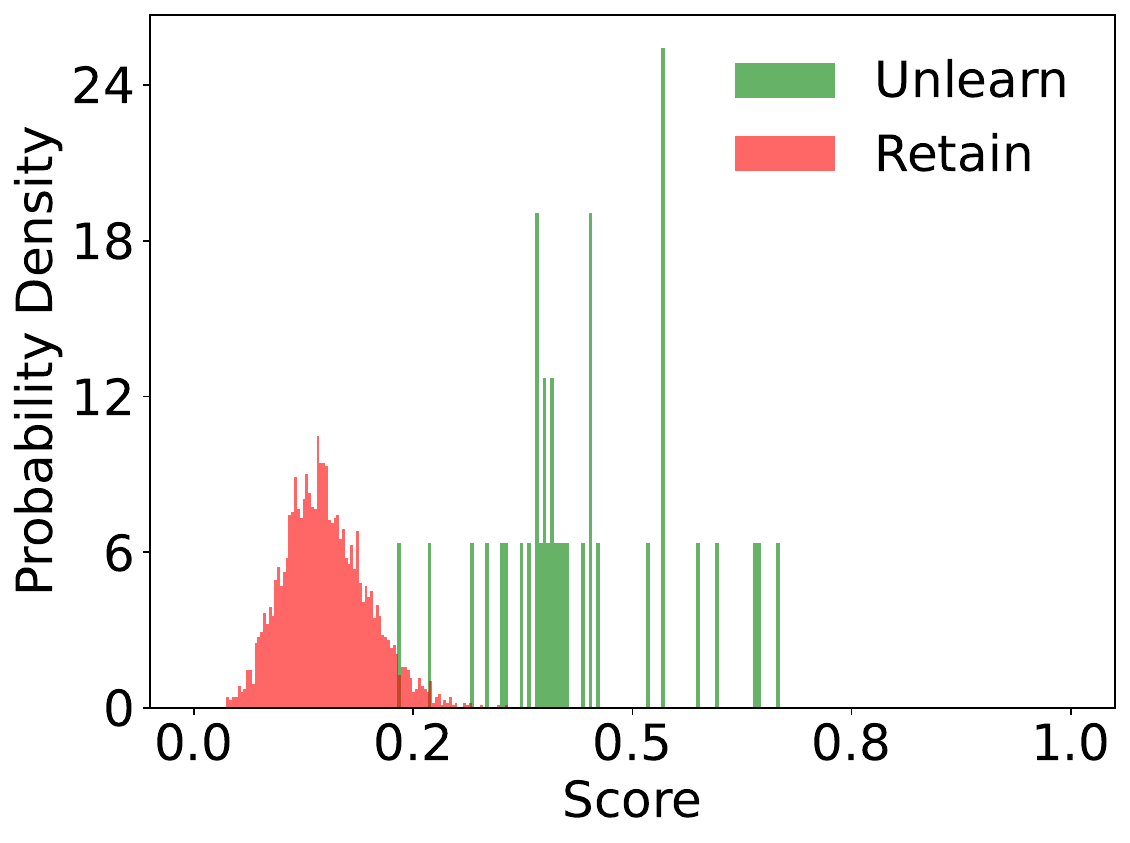}
         \caption{30\% Part-Class}
     \end{subfigure}
     \begin{subfigure}[b]{0.15\linewidth}
         \centering
         \includegraphics[height=0.08\textheight]{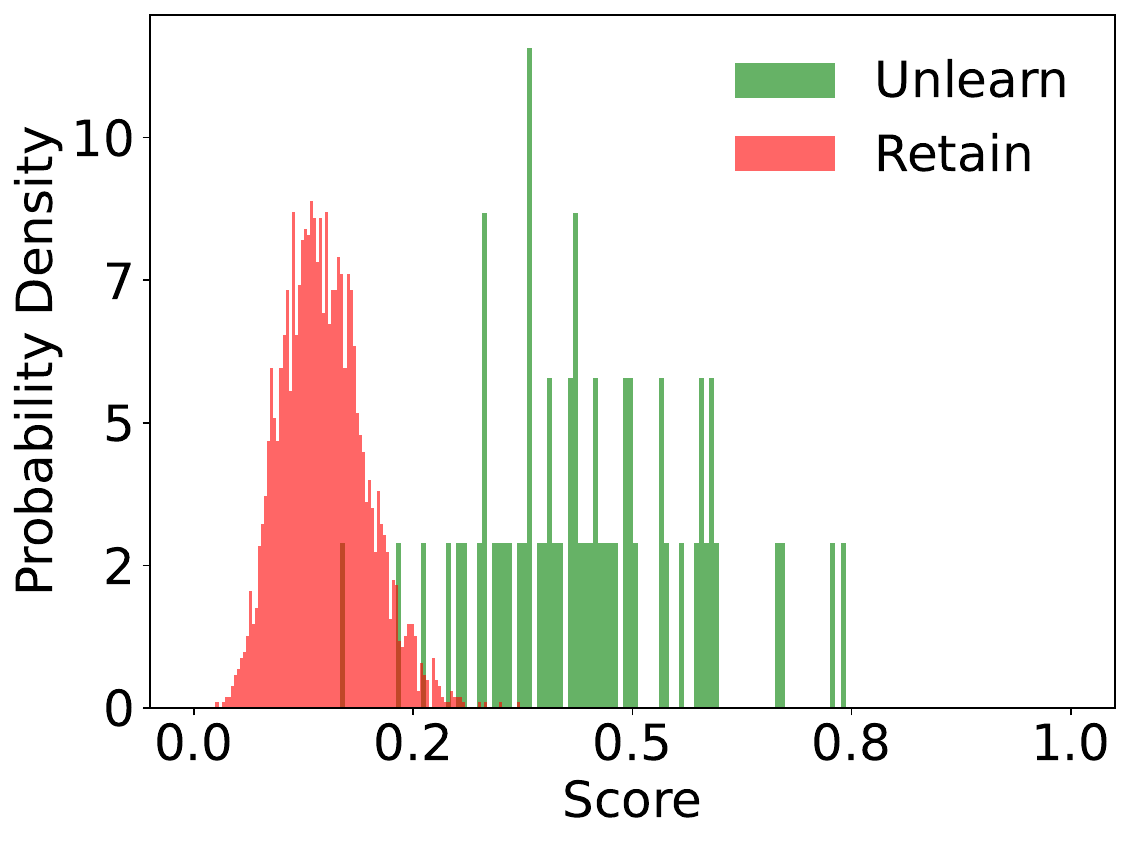}
         \caption{50\% Part-Class}
     \end{subfigure}
     \begin{subfigure}[b]{0.15\linewidth}
         \centering
         \includegraphics[height=0.08\textheight]{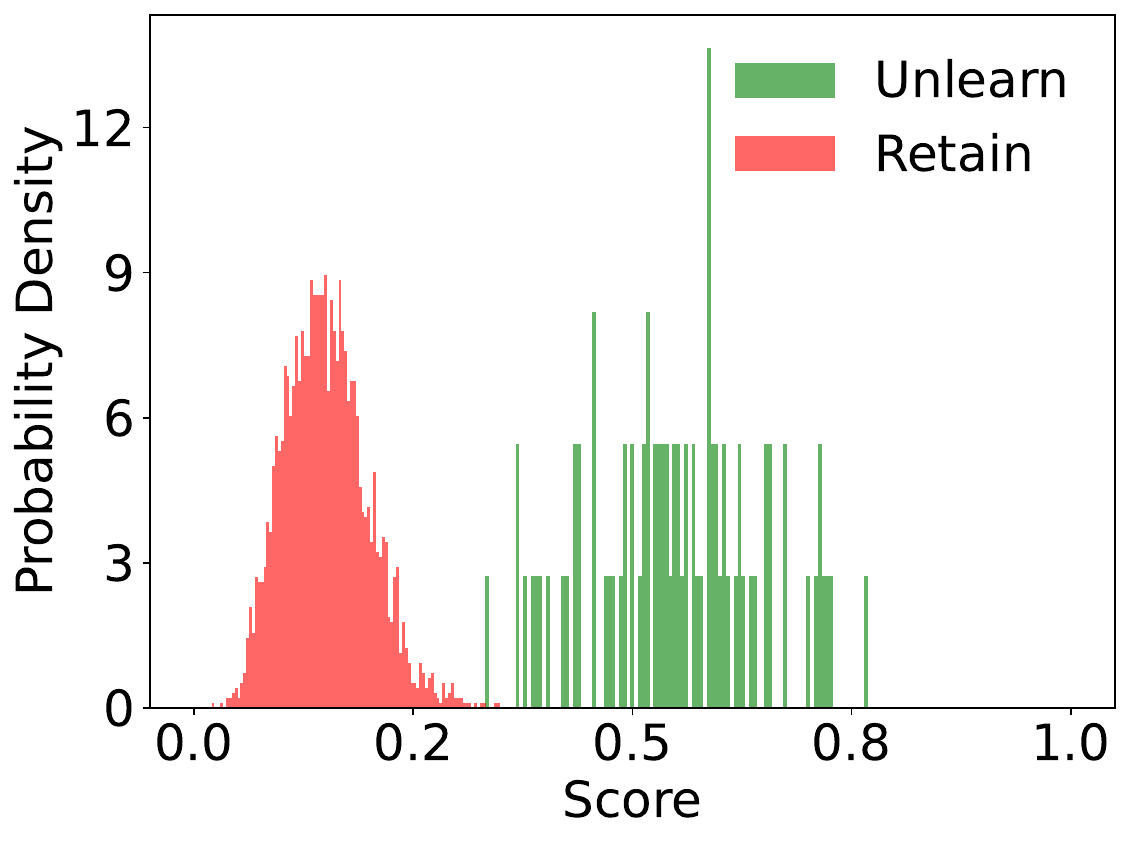}
         \caption{70\% Part-Class}
     \end{subfigure}
     \begin{subfigure}[b]{0.15\linewidth}
         \centering
         \includegraphics[height=0.08\textheight]{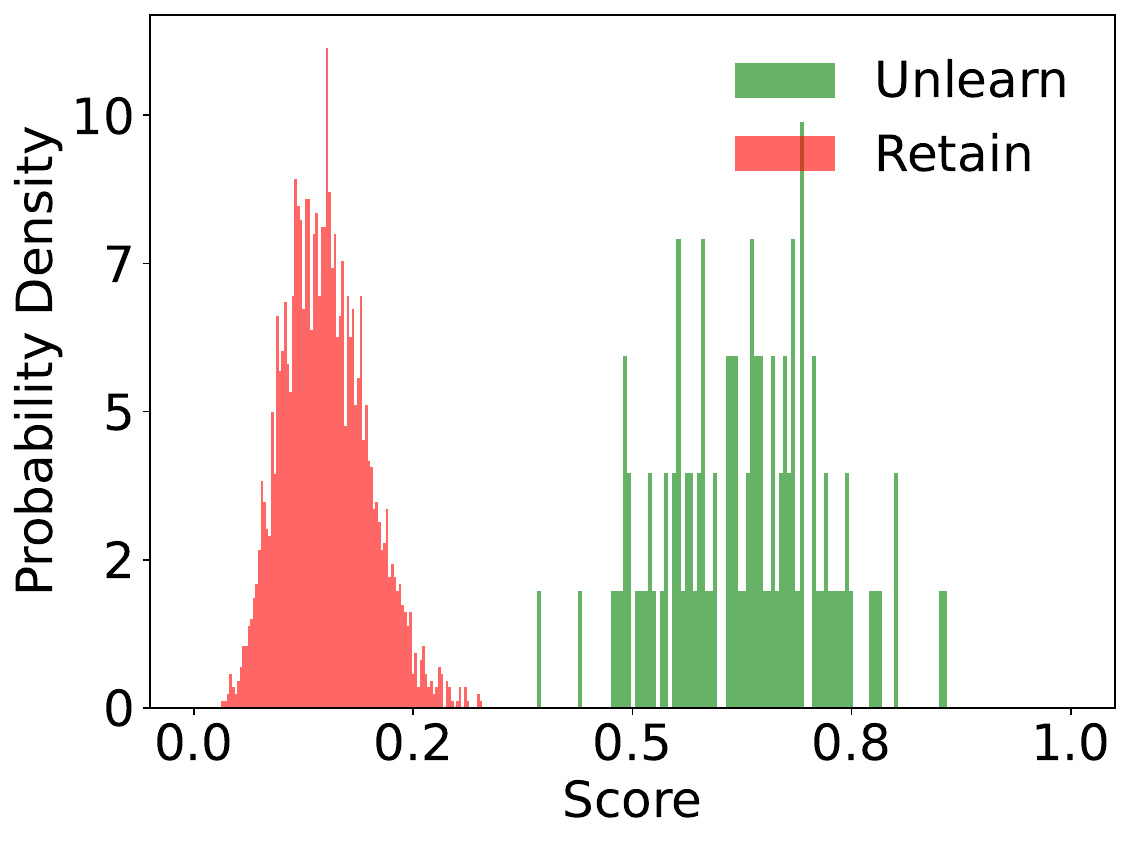}
         \caption{90\% Part-Class}
     \end{subfigure}
     \begin{subfigure}[b]{0.15\linewidth}
         \centering
         \includegraphics[height=0.08\textheight]{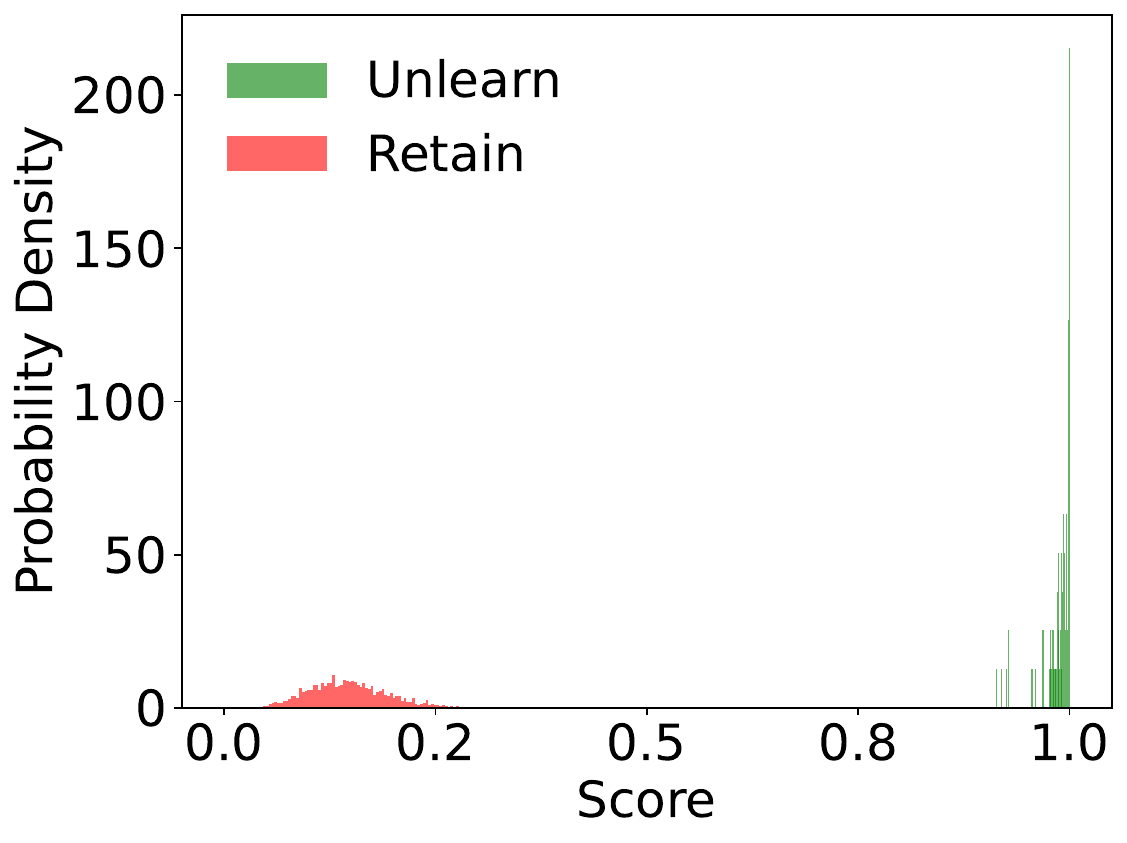}
         \caption{Total Class}
     \end{subfigure}
     \hfill
     \begin{subfigure}[b]{0.16\linewidth}
         \centering
         \includegraphics[height=0.08\textheight]{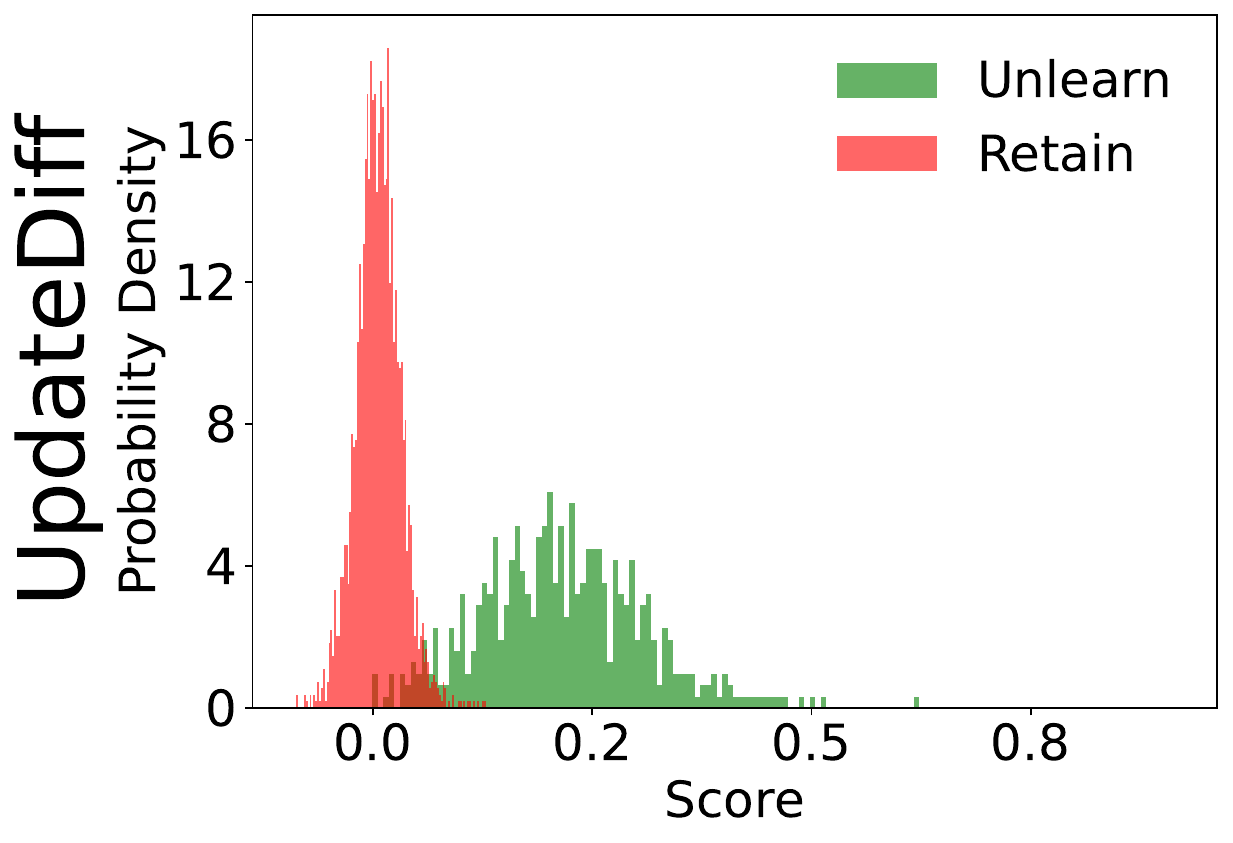}
         \caption{Random Sample}
     \end{subfigure}     
\begin{subfigure}[b]{0.15\linewidth}
         \centering
         \includegraphics[height=0.08\textheight]{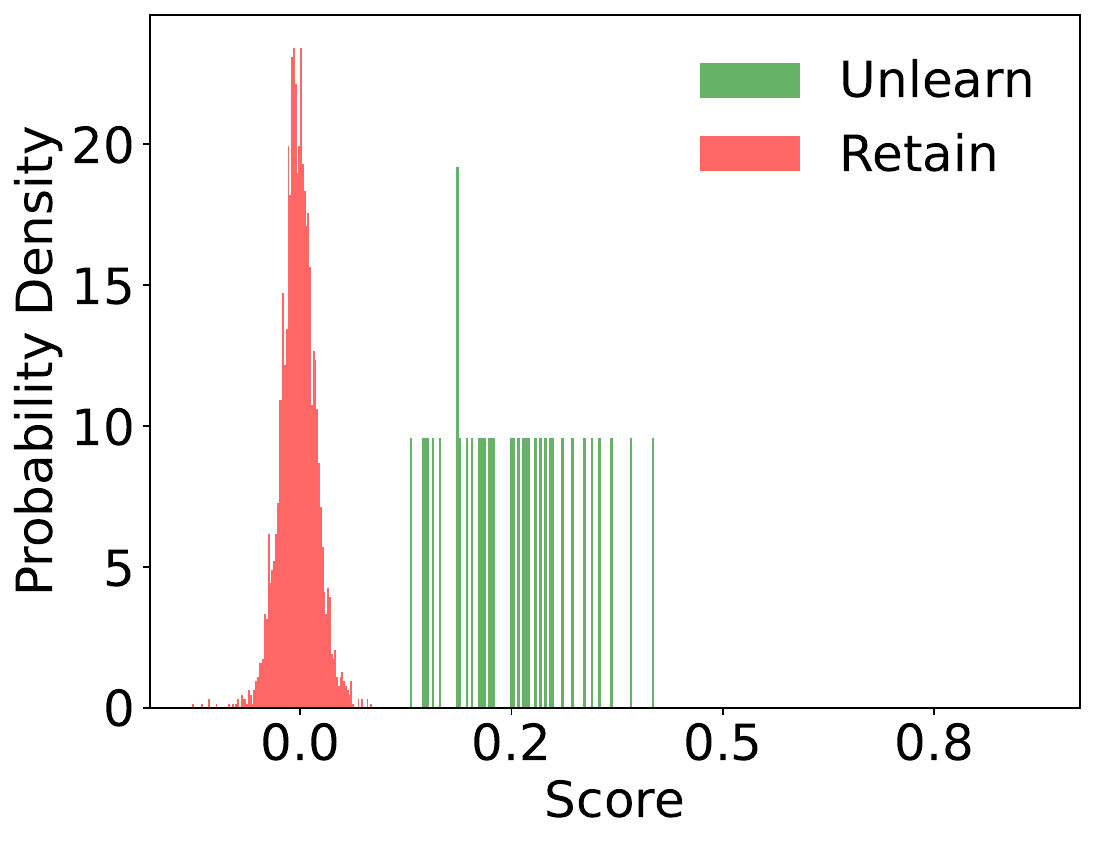}
         \caption{30\% Part-Class}
     \end{subfigure}     
\begin{subfigure}[b]{0.15\linewidth}
         \centering
         \includegraphics[height=0.08\textheight]{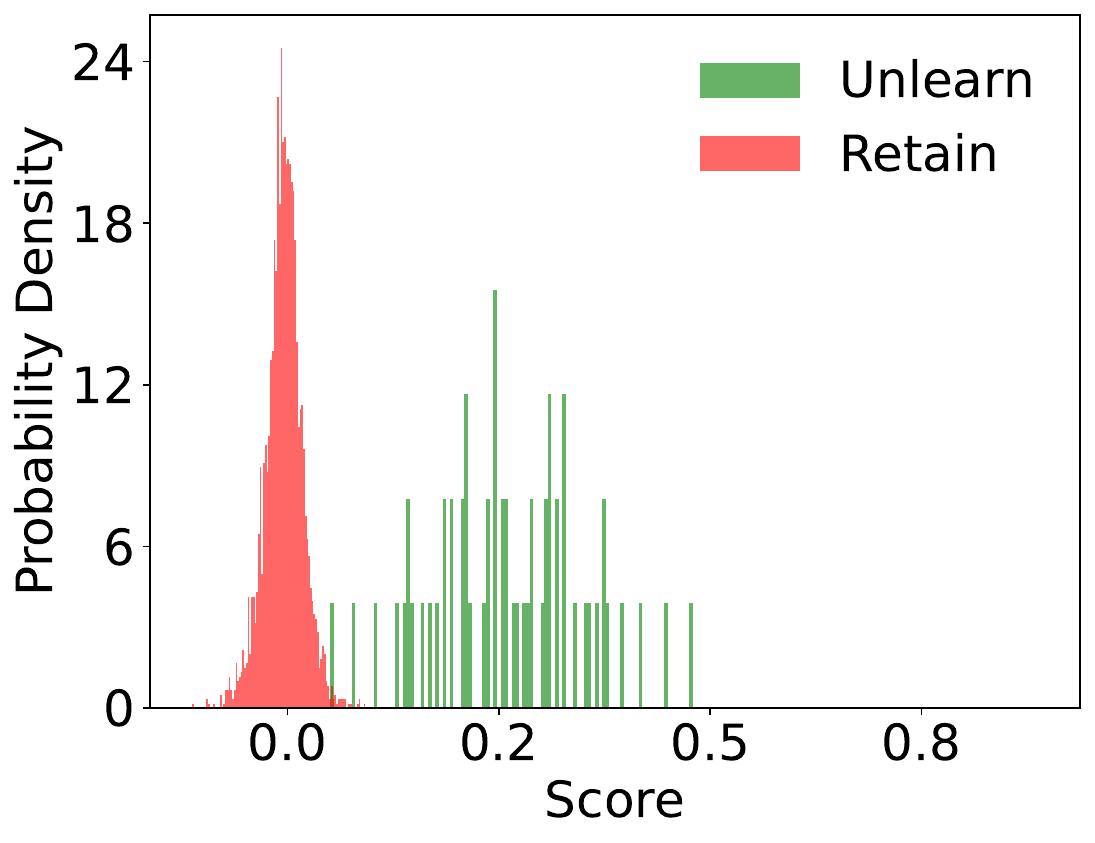}
         \caption{50\% Part-Class}
     \end{subfigure}     
\begin{subfigure}[b]{0.15\linewidth}
         \centering
         \includegraphics[height=0.08\textheight]{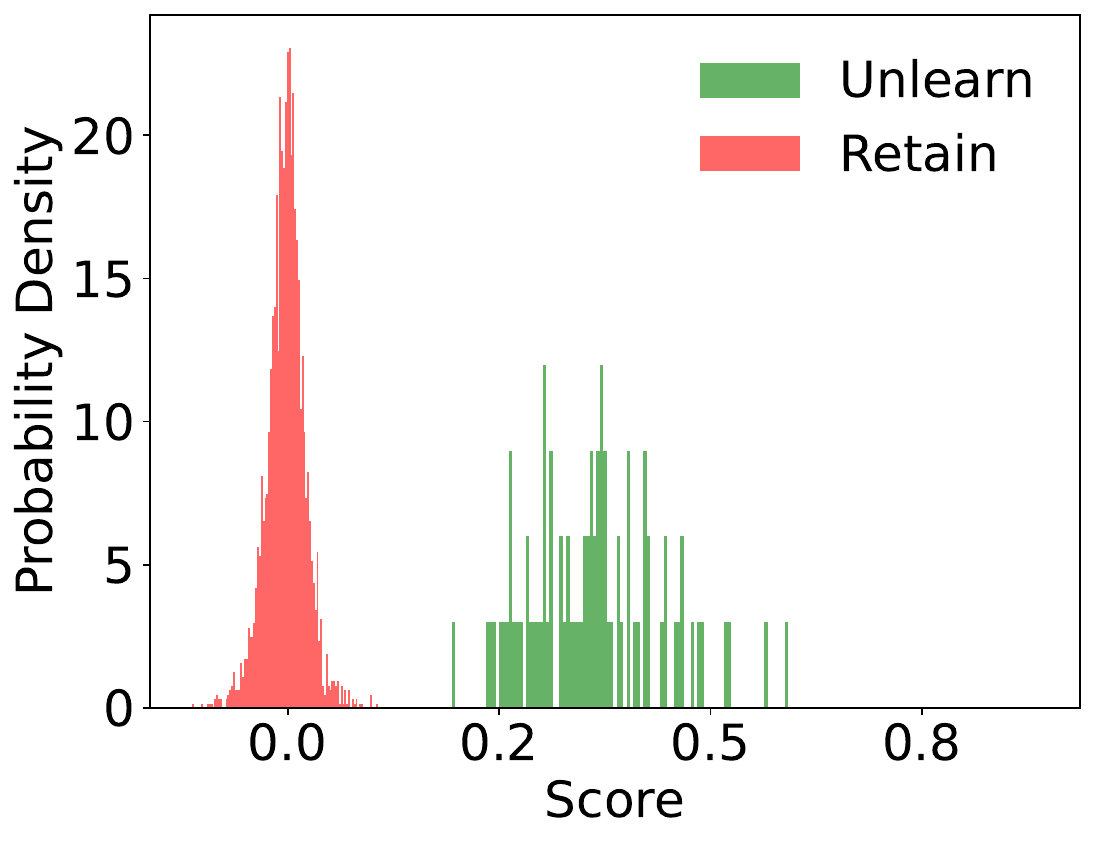}
         \caption{70\% Part-Class}
     \end{subfigure}     
\begin{subfigure}[b]{0.15\linewidth}
         \centering
         \includegraphics[height=0.08\textheight]{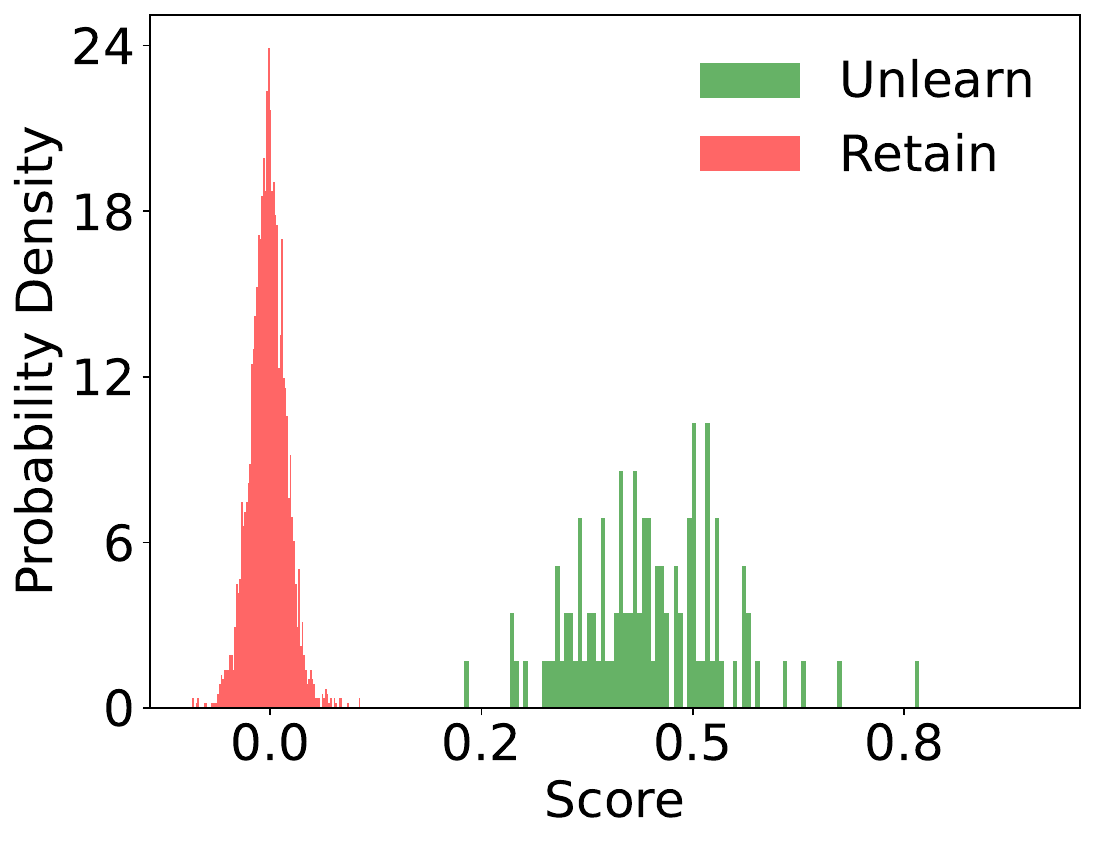}
         \caption{90\% Part-Class}
     \end{subfigure}     
\begin{subfigure}[b]{0.15\linewidth}
         \centering
         \includegraphics[height=0.08\textheight]{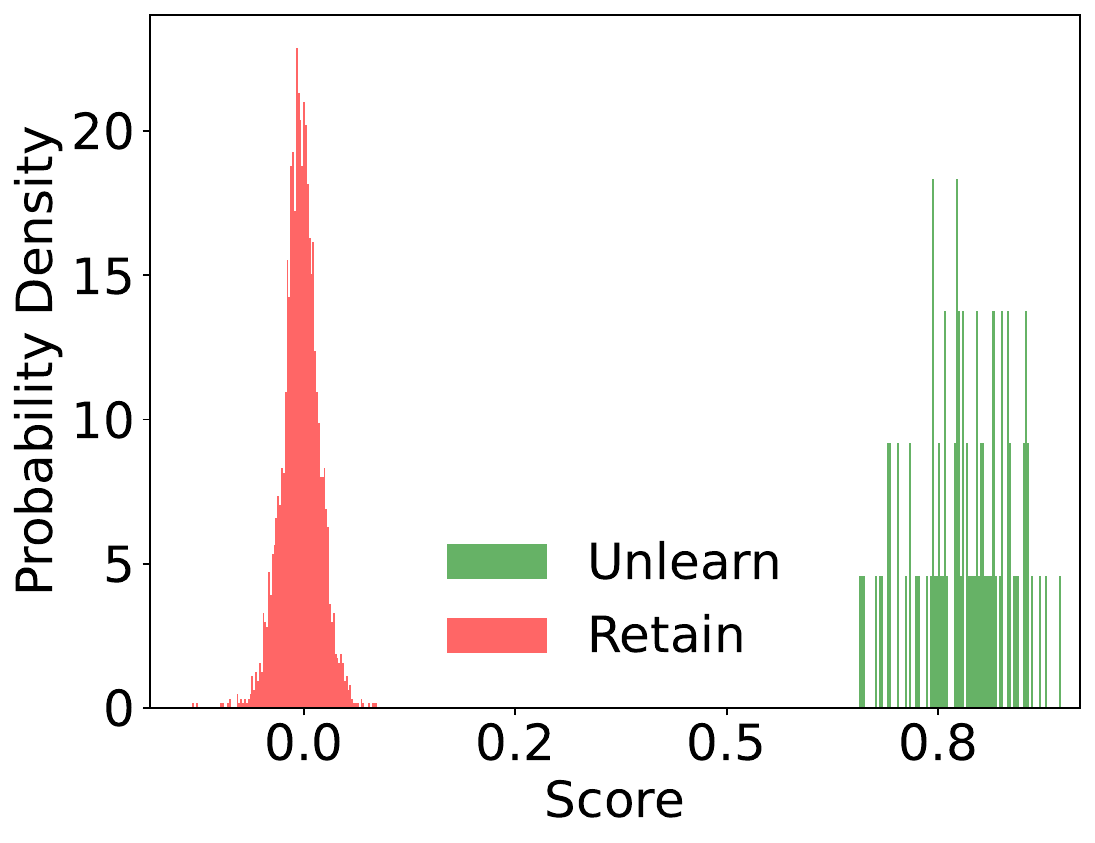}
         \caption{Total Class}
     \end{subfigure}     
     \hfill
\begin{subfigure}[b]{0.16\linewidth}
         \centering
         \includegraphics[height=0.08\textheight]{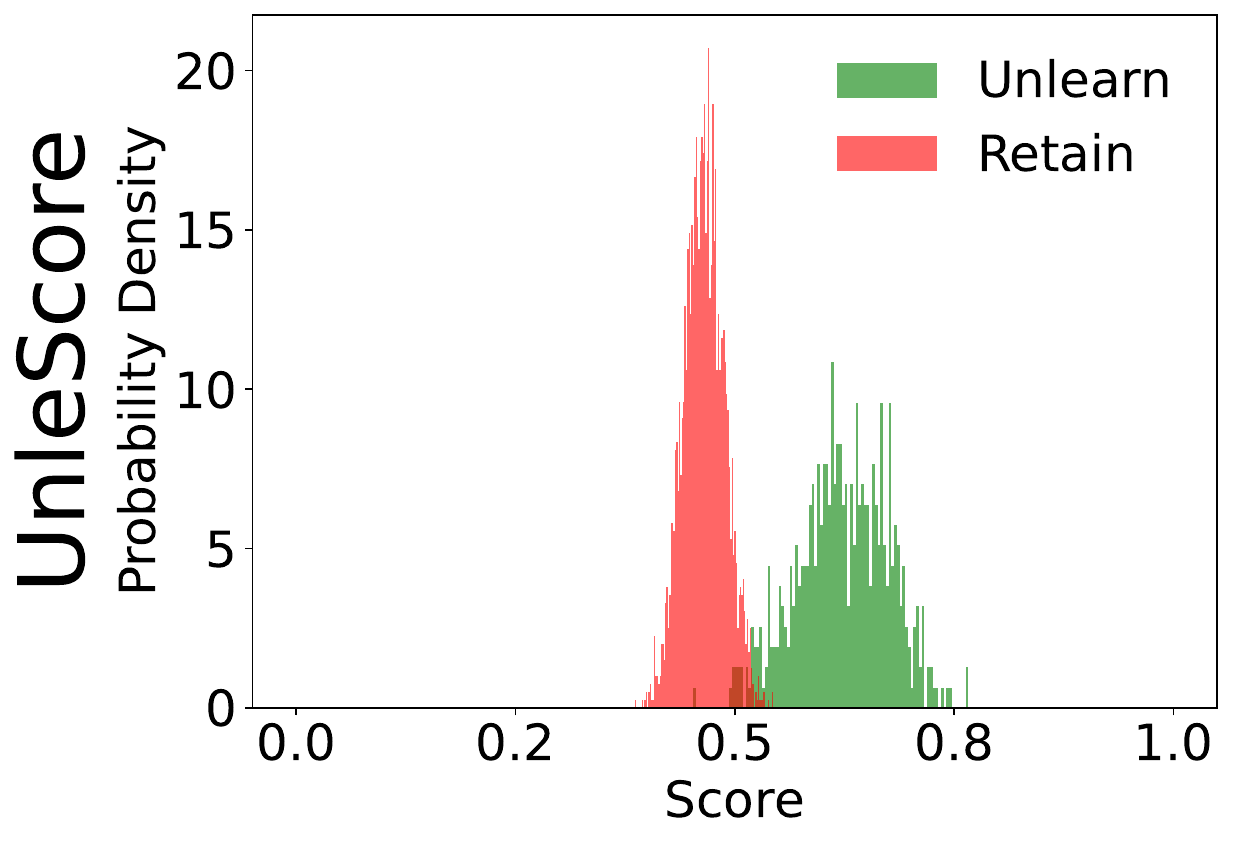}
         \caption{Random Sample}
     \end{subfigure}    
\begin{subfigure}[b]{0.15\linewidth}
         \centering
         \includegraphics[height=0.08\textheight]{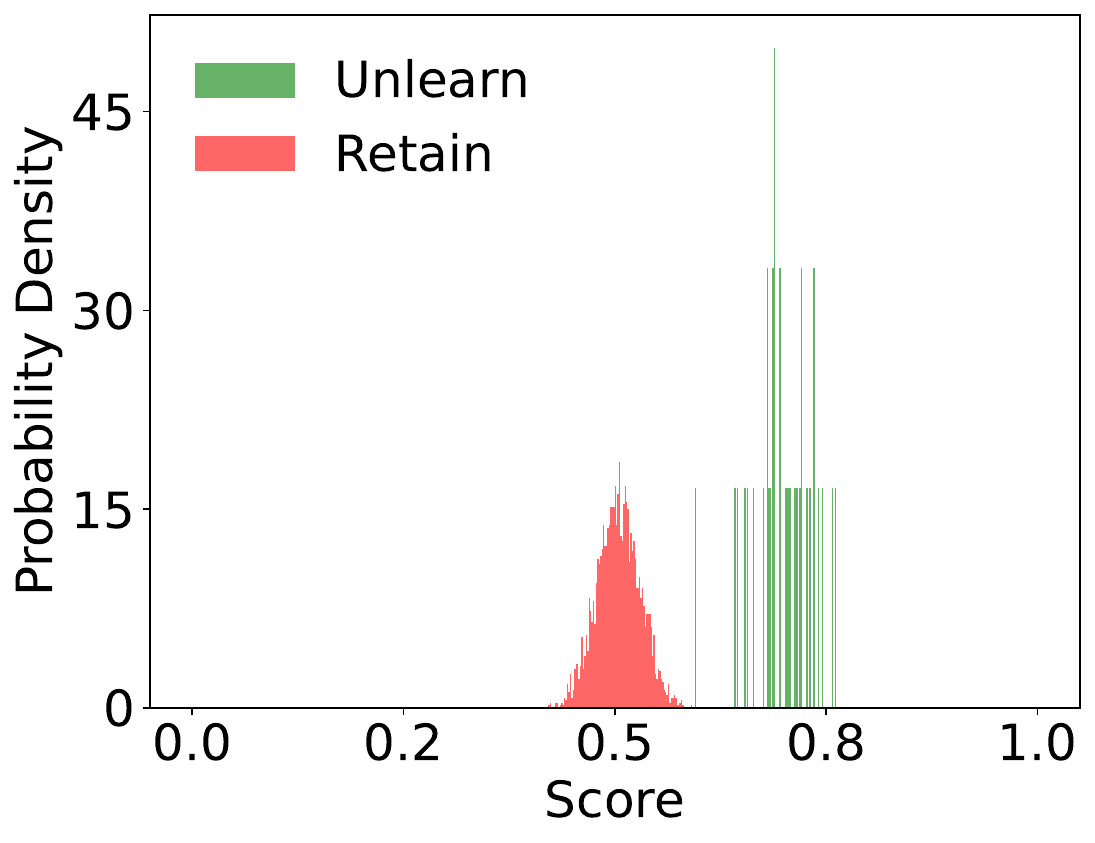}
         \caption{30\% Part-Class}
     \end{subfigure}    
\begin{subfigure}[b]{0.15\linewidth}
         \centering
         \includegraphics[height=0.08\textheight]{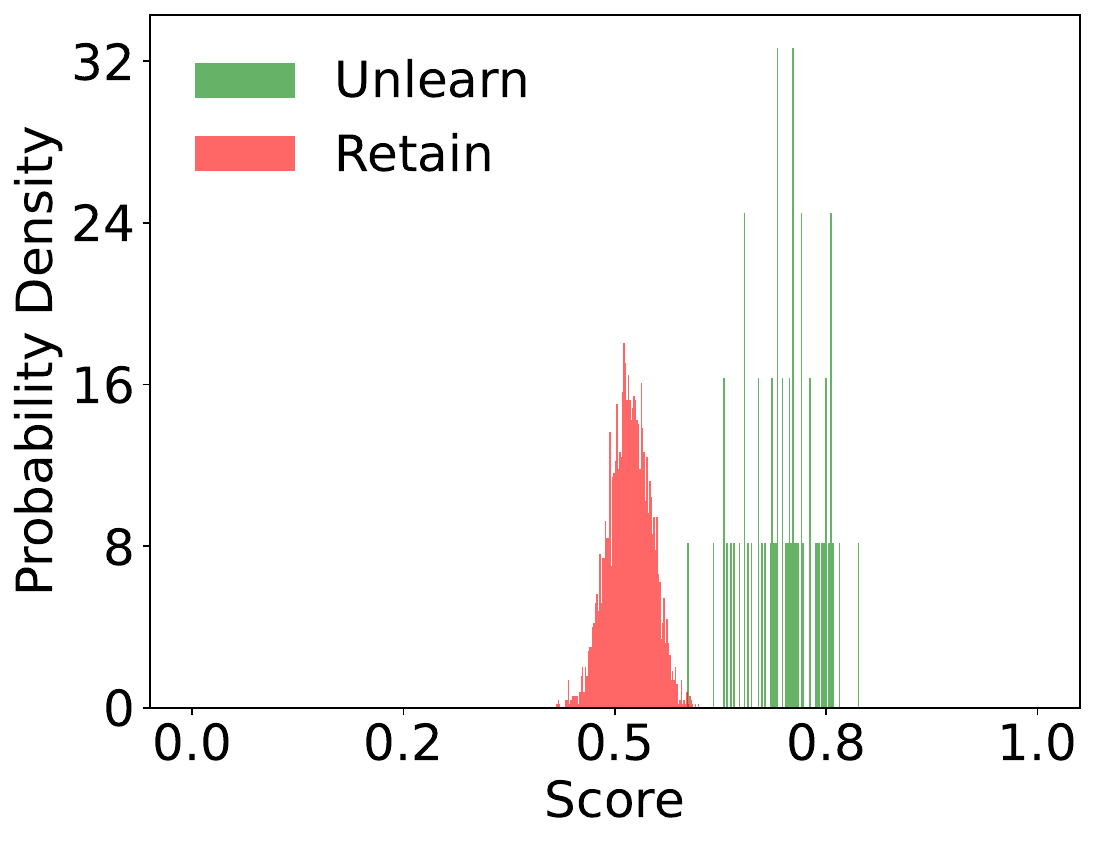}
         \caption{50\% Part-Class}
     \end{subfigure}    
\begin{subfigure}[b]{0.15\linewidth}
         \centering
         \includegraphics[height=0.08\textheight]{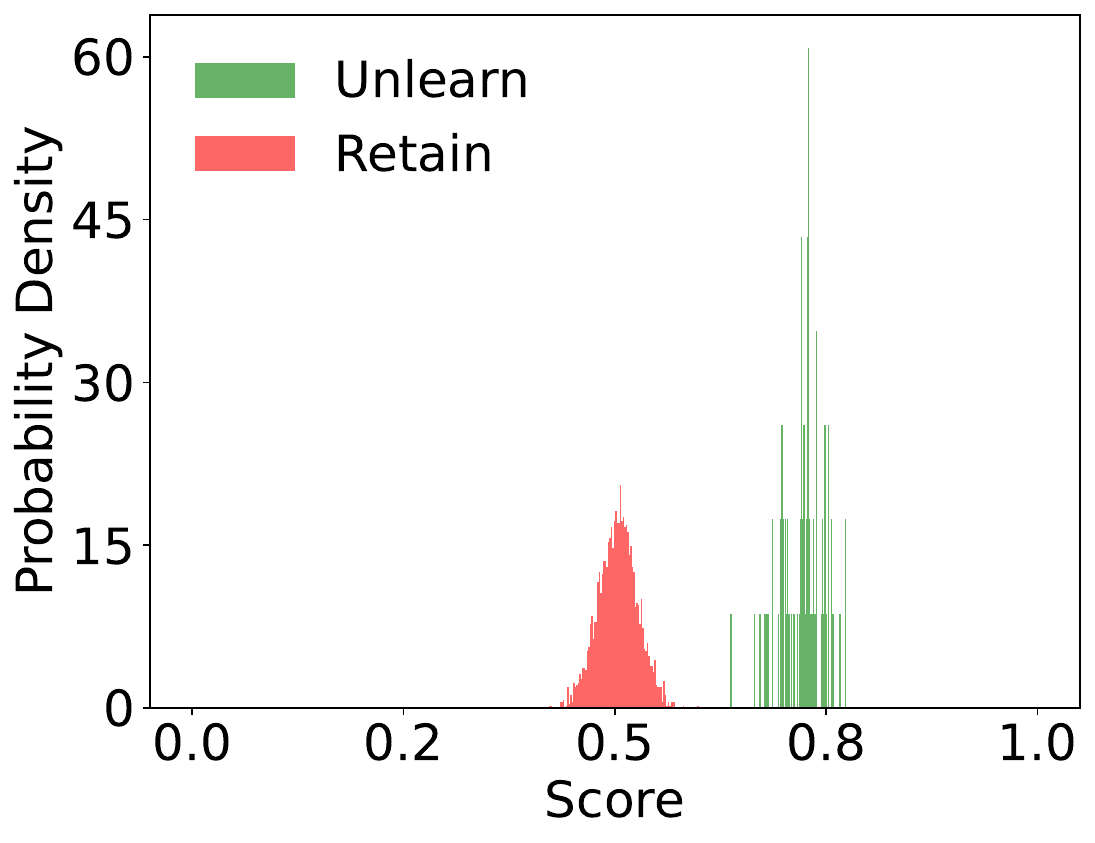}
         \caption{70\% Part-Class}
     \end{subfigure}    
\begin{subfigure}[b]{0.15\linewidth}
         \centering
         \includegraphics[height=0.08\textheight]{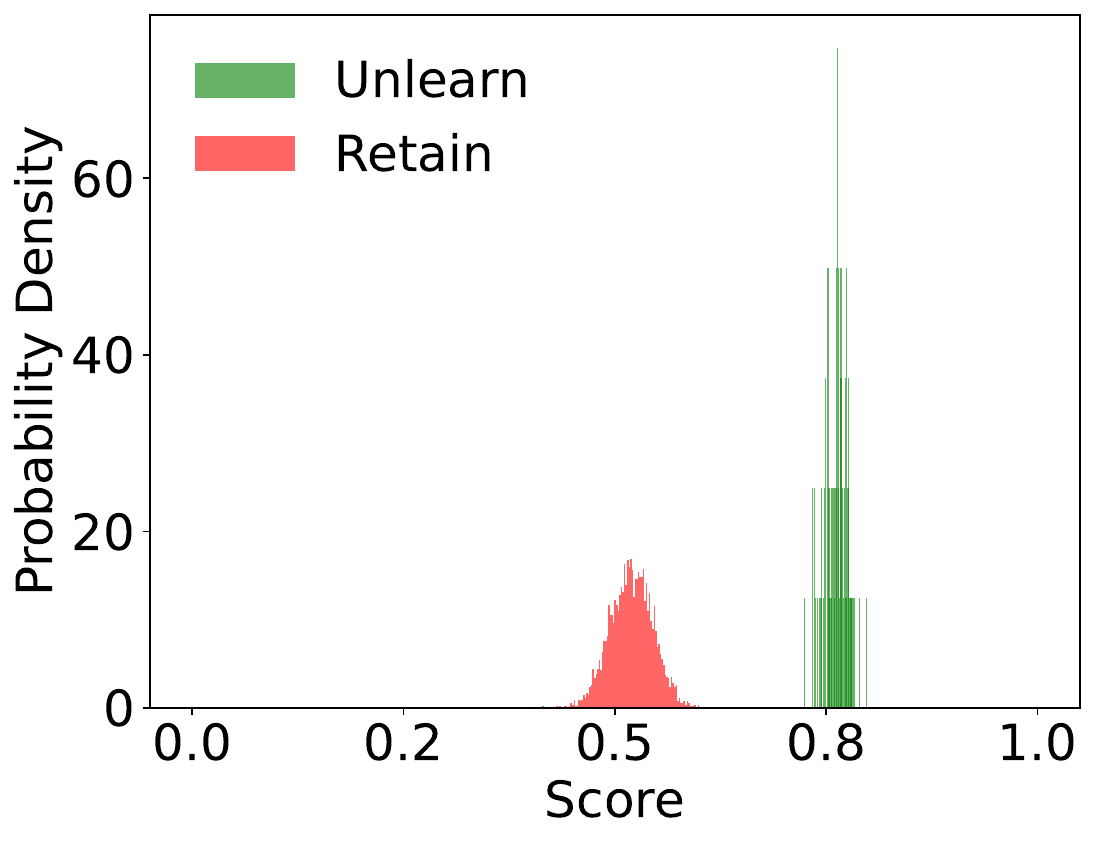}
         \caption{90\% Part-Class}
     \end{subfigure}    
\begin{subfigure}[b]{0.15\linewidth}
         \centering
         \includegraphics[height=0.08\textheight]{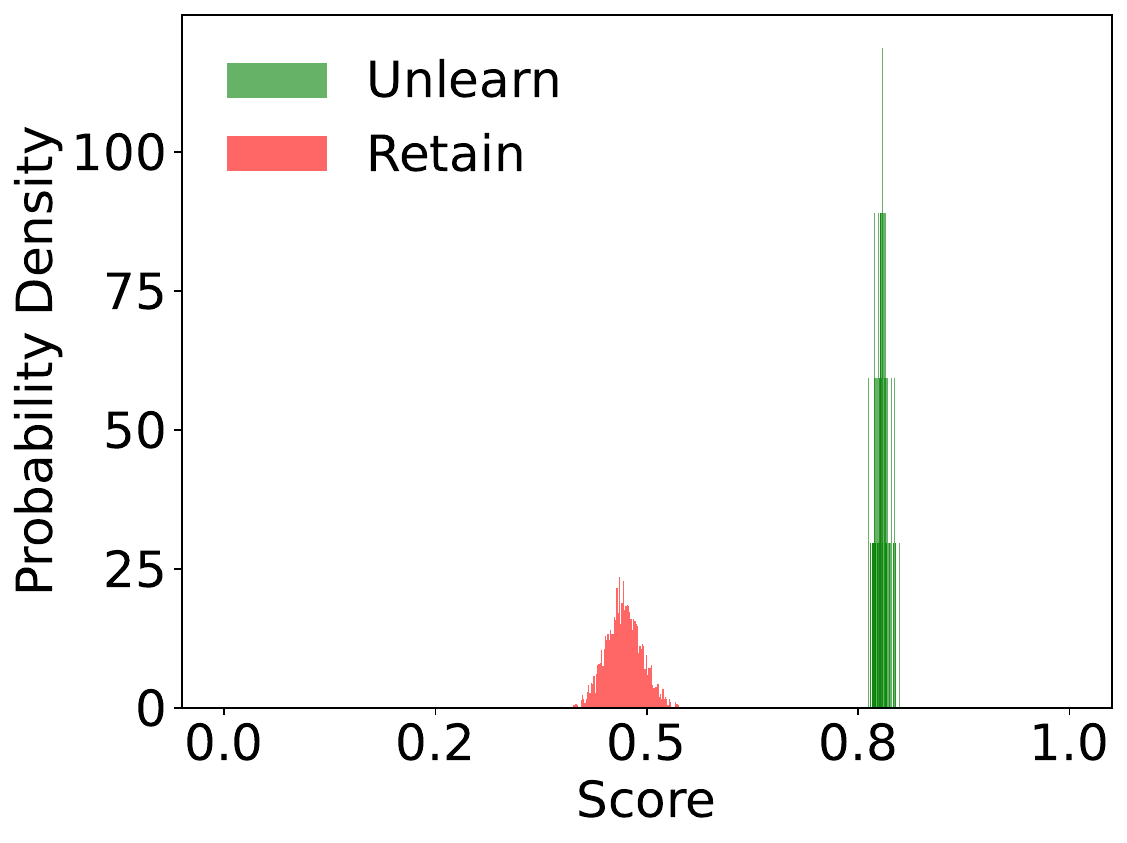}
         \caption{Total Class}
     \end{subfigure}       
     \caption{Score Distributions of Unlearning Metrics with Different Unlearning Tasks on Location.}
    \label{fig:unlearningscores_Location}
\end{figure*}

\section{Additional benchmark results of approximate unlearning algorithms}\label{addition_approximate_benchmarks}

\subsection{Additional results of unlearning utility}\label{addition_utility}

Table~\ref{unlearning_utility_auc} shows the AUC scores of 7 approximate unlearning baselines.

\subsection{Additional results of unlearning resilience}\label{addition_resil}

We provide the \textit{NMI\_TPR@FPR=0.01‰} results and AUC scores for unlearning baselines across five datasets. Figure~\ref{continue_auc_cifar10_cifar100} reports the AUC results of approximate baselines for the total class tasks of CIFAR10 and CIFAR100. Figure~\ref{continue_random_tpr} and \ref{continue_class_percent_tpr_5}  show the results on random class unlearning and partial total class unlearning tasks across 5 datasets.

\begin{figure}[ht]
    \centering
     \begin{subfigure}[b]{0.4\textwidth}
         \centering
         \includegraphics[width=\linewidth]{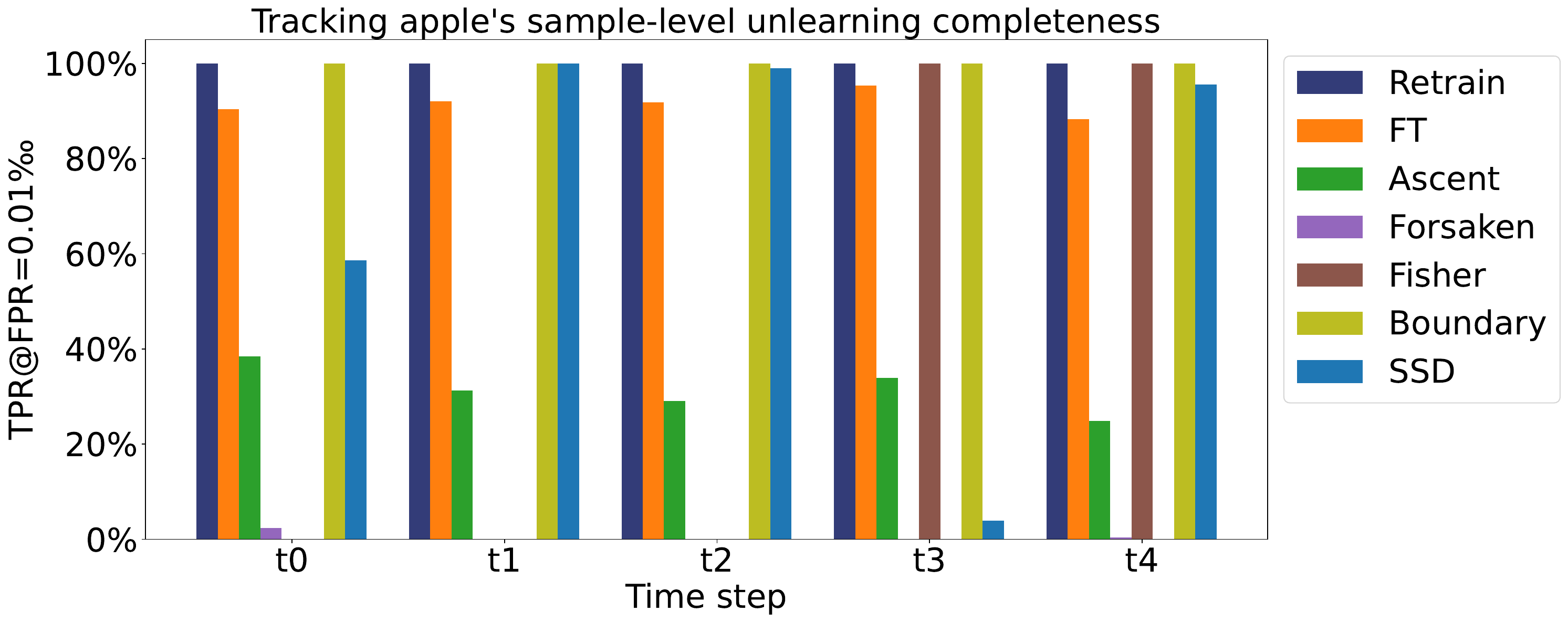}
         \caption{Total class unlearning resilience results (TPR@FPR=0.01‰) of baselines on Location}
         \label{fig:y equals x}
     \end{subfigure}
     \hfill
     \begin{subfigure}[b]{0.4\textwidth}
         \centering
         \includegraphics[width=\linewidth]{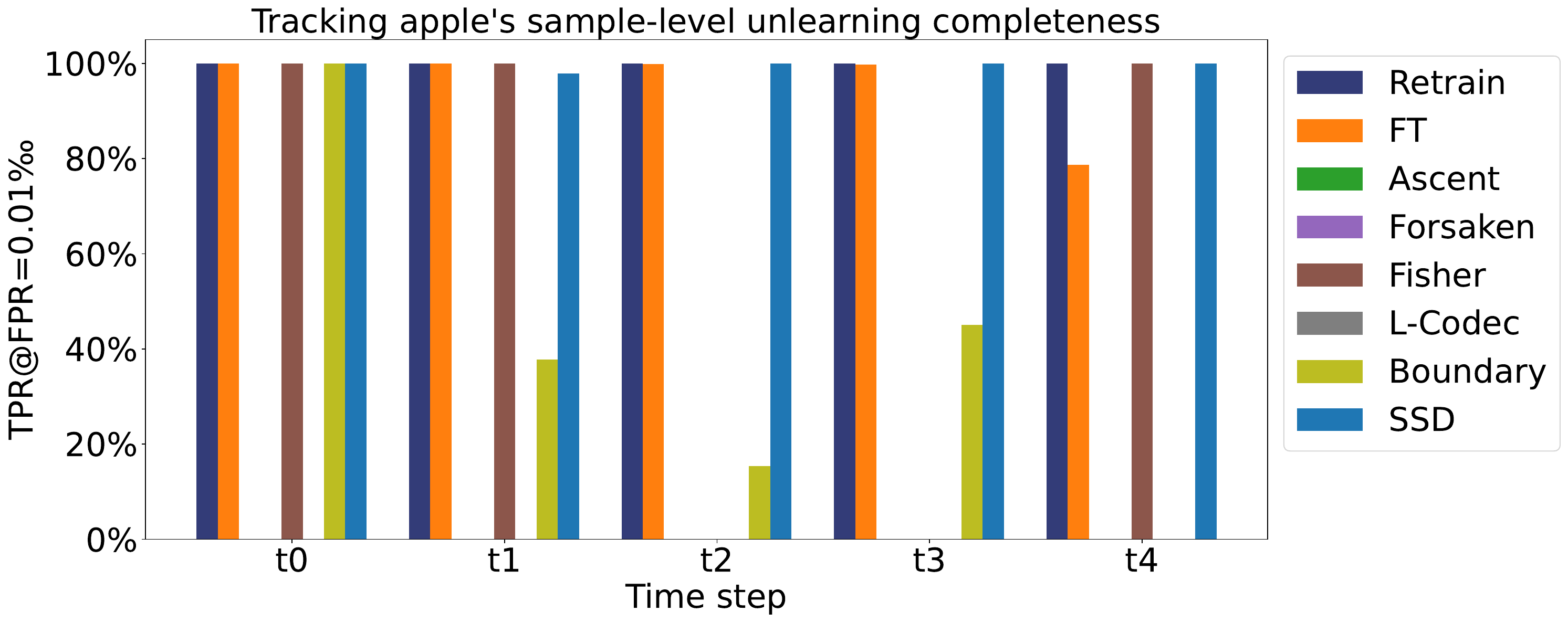}
         \caption{Purchase}
         \label{fig:three sin x}
     \end{subfigure}
     \begin{subfigure}[b]{0.4\textwidth}
         \centering
         \includegraphics[width=\linewidth]{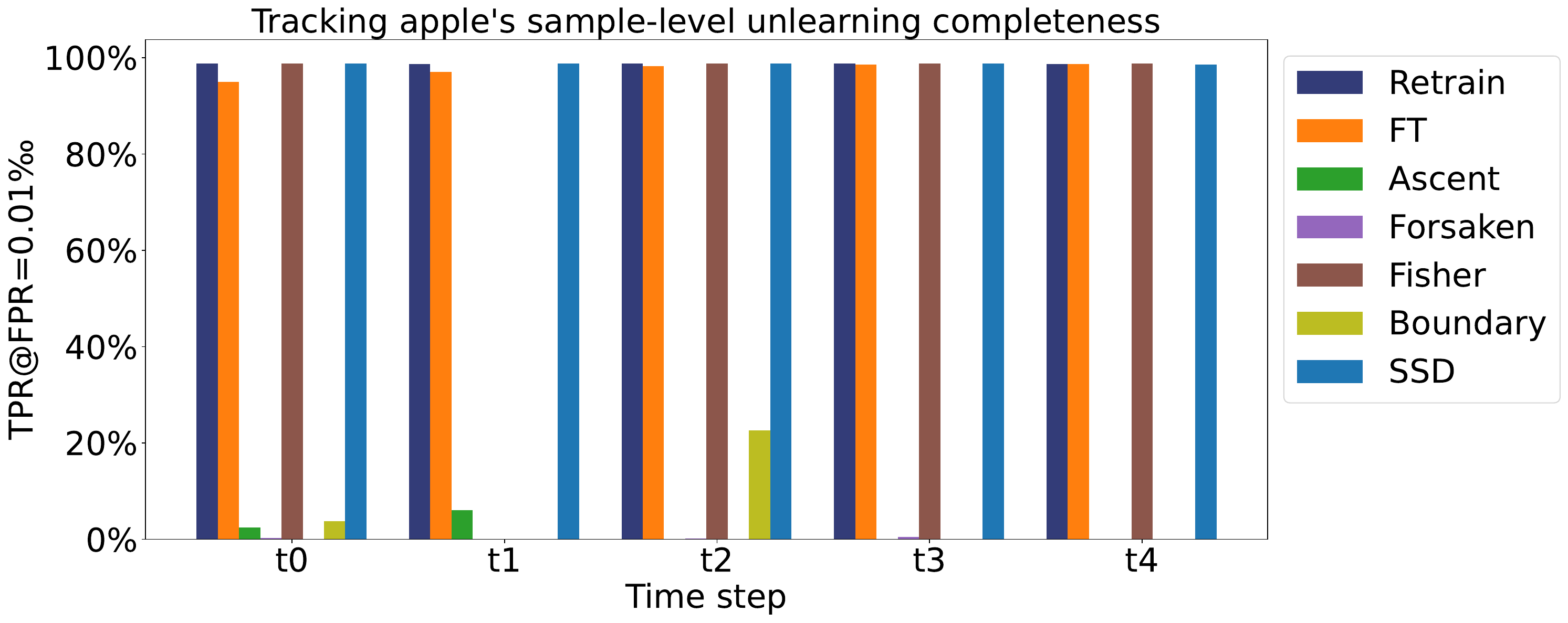}
         \caption{Texas}
         \label{fig:three sin x}
     \end{subfigure}
    \caption{Total class unlearning resilience results (TPR@FPR=0.01‰) of baselines on Location, Purchase, and Texas}
    \label{continue_class_one_tpr}
\end{figure}

\begin{figure}[h]
    \centering
    \centering
     \begin{subfigure}[b]{0.47\textwidth}
         \centering
         \includegraphics[width=\linewidth]{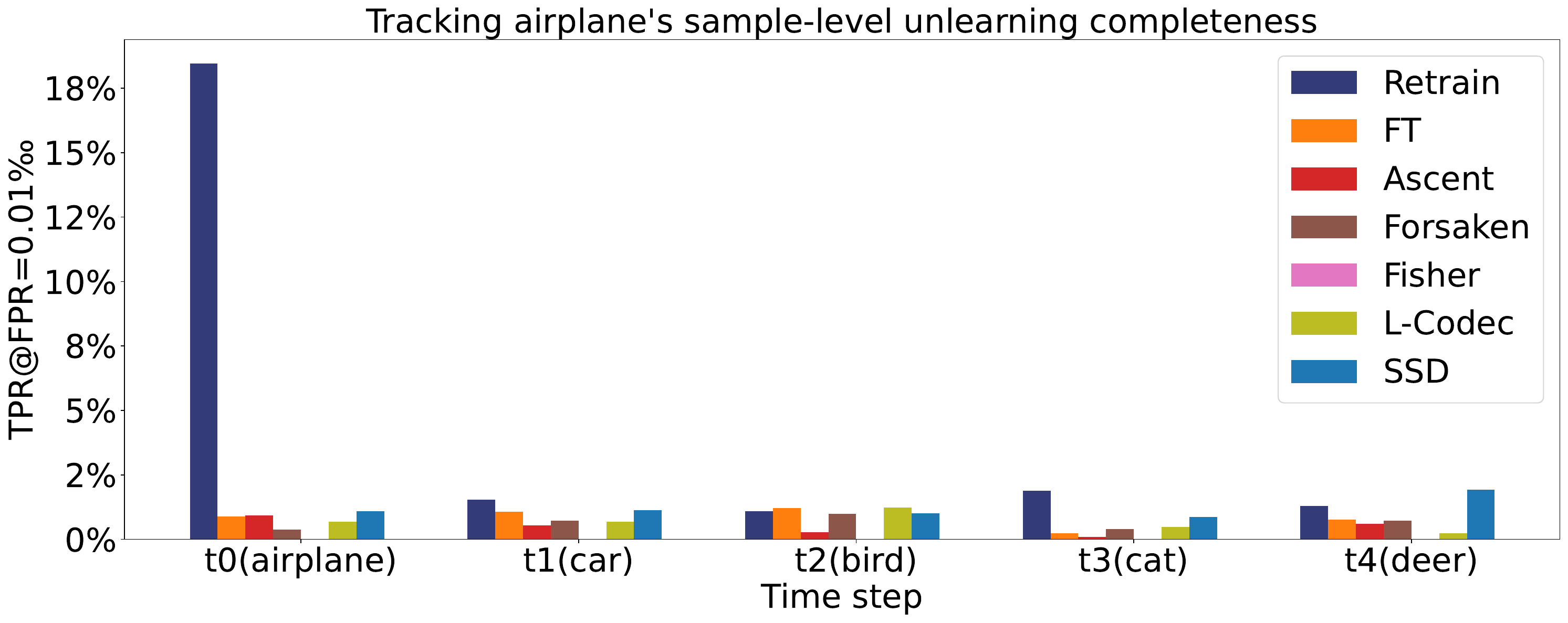}
         \caption{CIFAR10}
         \label{fig:y equals x}
     \end{subfigure}
     \hfill
     \begin{subfigure}[b]{0.47\textwidth}
         \centering
         \includegraphics[width=\linewidth]{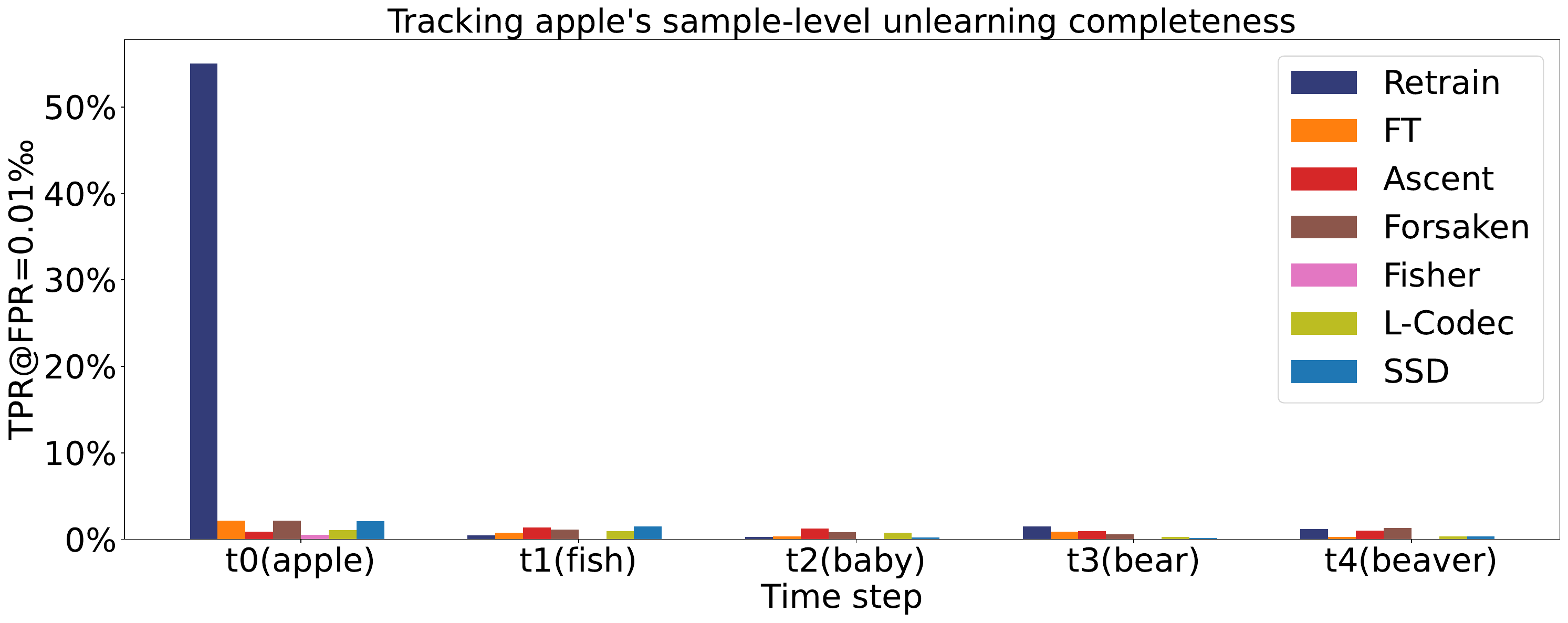}
         \caption{CIFAR100}
         \label{fig:three sin x}
     \end{subfigure}
     \begin{subfigure}[b]{0.47\textwidth}
         \centering
         \includegraphics[width=\linewidth]{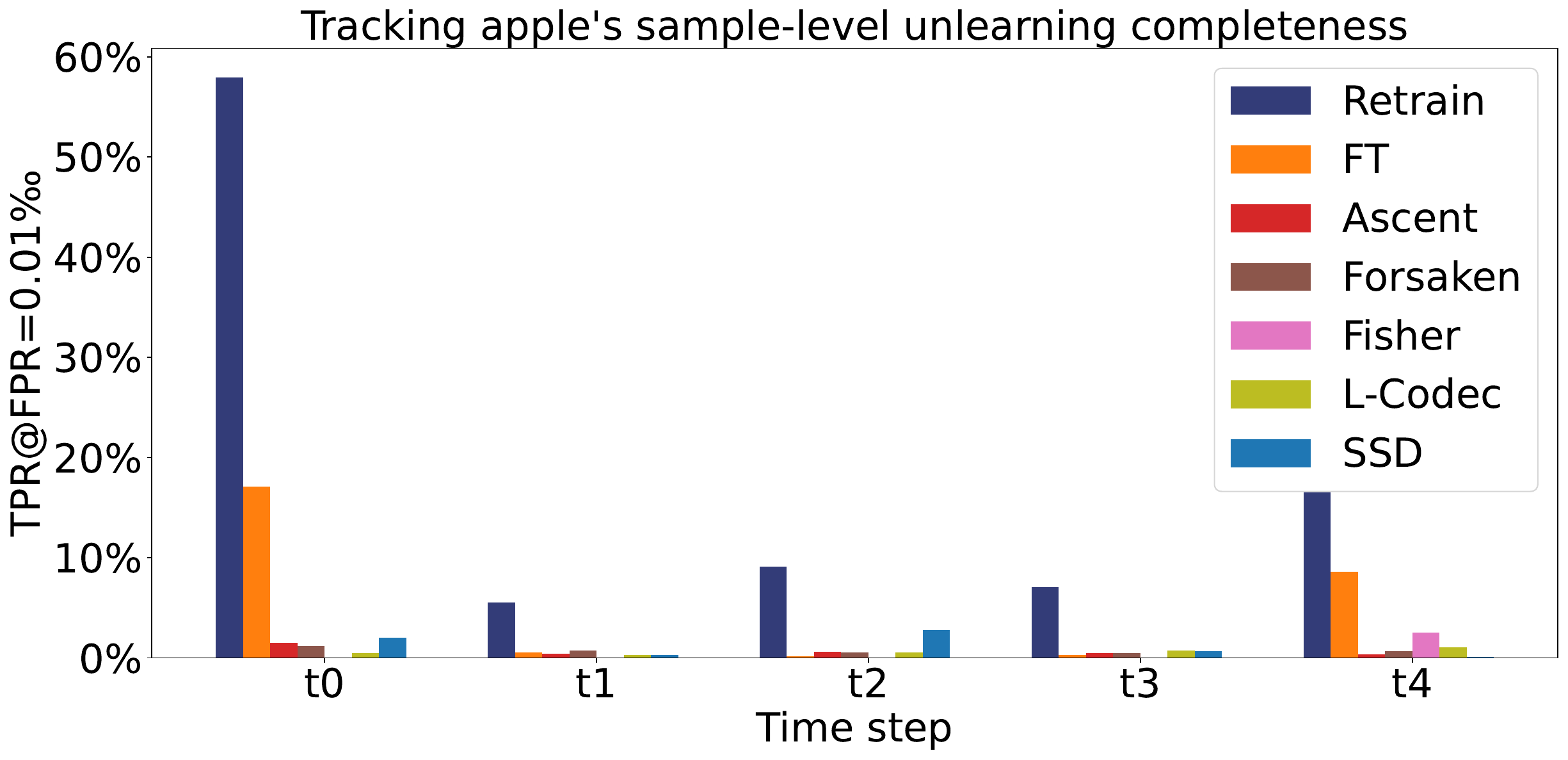}
         \caption{Location}
         \label{fig:three sin x}
     \end{subfigure}
    \begin{subfigure}[b]{0.47\textwidth}
         \centering
         \includegraphics[width=\linewidth]{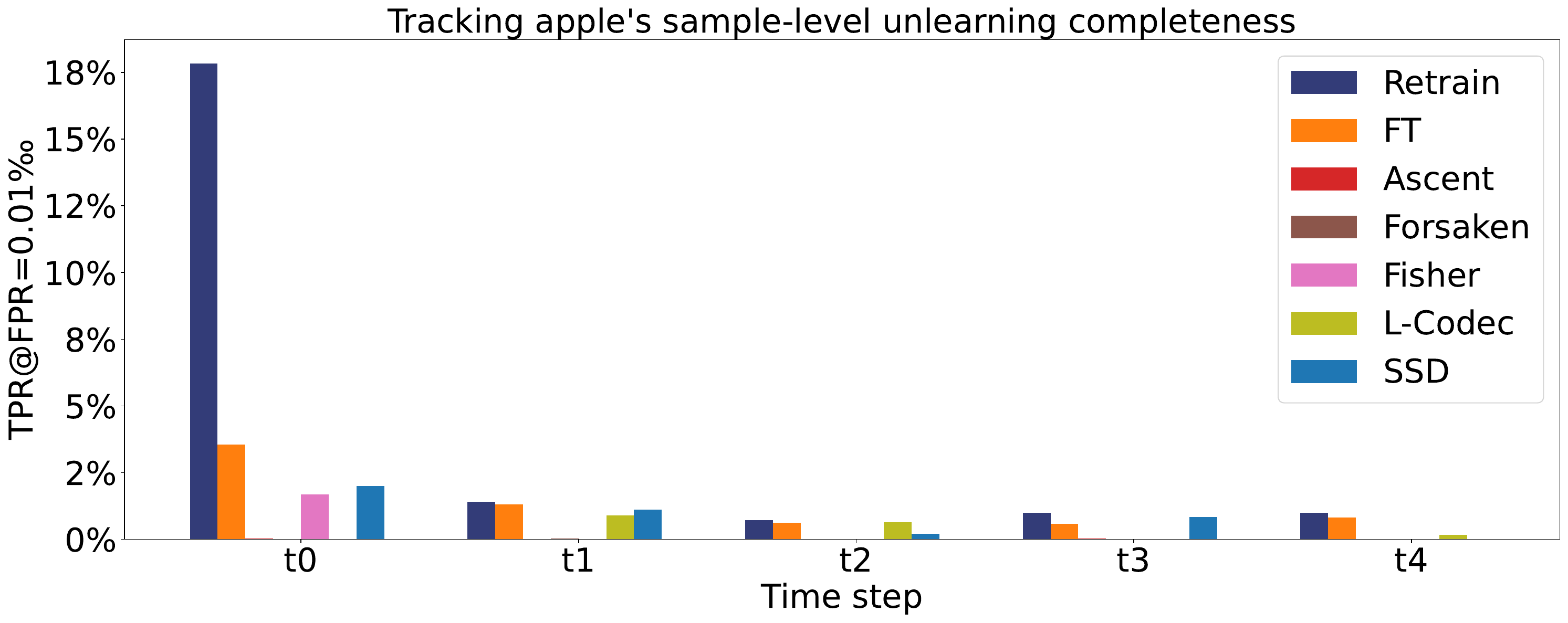}
         \caption{Purchase}
         \label{fig:three sin x}
     \end{subfigure}
    \begin{subfigure}[b]{0.47\textwidth}
         \centering
         \includegraphics[width=\linewidth]{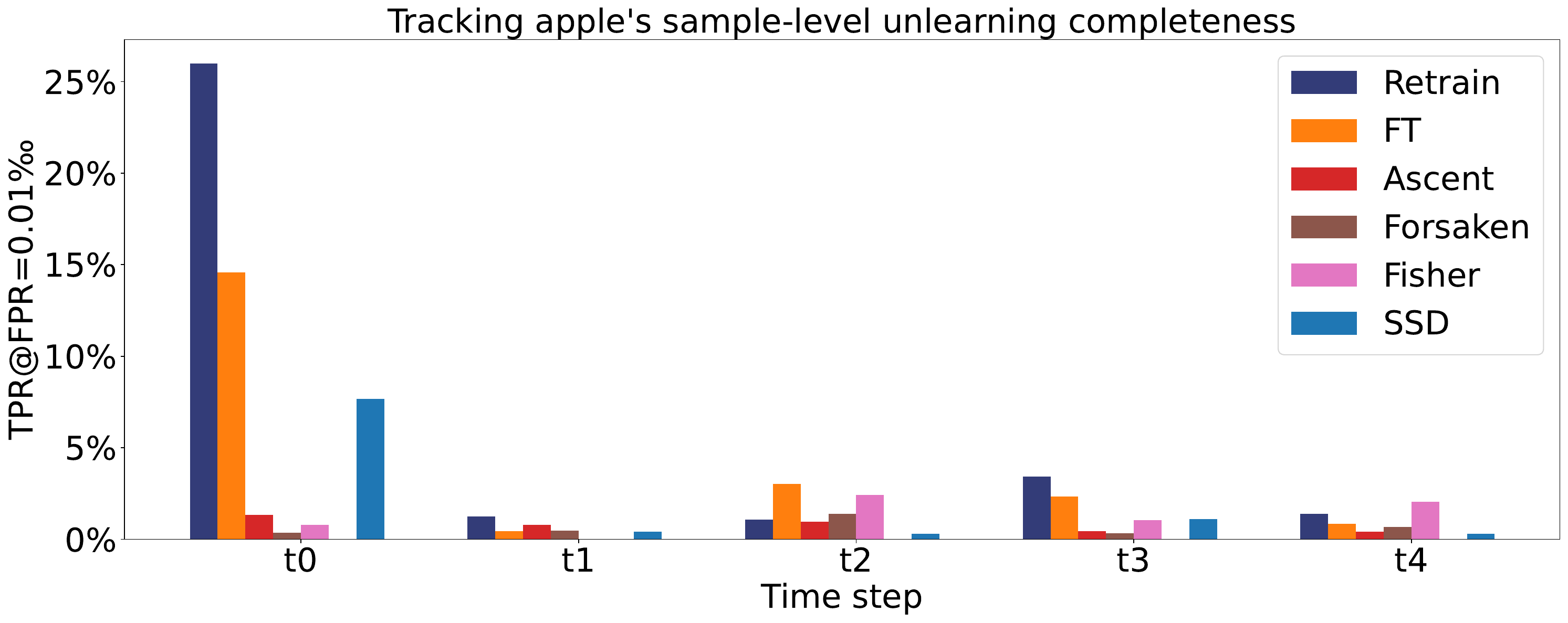}
         \caption{Texas}
         \label{fig:three sin x}
     \end{subfigure}
    \caption{Random sample unlearning resilience results (TPR@FPR=0.01‰) of baselines}
    \label{continue_random_tpr}
\end{figure}

\begin{figure}[h]
    \centering
    \centering
     \begin{subfigure}[b]{0.47\textwidth}
         \centering
         \includegraphics[width=\linewidth]{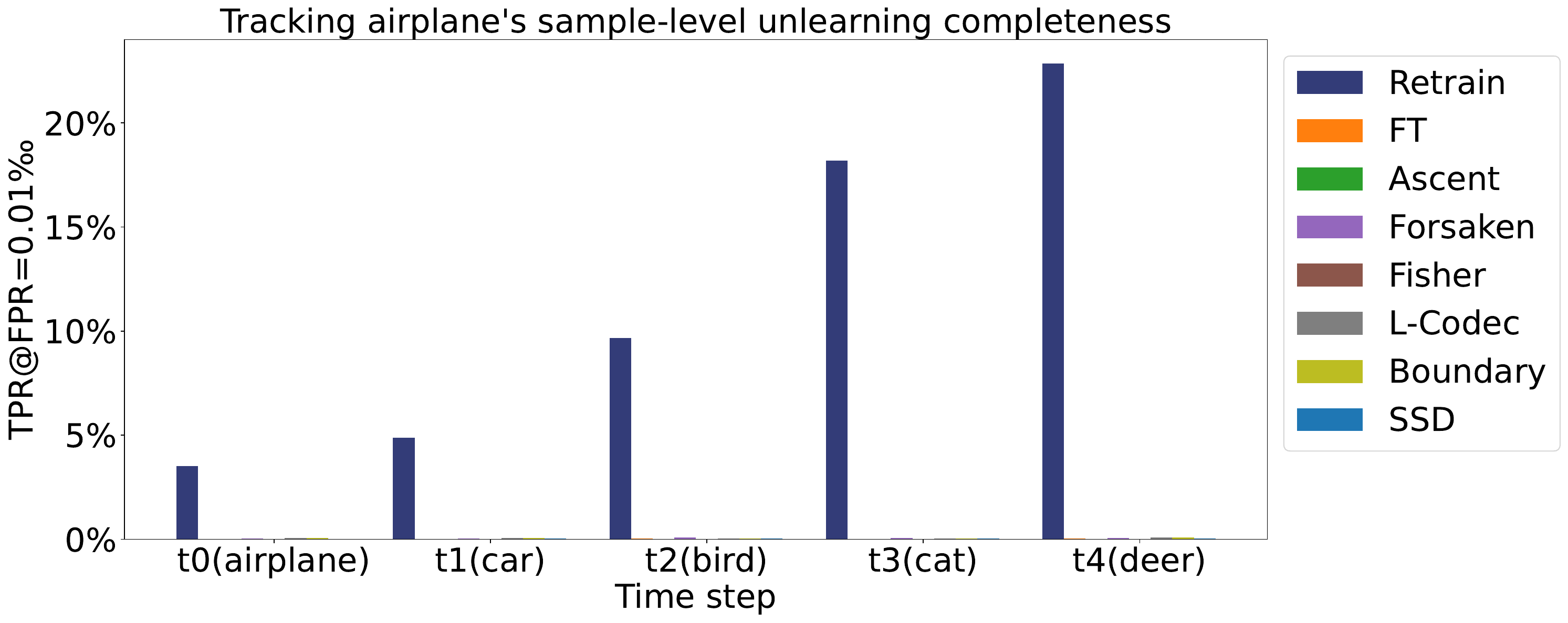}
         \caption{CIFAR10}
         \label{fig:y equals x}
     \end{subfigure}
     \hfill
     \begin{subfigure}[b]{0.47\textwidth}
         \centering
         \includegraphics[width=\linewidth]{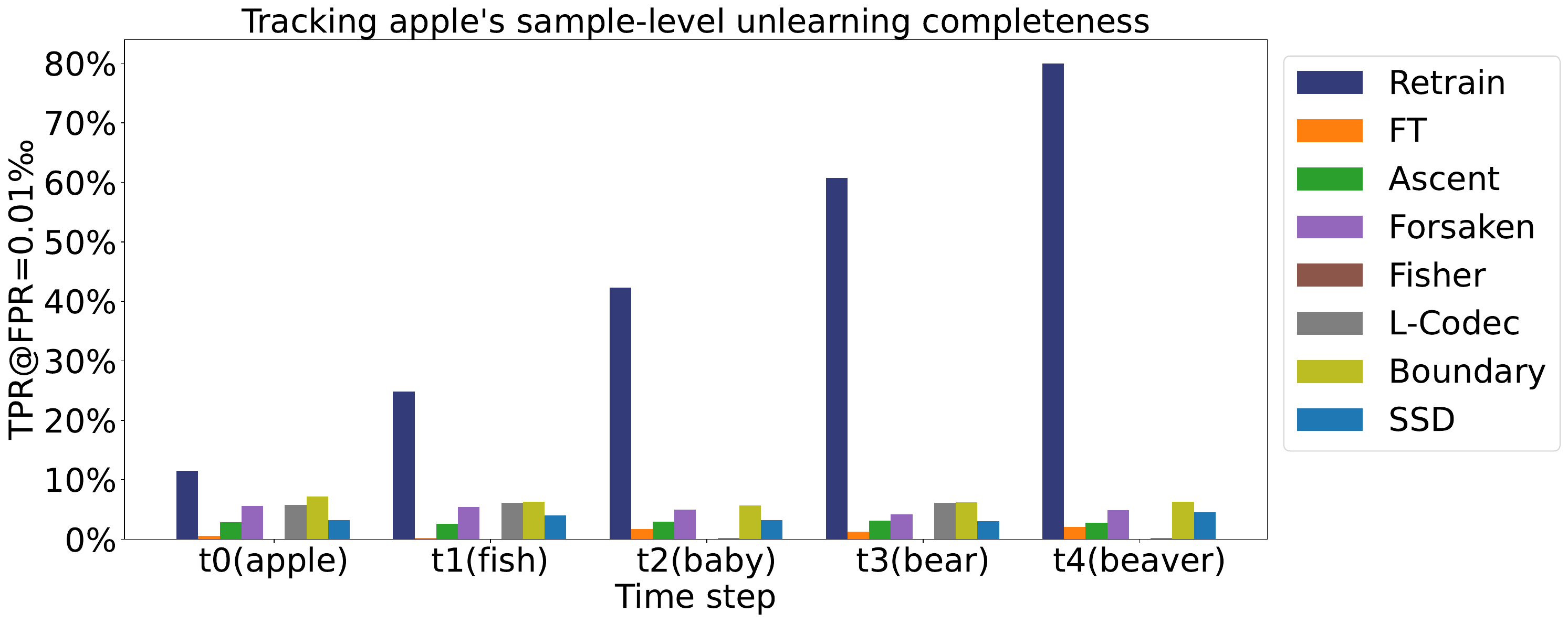}
         \caption{CIFAR100}
         \label{fig:three sin x}
     \end{subfigure}
     \begin{subfigure}[b]{0.47\textwidth}
         \centering
         \includegraphics[width=\linewidth]{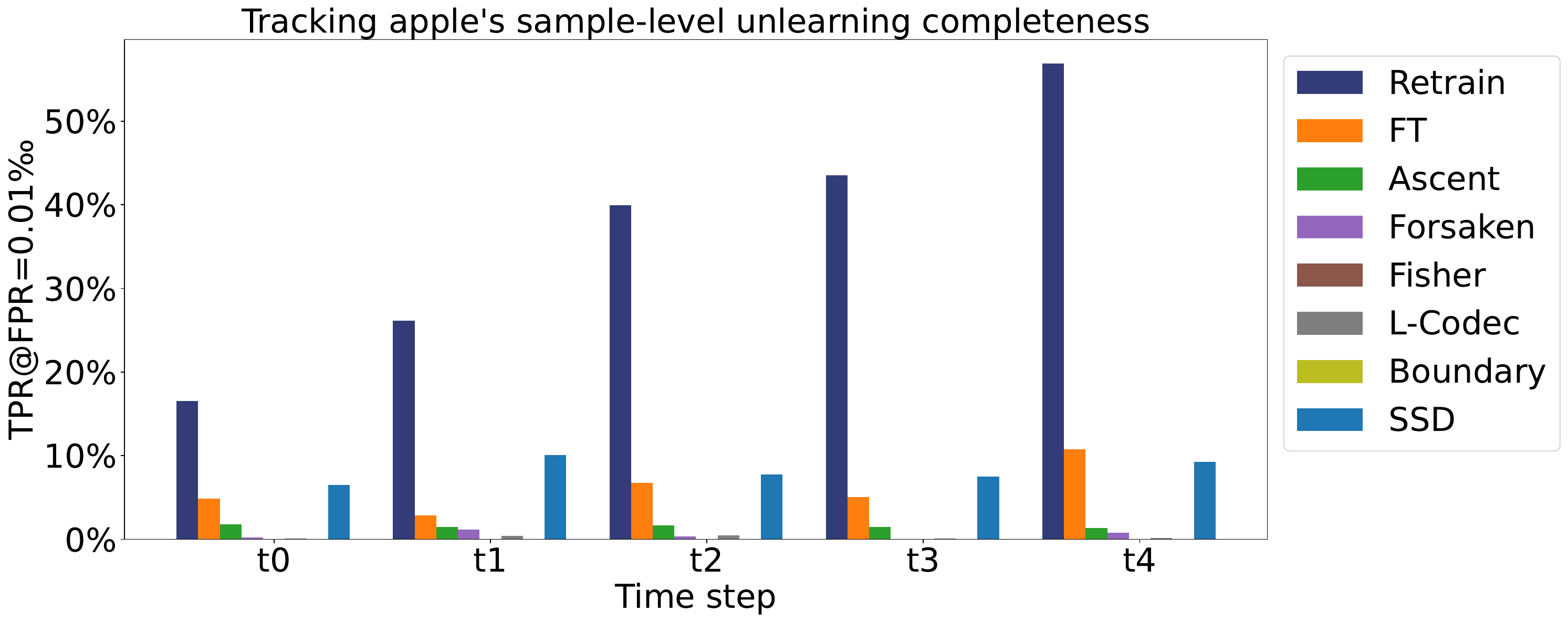}
         \caption{Location}
         \label{fig:three sin x}
     \end{subfigure}
     \begin{subfigure}[b]{0.47\textwidth}
         \centering
         \includegraphics[width=\linewidth]{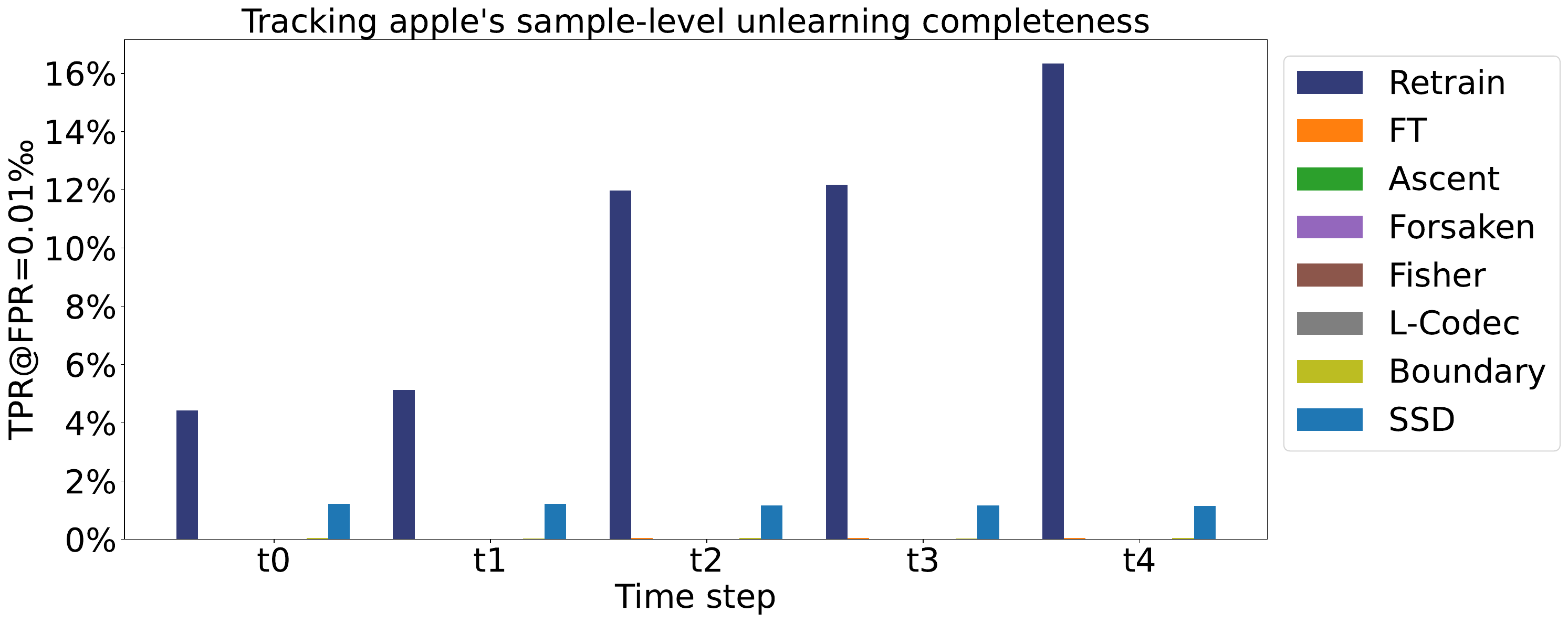}
         \caption{Purchase}
         \label{fig:three sin x}
     \end{subfigure}
     \begin{subfigure}[b]{0.47\textwidth}
         \centering
         \includegraphics[width=\linewidth]{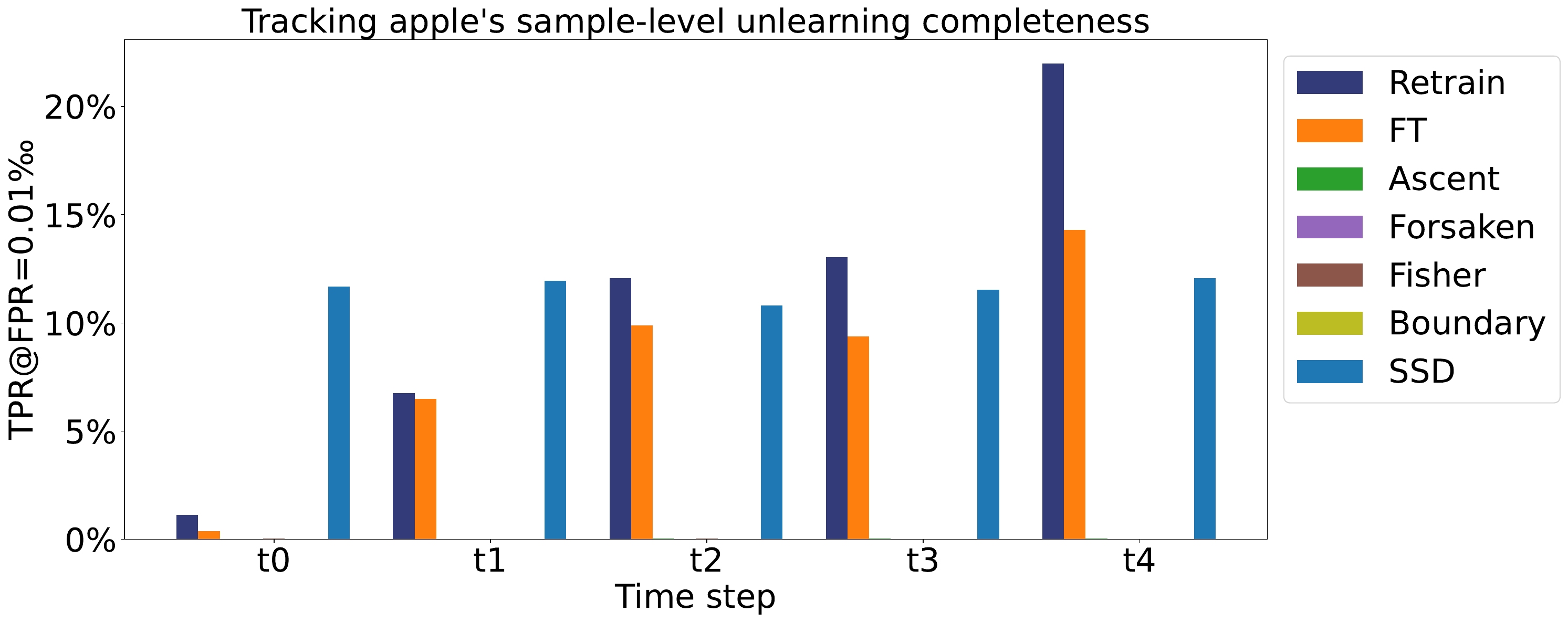}
         \caption{Texas}
         \label{fig:three sin x}
     \end{subfigure}
    \caption{Partial class unlearning resilience results (TPR@FPR=0.01‰) of baselines}
    \label{continue_class_percent_tpr_5}
\end{figure}

\subsection{Additional results of unlearning equity}\label{addition_equity}

Figure~\ref{fairtpr_classpercent},~\ref{fairauc_classpercent},~\ref{fairauc_totalclass} present the unlearning equity results of different approximate unlearning methods.
\begin{figure*}[h]
    \centering
    \includegraphics[width=0.9\linewidth]{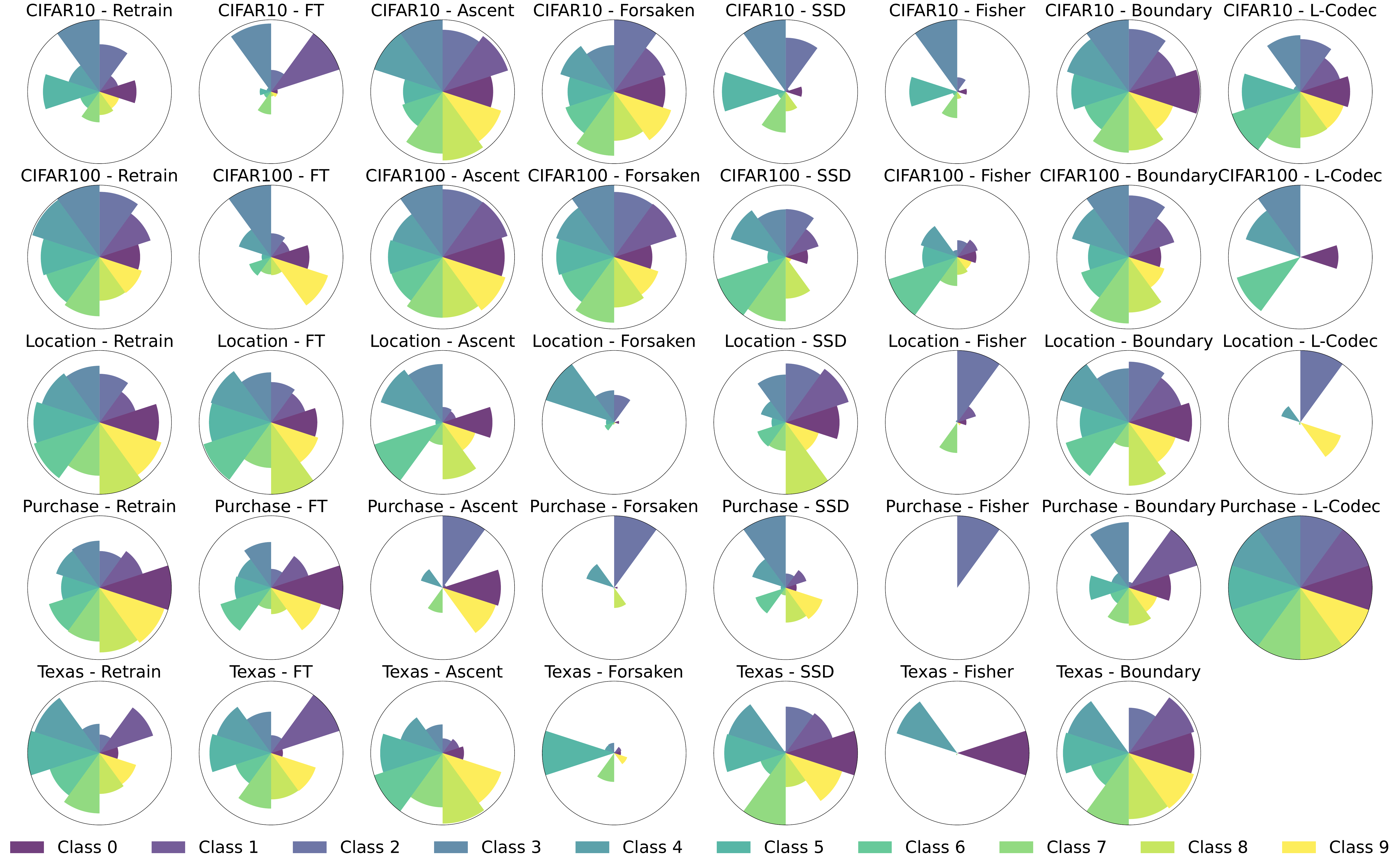}
    \caption{Partial class unlearning relative results (TPR@FPR=0.01‰)}
    \label{fairtpr_classpercent}
\end{figure*}
\begin{figure*}[h]
    \centering
    \includegraphics[width=0.9\linewidth]{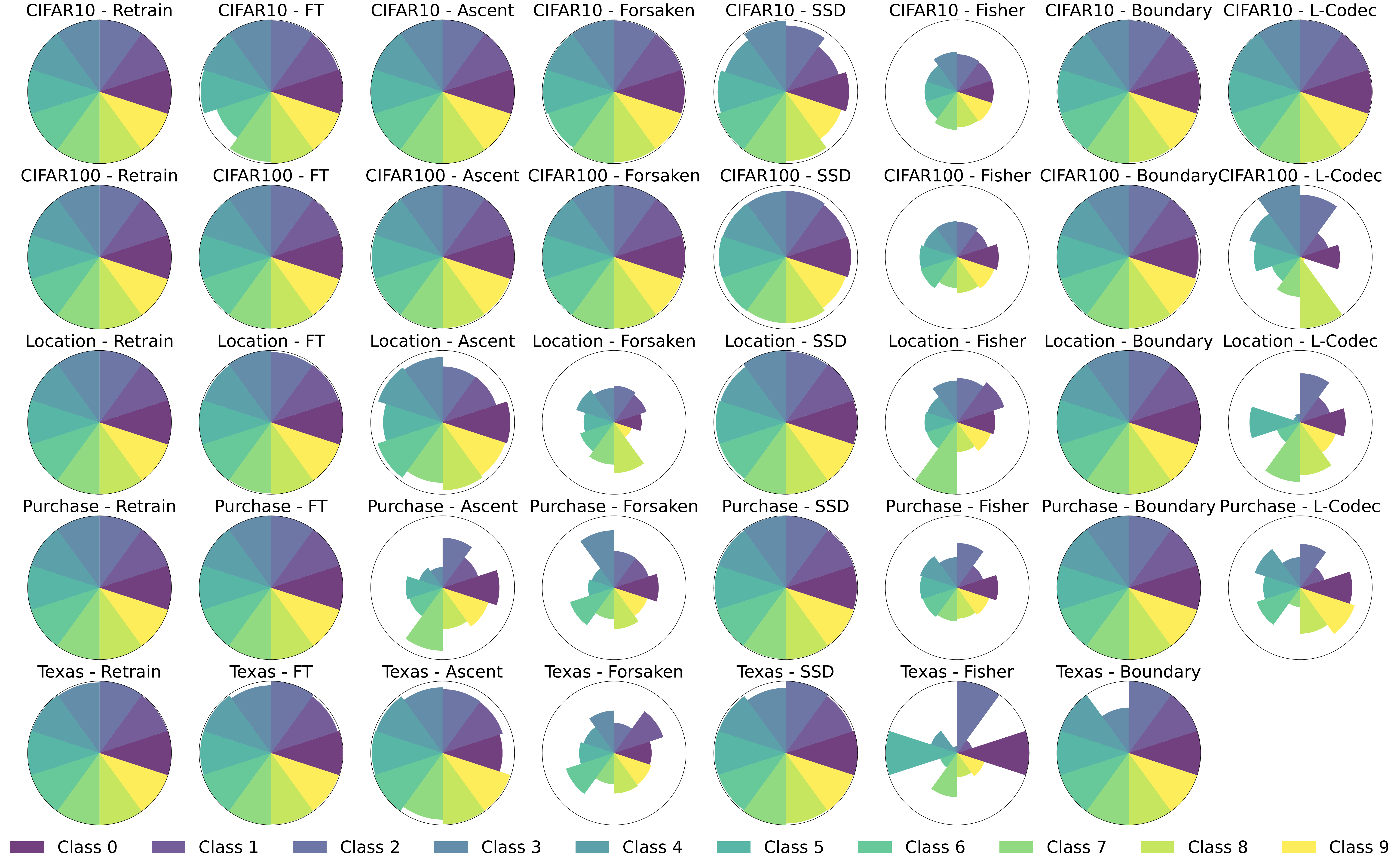}
    \caption{Total class unlearning results (AUC) of 10 classes}
    \label{fairauc_totalclass}
\end{figure*}
\begin{figure*}[h]
    \centering
    \includegraphics[width=0.9\linewidth]{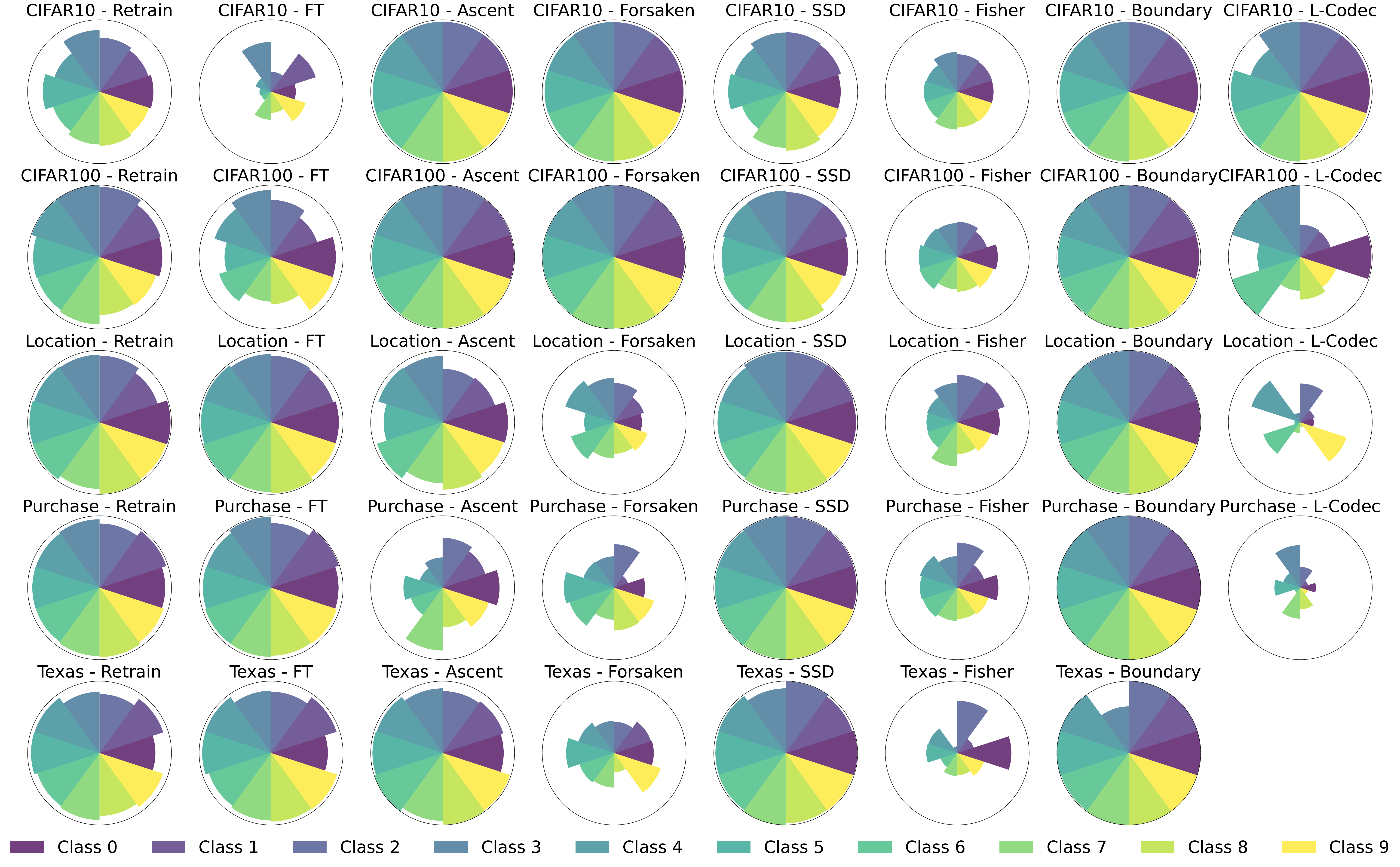}
    \caption{Partial class unlearning results (AUC) of 10 classes}
    \label{fairauc_classpercent}
\end{figure*}

\end{document}